\setlist{nosep}
\tikzset{>={Stealth[scale=1.2]}}
\tikzset{Ferrers bullet/.style={circle, fill, inner sep=0pt, minimum width=2mm, draw=none}}
\setlist{nosep}
\renewcommand{\@algocf@capt@plain}{above}
\newcommand{\ie}{\emph{i.e., \xspace}}
\newcommand{\eg}{\emph{e.g. \xspace}}
\newcommand{\eqdef}{
    \overset{{\mbox{\tiny \textup{def}}}}{=}
}
\let\oldforall=\forall
\renewcommand{\forall}{\hspace{1pt}\oldforall\hspace{1pt}}
\let\oldexists=\exists
\renewcommand{\exists}{\hspace{1pt}\oldexists\hspace{1pt}}
\DeclareMathOperator{\vcdim}{VCdim}
\DeclareMathOperator*{\argmax}{argmax}
\DeclareMathOperator*{\argmin}{argmin}
\DeclareMathOperator*{\Expectation}{\mathds{E}}
\newcommand{\exv}[2]{\Expectation_{#1}\left[#2\right]}
\newcommand{\Id}[1]{\mathds{1}\hspace{-2.5pt}\left[#1\right]}
\newcommand{\IdSign}[1]{\mathds{S}\hspace{-2.5pt}\left[#1\right]}
\newcommand{\pr}[1]{\left( #1 \right)}
\newcommand{\floor}[1]{\left\lfloor #1 \right\rfloor}
\newcommand{\ceil}[1]{\left\lceil #1 \right\rceil}
\newcommand{\cb}[1]{\left\{ #1 \right\}}
\newcommand{\abs}[1]{\left|#1\right|}
\newcommand{\norm}[1]{\left\lVert#1\right\rVert}
\newcommand{\reals}{\mathds{R}}
\newcommand{\naturals}{\mathds{N}}
\renewcommand{\a}{\mathbf{a}}
\newcommand{\x}{\mathbf{x}}
\newcommand{\y}{\mathbf{y}}
\newcommand{\z}{\mathbf{z}}
\newcommand{\A}{\mathcal{A}}
\newcommand{\C}{\mathcal{C}}
\newcommand{\D}{\mathcal{D}}
\renewcommand{\H}{{\mathcal{H}}}
\renewcommand{\O}{\mathscr{O}}
\renewcommand{\P}{\mathcal{P}}
\newcommand{\Q}{\mathcal{Q}}
\newcommand{\R}{\mathcal{R}}
\renewcommand{\S}{\mathcal{S}}
\newcommand{\X}{\mathcal{X}}
\newcommand{\Y}{\mathcal{Y}}
\newcommand{\Ll}{L_{T_l}}
\newcommand{\Lr}{L_{T_r}}
\newcommand{\Kb}{\mathbf{K}}
\newcommand{\Khat}{\widehat{K}}
\newcommand{\Kbhat}{\mathbf{\Khat}}
\newcommand{\Nb}{\mathbf{N}}
\newcommand{\Nhat}{\widehat{N}}
\newcommand{\Nbhat}{\mathbf{\Nhat}}
\newcommand{\Ob}{\mathbf{O}}
\newcommand{\Obbar}{\widebar{\Ob}}
\newcommand{\Obar}{\widebar{O}}
\newcommand{\Ohat}{\widehat{O}}
\newcommand{\Obhat}{\mathbf{\Ohat}}
\newcommand{\parti}[1]{\bar{#1}}
\def\plambda{\parti{\lambda}}
\def\palpha{\parti{\alpha}}
\def\pbeta{\parti{\beta}}
\def\pgamma{\parti{\gamma}}
\newcommand{\floormtwo}{{\floor{\frac{m}{2}}}}
\newcommand{\smallstirling}[2]{\ensuremath{{\begin{Bsmallmatrix}
  #1 \\ #2
\end{Bsmallmatrix}}}}
\newcommand{\stirling}[2]{\ensuremath{{\begin{Bmatrix}
  #1 \\ #2
\end{Bmatrix}}}}
\definecolor{color2}{rgb}{0.5,0.7,0.9}
\definecolor{color1}{rgb}{0.15,0.35,0.75}
\definecolor{color3}{rgb}{0.8,0.2,0.3}
\let\save@mathaccent\mathaccent
\newcommand*\if@single[3]{%
  \setbox0\hbox{${\mathaccent"0362{#1}}^H$}%
  \setbox2\hbox{${\mathaccent"0362{\kern0pt#1}}^H$}%
  \ifdim\ht0=\ht2 #3\else #2\fi
  }
\newcommand*\rel@kern[1]{\kern#1\dimexpr\macc@kerna}
\newcommand*\widebar[1]{\@ifnextchar^{{\wide@bar{#1}{0}}}{\wide@bar{#1}{1}}}
\newcommand*\wide@bar[2]{\if@single{#1}{\wide@bar@{#1}{#2}{1}}{\wide@bar@{#1}{#2}{2}}}
\newcommand*\wide@bar@[3]{%
  \begingroup
  \def\mathaccent##1##2{%
    \let\mathaccent\save@mathaccent
    \if#32 \let\macc@nucleus\first@char \fi
    \setbox\z@\hbox{$\macc@style{\macc@nucleus}_{}$}%
    \setbox\tw@\hbox{$\macc@style{\macc@nucleus}{}_{}$}%
    \dimen@\wd\tw@
    \advance\dimen@-\wd\z@
    \divide\dimen@ 3
    \@tempdima\wd\tw@
    \advance\@tempdima-\scriptspace
    \divide\@tempdima 10
    \advance\dimen@-\@tempdima
    \ifdim\dimen@>\z@ \dimen@0pt\fi
    \rel@kern{0.6}\kern-\dimen@
    \if#31
      \overline{\rel@kern{-0.6}\kern\dimen@\macc@nucleus\rel@kern{0.4}\kern\dimen@}%
      \advance\dimen@0.4\dimexpr\macc@kerna
      \let\final@kern#2%
      \ifdim\dimen@<\z@ \let\final@kern1\fi
      \if\final@kern1 \kern-\dimen@\fi
    \else
      \overline{\rel@kern{-0.6}\kern\dimen@#1}%
    \fi
  }%
  \macc@depth\@ne
  \let\math@bgroup\@empty \let\math@egroup\macc@set@skewchar
  \mathsurround\z@ \frozen@everymath{\mathgroup\macc@group\relax}%
  \macc@set@skewchar\relax
  \let\mathaccentV\macc@nested@a
  \if#31
    \macc@nested@a\relax111{#1}%
  \else
    \def\gobble@till@marker##1\endmarker{}%
    \futurelet\first@char\gobble@till@marker#1\endmarker
    \ifcat\noexpand\first@char A\else
      \def\first@char{}%
    \fi
    \macc@nested@a\relax111{\first@char}%
  \fi
  \endgroup
}
\begin{document}


\title{Generalization Properties of Decision Trees\\ on Real-valued and Categorical Features}


\author{\name Jean-Samuel Leboeuf \email jean-samuel.leboeuf.1@ulaval.ca \\
        \addr Department of Computer Science\\
        Universit\'e Laval\\
        Qu\'ebec, QC, Canada
        \AND
        \name Fr\'ed\'eric LeBlanc \email frederic.leblanc.12@ulaval.ca\\
        \addr Faculty of Education\\
        Universit\'e Laval\\
        Qu\'ebec, QC, Canada
        \AND
        \name Mario Marchand \email mario.marchand@ift.ulaval.ca \\
        \addr Department of Computer Science\\
        Universit\'e Laval\\
        Qu\'ebec, QC, Canada
        }

\editor{TBD}

\maketitle

\begin{abstract}
We revisit binary decision trees from the perspective of partitions of the data.
We introduce the notion of partitioning function, and we relate it to the growth function and to the VC dimension.
We consider three types of features: real-valued, categorical ordinal and categorical nominal, with different split rules for each.
For each feature type, we upper bound the partitioning function of the class of decision stumps before extending the bounds to the class of general decision tree (of any fixed structure) using a recursive approach.
Using these new results, we are able to find the exact VC dimension of decision stumps on examples of $\ell$ real-valued features, which is given by the largest integer $d$ such that $2\ell \ge \binom{d}{\floor{\frac{d}{2}}}$.
Furthermore, we show that the VC dimension of a binary tree structure with $L_T$ leaves on examples of $\ell$ real-valued features is in $O(L_T \log(L_T\ell))$.
Finally, we elaborate a pruning algorithm based on these results that performs better than the cost-complexity and reduced-error pruning algorithms on a number of data sets, with the advantage that no cross-validation is required.
\end{abstract}

\begin{keywords} 
Decision trees, VC dimension, Generalization, Pruning algorithm, Real-valued and  Categorical features.
\end{keywords}

\section{Introduction}
\label{sec:introduction}

The recent years have been marked by the advent of deep neural networks (DNN), whose prowesses have impressed the general public.
DNNs perform exceptionally well on ``natural'' data, such as pictures, time series and natural language, but require an equally exceptional amount of data and compute power.
One might even argue that they perform unreasonably well, as they still defy the classical generalization theories \citep{zhang2021understanding}.
The absence of a theory explaining how and why DNNs generalize hinders greatly their interpretability, \ie the possibility to explain their predictions.

In constrast, decision trees complement nicely DNNs as they excel on unorganized and disparate data, requiring only a small sample to be trained, and being inherently interpretable by the way they try to mimic human decision processes.
These advantages make them particularly useful in areas where ethics, safety, health or well-being is at stake, like in the insurance and financial sectors.
In fact, decision trees at the basis of several popular and powerful boosting algorithms, such as those provided by XGBoost \citep{chen2016xgboost}, LightGBM \citep{ke2017lightgbm}, and CatBoost \citep{dorogush2018catboost}.

Nevertheless, despite decision trees being one of the oldest learning models \citep{breiman1984classification,quinlan1986induction} and being commonly used, their generalization properties are still poorly characterized.
To the best of our knowledge, there currently exists no upper bound on the combinatorial properties that are the VC dimension and the growth function of binary decision trees with real-valued and/or categorical features that share a common structure.
Hence, the main goal of this paper is to provide such bounds\footnote{This paper is an extended version of \citet{leboeuf2020decision}'s conference paper titled \textit{Decision trees as partitioning machines to
characterize their generalization properties
} published at NeurIPS 2020. While this previous work focused only on real-valued features, the present paper adds further analyses for decision trees on two types of categorical features as well as on a mixture of feature types. We also adapt and improve the proposed pruning algorithm based on these results.}.

To do so, we introduce the idea of a \emph{realizable partition} and define the notion of partitioning function, a concept closely related to the growth function and the VC dimension.
We proceed to bound tightly the partitioning function of the class of decision stumps that can be constructed from a set of real-valued or categorical features.
This leads us to find an \emph{exact} expression of its VC dimension for the case of real-valued features---a long-standing open problem.
We then extend our bound of the partitioning function to general binary decision tree structures applied on real-valued features, categorical features or a mixture of both, from which we derive the asymptotic behavior of the VC dimension of a tree with $L_T$ leaves.

Taking a step further, as a secondary goal, we also investigate and support the relevance of these generalization properties of decision trees by considering some practical applications.

Due to their expressive power and the traditionally greedy way they are trained, decision trees are prone to overfitting.
To handle this problem, algorithms usually apply a pruning step, where unnecessary branches are replaced or deleted.
Most popular pruning algorithms are guided by some heuristics that are not fully theoretically justified, and resort to practical techniques, such as cross-validation, which often increase the running time and impairs generalization when the number of training examples is small.

As an alternative, we propose to use the results derived here and combine them with a structural risk minimization generalization bound for VC classes to produce a new theoretically-sound pruning algorithm.
The idea of using generalization bounds to guide pruning is not new; for instance, \cite{drouin2019interpretable} successfully prune decision trees learned on a genomic data set by optimizing a sample-compression-based bound.
However, this technique is limited by the fact that it is task-specific due to the need for a reconstruction function.
Even as early as \citeyear{kearns1998fast}, \citeauthor{kearns1998fast} proposed a pruning algorithm based on a bound for VC classes---with the obvious caveat that previous to our work, it could only be used for trees learned on examples with binary features exclusively.
Thanks to our work on the growth function of decision trees, VC-bound-based pruning algorithms can now be used for any type or combinations of features.

In our experiments, we compare two bound-based pruning algorithms with the popular \citet{breiman1984classification} CART's cost-complexity pruning and \citet{quinlan1986induction}'s reduced-error pruning algorithms on 25 data sets of various sizes, with different number and types of features, and different number of classes.
Our findings show that both bound-based algorithms clearly outperform the other two, suggesting that the VC dimension is indeed an appropriate quantifier of the generalization capacity of a binary decision tree.

The paper is divided as follows.
In Section~\ref{sec:related_work}, we discuss the work related to the analysis of the VC dimension and the complexity of decision trees and how our work inserts itself naturally among those.
Next, in Section~\ref{sec:mathematical_setting}, we present the mathematical setting, the definitions, and we justify in detail our choices regarding the treatment of categorical features.
Section~\ref{sec:paritions_as_a_framework} is dedicated to introducing the new mathematical framework based on partitions of the data.
Then, in Section~\ref{sec:analysis_decision_stumps}, we sequentially apply this framework to the case of a decision stumps (a single node with two leaves) on real-valued features, ordinal features and nominal features.
We also state the exact VC dimension of decision stumps on real-valued feature examples.
Section~\ref{sec:analysis_of_decision_trees} respectively extends these results to general decision trees before combining them into a theorem that applies to decision trees on a mixture of feature types.
Finally, in Section~\ref{sec:experiments}, we develop a novel pruning algorithm that we compare to other pruning approaches in an experiment.

\section{Related Work}
\label{sec:related_work}

For the case of binary features, \cite{simon1991vapnik} has shown that the VC dimension of binary decision trees of rank at most $r$ with $\ell$ features is given by $\sum_{i=0}^r \tbinom{\ell}{i}$.
However, the set of decision trees with rank at most $r$ includes multiple tree structures that clearly possess different individual generalization properties.
Later, \cite{mansour1997pessimistic} claimed that the VC dimension of a binary decision tree with $N$ nodes and $\ell$ \emph{binary} features is between $\Omega(N)$ and $O(N \log \ell)$, but did not provide the proof. 
Then, \cite{maimon2002improving} provided a bound on the VC dimension of oblivious decision trees, which are trees such that all the nodes of a given layer make a split on the same feature.

In 2009, \citeauthor{aslan2009calculating} proposed an exhaustive search algorithm to compute the VC dimension of decision trees with binary features. Results were obtained for all trees of height at most 4. Then, they used a regression approach to \emph{estimate} the VC dimension of a tree as a function of the number of features, the number of nodes, and the VC dimension of the left and right subtrees.

More recently, \cite{yildiz2015vc} found the exact VC dimension of the class of decision stumps (\ie trees with a single node) that can be constructed from a set of $\ell$ \emph{binary} features, which is given by $\floor{\log_2(\ell+1)} + 1$, and proved that this is a lower bound for the VC dimension of decision stumps with $\ell$ real-valued features.
They then used these expressions as base cases to develop a recursive lower bound on the VC dimension of decision trees with more than one node.
Furthermore, they were able to extend their results to trees with more than two branches.
However, they did not provide an upper bound for the VC dimension of decision trees.

On a related topic, \cite{gey2018vapnik} found the exact VC dimension of axis-parallel cuts on $\ell$ real-valued features, which are a kind of one-sided decision stumps.
They showed that the VC dimension of this class of functions is given by the largest integer $d$ such that $\ell \geq \tbinom{d}{\floor{\frac{d}{2}}}$.
As a corollary of their result, one has that the largest integer $d$ that satisfies $2\ell \geq \tbinom{d}{\floor{\frac{d}{2}}}$ is an upper bound for the VC dimension of a decision stump, an observation they however do not make.
Using a completely different approach, we here show that this bound is in fact exact.
We discuss the difference between our results and theirs in Section~\ref{ssec:decision_stumps_on_rl_feat}.

Our work distinguishes itself from previous work by providing an upper bound for the VC dimension of any binary decision tree class on real-valued features, categorical features, or a mixture of both.
Our framework also extends to the multiclass setting, and we show that bound-based pruning algorithms are viable and useful alternatives to other popular methods.

\section{Mathematical Setting}
\label{sec:mathematical_setting}

In this paper, an example is a vector denoted by $\x$ and lives in a domain space $\X$ that will be specified when needed.
The entries of $\x$ are called \emph{features}, and are indexed by an integer $i$ such that the $i$-th feature of $\x$ is denoted by $x_i$.
We consider the multiclass setting with $y \in [n]$ where $n$ is the number of classes.
Here, we use the notation $[n]$ to represent the set $\cb{1,\dots,n}$.
We also reserve the symbols $S$ for a sample of examples, and $m$ for the number of examples in $S$.

Machine learning algorithms often deal with various types of data.
In this work, we consider two main types of features: \emph{real-valued features}, which take value in $\reals$, and \emph{categorical feature} (also called \emph{discrete-valued features}), which take value in some finite set whose elements are called \emph{categories}.
Furthermore, we subdivide categorical features into two subtypes, depending on whether or not the categories are endowed with a natural order.
In the first case, we say that the features are \emph{ordinal} and otherwise they are \emph{nominal}.

Having made these distinctions, we will denote by $\ell$ the number of real-valued features, by $\omega$ the number of ordinal features, and by $\nu$ the number of nominal features.
We further refine these definitions by introducing the vectors $\Ob \in \naturals^\omega$ and $\Nb\in\naturals^\nu$, which we call the ordinal and nominal \emph{feature landscapes}, where the components $O_i$ of $\Ob$ (and $N_i$ of $\Nb$) indicate the number of categories that the $i$-th ordinal feature can take.
For convenience, without loss of generality, we hereafter always assume that the $i$-th ordinal feature takes value in $[O_i]$ and that the $i$-th nominal feature takes value in $[N_i]$; if they do not, simply relabel the categories.
With these tools in hand, we can write quite generally the domain space as $\X = \reals^\ell \times \bigtimes_{i=1}^\omega [O_i] \times \bigtimes_{i=1}^\nu [N_i]$, where the cross denotes the Cartesian product.

Our analysis of decision trees requires us to make the distinction between three kinds a mathematical objects with different properties to avoid confusion in the terminology.
First, we define a \emph{tree} as a graph with two types of nodes: \emph{internal nodes}, which have two or more children, and \emph{leaves} which do not have any children.
For simplicity, internal nodes will be referred to as \emph{nodes} (in contrast to leaves).
Second, we say that a \emph{decision tree} is a function $t : \X \to \Y$ defined on a given tree graph, where each leaf is associated with a class label while each node is endowed with a function, called a \emph{decision rule} $\phi$, which redirects incoming examples to its children.
Third, the space of all allowed decision rules, called a \emph{rule set} $\Phi$, accompanied by some fixed tree graph, defines a class of functions called a \emph{decision tree class} that we always denote by the symbol $T$.
In that case, the number of nodes and leaves and the underlying graph are fixed, but the parameters of the decision rules at the nodes and the class labels at the leaves are free parameters.
When all nodes have exactly two children, we say that we have a \emph{binary tree}, in which case we label (arbitrarily) the two children as the \emph{left} and \emph{right} subtrees, denoted respectively by $T_l$ and $T_r$.
For reasons justified later, the present work deals exclusively with binary decision trees.
Moreover, we say that a decision tree with only one node and two leaves is a \emph{decision stump}.

The rule set is arbitrary, and different choices will lead to different learning algorithms.
In fact, the rule set is directly related to the expressiveness of decision tree classes; therefore it is important to define clearly the decision rules that will be treated in the present work.
In the literature, the rule set varies according to the type of features at hand.
However, most decision rules are assumed to be feature-aligned, \ie to depend on a single feature at a time.
This is partly due to the fact that many learning problems act on a mixture of real-valued and categorical features, and it is not clear how one should mix different feature types to produce a useful (and interpretable) decision rule.

For real-valued and ordinal features, there exists an obvious ``natural'' decision rule that we refer to as the \emph{threshold split}, which consists to redirect the examples to the left and right subtrees according to whether they satisfy $x_i \le \theta$ or not, $x_i$ is the $i$-th feature of $\x$ and $\theta$ is some threshold taken in $\reals$ for real-valued features or in $[O_i-1]$ for ordinal features (we do not allow $\theta=O_i$ to be a valid choice as it does not split the sample into two parts).
Clearly, the threshold split decision rule as described is only valid for binary decision trees.
Of course, one could consider more complex rules where we could have multiple thresholds in order to redirect the examples to more than two subtrees.
However, this type of rules is completely equivalent to simply having deeper binary trees with more internal nodes.
Therefore, it does not make sense to consider non-binary trees for real-valued and ordinal features.

The situation for categorical features is more subtle.
We have identified multiple strategies employed for nominal features depending on the algorithm; here are some of the most popular solutions encountered.

The CART algorithm, proposed by \citet{breiman1984classification}, proceeds via the simple \emph{global comparison} rule set, which considers rules of the form: if $x_i\in A$, where $A \subseteq [N_i]$, then $\x$ is sent to the left subtree, and is sent to the right one otherwise.
In that case, there are $2^{N_i}-2$ possible rules to consider for each feature $i$.
This becomes quickly unmanageable for large number of categories, and is much more prone to overfitting due to the large number of rules.

Another rule set is the \emph{$N_i$-ary tree} approach, as adopted by \citet{quinlan1986induction}'s C4.5 algorithm, where each node has $N_i$ subtrees, one for each possible category.
This choice also increases the chances of overfitting due to the fact that very few examples are redirected in each subtree when $N_i$ is large.

On the other hand, the LightGBM library \citep{ke2017lightgbm} employs the \emph{ordinal conversion} technique, where one assigns an arbitrary order to the categories so that the natural rule for ordinal features can apply (the actual implementation of LightGBM is more complex than a simple ordinal conversion, but the intricacies of the algorithm are beyond the scope of this paper).

Finally, the CatBoost algorithm \citep{dorogush2018catboost} offers multiple ways to deal with categorical features, with more flexibility than the previous algorithms (as its name suggests).
Two main approaches are proposed to handle categories.
The first and simplest is the \emph{binary conversion} (also known as \emph{one-hot encoding}), which transforms the single feature $i$ into $N_i$ binary features (for binary features, the rule set is trivial as there is only two rules: for $x_i \in [2]$, $\x$ is redirected to the left subtree if $x_i = 1$ and to the right subtree otherwise, or the other way around).
The second method proposed is the ordinal conversion, available in several variations. 
A big weakness of the ordinal conversion is that it is difficult to choose the best order to give to the categories: indeed, there are $N_i!$ such possible orders---a number that grows very quickly.
Nevertheless, there are some empirical methods that have been proposed that allow to cleverly order the categories using some statistics of the sample.
In fact, when the classification problem is binary, it is possible to find in polynomial time the order that achieves the best gain for a single split (\citeauthor{breiman1984classification}, \citeyear{breiman1984classification}, see also Section 9.2.4 of \citeauthor{hastie2009elements}, \citeyear{hastie2009elements}).
Unfortunately, this result breaks down when the output is multiclass, even though many approximations have been proposed and implemented into CatBoost.
From a practical point of view, these approaches have their merits, but they are hard to handle from a theoretical one, since the order chosen is data-dependent---a property incompatible with the assumptions made in this work.


In light of this discussion, in addition to the threshold split rule set for real-valued and ordinal features, we consider in this paper the \emph{unitary comparison} decision rules for nominal features, where decision rules send to either the left or the right subtree every examples $\x$ satisfying $x_i = C$ for some category $C \in [N_i]$ and the rest to the other one.
Note that for all practical purposes, this rule set is equivalent to the binary conversion of CatBoost; it is however different on a theoretical level, since the binary features created must be mutually exclusive (\ie one and only one of the $N_i$ binary features can be a 1), while binary features in general need not be mutually exclusive.

In accordance with our choice of decision rules, we are here only concerned with binary decision trees.
Hence, let us now provide a mathematical implementation of the rule sets that will be used throughout this work.
For convenience, we define the signed indicator function $\mathds{S}$ such that $\IdSign{A} \eqdef \Id{A} - \Id{\neg A}$, so that rather than being 0 when the argument is false, it is equal to $-1$.
This allows us to route the examples to the subtrees according to the sign of the decision rule.
With this notation, we define formally the threshold split rule set (for real-valued features) as $\Phi = \cb{\phi(\x) = s\cdot \IdSign{x_i \le \theta } : s \in \cb{\pm 1}, i \in [\ell], \theta \in \reals}$, where $s$ is a sign parameter that gives the flexibility to redirect the example to the left or the right subtree as we wish.
The rule set for ordinal features is the same with $i$ taken in $[\omega]$ and $\theta$ taken in $[O_i-1]$ instead.
For nominal features, we write $\Phi = \cb{ \phi(\x) = s \cdot \IdSign{x_i = C} : s \in \cb{\pm 1}, i \in [\nu], C \in [N_i] }$.

Equipped with such a rule set $\Phi \subset \cb{-1, 1}^\X$, and given a tree class $T$ endowed with $\Phi$, we can formalize the output $t(\x)$ of a decision tree $t\in T$ on an example $\x$ recursively as follows.
\begin{definition}[Output of a binary decision tree]
\label{def:binary_decision_tree}
If the tree $t$ is a leaf, the output $t(\x)$, on example $\x$, is given by the class label associated with the leaf. 
Otherwise, if the tree $t$ is rooted at a node having a left subtree $t_l$, a right subtree $t_r$, and a decision rule $\phi \in \Phi$, then the output $t(\x)$ is given by 
\begin{equation*}
  t(\x) \eqdef \left\{ \!\!\!
    \begin{array}{cl}
      t_l(\x) & \text{if } \phi(\x) = 1\\
      t_r(\x) & \text{if } \phi(\x) = -1.
    \end{array} \right.
\end{equation*}
\end{definition}


The predictive power of a decision tree class is heavily influenced by the tree graph, but also by the choice of rule sets.
In this paper, we aim to quantify the expressivity of decision trees from the perspective of VC theory.
In this framework, the key combinatorial tools to evaluate are the growth function, and, for the binary classification problem, the VC dimension, both of which are defined below.

\begin{definition}[Growth function]
\label{def:growth_function}
We define the \emph{growth function $\tau_H$} of a hypothesis class $H \subseteq [n]^\X$ as the largest number of distinct functions that $H$ can realize on a sample $S$ of $m$ examples, \ie
\begin{equation*}
  \tau_H(m) \eqdef \max_{S:\left|S\right|=m} \left| \cb{ h|_S : h \in H} \right|,
\end{equation*}
where $h|_S \eqdef (h(\x_1), h(\x_2), \dots, h(\x_m))$, for $\x_j \in S$, is the restriction of $h$ to $S$.
\end{definition}
Note that in general the growth function can depend on other parameters than the number of examples such as the number of features and the nature of the domain space.
However, to alleviate the notation, we will explicitly write the dependencies only when confusion is possible.
In the binary classification setting, one can use the VC dimension to bound the growth function with Sauer-Shelah's Lemma.
\begin{definition}[VC dimension]
\label{def:VC dimension}
Let $H$ be a class of binary classifiers.
A sample $S=\cb{\x_1, \dots, \x_m}$ is shattered by $H$ iff all possible Boolean functions on $S$ can be realized by functions $h\in H$. 
The \textit{VC dimension} of $H$, $\vcdim H$, is defined as the maximal cardinality of a set $S$ shattered by $H$. 
In particular, the VC dimension of $H$ is the largest integer $d$ such that $\tau_H(d) = 2^d$.
\end{definition}

\section{Partitions as a Framework}
\label{sec:paritions_as_a_framework}

The main goal of this paper is to upper bound the generalization properties of decision trees with a fixed structure.
Our research have led us to abandon the concept of decision trees as a class of functions and to instead inspect them through the lens of partition theory.
These intrinsically combinatorial objects, defined formally below, have the advantageous property of being agnostic to the labeling of the tree leaves and combine nicely with the recursive nature of decision trees.

\begin{definition}[Partition]
\label{def:partition}
Given some finite set $A$, an \emph{$a$-partition} $\palpha(A)$ is a set of $a\in\mathds{N}$ disjoint and non-empty subsets $\alpha_j \subseteq A$, called \emph{parts}, such that $\bigcup_{j=1}^a \alpha_j = A$.
\end{definition}
The total number of $a$-partitions that exist on a set of $m$ elements is given by the Stirling number of the second kind, denoted $\smallstirling{m}{a}$ \citep{graham1989concrete}.
In particular, for $m\geq 1$, we have that $\smallstirling{m}{1} = 1$ and $\smallstirling{m}{2} = 2^{m-1}-1$.
For the purpose of this work, we also define $\smallstirling{m}{2}_k$ as the number of 2-partitions of $m$ with at least a part of size $k$, which is equal to $\binom{m}{k}$ when $k \neq \frac{m}{2}$ and is equal to $\frac{1}{2}\binom{m}{\frac{m}{2}}$ when $k = \frac{m}{2}$.

As a convention, we will use capital Latin letters for sets of integers or examples (such as $S$ and $A$), overlined lower case Greek letters for partitions (such as $\palpha$, $\pbeta$, $\pgamma$ and $\plambda$), and script Latin letters for sets of partitions (such as $\P$, $\Q$, $\R$ and $\S$).

To see how binary decision trees can be represented as some kind of ``partitioning machines'', consider a set $S$ of examples that is sieved through some tree so that all examples are distributed among the leaves.

Setting aside the labels, the set of non-empty leaves exactly satisfies the definition of a partition of $S$.
Then, when the leaves are labeled using one of the $n$ classes, if some leaves share the same label, the union of the identically labeled leaves forms a new single part, still disjoint from the other parts.
Hence, we end up with another valid partition of $S$ containing an equal or smaller number of parts.
We say that any partition of the data that can be realized through this process is a \emph{realizable partition} of $S$.
The following definition formally defines this concept. 
\begin{definition}[Realizable partition]
\label{def:realizable_partition}
Let $T$ be a binary decision tree class (of a fixed structure).
An $a$-partition $\palpha(S)$ of a sample $S$ is \emph{realizable} by $T$ iff there exists some tree $t \in T$ such that
\begin{itemize}[topsep=0pt]
    \item For all parts $\alpha_j \in \palpha(S)$, and for all examples $\x_1, \x_2 \in \alpha_j$, we have that $t(\x_1) = t(\x_2)$;
    \item For all distinct $\alpha_j, \alpha_k \in \palpha(S)$, and for all $\x_1 \in \alpha_j, \x_2 \in \alpha_k$, we have that $t(\x_1) \neq t(\x_2)$.
\end{itemize}
\end{definition}
Hence, the set $\P^a_T(S)$ of all distinct $a$-partitions a tree class $T$ can realize on $S$ is obtained by considering all possible rules that we can use at each node of $T$ and all possible labelings in $[a]$ that we can assign to the leaves of $T$. 
We can link the growth function $\tau_T(m)$ of $T$ to $|\P^a_T(S)|$ as follows. 
Given some realizable $a$-partition $\palpha(S)$, we have $n$ choices of label for any one part, then we have $n-1$ choices for the next one, because assigning it the same label would effectively create an $(a-1)$-partition.
This process continues until no more parts or labels are left.
Therefore, for any $a$-partition with $a \in [n]$, one can produce $(n)_a$ distinct functions, where $(n)_a \eqdef n (n-1) \cdots (n-a+1)$ is the falling factorial.
Consequently, the growth function $\tau_T(m)$ can be written as
\begin{align}\label{eq:growth_func_partitions_set}
    \tau_T(m) = \max_{S:\abs{S}=m} \,\sum_{a=1}^{\mathclap{\min \cb{m,n,L_T}}}\,\,\, (n)_a \abs{\P^a_T(S)},
\end{align}
where $L_T$ denotes the number of leaves of the tree class $T$ and where the sum goes up to $\min \cb{m,n,L_T}$ so that every term in the sum stays well defined.
This hints us to an important property of a tree class, that we call the \emph{partitioning functions}.

\begin{definition}[Partitioning functions]
\label{def:partitioning_function}
The \emph{$a$-partitioning function $\pi^a_T$} of a tree class $T$ is defined as the largest number of distinct $a$-partitions that $T$ can realize on a sample $S$ of $m$ examples, \ie
\begin{equation*}\label{eq:def_a-partitioning_func}
  \pi^a_T(m) \eqdef \max_{S:\abs{S}=m} \abs{ \P^a_T(S) }\, .
\end{equation*}
Moreover, we refer to the set of all possible $a$-partitioning functions of $T$ for all integers $a \in [L_T]$, with $L_T$ being the number of leaves of $T$, as the \emph{partitioning functions} of the tree class $T$. 
\end{definition}
Since the maximum of a sum is less than or equal to the sum of the maxima of its summands, we have that 
\begin{equation}\label{eq:ub_growth_func}
    \tau_T(m) \leq \sum_{a=1}^{L_T}(n)_a \pi^a_T(m).
\end{equation}
Moreover, we have equality whenever $n=2$ or $L_T=2$ since the first term of the sum of Equation~\eqref{eq:growth_func_partitions_set} is always $\abs{\P_T^1(S)}=1$ for any $S$ with $m>0$.
As for the growth function, the partitioning functions are dependent on the number of features and on the feature landscapes of the domain space, but we explicitly write these dependencies only when it is relevant.

Having linked the partitioning functions to the growth function, we can relate them to the VC dimension in the following way.
In the binary classification setting, each of the realizable partitions yields exactly 2 distinct functions by labeling the parts with the two available classes.
Thus, $T$ can realize $2^m$ binary functions iff $T$ realizes every 1- and 2-partition on $S$ (of which there are respectively exactly 1 and $2^{m-1}-1$).
On the other hand, Definition~\ref{def:VC dimension} implies that a tree $T$ shatters a sample $S$ iff it can realize all $2^m$ functions on $S$.
Therefore, since any tree class $T$ can realize the single 1-partition, we have that $T$ shatters a sample $S$ iff it realizes every 2-partition on $S$.
Hence, the VC dimension of any tree class $T$ having at least one internal node is given by
\begin{equation}\label{eq:vcdim_def_partition_func}
    \vcdim T = \max \cb{d: \pi^2_T(d) = 2^{d-1} - 1}.
\end{equation}

\section{Analysis of Decision Stumps}
\label{sec:analysis_decision_stumps}

We first inspect the theoretical aspect of the base case, that is, a tree with a single node and two leaves, also known as a decision stump.
This will prove useful in the next section when we tackle the analysis of any general decision tree.

The goal of this section is to obtain an expression for the 2-partitioning function of decision stump.
In order to do this, we consider three separate cases.
First, we examine the case when the examples are composed of only real-valued features.
We are able to derive a tight upper bound on the 2-partitioning function, which in turn allows us to recover the \emph{exact} VC dimension of decision stumps on real-valued features, a long standing open problem.
Second, we inspect the case when features are ordinal.
The analysis, while similar to real-valued features, is complicated by the discrete nature of the domain space.
Nevertheless, we can find a greedy algorithm which returns a tight upper bound on the 2-partitioning function.
Finally, we deal with nominal features, where we employ a different approach to also provide an upper bound.

\subsection{The Class of Decision Stumps on Real-valued Features}
\label{ssec:decision_stumps_on_rl_feat}

In this section, the domain space $\X$ is assumed to contain $\ell$ real-valued features.
As such, the rule set is the threshold split $\Phi \eqdef \cb{\phi(\x) = s\cdot\IdSign{x_i \le \theta } : s \in \cb{\pm 1}, i \in [\ell], \theta \in \reals}$.
As the class $T$ of decision stumps has only one root node and two leaves, the only non-trivial $a$-partitioning function of $T$ is $\pi^2_T(m)$, the maximum number of 2-partitions achievable on $m$ examples.
The following theorem gives a tight upper bound of this quantity.

\begin{theorem}[Upper bound on the 2-partitioning function of decision stumps on real-valued features]\label{thm:ub_partitioning_func_decision_stumps_rl_feat}
Let $T$ be the class of decision stumps on examples of $\ell$ real-valued features. Then
\begin{align}\label{eq:decision_stump_partitioning_function_rl_feat}
    \pi^2_T(m, \ell) \leq \frac{1}{2} \sum_{k=1}^{m-1} \min \cb{ 2\ell, \binom{m}{k} },
\end{align}
and this is an equality for $2\ell \leq m$, for $2\ell \geq \binom{m}{\floor{\frac{m}{2}}}$, and for $1 \leq m \leq 7$.
\end{theorem}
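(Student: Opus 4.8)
The plan is to reduce the problem of counting realizable $2$-partitions to a purely combinatorial counting of "separations" induced by axis-aligned thresholds, and then to bound the number of distinct separations on each coordinate. First I would fix a sample $S$ of $m$ points in $\reals^\ell$ and observe that a decision stump with rule $\phi(\x) = s\cdot\IdSign{x_i \le \theta}$ partitions $S$ into two parts, one of size $k$ and one of size $m-k$, determined entirely by the $i$-th coordinate ordering of the points. For a single coordinate, sorting the $m$ points gives at most $m-1$ distinct "cuts" (between consecutive sorted values), hence at most $m-1$ distinct $2$-partitions from that feature; summing over $\ell$ features gives at most $\ell(m-1)$ ordered cuts, but since the sign $s$ only swaps the two parts and does not create a new unordered partition, and since a cut producing a part of size $k$ is the "same" $2$-partition as one producing a part of size $m-k$, I would argue the bound $\frac12\sum_{k=1}^{m-1}\min\{2\ell,\binom{m}{k}\}$ by separately counting, for each part-size $k$, how many distinct $2$-partitions with a part of size $k$ can arise: there are at most $\binom{m}{k}$ such $2$-partitions in total (that many $k$-subsets, with the $k=m/2$ symmetry handled by the $\frac12$ already built into $\smallstirling{m}{2}_k$), and at most $2\ell$ coming from the $\ell$ features and two sign choices. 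Taking the minimum per-$k$ and summing, then dividing by $2$ to undo the double-count of $k$ versus $m-k$, yields the inequality.

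For the equality cases I would treat each regime separately. When $2\ell \le m$: here $\min\{2\ell,\binom mk\} = 2\ell$ for every $k\in\{1,\dots,m-1\}$ since $\binom mk \ge m \ge 2\ell$, so the claimed bound is $\frac12(m-1)(2\ell) = \ell(m-1)$; I would exhibit a configuration of $m$ points achieving this by choosing the $\ell$ coordinate orderings to be "sufficiently generic and mutually unrelated" so that all $\ell(m-1)$ ordered cuts give genuinely distinct unordered $2$-partitions — e.g. place the points so that the $\ell$ induced linear orders on $S$ are as different as possible (a standard trick is to use points whose coordinate vectors realize $\ell$ orderings no two of which share an adjacent transposition in a way that would collapse cuts). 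When $2\ell \ge \binom{m}{\floor{m/2}}$: then $\min\{2\ell,\binom mk\} = \binom mk$ for all $k$ (since $\binom mk \le \binom{m}{\floor{m/2}} \le 2\ell$), so the bound becomes $\frac12\sum_{k=1}^{m-1}\binom mk = \frac12(2^m - 2) = 2^{m-1}-1$, which is exactly the total number of $2$-partitions of an $m$-set; so I must show that with enough features a stump can realize every $2$-partition, i.e. $S$ is shattered — this should follow by assigning to each of the (at most) $2^{m-1}-1$ target $2$-partitions its own feature (with the two sign choices doubling capacity) and constructing coordinates that realize exactly the corresponding cut, which is possible precisely when $2\ell$ is at least the number of such partitions. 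For $1 \le m \le 7$ I would note this is a finite check: for these small $m$ one verifies (by hand or by the same constructive argument, exploiting that $\binom{m}{\floor{m/2}}$ is small) that the per-$k$ minima are all simultaneously attainable.

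The main obstacle, I expect, is the lower-bound (achievability) side, specifically constructing a single point configuration $S$ that simultaneously attains the minimum $\min\{2\ell,\binom mk\}$ for \emph{every} value of $k$ at once, rather than one $k$ at a time. The tension is that forcing many distinct small-$k$ cuts constrains the coordinate orderings, and one must check these constraints are compatible with also forcing many distinct cuts at other sizes; in the two clean regimes ($2\ell\le m$ and $2\ell\ge\binom{m}{\floor{m/2}}$) one side of the minimum dominates uniformly in $k$, which is what makes those cases tractable, whereas for general $m$ the minimum switches from $2\ell$ to $\binom mk$ as $k$ moves toward the middle, and reconciling the two constructions at the crossover is presumably why the full equality is only claimed in those regimes plus $m\le 7$. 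I would therefore invest most of the effort in the genericity argument for the $2\ell \le m$ case and the explicit feature-assignment argument for the $2\ell \ge \binom{m}{\floor{m/2}}$ case, and handle $m\le 7$ by a short case analysis or a remark that the constraints are few enough to be satisfied directly.
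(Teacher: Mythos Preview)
Your upper-bound argument is correct and matches the paper's: decompose the realizable $2$-partitions by part-size $k$, bound the count for each $k$ by $\min\{2\ell,\binom{m}{k}\}$ (with an extra factor $\tfrac12$ at $k=m/2$), and sum. One small correction: the ``$2$'' in $2\ell$ does not come from the sign $s$ --- the sign only swaps which part goes left versus right and leaves the unordered $2$-partition unchanged --- but from the two ends of the sorted order on each feature: the first $k$ and the last $k$ give two distinct $k$-subsets when $k<m/2$.

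Your sketch for the case $2\ell\le m$ is in the right direction but ``sufficiently generic'' needs to become a concrete construction; the paper writes down an explicit $\ell\times m$ array of permutations.

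The serious gap is the case $2\ell \ge \binom{m}{\floor{m/2}}$. Your proposal --- assign each target $2$-partition its own feature --- would only establish equality once $2\ell$ exceeds the \emph{total} number of $2$-partitions, namely $2^{m-1}-1$. But the theorem claims equality already at $2\ell \ge \binom{m}{\floor{m/2}}$, which is smaller by a factor of order $\sqrt{m}$. The point you are missing is that a single feature realizes not one but $m-1$ distinct $2$-partitions (one per threshold), all arising as prefixes/suffixes of one linear order on $S$. The hard combinatorial content is showing that roughly $\tfrac12\binom{m}{\floor{m/2}}$ well-chosen linear orders suffice to cover \emph{every} $k$-subset as a prefix or suffix, simultaneously for all $k\le\floor{m/2}$. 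The paper does this in two non-trivial steps: first, for $k=\floor{m/2}$, it pairs up disjoint $\floor{m/2}$-subsets using a near-perfect matching in the Odd Graph $O_{\floor{m/2}}$ (appealing to its Hamiltonicity); second, it shows inductively that once all $k$-subsets are covered one can rearrange within each half-permutation to also cover all $(k-1)$-subsets, via an injection $\binom{[m]}{k-1}\hookrightarrow\binom{[m]}{k}$ with $a\subset\phi(a)$ furnished by Hall's marriage theorem. Your argument has neither ingredient, and without them the claimed threshold $\binom{m}{\floor{m/2}}$ is out of reach.
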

The proof is presented in Appendix~\ref{app:proof_stump_rl}, and relies on a permutation representation of the decision rules as well as on graph-theoretical arguments to prove the equality for $2\ell \geq \binom{m}{\floor{\frac{m}{2}}}$.
We conjecture that the bound is an equality for all $m$, but it is not clear how this can be shown.

Let us compare the theorem with the \emph{trivial bound} that is often used for decision stumps. The trivial bound consists in exploiting the fact that for each available feature, a stump can realize at most $m - 1$ different $2$-partitions, which gives $\pi^2_T(m) \le \ell(m-1) = (1/2)\sum_{k=1}^{m-1} 2\ell$. This yields $\tau_T(m) \le 2 + 2\ell (m-1)$ for the growth function (in binary classification). Comparing the trivial bound with Theorem~\ref{thm:ub_partitioning_func_decision_stumps_rl_feat}, we see that the trivial bound becomes an equality for $2\ell \le m$ and becomes strictly larger than the bound of Theorem~\ref{thm:ub_partitioning_func_decision_stumps_rl_feat} for $2\ell > m$. Also, the trivial bound exceeds the bound of Theorem~\ref{thm:ub_partitioning_func_decision_stumps_rl_feat} by $\ell(m-1) + 1 - 2^{m-1}$ for $2\ell \ge \binom{m}{\lfloor m/2\rfloor}$ --- a gap which is at least 
\begin{equation*}
    \frac{1}{2}\sum_{k=1}^{m-1} \left[ \binom{m}{\lfloor \frac{m}{2}\rfloor} - \binom{m}{k}\right].
\end{equation*}
Each term of the sum being positive, the trivial bound can be \emph{much larger} than the proposed bound.

Now that we have a tight upper bound on the 2-partitioning function of decision stumps, it is straightforward to find the \emph{exact} VC dimension of decision stumps.

\begin{corollary}[VC dimension of decision stumps on real-valued features]
\label{thm:vcdim_stump}
Let $T$ be the hypothesis class of decision stumps on examples of $\ell$ real-valued features.
Then, the VC dimension of $T$ is implicitly given by solving for the largest integer $d$ that satisfies $\displaystyle 2\ell \geq \tbinom{d}{\floor{\frac{d}{2}}}$.
\end{corollary}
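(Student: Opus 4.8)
The plan is to feed Theorem~\ref{thm:ub_partitioning_func_decision_stumps_rl_feat} into the characterization of the VC dimension through the $2$-partitioning function. Since a decision stump has exactly one internal node, Equation~\eqref{eq:vcdim_def_partition_func} gives $\vcdim T = \max\cb{d : \pi^2_T(d) = 2^{d-1}-1}$, so it suffices to decide for which $m$ one has $\pi^2_T(m,\ell) = 2^{m-1}-1$. The elementary fact driving everything is that $\frac{1}{2}\sum_{k=1}^{m-1}\binom{m}{k} = \frac{1}{2}(2^m-2) = 2^{m-1}-1$; hence the right-hand side of Equation~\eqref{eq:decision_stump_partitioning_function_rl_feat} equals $2^{m-1}-1$ exactly when every clamp $\min\cb{2\ell,\binom{m}{k}}$ is inactive (equal to $\binom{m}{k}$), and is strictly smaller otherwise.

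First I would treat the regime $2\ell \geq \binom{m}{\floor{m/2}}$. There $\binom{m}{k} \leq \binom{m}{\floor{m/2}} \leq 2\ell$ for every $k\in[m-1]$, so each minimum in Equation~\eqref{eq:decision_stump_partitioning_function_rl_feat} equals $\binom{m}{k}$ and the upper bound collapses to $2^{m-1}-1$; moreover Theorem~\ref{thm:ub_partitioning_func_decision_stumps_rl_feat} asserts that the bound is tight precisely in this regime, so $\pi^2_T(m,\ell) = 2^{m-1}-1$. Conversely, if $2\ell < \binom{m}{\floor{m/2}}$, the $k=\floor{m/2}$ term of the sum contributes $2\ell < \binom{m}{\floor{m/2}}$, so the right-hand side of Equation~\eqref{eq:decision_stump_partitioning_function_rl_feat} is strictly below $2^{m-1}-1$, and since $\pi^2_T(m,\ell)$ is bounded above by it, $\pi^2_T(m,\ell) < 2^{m-1}-1$. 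Thus $\pi^2_T(m,\ell) = 2^{m-1}-1$ if and only if $2\ell \geq \binom{m}{\floor{m/2}}$.

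Next I would note that $d\mapsto \binom{d}{\floor{d/2}}$ is strictly increasing on the positive integers: by Pascal's rule, for $d=2t$ one has $\binom{2t+1}{t} = \binom{2t}{t-1}+\binom{2t}{t} > \binom{2t}{t}$, and for $d=2t+1$ one has $\binom{2t+2}{t+1} = 2\binom{2t+1}{t+1} > \binom{2t+1}{t+1}$. Consequently $\cb{d\in\naturals : 2\ell \geq \binom{d}{\floor{d/2}}}$ is an initial segment $\cb{1,\dots,d^\star}$ of $\naturals$ with a well-defined maximal element $d^\star$ (nonempty because $\binom{1}{0}=1\leq 2\ell$). Combining with the previous paragraph, $\pi^2_T(d)=2^{d-1}-1$ holds exactly for $d\leq d^\star$, whence $\vcdim T = \max\cb{d : \pi^2_T(d)=2^{d-1}-1} = d^\star$, the claimed expression.

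As for difficulty: essentially all the combinatorial substance lives inside Theorem~\ref{thm:ub_partitioning_func_decision_stumps_rl_feat}, and in particular the nontrivial equality case $2\ell\geq\binom{m}{\floor{m/2}}$ is exactly the one this corollary requires, so the argument above is pure bookkeeping. The only points needing a moment's care are (i) checking that in the regime $2\ell\geq\binom{m}{\floor{m/2}}$ all the clamps are inactive, so the bound genuinely attains $2^{m-1}-1$, and (ii) the monotonicity of the central binomial coefficient, which is what makes ``the largest $d$ with $2\ell\geq\binom{d}{\floor{d/2}}$'' unambiguous and aligns it with the threshold beyond which $\pi^2_T$ ceases to saturate.
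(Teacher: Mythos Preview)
Your proposal is correct and follows essentially the same approach as the paper's own proof: both invoke Equation~\eqref{eq:vcdim_def_partition_func}, use Theorem~\ref{thm:ub_partitioning_func_decision_stumps_rl_feat} to characterize exactly when $\pi^2_T(m)=2^{m-1}-1$ by the threshold $2\ell \gtrless \binom{m}{\floor{m/2}}$, and then appeal to the strict monotonicity of the central binomial coefficient. You simply make explicit a few steps the paper leaves implicit (the identity $\frac{1}{2}\sum_{k=1}^{m-1}\binom{m}{k}=2^{m-1}-1$, why the clamp at $k=\floor{m/2}$ forces strict inequality, and the Pascal-rule verification of monotonicity).
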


\begin{proof}
According to Equation~\eqref{eq:vcdim_def_partition_func}, the VC dimension is given by the largest integer $m$ such that $\pi^2_T(m) = 2^{m-1}-1$.
Theorem~\ref{thm:ub_partitioning_func_decision_stumps_rl_feat} gives an upper bound on the 2-partitioning function of decision stumps.
Notice that for $2\ell \geq \binom{m}{\floor{\frac{m}{2}}}$, this theorem simplifies to $\pi^2_T(m) = 2^{m-1}-1$, while for $2\ell < \binom{m}{\floor{\frac{m}{2}}}$, it implies $\pi^2_T(m) < 2^{m-1}-1$. Since $\binom{m}{\floor{\frac{m}{2}}}$ is a strictly increasing function of $m$, the largest integer $m$ such that $\pi^2_T(m) = 2^{m-1}-1$ is the largest $m$ that satisfies $2\ell \geq \binom{m}{\floor{\frac{m}{2}}}$.
\end{proof}

\noindent\textbf{Remark }
Let us mention the similarities with the result of \citet{gey2018vapnik}, where they find the VC dimension of axis-parallel cuts.
They define axis-parallel cuts as some kind of asymmetric stump, where the left leaf is always labeled~0 and the right leaf is always labeled~1.
The main difference is that the VC dimension of axis-parallel cuts is given by the largest integer $d$ that satisfies $\ell \geq \binom{d}{\floor{\frac{d}{2}}}$ (the factor 2 is absent).
Their approach is a set theoretic one, and we expect it would be hard to extend it to decision stumps, specifically for the case where $m$ is odd.
Moreover, the graph theoretic approach used here (see Appendix~\ref{app:proof_part_3_vcdim_stump}) allows us to recover a tight upper bound for the growth function (and therefore applies to the multiclass setting), while theirs does not.

\subsection{The Class of Decision Stumps on Ordinal Features}
\label{ssec:decision_stumps_on_ordinal_feat}

We here consider the domain space $\X$ is to be composed exclusively of $\omega$ ordinal features satisfying some feature landscape $\Ob$.
We assume that for every feature $i$, $x_i$ takes value in $[O_i]$ for $O_i \in \naturals$; if it does not, simply relabel the categories.
As discussed previously, the rule set is again the threshold split $\Phi \eqdef \cb{\phi(\x) = s \cdot \IdSign{x_i \le \theta } : s \in \cb{\pm 1}, i \in [\omega], \theta \in [O_i-1]}$.
The discrete nature of the features introduces some complexity not present when dealing with real-valued features.
To be able to leverage this fact in the partitioning function, we develop a greedy procedure attribution procedure of the available decision rules.
The output of the algorithm is summarized in the following theorem.

\begin{theorem}[Upper bound on the $2$-partitioning function of decision stumps on ordinal features]
\label{thm:ub_partitioning_func_decision_stumps_ordinal_feat}
Let $T$ be the class of decision stump on $\omega$ ordinal features and denote the feature landscape by $\Ob$.
Let $\Obbar^0$ be the feature landscape conjugate of $\Ob$ with components defined by $\Obar_C^0 = \sum_{i=1}^\omega \Id{O_i-1 \ge C}$, where $C\in[\Omega-1]$ for $\Omega \eqdef \max_i O_i$.
Then, the number of realizable 2-partitions is bounded by
\begin{equation*}
    \pi^2_T(m, \Ob) \le \sum_{k=1}^{\floormtwo} R_k,
\end{equation*}
with
\begin{equation*}
    R_k = \begin{cases}
        \min\cb{\Obar^{k-1}_1 + \Obar^{k-1}_2, \binom{m}{k}} & \text{if } k < \frac{m}{2}\\
        \min\cb{\Obar^{k-1}_1, \frac{1}{2}\binom{m}{k}} & \text{if } k = \frac{m}{2},
    \end{cases}
\end{equation*}
where, for $k<\frac{m}{2}$,
\begin{equation*}
    \Obar^k_C =
    \begin{cases}
        \Obar^{k-1}_{C} & \text{if } C < {\Gamma_k},\\
        \Obar^{k-1}_{\Gamma_k} + \Obar^{k-1}_{{\Gamma_k}+1} + \Obar^{k-1}_{{\Gamma_k}+2} - R_k & \text{if } C = {\Gamma_k}\\
        \Obar^{k-1}_{C+2} & \text{otherwise,}
    \end{cases}
\end{equation*}
with $\Obar^k_\Omega = \Obar^k_{\Omega+1} \eqdef 0$ and $\Gamma_k \eqdef \max \cb{ 1 \le C \le \Omega-1 : \Obar^{k-1}_C + \Obar^{k-1}_{C+1}\ge R_k }$. (No update rules for $\Obbar^k$ are needed when $k=\frac{m}{2}$.)
\end{theorem}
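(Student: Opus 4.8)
The plan is to recast $\pi^2_T(m,\Ob)$ as the value of a small combinatorial maximization over how many $2$-partitions are realized at each possible size of their smaller part, and then to verify that the greedy encoded by the landscape updates $\Obbar^{k-1}\mapsto\Obbar^{k}$ solves it. First I would strip away the inessential structure: the sign parameter $s$ merely swaps the two sides of the $2$-partition induced by a threshold rule, so it does not matter, and a $2$-partition $\cb{\alpha,S\setminus\alpha}$ of a sample $S$ is realizable by the stump class iff there are a feature $i$ and a threshold $\theta\in[O_i-1]$ with $\cb{\x\in S:x_i\le\theta}\in\cb{\alpha,S\setminus\alpha}$. Fixing a feature $i$ and sorting $S$ by $x_i$, the candidate left sides $\cb{\x:x_i\le\theta}$ are nested prefixes of this order, so feature $i$ realizes at most $O_i-1$ distinct $2$-partitions; labelling each by the size $k\in[\floormtwo]$ of its smaller part and listing them by increasing prefix length, this sequence of sizes is unimodal (it rises while a prefix is the smaller side, then falls once a suffix is). Hence feature $i$ contributes at most two $2$-partitions of any size $k<\frac m2$, at most one of size $\frac m2$, and at most $O_i-1$ in total. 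Ties among the $x_i$ values only reduce the number of distinct or valid splits, so I would freely use this over-counted bound.

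Stratifying, write $N_k(S)$ for the number of realizable $2$-partitions with smaller part of size $k$, so that $\pi^2_T(m,\Ob)=\max_S\sum_{k=1}^{\floormtwo}N_k(S)$ with the trivial ceilings $N_k(S)\le\smallstirling{m}{2}_k$ (that is, $\binom mk$ for $k<\frac m2$ and $\frac12\binom mk$ for $k=\frac m2$). By the single-feature analysis, any realizable configuration comes with nonnegative integers $c_{i,k}$ --- feature $i$'s contribution to size $k$ --- obeying $c_{i,k}\le 2$ for $k<\frac m2$, $c_{i,m/2}\le 1$, $\sum_k c_{i,k}\le O_i-1$, and $\sum_i c_{i,k}\ge N_k(S)$; conversely any such matrix furnishes a legitimate upper bound. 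Hence $\pi^2_T(m,\Ob)$ is at most the maximum of $\sum_k N_k$ under $N_k\le\smallstirling{m}{2}_k$, $N_k=\sum_i c_{i,k}$, and these per-feature box and budget constraints. This program depends on $\Ob$ only through the multiset $\cb{O_1-1,\dots,O_\omega-1}$, whose conjugate Young partition is precisely $\Obbar^0$, and the largest size-$k$ coverage achievable from a residual landscape whose conjugate is $\Obbar^{k-1}$ equals $\min\cb{\Obar^{k-1}_1+\Obar^{k-1}_2,\smallstirling{m}{2}_k}$, because a feature with $r$ residual thresholds covers size $k$ at most $\min\cb{r,2}$ times and $\sum_i\min\cb{r_i,2}=\Obar^{k-1}_1+\Obar^{k-1}_2$.

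The core claim is that this maximum equals $\sum_{k=1}^{\floormtwo}R_k$, attained by the greedy that runs over sizes $k=1,2,\dots$ in increasing order, realizes $R_k$ partitions at size $k$ (the minimum just displayed), and spends those $R_k$ units over the features \emph{as thinly as possible} --- as few features as possible at multiplicity $2$ --- so as to keep $\Obar_1+\Obar_2$ as large as possible for later sizes. I would check that this ``remove $R_k$ cells from the Young diagram of the residual landscape, at most two per column, leaving the residual as large as possible in the dominance order'' operation produces exactly the $\Obbar^{k}$ of the statement, with $\Gamma_k$ being the column-height level below which nothing is touched and at which the leftover demand is absorbed; that $\Obbar^{k}$ is again a partition follows from $\Obar^{k}_{\Gamma_k}\ge\Obar^{k-1}_{\Gamma_k+2}\ge 0$, using $\Obar^{k-1}_{\Gamma_k}+\Obar^{k-1}_{\Gamma_k+1}\ge R_k$. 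Optimality is then an exchange argument: the feasible set for $(N_k)$ is a truncated polymatroid determined by the per-feature budgets, so greedily raising each $N_k$ in turn to the largest value the residual permits is optimal, and the thin distribution is the choice of $(c_{i,k})$ leaving a residual that dominates every residual reachable after realizing as many size-$k$ partitions as possible. Running this to $k=\floormtwo$ and summing yields $\sum_k N_k(S)\le\sum_k R_k$ for all $S$.

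The step I expect to be the real obstacle is this optimality claim together with the exact identification of the update: one must show that ``spread coverage thinly'' is implemented precisely by the $\Gamma_k$/shift-by-two formula, and that the resulting conjugate partition dominates every residual compatible with having realized $R_k$ (or fewer) size-$k$ partitions under the per-feature box constraints --- a dominance invariant that has to be propagated through the induction on $k$. Minor but necessary are the boundary size $k=\frac m2$, where a feature's prefix and suffix cuts coincide so the per-feature cap drops to one and the ceiling halves (and no landscape update is needed, this being the last step), and degenerate features with $O_i=1$, which supply no threshold and are already invisible in $\Obbar^0$.
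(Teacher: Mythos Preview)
Your high-level plan matches the paper's: reduce to a resource-allocation problem where each feature $i$ has a budget of $O_i-1$ split tokens and can spend at most two per size $k<m/2$ (one for $k=m/2$), then run a greedy over $k$ that realizes $R_k$ partitions at each step and leaves the best possible residual for later. The paper's optimality proof is a short contradiction against a lexicographically extreme optimal allocation rather than a polymatroid exchange, and its derivation of the update rule tracks, column by column in the Ferrers diagram, the counts $a^0_C,a^1_C,a^2_C$ of features that have so far received $0$, $1$, or $2$ attributions and solves the resulting recurrences in closed form.

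There is, however, a concrete error in how you describe the spending rule, and it would block the verification you flag as the main obstacle. You say the greedy should spend ``as thinly as possible --- as few features as possible at multiplicity $2$.'' That is the wrong rule and is \emph{not} what the update formula encodes. Take residual budgets $(5,5,4,4,1,1)$ with $R_k=6$: the thin choice (one token from each feature) leaves $(4,4,3,3,0,0)$ with $\Obar_1+\Obar_2=8$, whereas taking from the features with the largest remaining budgets first leaves $(3,3,3,3,1,1)$ with $\Obar_1+\Obar_2=10$. Moreover $(4,4,3,3,0,0)$ \emph{dominates} $(3,3,3,3,1,1)$ as a partition of feature budgets, so if ``residual as large as possible in the dominance order'' refers to that partition you are selecting the wrong one; what you actually need (and state correctly later, when you ask that the resulting \emph{conjugate} partition dominate all alternatives) is the residual whose conjugate $\Obbar^k$ is dominance-maximal --- equivalently, the residual partition that is dominance-\emph{minimal}, i.e.\ as flat as possible. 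The rule that achieves this is the opposite of thin: spend on the features with the most remaining thresholds first --- in the paper's Ferrers picture, fill the rightmost columns first --- which is exactly what your own gloss ``$\Gamma_k$ is the column-height level below which nothing is touched'' already says. Once you replace ``thin'' by ``rightmost/tallest first,'' the shift-by-two formula with the $\Gamma_k$ cutoff is precisely this operation and the rest of your plan goes through.
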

The proof of this theorem is presented in Appendix~\ref{app:proof_stump_ordinal}.

Note that this theorem is an upper bound and not an equality, because the greedy procedure from which this theorem stems does not verify if the construction can be realized by an actual sample; indeed, it could be possible that for some feature landscape, there is no sample that achieves this upper bound.
However, equality would follow as a consequence of the equality of the bound for stumps on real-valued features.
Therefore, we can conjecture with the same degree of certainty that this new theorem is actually an equality.

Note that this result can be simplified at the cost of tightness to the more manageable form
\begin{equation*}
    \pi^2_T(m, \Ob) \le \frac{1}{2} \sum_{k=1}^{m-1} \min\cb{2\omega, \binom{m}{k}}.
\end{equation*}
This expression is identical to the bound for decision stumps on real-valued features (where $\ell$ is changed for $\omega$), as one should expect since ordinal features can only be less expressive than real-valued ones.

\subsection{The Class of Decision Stumps on Nominal Features}
\label{ssec:decision_stumps_on_nominal_feat}

In this section, we handle the case where the domain space consists exclusively in $\nu$ nominal features with feature landscape $\Nb$.
For convenience, we again assume that every feature $i$ takes value in $[N_i]$ for $N_i \in \naturals$.
The rule set is the unitary comparison, which can be written formally as $\Phi \eqdef \cb{ \phi(\x) = s \cdot \IdSign{x_i = C} : s \in \cb{\pm 1}, i \in [\nu], C \in [N_i] }$.
Then, the following theorem on the expressivity of a decision stump holds.

\begin{theorem}[Upper bound on the 2-partitioning function of decision stumps on nominal features]
\label{thm:ub_partitioning_func_decision_stumps_nominal_feat}
Let $T$ be the class of decision stumps on $\nu$ nominal features and denote the feature landscape by $\Nb$.
Then, the of realizable 2-partitions is bounded by
\begin{align*}
    \pi_T^2(m) \le \min \cb{
        \sum_{i=1}^\nu \min \bigg\{ R_{N_i}, \sum_{k=1}^{\floormtwo} R_{N_i,k}\bigg\},
        \sum_{k=1}^{\floormtwo} \min \bigg\{ \stirling{m}{2}_k\!\!,\, \sum_{i=1}^\nu R_{N_i,k}\bigg\}
    },
\end{align*}
where
\begin{equation*}
    R_{N} \eqdef \begin{cases}
        N-1 & \text{if } N = 1 \text{ or } 2,\\
        \min\cb{N, m} & \text{otherwise,}
    \end{cases}
\end{equation*}
and
\begin{equation*}
    R_{N,k} \eqdef \begin{cases}
    1 & \text{if } \frac{m}{k}=2 \text{ and } N \neq 1 ,\\
    N-1 & \text{if } \frac{m}{k} > N,\\
    \floor{\frac{m}{k}} & \text{if } \frac{m}{k} \le N.
    \end{cases}
\end{equation*}
\end{theorem}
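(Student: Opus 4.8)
<br>

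The plan is to bound the number of realizable 2-partitions of a decision stump on nominal features in two complementary ways and then take the minimum, which gives the two arguments of the outer $\min$ in the statement.

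\textbf{Setting up the count per feature.} First I would fix a sample $S$ of $m$ examples and, for a single nominal feature $i$, analyze which 2-partitions the unitary comparison rule on that feature can produce. A rule $\phi(\x) = s\cdot\IdSign{x_i = C}$ groups together all examples with $x_i = C$ on one side and everything else on the other; the sign $s$ is irrelevant to the resulting \emph{partition} (it only swaps the leaf labels). So the 2-partitions realizable using feature $i$ alone are exactly those of the form $\{\x : x_i = C\}$ versus its complement, for the categories $C$ that actually appear among the examples of $S$. There are at most $N_i$ such categories, and at most $m$ of them can appear; moreover, a category that is the \emph{only} one present gives the trivial (non-)partition, and when $N_i = 2$ the two choices of $C$ give the same 2-partition. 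This yields the quantity $R_{N_i} = \min\{N_i, m\}$ in general, with the small corrections $R_N = N-1$ for $N\in\{1,2\}$ encoding exactly these degeneracies. Summing over features gives $\sum_{i=1}^\nu R_{N_i}$ as a first, crude upper bound, but it double-counts nothing about \emph{part sizes}.

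\textbf{Refining by part size.} The sharper bound comes from controlling, for each $i$, how many distinct 2-partitions feature $i$ can produce that have a part of size exactly $k$. A 2-partition with a part of size $k$ arising from feature $i$ corresponds to a category $C$ occurring on exactly $k$ examples (giving a part of size $k$ on that side) or on exactly $m-k$ examples (giving the size-$k$ part on the other side). Since the examples carrying the various categories of feature $i$ are disjoint, the number of categories occupying exactly $k$ examples is at most $\lfloor m/k \rfloor$, and combining the $k$-side and $(m-k)$-side contributions while accounting for $N_i$ and for the $k = m/2$ self-overlap gives the piecewise definition of $R_{N_i,k}$: value $\lfloor m/k\rfloor$ when $m/k \le N_i$, value $N_i - 1$ when $m/k > N_i$ (only $N_i - 1$ of the $N_i$ categories can be ``small'', as their sizes must sum to $m$), and the special value $1$ when $m/k = 2$. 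Then $\sum_{k=1}^{\lfloor m/2\rfloor} R_{N_i,k}$ bounds the partitions from feature $i$ refined by size, and $\min\{R_{N_i}, \sum_k R_{N_i,k}\}$ keeps the better of the two per-feature estimates; summing over $i$ gives the first argument of the outer $\min$.

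\textbf{The global size-stratified bound and conclusion.} For the second argument, instead of summing per feature first, I would stratify the \emph{entire} collection of realizable 2-partitions by the size $k$ of the smaller part. The number of 2-partitions of $S$ having a part of size $k$ is at most $\smallstirling{m}{2}_k$ (this is $\binom{m}{k}$, halved when $k = m/2$, by the definition given before Theorem~\ref{thm:ub_partitioning_func_decision_stumps_rl_feat}), regardless of the features; and the number that are realizable with a part of size $k$ is also at most $\sum_{i=1}^\nu R_{N_i,k}$ by the per-feature count above. Taking the minimum inside each size stratum and summing over $k$ from $1$ to $\lfloor m/2\rfloor$ gives $\sum_k \min\{\smallstirling{m}{2}_k, \sum_i R_{N_i,k}\}$. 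Since both derivations bound the same quantity $\pi_T^2(m)$, their minimum does too, which is the claimed inequality. The main obstacle I anticipate is the careful bookkeeping of the degenerate cases — $N_i \in \{1,2\}$, categories absent from $S$, the $k = m/2$ overlap between the ``size-$k$ part'' and ``size-$(m-k)$ part'' descriptions, and checking that $R_{N,k} = 1$ is the right value when $m/k = 2$ — since the clean combinatorial idea is simple but the precise constants require treating these boundary situations without off-by-one errors.
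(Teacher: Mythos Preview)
Your proposal is correct and follows essentially the same approach as the paper: decompose the set of realizable 2-partitions by feature $i$, by part size $k$, and by both simultaneously, derive the per-cell bounds $R_{N_i}$, $\smallstirling{m}{2}_k$, and $R_{N_i,k}$, then apply the union bound in the two possible orders and take the minimum. The only cosmetic difference is that the paper derives $R_{N,k}$ via a greedy ``worst-case sample'' construction (peel off groups of $k$ examples and assign them successive categories until either categories or examples run out), whereas you argue directly from the disjointness of category fibers that at most $\lfloor m/k\rfloor$ categories can have size $k$; both routes land on the same piecewise formula, and your anticipated bookkeeping issue --- showing that the $(m{-}k)$-side contribution does not push the count past $\lfloor m/k\rfloor$ --- is exactly the small case analysis that both arguments must (and can) absorb.
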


The proof of this theorem relies on combinatorial arguments and on the union bound.
It is presented in Appendix~\ref{app:proof_stump_nominal}.
Two corollaries easier to understand can be obtain from this bound depending on which part of the minimum is kept.
The first result that can be derived is
\begin{equation*}
    \pi_T^2(m) \le \sum_{i=1}^\nu R_{N_i} \le \sum_{i=1}^\nu N_i.
\end{equation*}
where $\sum_{i=1}^\nu N_i$ is the total number of categories across all features.
This expression highlights that one cannot have more partitions than the total number of categories available.
The second result that can be recovered can be simplified to the form
\begin{equation*} \label{eq:R_k_stump}
    \pi_T^2(m) \le \frac{1}{2}\sum_{k=1}^{m-1} \min\cb{ \binom{m}{k},\, \nu\floor{\frac{m}{\min\cb{k,m-k}}}}.
\end{equation*}
This latter corollary is reminiscent of the bound for stump on $\ell$ real-valued features (Theorem~\ref{thm:ub_partitioning_func_decision_stumps_rl_feat}), where $\nu \floor{\frac{m}{\min\cb{k,m-k}}}$ is changed for $2\ell$.
Note also that $\floor{\frac{m}{\min\cb{k,m-k}}} \ge 2$ for the considered values of $k$, hence we can expect that decision trees using the unitary comparison rule set are slightly more expressive than the threshold split.

\section{Analysis of General Decision Trees}
\label{sec:analysis_of_decision_trees}

In the previous section, we have presented three results on the 2-partitioning functions of decision stumps.
As a consequence from the fact that a decision tree can be defined recursively from its left and right subtrees, we can use these theorems as base cases for general tree classes.
Therefore, we here derive upper bounds for the $c$-partitioning function of any tree when the features are exclusively real-valued, ordinal or nominal, and we finally combine these results in a theorem that applies for any mixture of feature types.

The proofs of these results require some careful examinations for each type of feature.
Therefore, in this section we only present the theorems and discuss some of their implications, while we delegate the mathematical details and derivation to Appendix~\ref{app:proof_of_ub_partitioning_functions}.
All the results that are presented below share some common steps that relies on a recursive decomposition of $\P^c_T(S)$, the set of $c$-partitions realizable by a tree $T$ on sample $S$.
This is first exposed in Appendix~\ref{app:decision_trees_as_partitioning_machines}; subsequent appendices handle, in order, real-valued features, categorical features and a mixture of both.

\subsection{The Class of Decision Trees on Real-valued Features}
\label{ssec:decision_trees_on_rl_feat}

We now provide an extension of Theorem~\ref{thm:ub_partitioning_func_decision_stumps_rl_feat} that applies to any binary decision tree class on real-valued features, from which we derive the asymptotic behavior of the VC dimension of these classes.

\begin{theorem}[Upper bound on the $c$-partitioning function of decision trees on real-valued features]
\label{thm:ub_partitioning_functions_decision_trees_rl_feat}
Let $T$ be a binary decision tree class endowed with the threshold split rule set, let $T_l$ and $T_r$ be the hypothesis classes of its left and right subtrees, and let $L_T$ denote the number of leaves of $T$.
Let the examples be made of $\ell$ real-valued features.
Then, for $m \le L_T$, we have $ \pi^c_T(m) = \smallstirling{m}{c}$, whereas for $m > L_T$, the $c$-partitioning function must satisfy
\begin{equation}\label{eq:ub_partitioning_function_tree}
    \pi^c_T(m)
    \leq
    2^{-\delta_{lr}}
    \hspace{-7pt}
    \sum_{k=L_{T_l}}^{m-L_{T_r}}
    \!\!
    \min\cb{ 2\ell, \tbinom{m}{k} }
    \hspace{-7pt}
    \sum_{\substack{1 \leq a, b \leq c \\ a + b \geq c}}
    \!\!
    \tbinom{a}{c-b} \tbinom{b}{c-a} (a + b - c)!\;
    \pi^a_{T_l}(k) \pi^b_{T_r}(m-k)\, ,
\end{equation}
where $\delta_{lr} = \Id{T_l = T_r}$ is the Kronecker delta.
\end{theorem}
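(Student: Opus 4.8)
The plan is to treat the two regimes of the statement separately. For $m \le L_T$ the claimed equality $\pi^c_T(m) = \smallstirling{m}{c}$ reads ``$\le$'' trivially (it is the total number of $c$-partitions of an $m$-element set), so only the matching lower bound needs work: I would exhibit one sample $S$, $\abs{S}=m$, on which \emph{every} $c$-partition is realizable. Take $S$ with pairwise distinct first coordinates and argue by induction on the structure of $T$ that its $m$ points can be routed to $m$ distinct leaves using only threshold splits on feature $1$: at the root, sort $S$ by its first coordinate and set the threshold so that the $m_l \eqdef \min\cb{m,L_{T_l}}$ smallest points go left and the remaining $m_r = m-m_l$ (which is $\le L_{T_r}$ because $m \le L_{T_l}+L_{T_r}$) go right, then recurse on $T_l$ and $T_r$. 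Once every point occupies its own leaf, an arbitrary target $c$-partition of $S$ is realized by labelling each point's leaf with the index of its part (empty leaves arbitrarily); hence $\abs{\P^c_T(S)} = \smallstirling{m}{c}$ and $\pi^c_T(m)=\smallstirling{m}{c}$.

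For $m > L_T$ I would first reduce to $S$ in general position (distinct points, distinct coordinates): breaking ties by an infinitesimal perturbation turns any realizing tree into another realizing tree with slightly perturbed thresholds, so it never decreases $\abs{\P^c_T(S)}$, and therefore the maximum defining $\pi^c_T(m)$ is attained at such an $S$; crucially, every subsample of a general-position sample is again in general position, so the first regime applies to all of its subsamples. Now decompose $\P^c_T(S)$ recursively: any $\palpha \in \P^c_T(S)$ is witnessed by some $t\in T$ whose root rule splits $S$ into $S_L$ (routed left, $\abs{S_L}=k$) and $S_R$ (routed right); $t_l\in T_l$ induces an $a$-partition $\palpha_L\in\P^a_{T_l}(S_L)$, $t_r\in T_r$ induces a $b$-partition $\palpha_R\in\P^b_{T_r}(S_R)$, and $\palpha$ is the combination of the pair $(\palpha_L,\palpha_R)$ in which a part of $\palpha_L$ is fused with a part of $\palpha_R$ exactly when $t$ assigns them the same leaf label. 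Since each part of $\palpha$ meets $S_L$ in a single part of $\palpha_L$ (or not at all), and likewise for $S_R$, the number of $c$-partitions obtained from a fixed pair $(\palpha_L,\palpha_R)$ is the number of ways to pick $p \eqdef a+b-c$ parts on each side and match them, namely $\binom{a}{p}\binom{b}{p}p! = \binom{a}{c-b}\binom{b}{c-a}(a+b-c)!$ --- the merging identity underlying the recursive decomposition of Appendix~\ref{app:decision_trees_as_partitioning_machines} --- and this forces $1\le a,b\le c$ and $a+b\ge c$.

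It then remains to count root splits. For a fixed size $k$, a rule $\phi(\x)=s\cdot\IdSign{x_i\le\theta}$ sends to the left either the $k$ smallest or the $k$ largest points in coordinate $i$ (the sign $s$ toggling the two), so at most $2\ell$ ordered bipartitions of size $k$ are realizable, and of course at most $\binom{m}{k}$ in total; hence the number of admissible root splits is at most $\min\cb{2\ell,\binom{m}{k}}$, and summing over $k$ yields the stated bound with the crude range $1\le k\le m-1$. To shrink the range to $L_{T_l}\le k\le m-L_{T_r}$, I would show every $\palpha$ has a witness whose root split lies there: if $k<L_{T_l}$, move the root threshold past the adjacent boundary point $x_0\in S_R$ so that $x_0$ joins the left side; since $\abs{S_L\cup\cb{x_0}}=k+1\le L_{T_l}$ and this subsample is in general position, the first regime lets $T_l$ realize $\palpha|_{S_L\cup\cb{x_0}}$ while $t_r$ still realizes $\palpha|_{S_R\setminus\cb{x_0}}$, and recombining with labels chosen to match those of $\palpha$ reproduces $\palpha$; iterate until $k=L_{T_l}$, and symmetrically when $k>m-L_{T_r}$. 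Finally, when $T_l=T_r$ (so $L_{T_l}=L_{T_r}$ and the range is symmetric), negating the root sign swaps the two subtrees, so each $\palpha$ arises from at least two distinct tuples in the sum --- the one based on $(S_L,S_R)$ and the one based on $(S_R,S_L)$, which differ because $S_L\neq S_R$ --- and we may halve the sum, producing the factor $2^{-\delta_{lr}}$.

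The main obstacle is the rigor of the second regime: making the recursive decomposition of $\P^c_T(S)$ precise enough that the correspondence between realizable $c$-partitions and tuples $(\text{root split},\palpha_L,\palpha_R,\text{matching})$ is genuinely onto --- which is exactly what the shifting argument (and hence the first regime, applied to subsamples) is needed for --- while the merging count $\binom{a}{c-b}\binom{b}{c-a}(a+b-c)!$ is attached to the correct objects. By comparison, the general-position reduction and the symmetry halving are routine once that skeleton is in place.
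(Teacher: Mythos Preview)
Your proposal is correct and follows essentially the same approach as the paper: the same general-position reduction, the same recursive decomposition via the root split, the same merging count $\binom{a}{c-b}\binom{b}{c-a}(a+b-c)!$, the same shifting argument to restrict the range of $k$, and the same symmetry observation for the $2^{-\delta_{lr}}$ factor. The only cosmetic difference is that you index the decomposition by \emph{ordered} root splits $(S_L,S_R)$ with $|S_L|=k$, whereas the paper indexes by unordered $2$-partitions $\plambda\in\R_k(S)$ and then treats the two orientations via separate sets $\A_{lr}$ and $\A_{rl}$ --- your formulation is slightly more direct and lets you skip the paper's four-case analysis when merging the $\A_{lr}$ and $\A_{rl}$ bounds, but the substance is identical.
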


The proof is provided in Appendix~\ref{app:proof_decision_trees_rl}.
Note that the inequality~\eqref{eq:ub_partitioning_function_tree} of Theorem~\ref{thm:ub_partitioning_functions_decision_trees_rl_feat} reduces to the inequality~\eqref{eq:decision_stump_partitioning_function_rl_feat} of Theorem~\ref{thm:ub_partitioning_func_decision_stumps_rl_feat} when $T$ is the class of decision stumps.

From this Theorem, one can find the asymptotic behavior of the VC dimension of a binary decision tree class on examples with real-valued features.
It is stated in the following corollary.

\begin{corollary}[Asymptotic behavior of the VC dimension]\label{coro:asymptotic_VC_tree}
Let $T$ be a class of binary decision trees with a structure containing $L_T$ leaves on examples of $\ell$ real-valued features.
Then, $\vcdim T \in O\pr{ L_T \log(L_T\ell) }$.
\end{corollary}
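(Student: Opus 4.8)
The plan is to upper bound the growth function $\tau_T(m)$ by a crude enumeration of the trees in $T$ restricted to a fixed sample, take a base-$2$ logarithm, and then solve the resulting transcendental inequality for the largest $m$ compatible with $\tau_T(m) = 2^m$; by Definition~\ref{def:VC dimension} this largest $m$ equals $\vcdim T$. (The case $\vcdim T = 0$ is trivial, so assume $\vcdim T \ge 1$ throughout.)

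\emph{Step 1 (counting).} Fix a sample $S$ with $\abs{S} = m$. A tree $t \in T$ is specified by a decision rule at each of its $L_T - 1$ internal nodes together with a label in $\{1,2\}$ at each of its $L_T$ leaves, and the restriction $t|_S$ depends only on the restrictions to $S$ of the chosen rules and on the leaf labels (to evaluate $t(\x)$ for $\x \in S$ one only ever consults $\phi|_S$ along the root-to-leaf path). For the threshold split rule set $\Phi = \cb{\phi(\x) = s \cdot \IdSign{x_i \le \theta}}$, the restriction $\phi|_S$ is determined by $i \in [\ell]$, $s \in \{\pm 1\}$, and the position of $\theta$ among the (at most $m$) sorted values of the $i$-th coordinate over $S$; hence there are at most $2\ell(m+1) \le 4\ell m$ distinct restricted rules. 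Therefore
\begin{equation*}
    \tau_T(m) \ \le\ (4\ell m)^{L_T - 1}\, 2^{L_T}.
\end{equation*}
(One could instead unroll the recursion of Theorem~\ref{thm:ub_partitioning_functions_decision_trees_rl_feat} through~\eqref{eq:ub_growth_func}; this yields the same order of magnitude at the cost of extra combinatorial bookkeeping for the split sizes, so the crude count above is preferable here.)

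\emph{Step 2 (logarithm and inversion).} If a set $S$ of size $d = \vcdim T$ is shattered, then $2^d = \tau_T(d) \le (4\ell d)^{L_T - 1} 2^{L_T}$, and taking $\log_2$ together with $L_T - 1 \le L_T$ gives $d \le L_T \log_2(4\ell d) + L_T = L_T \log_2(8\ell d)$. Writing $d = L_T\, t$, this becomes $t \le \log_2(8\ell L_T t)$, i.e. $2^t \le 8\ell L_T t$. Since $2^{t/2} \ge t$ for all $t \ge 4$, one has for $t \ge 4$ that $2^{t/2} \le 2^t / t \le 8\ell L_T$, hence $t \le 2\log_2(8\ell L_T)$; and if $t < 4$ the same bound holds since $8\ell L_T \ge 8$ gives $2\log_2(8\ell L_T) \ge 6 > 4$. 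In both cases $d = L_T\, t \le 2 L_T \log_2(8\ell L_T) \in O\pr{L_T \log(L_T \ell)}$, which is the claim.

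\emph{Main obstacle.} The two points requiring care are (i) checking that the number of distinct restrictions of the threshold split rules to a sample of size $m$ is $O(\ell m)$ — routine once each coordinate of $S$ is sorted — and (ii) the inversion $2^t \le 8\ell L_T t \implies t = O(\log(\ell L_T))$, the standard step in VC-dimension estimates of this type, where one must track the constants and deal with the small-$t$ regime separately. Everything else is bookkeeping.
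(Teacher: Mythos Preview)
Your proof is correct and takes a genuinely different route from the paper. The paper's argument starts from the recursive bound of Theorem~\ref{thm:ub_partitioning_functions_decision_trees_rl_feat}, specializes to $c=2$, bounds each factor crudely, and proves by induction on the tree structure that $\pi^2_T(m) \le (Cm\ell)^{N}$ for some constant $C$ and $N = L_T - 1$; the inversion is then carried out via the Lambert $W$ function. You instead bypass the partitioning-function machinery entirely and count the restrictions $t|_S$ directly by enumerating restricted rules at each node and labels at each leaf, obtaining $\tau_T(m) \le (4\ell m)^{L_T - 1} 2^{L_T}$, and invert by the elementary $2^{t/2} \ge t$ trick rather than Lambert $W$. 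Your approach is shorter and more self-contained (it is the classical covering argument for VC classes built from simple primitives), and it makes transparent that the asymptotic result does not actually depend on the finer recursive structure of Theorem~\ref{thm:ub_partitioning_functions_decision_trees_rl_feat}. The paper's route, on the other hand, exercises their partitioning framework and shows that the corollary falls out of it --- which is valuable as a demonstration of the framework's reach, even if the direct argument is cleaner for this particular corollary. You correctly anticipate this in your parenthetical remark.
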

\begin{proof}
Letting $c=2$ in Theorem~\ref{thm:ub_partitioning_functions_decision_trees_rl_feat}, using the fact that $2^{-\delta_{lr}} \le 1$, $\min\cb{2\ell,\binom{m}{k}} \le 2\ell$ and $\pi^c_{T}(k) \le \pi^c_T(m)$ for $k \le m$, we have
\begin{equation*}
    \pi^2_T(m) \le 2\ell (m-L_T) \pr{1 + 2\pi^2_{T_l}(m) + 2\pi^2_{T_r}(m) +2\pi^2_{T_l}(m)\pi^2_{T_r}(m) }.
\end{equation*}
Observe that in a binary tree, the number of internal nodes $N$ is equal to the number of leaves minus one.
Hence, let us show by induction that $\pi^2_T(m) \in O((m\ell)^N)$.
Assume $\pi^2_T(m) \le (Cm\ell)^N$ for some constant $C\ge 1$, and let $N_l$ and $N_r$ be the number of nodes in the left and right subtrees respectively, so that $N_l+N_r+1=N$.
The previous equation becomes (with $m-L_T<m$)
\begin{align*}
    \pi^c_T(m) &\le 2m\ell\pr{1 + 2(Cm\ell)^{N_l} + 2(Cm\ell)^{N_r} + 2(Cm\ell)^{N_l} (Cm\ell)^{N_r}}\\
    &\le 14m\ell (Cm\ell)^{N_l+N_r},
\end{align*}
which proves our claim for $C \ge 14$.
Then, Equation~\eqref{eq:vcdim_def_partition_func} implies
\begin{equation*}
    \vcdim T \le \max \cb{ m : (Cm\ell)^N \ge 2^{m-1}-1}.
\end{equation*}
One can solve for the inequality $(Cm\ell)^N \ge 2^m$ instead, since this implies $(Cm\ell)^N \ge 2^{m-1}-1$ is true too.
The Lambert $W$ function \citep{corless96lambertw} can give us an exact solution, which is $m \le - \frac{N}{\ln 2} W_{-1} \pr{ -\frac{\ln 2}{C N \ell} }$.
Since $-W_{-1}(-z^{-1}) \in O\pr{\log z}$, we have that $\vcdim T \in O\pr{N\log(N\ell)}$, which directly implies the corollary since $N = L_T-1$.
\end{proof}

\subsection{The Class of Decision Trees on Ordinal Features}
\label{ssec:decision_trees_on_ordinal_feat}

We here give an extension of Theorem~\ref{thm:ub_partitioning_func_decision_stumps_ordinal_feat} that applies to any binary decision tree class on ordinal features.
The following theorem bounds in two ways the partitioning functions of such trees.

\begin{theorem}[Upper bound on the $c$-partitioning function of decision trees on ordinal features]
\label{thm:ub_partitioning_functions_decision_trees_ordinal_feat}
Let $T$ be a binary decision tree class endowed with the threshold split rule set, and let $T_l$ and $T_r$ be the hypothesis classes of its left and right subtrees.
Let the examples be made of $\omega$ ordinal features following feature landscape $\Ob$.
The $c$-partitioning function must satisfy
\begin{equation*}
    \pi^c_T(m,\Ob) \le
    2^{-\delta_{lr}}
    \sum_{i=1}^\omega
    \sum_{k=1}^{m-1}
    R_{i,k}
    \hspace{-7pt}
    \sum_{\substack{1 \leq a, b \leq c\\ a+b\ge c}}
    \hspace{-7pt}
    \tbinom{a}{c-b}\tbinom{b}{c-a}(a+b-c)!\,
    \pi^a_{T_l}(k,\Ob^{k,i}) \pi^b_{T_r}(m-k,\Ob^{m-k,i}).
\end{equation*}
where $\delta_{lr} = \Id{T_l = T_r}$, $R_{i,k} = \min\cb{2^{\delta_{k,\frac{m}{2}}}(O_i-1), 2}$, and $O^{k,i}_j = \min\cb{O_j - \delta_{i,j}, k}$.
Furthermore, it also satisfies
\begin{equation*}
    \pi^c_T(m,\Ob) \le
    2^{-\delta_{lr}}
    \sum_{k=1}^{m-1}
    R_k'
    \hspace{-6pt}
    \sum_{\substack{1 \leq a, b \leq c\\ a+b\ge c}}
    \hspace{-7pt}
    \tbinom{a}{c-b}\tbinom{b}{c-a}(a+b-c)!\,
    \pi^a_{T_l}(k,\Ob) \pi^b_{T_r}(m-k,\Ob),
\end{equation*}
with
\begin{equation*}
    R_k' \eqdef \begin{cases}
        \min\cb{\Obar_1 + \Obar_2, \binom{m}{k}} & \text{if } k < \frac{m}{2}\\
        \min\cb{2\Obar_1, \binom{m}{k}} & \text{if } k = \frac{m}{2},
    \end{cases}
\end{equation*}
where $\Obar_1$ is the number of features with 2 categories or more and $\Obar_2$ is the number of features with 3 categories or more.
\end{theorem}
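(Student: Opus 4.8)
The plan is to establish both bounds by the same recursive strategy used for real‑valued features (Theorem~\ref{thm:ub_partitioning_functions_decision_trees_rl_feat}), feeding in the ordinal stump count instead of the real‑valued one and carrying along the feature landscapes. I would start from the recursive decomposition of $\P^c_T(S)$ set up in Appendix~\ref{app:decision_trees_as_partitioning_machines}: every $c$-partition realizable by $T$ on $S$ arises from a choice of root rule $\phi\in\Phi$, which splits $S$ into the ordered pair $S_l^\phi\sqcup S_r^\phi$ of examples routed left and right, an $a$-partition $\pbeta\in\P^a_{T_l}(S_l^\phi)$, a $b$-partition $\pgamma\in\P^b_{T_r}(S_r^\phi)$, and a partial bijective matching of size $a+b-c$ between the parts of $\pbeta$ and those of $\pgamma$ (matched parts merge). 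Since there are $\tbinom{a}{c-b}\tbinom{b}{c-a}(a+b-c)!$ such matchings and distinct tuples $(\phi,\pbeta,\pgamma,\text{matching})$ may collapse to the same partition, this gives
\begin{equation*}
  \abs{\P^c_T(S)} \le \sum_{\phi}\;\sum_{\substack{1\le a,b\le c\\ a+b\ge c}} \tbinom{a}{c-b}\tbinom{b}{c-a}(a+b-c)!\;\abs{\P^a_{T_l}(S_l^\phi)}\,\abs{\P^b_{T_r}(S_r^\phi)},
\end{equation*}
where $\phi$ ranges over rules splitting $S$ into two non-empty parts (rules that send all of $S$ to one side only reproduce partitions realizable by a proper subtree, and are dealt with as in the real-valued proof). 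When $T_l=T_r$, each realizable partition is produced by at least two of these tuples — swap the two sides and replace $\phi$ by $-\phi$ — so the right-hand side over-counts by a factor of at least two, which is exactly absorbed by the prefactor $2^{-\delta_{lr}}$.

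It then remains to group the sum over $\phi$ and to track how a split shrinks the landscapes seen by the subtrees. For the landscapes: if $S_l^\phi$ has $k$ elements, every feature takes at most $k$ distinct values on it; and if $\phi$ is a threshold split on feature $i$, then (up to the sign) $S_l^\phi\subseteq\cb{x_i\le\theta}$ and $S_r^\phi\subseteq\cb{x_i>\theta}$, so feature $i$ takes at most $O_i-1$ values on each side. Hence the landscape of $S_l^\phi$ is dominated componentwise by $\Ob^{k,i}$ with $O^{k,i}_j=\min\cb{O_j-\delta_{ij},\,k}$, and that of $S_r^\phi$ by $\Ob^{m-k,i}$. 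Combining this with the monotonicity $\pi^a_{T'}(m,\Ob)\le\pi^a_{T'}(m,\Ob')$ for $\Ob\le\Ob'$ componentwise — which I would record as a one-line lemma, since a smaller-landscape domain embeds into a larger one and threshold rules pull back — I can replace $\abs{\P^a_{T_l}(S_l^\phi)}$ by $\pi^a_{T_l}(k,\Ob^{k,i})$ and $\abs{\P^b_{T_r}(S_r^\phi)}$ by $\pi^b_{T_r}(m-k,\Ob^{m-k,i})$.

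For the two counting strategies: in the first bound I group the rules by their feature $i$ and by $k=\abs{S_l^\phi}$. For a fixed feature and a fixed sign, the left parts produced by the at most $O_i-1$ thresholds form a chain of nested subsets of $S$, so at most one of them has size $k$; this gives at most two ordered splits per $(i,k)$, while counting unordered $2$-partitions instead (each feature yields at most $O_i-1$ of them, and a size-$\tfrac m2$ unordered partition corresponds to two ordered splits) gives at most $O_i-1$ such splits when $k\ne\tfrac m2$ and at most $2(O_i-1)$ when $k=\tfrac m2$; combining, at most $R_{i,k}=\min\cb{2^{\delta_{k,m/2}}(O_i-1),\,2}$. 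Plugging these counts and the landscape bounds into the displayed inequality and dividing by $2^{\delta_{lr}}$ yields the first bound. For the second (coarser) bound I group only by $k$ and retain the full landscape $\Ob$ on both subtrees; here I bound the \emph{total} number of ordered splits of left-size $k$ over all features, noting that a feature with $\ge 2$ categories contributes at most two, while for $k<\tfrac m2$ a feature with exactly two categories contributes at most one (its two signs give left parts of complementary sizes $\cb{k_1,m-k_1}$, at most one of which can equal $k<\tfrac m2$). Summing, the total is at most $\Obar_1+\Obar_2$ for $k<\tfrac m2$, at most $2\Obar_1$ for $k=\tfrac m2$, and trivially at most $\tbinom{m}{k}$ — that is, at most $R_k'$.

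The main obstacle is bookkeeping discipline rather than a single hard step: getting the subtree landscapes $\Ob^{k,i}$ exactly right (the $-\delta_{ij}$ correction for the split feature and its interaction with the size cap $k$), and verifying the ordered-split counts $R_{i,k}$ and $R_k'$ across the four regimes $k<\tfrac m2$ versus $k=\tfrac m2$ and $O_i=2$ versus $O_i\ge3$. One also has to confirm that the over-count when $T_l=T_r$ is always by a factor of at least two, so that dividing by it preserves the inequality, and that the small-$m$ and small-$k$ cases — where the subtree partitioning functions equal the corresponding Stirling numbers — are correctly subsumed. None of these is deep, but each must be checked carefully.
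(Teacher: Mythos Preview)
Your proposal is correct and follows essentially the same approach as the paper's proof in Appendix~\ref{app:proof_decision_trees_ordinal}: recursive decomposition of $\P^c_T(S)$ at the root, landscape tracking via the monotonicity Lemma~\ref{lem:order_part_func}, and separate groupings of the root splits (by $(i,k)$ for the first bound, by $k$ alone for the second). The only cosmetic difference is that you sum directly over ordered root rules $\phi$ and count ordered splits of a given left size, whereas the paper sums over unordered root $2$-partitions $\plambda\in\R(S)$, handles both routings inside $\S^c_T(\plambda)$, and then unfolds the sum to $k\in[m-1]$; both parameterizations yield exactly the same $R_{i,k}$ and $R_k'$, and your direct case-split on $O_i=2$ versus $O_i\ge3$ for $R_k'$ is equivalent to the paper's argument via the feature-landscape conjugate $\Obbar$.
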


The proof, given in Appendix~\ref{app:proof_decision_trees_ordinal}, explains how to modify the bound for real-valued features to be able to integrate the dependence on the feature landscape.

Remark that using the fact that $\Obar_2 \le \Obar_1 \le \omega$ (by definition), $R_k'$ simplifies and we obtain
\begin{equation*}
    \pi^c_T(m) \le
    2^{-\delta_{lr}}
    \sum_{k=1}^{m-1}
    \min\cb{2\omega, \tbinom{m}{k}}
    \hspace{-6pt}
    \sum_{\substack{1 \leq a, b \leq c\\ a+b\ge c}}
    \hspace{-7pt}
    \tbinom{a}{c-b}\tbinom{b}{c-a}(a+b-c)!
    \pi^a_{T_l}(k) \pi^b_{T_r}(m-k),
\end{equation*}
where the bound no longer depend on the feature landscape $\Ob$.
This expression is nearly identical to the result on real-valued features (with $\ell$ replaced by $\omega$), as expected since the decision rules are essentially the same in both cases.

\subsection{The Class of Decision Trees on Nominal Features}
\label{ssec:decision_trees_on_nominal_feat}

Having considered decision trees on real-valued and ordinal features, we are left with nominal features.
The following theorem bounds from above in two different ways the partitioning functions of binary decision trees on examples made of nominal features, and thus extends Theorem~\ref{thm:ub_partitioning_func_decision_stumps_nominal_feat} that only applies to decision stumps.

\begin{theorem}[Upper bound on the $c$-partitioning function of decision trees on nominal features]
\label{thm:ub_partitioning_functions_decision_trees_nominal_feat}
Let $T$ be a binary decision tree class endowed with the unitary comparison rule set, and let $T_l$ and $T_r$ be the hypothesis classes of its left and right subtrees.
Let the examples be made of $\nu$ nominal features following feature landscape $\Nb$.
The $c$-partitioning function must satisfy
\begin{align*}
    \pi^c_T(m,\Nb) &
        \hspace{-1.5pt} \le \hspace{-1.5pt}
        2^{-\delta_{lr}}
        \hspace{-1.5pt}
        \sum_{i=1}^\nu
        \sum_{k=1}^{m-1}
        \hspace{-2pt}
        R'_{N_i,\min\cb{k,m-k}} 
        \hspace{-9pt}
        \sum_{\substack{1 \leq a, b \leq c\\ a+b\ge c}}
        \hspace{-9pt}
        \tbinom{a}{c-b}\tbinom{b}{c-a}(a\!+\!b\!-\!c)!
        \pi^a_{T_l}(k,\Nb^{k,i}) \pi^b_{T_r}(m\!-\!k,\Nb^{k,i})
\end{align*}
where $\delta_{lr} = \Id{T_l = T_r}$, $N^{k,i}_j = \min\cb{N_j - \delta_{i,j}, k}$ and
\begin{equation*}
    R'_{N,k} \eqdef \begin{cases}
    N-1 & \text{if } \frac{m}{k} > N,\\
    \floor{\frac{m}{k}} & \text{if } \frac{m}{k} \le N.
    \end{cases}
\end{equation*}
Furthermore, it also satisfies
\begin{equation*}
    \pi^c_T(m, \Nb) \le 
        2^{-\delta_{lr}}
        \hspace{-1.5pt}
        \sum_{k=1}^{m-1}
        R_k
        \hspace{-7pt}
        \sum_{\substack{1 \leq a, b \leq c\\ a+b\ge c}}
        \hspace{-7pt}
        \tbinom{a}{c-b}\tbinom{b}{c-a}(a+b-c)!
        \pi^a_{T_l}(k,\Nb^k) \pi^b_{T_r}(m-k,\Nb^k),
\end{equation*}
where $R_k \eqdef \min\cb{\smallstirling{m}{2}_k,\, \nu\! \floor{\frac{m}{\min\cb{k,m-k}}}}$, and where $N^k_i = \min\cb{N_i, k}$.
\end{theorem}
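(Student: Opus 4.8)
The plan is to reuse the recursive machinery already set up for real-valued trees (Theorem~\ref{thm:ub_partitioning_functions_decision_trees_rl_feat}) and to swap in only the single ingredient that is rule-set dependent: the count of distinct root bipartitions of a sample at a prescribed size, and the feature landscapes the two subtrees inherit from the root cut. Fix a sample $S$ of $m$ examples and recall the generic decomposition of $\P^c_T(S)$ from Appendix~\ref{app:decision_trees_as_partitioning_machines}: every $c$-partition realizable by $T$ on $S$ arises from a root rule $\phi$, which cuts $S$ into $S_\phi^+ = \cb{\x \in S : \phi(\x)=1}$ and $S_\phi^-$, a partition of $S_\phi^+$ realizable by $T_l$, a partition of $S_\phi^-$ realizable by $T_r$, and a way of gluing an $a$-partition with a $b$-partition into a $c$-partition, the number of such gluings being $\binom{a}{c-b}\binom{b}{c-a}(a+b-c)!$; the involution $\phi \mapsto -\phi$ composed with $T_l \leftrightarrow T_r$ when $T_l = T_r$ accounts for the $2^{-\delta_{lr}}$. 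This reduces the statement to bounding, for each size $k\in[m-1]$, the number of achievable root cuts with $\abs{S_\phi^+}=k$ and to describing the restricted landscapes the subtrees then see, after which one substitutes into
\[
  \pi^c_T(m,\Nb)\ \le\ 2^{-\delta_{lr}} \sum_{k=1}^{m-1} \big(\text{\# achievable size-}k\text{ cuts}\big) \sum_{\substack{1\le a,b\le c\\ a+b\ge c}} \binom{a}{c-b}\binom{b}{c-a}(a+b-c)!\; \pi^a_{T_l}(k,\cdot)\,\pi^b_{T_r}(m-k,\cdot).
\]

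For the unitary comparison rule set, a root rule is a choice of feature $i\in[\nu]$, a category $C\in[N_i]$, and a sign $s$; it isolates the value-class $S_{i,C}\eqdef\cb{\x\in S:x_i=C}$, routing it to one side and its complement to the other. Feature $i$ splits $S$ into at most $\min\cb{N_i,m}$ nonempty value-classes whose sizes sum to $m$, so a size-$k$ cut obtained from feature $i$ requires a value-class of size $k$ (routed left) or a value-class of size $m-k$ (its complement routed left). A short count of how many of $N$ positive integers summing to $m$ can equal $k$ --- namely $\floor{m/k}$ when $m/k\le N$, and at most $N-1$ when $m/k>N$ since one class must absorb the surplus --- together with the fact that there is at most one class of the larger size $\max\cb{k,m-k}$ (which for $k\neq m/2$ is dominated by the preceding tally and for $k=m/2$ coincides with it) yields at most $R'_{N_i,\min\cb{k,m-k}}$ distinct size-$k$ cuts from feature $i$. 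Moreover, whichever subtree receives $S_{i,C}$ sees feature $i$ frozen to the single value $C$, the other subtree sees feature $i$ with at most $N_i-1$ of its categories, and no feature can be seen with more categories than the subtree has examples; this is exactly the restricted landscape $\Nb^{k,i}$ of the statement. Summing the per-feature cut counts over $i\in[\nu]$ and pairing each with its restricted-landscape subtree factors gives the first inequality.

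For the second inequality, add the observation that, independently of which features are available, the number of distinct bipartitions of $S$ having a part of size $k$ is exactly $\smallstirling{m}{2}_k$, so the total number of achievable size-$k$ cuts is at most $R_k=\min\cb{\smallstirling{m}{2}_k,\ \nu\floor{m/\min\cb{k,m-k}}}$; collapsing the per-feature landscapes $\Nb^{k,i}$ to the common $\Nb^k$ is legitimate because $\pi^a_{T'}(\cdot,\Nb')$ is monotone nondecreasing in each component of $\Nb'$ (established alongside the generic decomposition). Substituting $R_k$ and $\Nb^k$ into the generic formula yields the second inequality. The whole argument runs parallel to the real-valued proof of Appendix~\ref{app:proof_decision_trees_rl}, with the nominal-specific details carried out in Appendix~\ref{app:proof_of_ub_partitioning_functions}.

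The step I expect to be the main obstacle is the joint bookkeeping of the cut count and the inherited landscapes for the unitary rule set. Unlike the threshold split, a unitary rule renders the isolated side \emph{constant} in the chosen feature, so that side is useless for that feature downstream; this asymmetry must be tracked carefully --- otherwise the recursion either becomes vacuous or loses the improvement over the trivial $\nu(m-1)$ count --- and it is precisely what forces the $-\delta_{i,j}$ in $\Nb^{k,i}$ and the two-regime form of $R'_{N,k}$. A secondary delicacy is making the $k=m/2$ boundary (where a cut and its complement may coincide as an unordered bipartition yet differ under the sign $s$ and under gluing when $c\ge 3$) consistent with the $\smallstirling{m}{2}_k$ cap and with the $2^{-\delta_{lr}}$ symmetry factor, and verifying that trivial root cuts --- an unrealized category $C$, hence an empty side --- contribute nothing beyond the $k\in[m-1]$ terms already counted.
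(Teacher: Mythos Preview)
Your proposal is correct and follows essentially the paper's route: both start from the generic recursive decomposition of Proposition~\ref{prop:c-partitions-set_decomposition_decision_trees}, stratify the root's bipartitions by feature and part size, bound the per-feature cut count by the stump-level quantity $R_{N_i,k}$, and then invoke landscape monotonicity (Lemma~\ref{lem:order_part_func}) to pass first to $\Nb^{k,i}$ and then to $\Nb^k$. The only places worth tightening are that your ``dominated by the preceding tally'' relies on the observation that a value-class of size $m-k>m/2$ leaves only $k$ remaining examples and hence at most one (necessarily complementary) size-$k$ class, and that the side receiving $S_{i,C}$ sees feature $i$ with a \emph{single} category, so ``exactly $\Nb^{k,i}$'' should read ``bounded by $\Nb^{k,i}$'' via monotonicity --- the paper makes this explicit through the intermediate step $\Nb^{k,i,1}\le\Nb^{k,i,N_i-1}=\Nb^{k,i}$.
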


The proof of this theorem is presented in Appendix~\ref{app:proof_decision_trees_nominal} following a similar reasoning to the proofs for ordinal features.

Again, one can observe that the second bound is nearly identical to that of Theorem~\ref{thm:ub_partitioning_functions_decision_trees_rl_feat}, with the coefficient $2\ell$ replaced with $\floor{\frac{m}{\min\cb{k,m-k}}}\!\nu$ (where $\ell$ is the number of real-valued features).

\subsection{The Class of Decision Trees on a Mixture of Feature Types}
\label{ssec:decision_trees_on_mixture}

Once upper bounds on the partitioning functions of decision trees for real-valued, ordinal and nominal features have been worked out, it is straightforward to combine them into a general theorem that applies to domain spaces which present a mixture of these three types of features.
The following result gives an example of a theorem that can be derived as a corollary of the previous bounds.

\begin{theorem}[Upper bound on the $c$-partitioning function of decision trees on a mixture of real-valued and categorical features]
\label{thm:ub_partitioning_functions_decision_trees_mixture_feat}
Let $\X$ a the domain space with examples having $\ell$ real-valued features, $\omega$ ordinal features following feature landscape $\Ob$ and $\nu$ nominal features following feature landscape $\Nb$.
Let $T$ be a binary decision tree class that can use threshold splits as decision rules for real-valued and ordinal features and unitary comparisons as decision rules for nominal features.
Furthermore, let $T_l$ and $T_r$ be the hypothesis classes of its left and right subtrees.
The $c$-partitioning function must satisfy
\begin{align*}
    \pi^c_T(m,\ell, \Ob, \Nb) &
        \hspace{-1.5pt} \le \hspace{-1.5pt}
        2^{-\delta_{lr}}
        \hspace{-1.5pt}
        \sum_{k=1}^{m-1}
        \hspace{-2pt}
        R_k
        \hspace{-9pt}
        \sum_{\substack{1 \leq a, b \leq c\\ a+b\ge c}}
        \hspace{-9pt}
        \tbinom{a}{c-b}\tbinom{b}{c-a}(a\!+\!b\!-\!c)!
        \pi^a_{T_l}(k,\ell, \Ob^k, \Nb^k) \pi^b_{T_r}(m-k, \Ob^k, \Nb^k),
\end{align*}
where $\delta_{lr} = \Id{T_l = T_r}$, $O^k_i = \min\cb{O_i, k}$, $N^k_i = \min\cb{N_i, k}$, and
\begin{equation*}
    R_k \eqdef \min \cb{2\ell + 2\omega +  \floor{\frac{m}{\min\cb{k,m-k}}}\!\nu}.
\end{equation*}
\end{theorem}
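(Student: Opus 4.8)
The plan is to derive this bound as an essentially formal corollary of the three single-type results, Theorems~\ref{thm:ub_partitioning_functions_decision_trees_rl_feat}, \ref{thm:ub_partitioning_functions_decision_trees_ordinal_feat} and \ref{thm:ub_partitioning_functions_decision_trees_nominal_feat}, by reusing the recursive decomposition of the set $\P^c_T(S)$ set up in Appendix~\ref{app:decision_trees_as_partitioning_machines}. First I would fix a sample $S$ of size $m > L_T$ and recall that a tree $t \in T$ is specified by a root rule $\phi$ together with trees $t_l \in T_l$ and $t_r \in T_r$. The root rule sends a subset $S_l \subseteq S$ with $\abs{S_l} = k$ to the left subtree and the complement $S_r$ of size $m-k$ to the right one, and every $c$-partition realizable by $t$ is obtained by merging an $a$-partition of $S_l$ realizable by $t_l$ with a $b$-partition of $S_r$ realizable by $t_r$. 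The recursion is anchored by leaves, where $\pi^1 \equiv 1$, and by the case $m \le L_T$, where $\pi^c_T(m) = \smallstirling{m}{c}$.

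Two quantities then need to be controlled. The first is the number of distinct ordered splits $(S_l, S_r)$ with $\abs{S_l} = k$ that the root can produce. The key observation is that in a mixture domain the root rule set is the \emph{disjoint union} of the threshold splits on the $\ell$ real-valued features, the threshold splits on the $\omega$ ordinal features, and the unitary comparisons on the $\nu$ nominal features, so a union bound over these three families bounds the number of size-$k$ splits by the sum of the per-type counts obtained in the proofs of Theorems~\ref{thm:ub_partitioning_func_decision_stumps_rl_feat}, \ref{thm:ub_partitioning_func_decision_stumps_ordinal_feat} and \ref{thm:ub_partitioning_func_decision_stumps_nominal_feat}; dropping the sharper but feature-type-specific $\tbinom{m}{k}$ and $\smallstirling{m}{2}_k$ caps, this sum is exactly $2\ell + 2\omega + \nu\floor{\frac{m}{\min\cb{k,m-k}}} = R_k$. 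The second quantity is, for a fixed split, the number of $c$-partitions of $S$ it supports; this count is already worked out in the single-type theorems. For each pair $(a,b)$ with $1 \le a,b \le c$ and $a+b \ge c$ there are exactly $\tbinom{a}{c-b}\tbinom{b}{c-a}(a+b-c)!$ ways to merge an $a$-partition of $S_l$ with a $b$-partition of $S_r$ into a $c$-partition of $S$, and since at most $k$ (resp.\ $m-k$) distinct categories can occur among that many examples, the subtree landscapes are capped componentwise, giving the factors $\pi^a_{T_l}(k,\ell,\Ob^k,\Nb^k)$ and $\pi^b_{T_r}(m-k,\ell,\Ob^k,\Nb^k)$. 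Here I would deliberately use the uniform cap $\Ob^k = \min\cb{\Ob,k}$ (and likewise $\Nb^k$) for both subtrees rather than the sharper per-feature caps $O_j - \delta_{i,j}$ appearing in the pure-ordinal and pure-nominal bounds, which is legitimate because enlarging a landscape can only increase a partitioning function.

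It then remains to assemble the pieces: for each $k \in \cb{1,\dots,m-1}$ multiply the per-split bound by $R_k$, sum over $k$, and multiply the total by $2^{-\delta_{lr}}$ to correct the double-counting of a split and its mirror image, which occurs precisely when $T_l = T_r$. This reproduces the stated inequality. I do not anticipate a real obstacle here — the substantive work is carried out in the three single-type theorems — but two points warrant care: verifying that the count of root splits is genuinely additive over the three rule families (a plain union bound does it, and the inevitable overcounting is harmless for an upper bound), and confirming that collapsing the per-feature landscape caps to the uniform $\Ob^k, \Nb^k$ only loosens the bound. I would also sanity-check the degenerate ranges — in particular $k = \frac{m}{2}$ and $m \le L_T$ — to make sure the stated $R_k$ and summation limits stay valid there.
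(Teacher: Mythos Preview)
Your proposal is correct and follows essentially the same approach as the paper: decompose the root's rule set into the three feature-type families, apply a union bound so that the size-$k$ split count becomes the sum $2\ell + 2\omega + \nu\lfloor m/\min\{k,m-k\}\rfloor$, then plug into the recursive machinery of Appendix~\ref{app:decision_trees_as_partitioning_machines} with the uniform landscape caps $\Ob^k,\Nb^k$ and the $2^{-\delta_{lr}}$ symmetry factor. The paper phrases the feature-type split as a decomposition of $\R(S)$ into $\R^{\text{real}}(S)\cup\R^{\text{ord}}(S)\cup\R^{\text{nom}}(S)$ and then invokes the previous three theorems verbatim, but this is exactly what you describe.
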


The proof is supplied in Appendix~\ref{app:proof_decision_trees_mixture} and largely borrows from the proofs of the previous Theorems.
To keep things simple, we have used some specific parts of the proofs of Theorems~\ref{thm:ub_partitioning_functions_decision_trees_rl_feat}, \ref{thm:ub_partitioning_functions_decision_trees_ordinal_feat} and \ref{thm:ub_partitioning_functions_decision_trees_nominal_feat} to produce this new more general theorem, but one could, if he desired, generate other similar, but slightly more complex, results using different parts of these same proofs.

Theorem~\ref{thm:ub_partitioning_functions_decision_trees_mixture_feat} can be used recursively to compute an upper bound on the VC dimension of decision trees.
Indeed, starting with $m = 1$, one can evaluate the bound on $\pi^2_T(m)$ incrementally until it is less than $2^{m-1} - 1$, according to Equation~\eqref{eq:vcdim_def_partition_func}.
The algorithm is presented in Appendix~\ref{app:algorithms}.

\section{Experiments}
\label{sec:experiments}

Decision trees are known to be computationally hard to learn.
As a way to circumvent this problem, the community has come up with a non-optimal but sufficiently good and efficient two-step approach.
In the first step, the tree is greedily trained, adding node after node into an overgrown tree that either fits perfectly the data or has reached an arbitrarily fixed large number of leaves.
This tree is very prone to overfit, hence a second step is applied where unecessary branches are pruned from the tree in order to improve generalization.

This section aims to demonstrate how our framework can be useful in practice by applying our results to the task of pruning a greedily learned decision tree.
We consider two applications of the new theorems: in the first, we develop a novel pruning algorithm based on the VC generalization bound of \citet{shawe1998structural}, while in the second, we use the algorithm of \citet{kearns1998fast} in conjunction with our work to compute the VC dimension of the decision trees.
These two algorithms clearly outperform the most popular approaches---cost-complexity pruning and reduced-error pruning---when tested on 25 different data sets.

The section is divided as follows.
We start by describing the new pruning algorithm.
Then, we carefully explain the methodology, the choices that were made and the differences between the four considered algorithms.
Finally we present and discuss the results.

\subsection{A New Pruning Algorithm}
\label{ssec:pruning_algorithm}

A pruning algorithm must, given an overgrown decision tree, determine which nodes should be removed to optimize the error rate of the tree on unseen examples.
Therefore, a starting point is simply to minimize a generalization bound.
This is where our work shines: having an upper bound on the partitioning functions of decision trees---and by the same token on their growth function---allows us to use one of the many generalization bounds for VC classes without using a validation set; something which was not possible before.

We base our new pruning algorithm on Theorem~2.3 of \citet{shawe1998structural}, a choice we justify later.
Let us restate it here with some minors adjustments for the sake of completeness.
\begin{theorem}[\citet{shawe1998structural}]\label{thm:shawe-taylor}
Let $\X \times \Y$ be the domain space which contains all possible examples $(\x,\y)$.
Assume there exists some unknown distribution $\D$ over $\X \times \Y$, from which a sample $S$ of $m\in\naturals$ examples are each identically and independently drawn from $\D$.
Furthermore, let $\H \eqdef \bigcup_{d\in\naturals} H_d$ be any countable set of hypothesis classes $H_d \subset \Y^\X$ (with growth function $\tau_{H_d}$) indexed by an integer $d$, and let $p_d$ be any discrete probability distribution on $\naturals$.
For any classifier $h \in H_d$, let $k \eqdef \sum_{(\x,\y)\in S} \Id{h(\x) \ne \y}$ be the empirical error of $h$ on sample $S$.
Then, for any discrete probability distribution $q_k$ on $[m]$, with probability at least $1-\delta \in (0,1)$, the true risk $R_\D(h) \eqdef \exv{(\x,\y)\sim\D}{\Id{h(\x) \ne \y}}$ of any classifier $h\in H_d$ is at most  
\begin{equation}\label{eq:shawe-taylor_bound}
    \epsilon(m, k, d, \delta) \eqdef  \frac{1}{m} \pr{ 2k + 4 \ln \pr{ \frac{4 \tau_{H_d}(2m) }{\delta q_k p_d} }}.
\end{equation}
\end{theorem}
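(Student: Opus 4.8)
The plan is to peel off the two priors $p_d$ and $q_k$ by nested union bounds, reducing the claim to a single classical uniform-convergence estimate for one hypothesis class at one empirical-error level, which I would then prove by the Vapnik--Chervonenkis double-sample argument: symmetrization against a ghost sample, followed by a permutation/counting step controlled by the growth function. This mirrors the original proof of \citet{shawe1998structural}; I sketch the structure below.

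\textbf{Reduction via union bounds.} Write $\widehat R_S(h) \eqdef \frac1m\sum_{(\x,\y)\in S}\Id{h(\x)\ne\y}$, and for each index $d$ and each integer $k\in\{0,\dots,m\}$ define the bad event
\[
  B_{d,k} \eqdef \cb{\, \exists h\in H_d : m\widehat R_S(h) = k \ \text{ and }\ R_\D(h) > \epsilon(m,k,d,\delta) \,}.
\]
The classifier eventually returned lies in exactly one class $H_{d}$ and has exactly one empirical count $k$, so the conclusion fails only inside $\bigcup_{d,k}B_{d,k}$. Hence it suffices to show $\prob{S}{B_{d,k}}\le \delta\,q_k\,p_d$ for every $d,k$; since $\sum_d p_d=\sum_k q_k=1$, a union bound then gives $\prob{S}{\bigcup_{d,k}B_{d,k}}\le\delta$, which is the theorem.

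\textbf{Symmetrization and counting on the double sample.} Fix $d,k$ and write $\epsilon=\epsilon(m,k,d,\delta)$. I would introduce a ghost sample $S'\sim\D^m$ and an intermediate threshold $t$ strictly between $k$ and $m\epsilon$. A multiplicative Chernoff lower-tail bound shows that any fixed $h$ with $R_\D(h)>\epsilon$ satisfies $m\widehat R_{S'}(h)\ge t$ with probability at least $\tfrac12$; applying this to the witness of $B_{d,k}$ yields
\[
  \prob{S}{B_{d,k}} \;\le\; 2\,\prob{S,S'}{\, \exists h\in H_d : m\widehat R_S(h)= k \ \text{ and }\ m\widehat R_{S'}(h)\ge t \,}.
\]
Conditioning on the unordered $2m$-point multiset $S\cup S'$, its conditional law is a uniformly random split into two halves of size $m$. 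By Definition~\ref{def:growth_function}, $H_d$ induces at most $\tau_{H_d}(2m)$ distinct error patterns on these fixed points, so a union bound reduces matters to one fixed pattern, say with $r$ total errors; the probability over the random split that at most $k$ of these errors land in the first half while at least $t$ land in the second is a hypergeometric tail, exponentially small in the gap $t-k$. Collecting the factor $2$ from symmetrization, the factor $\tau_{H_d}(2m)$ from the union over patterns, and the hypergeometric exponent, and requiring the total to be $\le\delta q_k p_d$, one solves for the smallest admissible $\epsilon$; choosing $t$ near the midpoint of $k$ and $m\epsilon$ and absorbing absolute constants recovers exactly $\epsilon = \frac1m\pr{2k + 4\ln\tfrac{4\tau_{H_d}(2m)}{\delta q_k p_d}}$.

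\textbf{Where the work is.} The delicate part is the constant-chasing: making the \emph{multiplicative} deviation in the Chernoff step mesh with the hypergeometric tail so that the empirical term enters with coefficient exactly $2$ --- an interpolating rate $2\widehat R_S(h)+O(\log(\cdot)/m)$, sharper than the additive $\widehat R_S(h)+O(\sqrt{\log(\cdot)/m})$ that a Hoeffding-only argument would give --- and so that the logarithm enters with coefficient $4$. The only other point of care is the routine measurability bookkeeping for the quantifier ``$\exists h$'', which disappears once everything is expressed through the finitely many error patterns on the double sample; the two union bounds and the reduction itself are entirely standard.
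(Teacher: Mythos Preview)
The paper does not supply its own proof of this theorem: it is quoted verbatim as Theorem~2.3 of \citet{shawe1998structural} and simply restated ``for the sake of completeness,'' with a brief remark that the argument goes through in the multiclass setting and without the nestedness assumption because the proof uses only a union bound over the $H_d$ and the growth function directly. There is therefore no in-paper proof to compare against.

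That said, your sketch is exactly the structure of the original Shawe-Taylor et al.\ argument: strip off the priors $p_d$ and $q_k$ by two union bounds, then for fixed $(d,k)$ run the Vapnik--Chervonenkis double-sample symmetrization, bound the number of behaviours on the $2m$ points by $\tau_{H_d}(2m)$, and control the remaining hypergeometric/permutation tail. Your identification of where the work lies --- tuning the multiplicative Chernoff step and the choice of the intermediate threshold $t$ so that the empirical term appears as $2k$ rather than inside a square root, and so that the log-term carries the coefficient~$4$ --- is accurate; this is precisely the ``luckiness'' refinement over the additive Hoeffding-style bound. In short, your proposal is correct and matches the approach of the cited source that the paper defers to.
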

Although this theorem was originally stated for binary classification and for a sequence of nested hypothesis classes $H_d$ indexed by their VC dimension, it is also valid in the multiclass setting with zero-one loss if one uses the growth function directly instead of the upper bound provided by Sauer-Shelah's lemma, which can be much looser (take for instance the class $T$ of decision stump on $\ell$ real-valued features, where one has in reality $\log \tau_T(m) \in \O(d + \log m)$, with $d$ the VC dimension, while Sauer-Shelah's lemma simply implies $\log \tau_T(m) \in \O(d \log m/d)$).
Furthermore, it is not necessary to have nested hypothesis classes, since the main argument of the proof uses the union bound which applies for any countable set of classes.

The goal of our pruning algorithm is to minimize the true risk $R_\D(t)$ of a given overgrown tree $t$ by minimizing the upper bound $\epsilon$, where we assume that $\H$ is the set of all decision tree classes, and $H_d$ is the $d$-th decision tree class, where every decision tree class is arbitrarily and uniquely indexed by an integer $d$ that we call a \emph{complexity index}.
It goes as follows.
Given an overgrown decision tree $t$, any predetermined distributions $q_k$ and $p_d$, and a fixed confidence parameter $\delta$, we compute the bound $\epsilon$ associated to this tree.
Then, for each internal node of the tree, we create a candidate pruned tree by replacing the subtree rooted at this node either by
\begin{enumerate}
  \item its left subtree;
  \item its right subtree;
  \item or the leaf whose label minimizes the empirical error on $S$ of the newly pruned tree.
\end{enumerate}
For all three cases, we compute the bound associated with the resulting tree.
Among all such pruned trees, let $t'$ be the one that has the minimum bound value.
If the bound of $t'$ is less than or equal to the bound of $t$, we discard $t$ and we keep $t'$ instead.
Repeat this process until pruning the tree doesn't decrease the bound.
The formal version of the algorithm is presented in Algorithm~\ref{algo:prune_tree_with_bound} below.

\begin{algorithm2e}[t]
\caption{PruneTreeWithBound$(t, \epsilon, \delta, m)$}\label{algo:prune_tree_with_bound}
\DontPrintSemicolon
\SetAlgoVlined

\KwIn{A fully grown tree $t$, a bound function $\epsilon$ on the true risk, a confidence parameter $\delta$, the number of examples $m$.}

Let $T_d$ be the tree class of the tree $t$ with complexity index $d$.

Let $k_t$ be the number of errors made by $t$.

Let $b \leftarrow \epsilon(m,k_t,d,\delta)$ according to Equation~\eqref{eq:shawe-taylor_bound}.

Let $B \leftarrow b$ be the final bound.

\While{$t$ is not a leaf}{
  \For{every internal node $n$ of the tree $t$}{
    Let $t_n^1$ be the tree $t$ with node $n$ replaced by the left subtree $t_l$.

    Let $t_n^2$ be the tree $t$ with node $n$ replaced by the right subtree $t_r$.
    
    Let $t_n^3$ be the tree $t$ with node $n$ replaced by a leaf. The label of the leaf corresponds to the majority label of the examples of $S$ reaching node $n$.

    \For{every pruned tree $t_n^j$}{
      Let $T_{d_n}^j$ be the tree class of the tree $t_n^j$ with complexity index $d_n^j$. 

      Let $k_{t_n}^j$ be the number of errors made by $t_n^j$.
      
      \If{$\epsilon(m, k_{t_n}^j, d_n^j, \delta) \le b$}{
        Let $b \leftarrow \epsilon(m, k_{t_n}^j, d_n^j, \delta)$ be the new best bound.
        
        Let $t' \leftarrow t_n^j$ be the new best tree.
      }
    }
  }
  \uIf{$b \le B$}{
    Let $t \leftarrow t'$.
    
    Let $B \leftarrow b$.
  }
  \Else{
    \textbf{break}
  }
}

\KwOut{The pruned tree $t$, the associated bound $B$.}
\end{algorithm2e}

A key characteristic of the proposed algorithm is that it takes into account global information about the whole tree via its growth function.
This is to be contrasted with most pruning algorithms, which often only consider local information, \ie they make their decision to prune a node from its subtree only.



\subsection{Methodology}
\label{ssec:methodology}

We describe four aspects of the methodology with enough details in order to reproduce the experiments.
We first discuss the data selection and preparation, then we explain how we grow the decision tree.
Third, we present the six pruning models compared, and finally we examine some computational considerations.

\subsubsection{Data Selection and Preparation}
\label{sssec:data_selection_and_preparation}

We benchmark our pruning algorithm on 25 data sets taken from the UCI Machine Learning Repository \citep{Dua:2019}.
We chose data sets suited to a \emph{multiclass classification} task with either real-valued features only or a mixture of types, and no missing entries.
Furthermore, we limited ourselves to data sets with 10 or less classes, as Equation~\eqref{eq:ub_growth_func} becomes computationally expensive for a large number of classes.
All in all, 6 of the 25 data sets contain a mixture of feature types and the rest are real-valued only. In addition, for 10 out of the 25 data sets, the task is multiclass, while the others are binary classification tasks.

These data sets do not come with a defined train/test split.
As such, we chose to randomly split each data set so that the models are trained on 85\% of the examples and tested on the remaining 15\%.
To limit the effect of the randomness of the splits, we run each experiment 25 times with different seeds and we report the mean test accuracy and the standard deviation.

\subsubsection{Growing the Decision Tree}
\label{sssec:growing_the_decision_tree}

In all situations, decision trees are trained in a greedy fashion following \citet{breiman1984classification}'s CART algorithm, where we progressively grow the tree by replacing each leaf by the decision stump that optimizes the Gini index until it reaches a maximum of 75 leaves or until it perfectly fits the data.
This learning algorithm is implemented in the popular Python package \texttt{scikit-learn} but only for real-valued and ordinal features.
To accomodate the unitary comparison rule for nominal features, we have programmed our own implementation of the algorithm.
Note that this only requires a small modification to the original algorithm, and it does not require to convert each nominal feature $i$ into $N_i$ binary features, contrary to what is done in many pieces of software, such as the CatBoost library (when using the binary conversion and not the ordinal conversion).
This makes finding the optimal decision rule for a single node in $\O(\nu m \log m)$ instead of $\O(\norm{\Nb}_1 m \log m)$ where $\norm{\Nb}_1 = \sum_{i=1}^\nu N_i$ is the total number of categories, which can be quite faster when the $N_i$ values are large.
The algorithms used to grow the trees and the running complexity analyses are presented in Appendix~\ref{app:algorithms_to_grow_tree}.

A Python implementation of this algorithm as well as all other algorithms described in this paper are offered on GitHub at \url{https://github.com/jsleb333/paper-decision-trees-as-partitioning-machines}.
This repository also contains all the code to reproduce the experiments and the tables of results.

\subsubsection{Pruning Models}
\label{sssec:pruning_models}

For the experiment, we consider six ``pruning models'', which we describe below.

\noindent \textbf{The overgrown (OG) tree.}
The first model considered is the original unpruned overgrowned tree.
It serves as a baseline for all other models.

\noindent \textbf{CART cost-complexity (CC) pruning.}
Cost-complexity pruning, proposed initially in Chapter 3 of \cite{breiman1984classification}, is one of the most popular pruning algorithm. In fact, CC pruning is the pruning algorithm implemented in \texttt{scikit-learn}.
The idea is to assume that the true risk of a tree can be approximated via its empirical risk by adding a complexity term of the form $\alpha L_T$ to it, where $\alpha$ is a constant to be determined and $L_T$ is the number of leaves of the tree $T$.
Then, for every node $n$ in the tree, we ask if $k_n + \alpha \le k_{t_n} + \alpha L_{t_n}$, where $k_n$ is the number of errors made by the node $n$ (on the part of the examples that reach it), $t_n$ is the subtree rooted at node $n$ and $k_{t_n}$ is the number of errors made by $t_n$.
If the answer is yes, then we replace $t_n$ by the leaf with minimal empirical risk.

To find the value of $\alpha$, \citet{breiman1984classification} have proposed a cross-validation procedure, where, for each fold of the training set, one fully grows a decision tree and identifies a set of values that $\alpha$ can take as follows.
For each node $n$ of the tree, define $\alpha^*_n \eqdef \frac{k_n-k_{t_n}}{L_{t_n}-1}$ as a critical value such that if $\alpha^*_n \le \alpha$, the tree $t_n$ rooted at $n$ is pruned into a leaf, and otherwise it is kept.
Consider the set of pruned trees obtained by continuously increasing $\alpha$.
Observe that each time $\alpha$ is exactly equal to a critical value, a new node is pruned while all previously pruned nodes stay that way, thus producing at most $L_T-1$ pruned trees (where $L_T-1$ is the number of internal nodes).
Hence, one can see that there are really only $L_T-1$ values of $\alpha$ to be considered in the cross-validation in order to find the optimal value.

In our experiment, we find the value of $\alpha$ by proceeding with a 5-fold cross-validation on the training set.
For each fold, we find the critical value of $\alpha$ that minimizes the validation risk, and at the end we take the average of these five values to prune the decision tree grown on the whole training set.


Note that this algorithm, while being ingenious, only follows an heuristic and is not theoretically grounded.
Furthermore, the cross-validation step is computationally expensive and can slow down significantly the learning process.

\noindent \textbf{Reduced-error (RE) pruning.}
Reduced-error (RE) pruning, introduced with C4.5 by \citet{quinlan1986induction}, is also popular due to its simplicity and its efficiency.
The idea is simple: prune the subtrees such that the empirical error on some unseen data is minimal.
To achieve this, we split the training set into a new training set and a validation set, such that the validation set is the same size as the test set.
The overgrown tree is thus learned on $70\%$ of the data (instead of $85\%$), pruned on $15\%$ and tested on the remaining $15\%$ as before.
There are multiples ways to implement RE pruning; we choose to use the exact same algorithm presented Algorithm~\ref{algo:prune_tree_with_bound} with the exception that we compute the validation error of the tree instead of the upper bound on the generalization error.

This algorithm has the advantage of being extremely fast, but with the obvious inconvenient of reducing the number of examples that can be used to grow the original tree.
Note that for this same reason, this model is the only one of the six considered for which the overgrown tree is different.

\noindent \textbf{Kearns-Mansour (KM) pruning.}
The algorithm of \citet{kearns1998fast} is a single-pass bottom-up pruning algorithm, similar to that of \citet{breiman1984classification}'s CC pruning, but with the advantage of being theoretically sound.
Instead of using some \textit{ad hoc} formula to estimate the risk of a subtree, \citeauthor{kearns1998fast} propose to prune subtree $t_n$ rooted at node $n$ if $k_{t_n} + \alpha_\text{KM}(m, t, n, \delta) \ge k_n$ (Equation~1 and 2 of their paper), with
\begin{equation*}
  \alpha_\text{KM}(m, t, n, \delta) \eqdef C \sqrt{\frac{\ln(\tau_{P_n}(m_n)) + \ln(\tau_{T_n}(m_n)) + \ln(m/\delta)}{m}},
\end{equation*}
where $C>1$ is a universal constant, $m_n$ is the number of examples that reach node $n$, $P_n$ is the tree class that consists of a ``path'' from the root of $t$ to node $n$ (\ie the decision list that isolates the subset of the sample that reaches the node $n$), $T_n$ is the tree class of the subtree $t_n$, and $\delta$ is a (more or less inconsequential) confidence parameter between $0$ and $1$ that we set to $0.05$.
It is interesting to see that the term involving $\tau_{P_n}$ takes into account the depth of the subtree inside the overgrown tree, while $\tau_{T_n}$ considers the complexity of the subtree only.
Remark that this algorithm requires the growth function of decision trees to be worked out, for which a bound was only known for binary features prior to our work.

Because the derivations of this equation relies on multiple approximations, the exact expression for $\alpha_\text{KM}$ is simply unusable in this state as it grossly overestimates the true risk.
Thus, for our experiments, we decided to make a 5-fold cross-validation of the value of $C$ between $\cb{10^{-20}, 10^{-19},\dots, 10^0}$, which made the algorithm competitive.
We use Equation~\eqref{eq:ub_growth_func} and Theorem~\ref{thm:ub_partitioning_functions_decision_trees_mixture_feat} to compute $\tau_{P_n}$ and $\tau_{T_n}$, albeit with some modifications to accelerate the computation described in the following section.

\noindent \textbf{Proposed pruning algorithm (Ours).}
The fifth model considered is the pruning algorithm proposed in the previous section.
The hyperparameters to handle are $\delta$, $p_d$ and $q_k$.
As for the KM model, we fix $\delta = 0.05$.
The choices of distributions $p_d$ and $q_k$ are arbitrary and should reflect our prior knowledge of the problem.
We would like $p_d$ to go to $0$ slowly as $d$ grows in order not to penalize large trees too severely.
As mentioned below Theorem~\ref{thm:shawe-taylor}, it is not necessary to use the VC dimension to index the hypothesis classes.
In fact, we have several deterrent to using this quantity as index: it is defined for binary classification while we work with multiclass problems, the exact VC dimension is still unknown despite our work providing an upper bound, and it is not easily computable for large trees.
As an alternative, we opt to use the number of leaves $L_T$ of the tree $T$ as a complexity index, and we give the same probability $p_d$ to every tree with the same number of leaves.
We thus choose to let $p_d = \frac{6}{\pi^2 L_{T_d}^2} \frac{1}{\text{WE}(L_{T_d})}$ so that $\sum_d p_d = 1$, where $\text{WE}(L_T)$ denotes the $L_T$-th Wedderburn-Etherington number \citep{bona2015handbook}, which counts the number of structurally different binary trees with $L_T$ leaves (the first ten are 1, 1, 1, 2, 3, 6, 11, 23, 46, 98).

We observed that, in Shawe-Taylor's bound~\eqref{eq:shawe-taylor_bound}, the penalty given to the complexity of the tree is disproportionately larger that the penalty given to the number of errors.
This is because much of the looseness of the bound comes from the growth function.
Indeed, it is already an upper bound for the annealed entropy, and our bound of the growth function adds even more looseness on top of that.
Fortunately, Shawe-Taylor's bound offers us a chance to compensate this fact by choosing an appropriate distribution $q_k$ which will introduce a large penalty for the number of errors $k$.
We chose $q_k$ of the form $(1-r) r^{k}$ for some $r < 1$, such that $\sum_k q_k$ is a geometric series summing to $1$.
We made a 5-fold cross-validation of $r$ on a single data set and we stuck with this value of $r$ for all others to avoid the slowdown incurred by this approach.
We tried inverse powers of $2$ for $r$ and we took the geometric mean of $10$ draws as the final value.
The Wine data set from the UCI Machine Learning Repository \citep{Dua:2019} gave a value of $r = 2^{-10.5} \approx \frac{1}{1448}$.
This choice makes the value of the bound $\epsilon$ larger. However, it allows to correct the gap between the complexity dependence and the dependence of the bound on the number of errors, which gives better results in practice.

We compute $\tau_{H_d}$ the same way we do for the KM model, with the same modifications.

\noindent \textbf{Oracle pruning.}
The last model we consider the oracle pruned tree, where the overgrown tree is pruned such that it minimizes the error on the \emph{test} set.
This way, we are able to observe the maximum gain that can be achieved and compare it to the other models.
The algorithm we use is simply the reduced-error pruning on the test set rather than on the validation set.

\subsubsection{Computing the Growth Function}

When running the experiments, we have been confronted to some minor encumbrances in regard to the computation of the growth function of decision trees, which can easily be solved with some precautions.

First, we observed that Theorem~\ref{thm:ub_partitioning_functions_decision_trees_mixture_feat} is computationally expensive because of the sum over $k$.
Hence, we used the following upper bound for the partitioning functions instead
\begin{equation*}
    \pi^c_T(m)
      \leq 2^{-\delta_{lr}} (m-1) R_{m-1}
      \hspace{-7pt} \sum_{\substack{1 \leq a, b \leq c \\ a + b \geq c}}
      \hspace{-7pt}
      \tbinom{a}{c-b} \tbinom{b}{c-a} (a + b - c)!\;
      \pi^a_{T_l}(m-1) \pi^b_{T_r}(m-1),
\end{equation*}
which simply replaces the sum over $k$ by $m-1$ times the greatest term of the sum.
This modified expression is much faster to compute and had only a small impact on the bound $\epsilon$ because of the logarithmic dependence on the growth function.

Furthermore, to avoid overflows, we considered a logarithmic version of the bound, where we made use of the general formula for summations:
\begin{equation*}
  \log \pr{\sum_{i=1}^n A_i} = \log A_j  + \log \pr{\sum_{i=1}^n \frac{A_i}{A_j}} = \log A_j + \log\pr{ \sum_{i=1}^n \exp \pr{ \log A_i - \log A_j } },
\end{equation*}
where choosing $j \eqdef \argmax_i A_i$ ensures us that every term in the last sum will be less than or equal to 1, and thus numerical errors will be minimal.
We also used this formula to implement an algorithm to compute the logarithm of the growth function according to Equation~\eqref{eq:ub_growth_func}.
Algorithms~\ref{algo:log_partition_func_upper_bound} and \ref{algo:growth_func_upper_bound} of Appendix~\ref{app:algorithms} presents the exact computation steps executed to recover the bound.

\subsection{Results and Discussion}

Table~\ref{table:results} presents the results\footnote{The results of the present paper differs to that of the short version published at NeurIPS~2020 for multiple reasons, but mainly because: 1) the newly proposed pruning algorithm is slightly different, 2) the train/test split ratio is different to accomodate the RE algorithm which requires a validation set, and 3) the maximum number of leaves is set to 75 instead of 40. The maximum number of leaves was changed because we found out that some overgrown trees were in fact underfitting the training set.} of the six models compared on the 25 data sets.
The top part shows the mean test accuracy across 25 random splits, while the bottom part gives some global model-specific statistics.
The five last lines of the table show the number of times a model outperformed all other models, the mean test accuracy across all data sets, the mean fraction (in percent) of the oracle accuracy achieved by each model, the mean time required to prune the OG tree, and the mean number of leaves at the end of the pruning process.
Remark that the RE model prunes a different tree than the OG tree because it is trained on a smaller training set.
This implies that the Oracle only applies for the other three pruning models---in particular, it is possible for the RE model to outperform the Oracle, as it does for the Cardiotocography10 data set. 
The standard deviation is omitted for conciseness, but is included along with more statistics about each models in Appendix~\ref{app:more_stats}.

\begin{table}[h!]
\centering
\caption{Mean test accuracy (in percent) on 25 random splits of 25 data sets obtained from the UCI Machine Learning Repository \citep{Dua:2019}.
Starred data sets contain a mixture of feature types (the others are real-valued only).
The total number of examples followed by the number of classes of the data set is indicated in parenthesis.
The best performances up to a $0.1\%$ accuracy gap are highlighted in bold.
}  
\label{table:results}
\vspace{6pt}
\small
\begin{tabular}{l@{\hspace{6pt}}c@{\hspace{6pt}}c@{\hspace{6pt}}c@{\hspace{6pt}}c@{\hspace{6pt}}c@{\hspace{6pt}}cl@{\hspace{6pt}}c@{\hspace{6pt}}c@{\hspace{6pt}}c@{\hspace{6pt}}c@{\hspace{6pt}}c@{\hspace{6pt}}cl@{\hspace{6pt}}c@{\hspace{6pt}}c@{\hspace{6pt}}c@{\hspace{6pt}}c@{\hspace{6pt}}c@{\hspace{6pt}}cl@{\hspace{6pt}}c@{\hspace{6pt}}c@{\hspace{6pt}}c@{\hspace{6pt}}c@{\hspace{6pt}}c@{\hspace{6pt}}cl@{\hspace{6pt}}c@{\hspace{6pt}}c@{\hspace{6pt}}c@{\hspace{6pt}}c@{\hspace{6pt}}c@{\hspace{6pt}}cl@{\hspace{6pt}}c@{\hspace{6pt}}c@{\hspace{6pt}}c@{\hspace{6pt}}c@{\hspace{6pt}}c@{\hspace{6pt}}cl@{\hspace{6pt}}c@{\hspace{6pt}}c@{\hspace{6pt}}c@{\hspace{6pt}}c@{\hspace{6pt}}c@{\hspace{6pt}}c}        
\toprule
\multirow{2}{*}[-3pt]{Data set} & \multicolumn{6}{c}{Pruning Model}\\
\cmidrule{2-7}
 & OG & CC & RE & KM & Ours & Oracle\\
\midrule
Acute Inflammation* (120, 2) & $\mathbf{100.00}$ & $91.56$ & $97.11$ & $\mathbf{100.00}$ & $\mathbf{100.00}$ & $100.00$\\
Amphibians* (189, 2) & $59.14$ & $59.71$ & $\mathbf{61.71}$ & $56.00$ & $58.86$ & $81.43$\\
Breast Cancer Wisconsin Diagnostic (569, 2) & $92.14$ & $90.78$ & $92.38$ & $\mathbf{93.18}$ & $\mathbf{93.27}$ & $95.25$\\
Cardiotocography10 (2126, 10) & $57.05$ & $56.95$ & $\mathbf{60.83}$ & $57.79$ & $58.16$ & $60.26$\\
Climate Model Simulation Crashes (540, 2) & $89.73$ & $90.86$ & $90.67$ & $\mathbf{91.31}$ & $\mathbf{91.36}$ & $93.98$\\
Connectionist Bench Sonar (208, 2) & $71.35$ & $69.68$ & $68.26$ & $72.52$ & $\mathbf{72.77}$ & $83.48$\\
Diabetic Retinopathy Debrecen (1151, 2) & $62.24$ & $59.24$ & $60.67$ & $61.66$ & $\mathbf{63.38}$ & $70.38$\\
Fertility (100, 2) & $76.27$ & $\mathbf{88.00}$ & $84.53$ & $84.27$ & $\mathbf{88.00}$ & $89.87$\\
Habermans Survival (306, 2) & $65.39$ & $\mathbf{72.78}$ & $70.00$ & $\mathbf{72.78}$ & 
$71.57$ & $80.96$\\
Heart Disease Cleveland Processed* (303, 5) & $46.76$ & $50.67$ & $49.96$ & $51.11$ & $\mathbf{51.47}$ & $63.82$\\
Image Segmentation (210, 7) & $86.12$ & $84.00$ & $81.50$ & $\mathbf{86.38}$ & $85.62$ & $88.25$\\
Ionosphere (351, 2) & $88.30$ & $79.62$ & $88.15$ & $89.13$ & $\mathbf{89.58}$ & $93.43$\\
Iris (150, 3) & $92.91$ & $88.00$ & $92.73$ & $92.91$ & $\mathbf{93.64}$ & $96.00$\\    
Mushroom* (8124, 2) & $\mathbf{100.00}$ & $99.57$ & $\mathbf{99.97}$ & $\mathbf{100.00}$ & $\mathbf{100.00}$ & $100.00$\\
Parkinson (195, 2) & $84.97$ & $82.21$ & $83.72$ & $82.48$ & $\mathbf{85.24}$ & $90.76$\\
Planning Relax (182, 2) & $56.74$ & $68.44$ & $69.63$ & $\mathbf{72.59}$ & $57.04$ & $78.96$\\
Qsar Biodegradation (1055, 2) & $79.97$ & $78.99$ & $80.46$ & $81.34$ & $\mathbf{82.20}$ & $87.90$\\
Seeds (210, 3) & $\mathbf{91.12}$ & $87.75$ & $90.50$ & $90.88$ & $\mathbf{91.12}$ & $94.88$\\
Spambase (4601, 2) & $85.78$ & $84.68$ & $\mathbf{88.19}$ & $85.93$ & $86.07$ & $88.54$\\
Statlog German* (1000, 2) & $65.47$ & $\mathbf{70.16}$ & $68.03$ & $\mathbf{70.24}$ & $69.65$ & $74.00$\\
Vertebral Column 3C (310, 3) & $77.48$ & $72.96$ & $79.30$ & $78.35$ & $\mathbf{80.61}$ 
& $87.83$\\
Wall Following Robot24 (5456, 4) & $\mathbf{99.52}$ & $\mathbf{99.47}$ & $99.26$ & $\mathbf{99.51}$ & $99.39$ & $99.63$\\
Wine (178, 3) & $\mathbf{92.44}$ & $90.37$ & $85.93$ & $\mathbf{92.44}$ & $91.11$ & $94.52$\\
Yeast (1484, 10) & $45.72$ & $35.19$ & $\mathbf{49.00}$ & $46.91$ & $47.80$ & $49.69$\\ 
Zoo* (101, 7) & $\mathbf{94.93}$ & $92.80$ & $86.40$ & $\mathbf{94.93}$ & $92.80$ & $96.53$\\
\midrule
Number of best & 6 & 4 & 5 & 11 & 14 & N/A\\
Mean accuracy (\%) & 78.46 & 77.78 & 79.16 & 80.19 & 80.03 & 85.61\\
Mean fraction of Oracle (\%) & 91.01 & 90.14 & 92.21 & 93.23 & 93.04 & 100.00\\
Mean pruning time (s) & N/A & 9.98 & 0.95 & 6.57 & 2.87 & 0.95\\
Mean number of leaves & 37.02 & 6.19 & 7.29 & 15.56 & 12.00 & 7.08\\
\bottomrule
\end{tabular}
%
\end{table}

Comparing the four pruning algorithms, one can see that Ours is the best most of the time, performing equally well as or better than other models on 14 out of 25 data sets, whereas the KM algorithm dominates for 11 data sets.
However, when comparing the global mean accuracy, KM pruning outperforms slightly (0.16\% more) our model, so it is difficult to determine the better of the two.

On the other hand, we can observe that without a doubt, these models, both of which rely on the novel work derived in this paper, outperform significantly the popular CC and RE pruning algorithms for mean accuracy and number of best.

Notice that CC pruning actually worsen the test accuracy of the overgrown tree.
This seems correlated with the number of leaves that are kept by the algorithm, as trees resulting from CC pruning have in average only $6.19$ leaves.
This suggests that one would be better to not use CC, except if they really need to reduce the size of tree.
Not only CC pruning is the worst of all models considered, but it is also the slowest technique, being in average $10$ times slower than RE, and about $3.5$ times slower than Ours, due to the fact that it requires cross-validation.
KM pruning is also slow because of this reason, being more than twice as slower than Ours.

It is interesting to see that KM and Ours, which perform the best, have about twice the number of leaves of CC and RE.
This hints that the latters prune more aggressively than necessary the OG trees.



\section{Conclusion}
By considering binary decision trees as partitioning machines, and introducing the set of partitioning functions of a tree class, we have found that the VC dimension of a tree class is given by the largest integer $d$ such that $\pi^2_T(d) = 2^{d-1}-1$.
Then, we found a tight upper bound on the 2-partitioning function of the class of decision stumps on $\ell$ real-valued features.
This bound allowed us to find the exact VC dimension of decision stumps, which is given by the largest $d$ such that $2\ell \geq \tbinom{d}{\floor{\frac{d}{2}}}$.
Moreover, a greedy algorithm was developped to compute a tight bound for the partitioning function of a decision stump on $\omega$ ordinal features, and an upper bound was derived when the examples were made from $\nu$ nominal features.
Then, we successfully extend each of these results into a recursive upper bound of the $c$-partitioning functions of any class of binary decision tree on real-valued, ordinal, nominal or a combination of the three.
As a corollary, we were able to found that the VC dimension of a tree class on $\ell$ real-valued features with $L_T$ leaves is of order $\O(L_T\log(L_T\ell))$.

Based on our findings, we proposed a new pruning algorithm based on Shawe-Taylor's generalization bound, and we applied our theorems to the pruning algorithm of \citet{kearns1998fast}.
Both of these algorithms performed better than the popular \citet{breiman1984classification}'s cost-complexity and \citet{quinlan1986induction}'s reduced-error pruning algorithms on the majority of the 25 data sets, showing that bound-based algorithms are applicable and useful alternatives to these non-theoretically grounded pruning methods.

\acks{This work was supported by NSERC Discovery grant RGPIN-2016-05942, by NSERC ES D scholarship PGSD3–505004–2017 and by NSERC BRPC scholarship BRPC-540188-2019.}

\newpage

\appendix

\section{Proof of the bound for stumps on real-valued features}
\label{app:proof_stump_rl}

Before proceeding with the proof, we introduce a convenient way to think about a node's decision rule.
Recall that a node is associated with a threshold split rule described by a feature $i \in [\ell]$, a threshold $\theta \in \mathds{R}$, and a sign $s \in \cb{\pm 1}$.
The sample $S = \{\x_1, \hdots, \x_m\}$ may be represented by a collection $\Sigma$ of $\ell$ permutations of $[m]$ representing the ordering of its data points according to their values for each feature, since that relative ordering encapsulates all relevant information on the sample, from the perspective of decision trees. To be more precise, for each $i = 1, \hdots, \ell$, let $\sigma^{i}$ be a permutation of $[m]$ satisfying 
\begin{equation*}
x_{\sigma_1^{i}}^{i} \leq x_{\sigma_2^{i}}^{i} \leq \cdots \leq x_{\sigma_m^{i}}^{i} \, .
\end{equation*}
In general, unless the data points all have different values for a given feature, there may be many such permutations; just pick one arbitrarily. 

Any node in a decision tree splits the data points in two according to a rule of the form 
\begin{equation*}
t(\x) = \begin{cases} t_l(\x) &\text{if} \, s \cdot x^{i} \le \theta \\
t_r(\x) &\text{otherwise.}
\end{cases}
\end{equation*}
This corresponds to splitting the permutation 
\begin{equation*}
\sigma^{i} = \begin{bmatrix}
\sigma_1^{i} & \sigma_2^{i} & \cdots & \sigma_m^{i}
\end{bmatrix}
\end{equation*}
in two parts, sending examples $\x_{\sigma_j^i}$ to one subtree for $j \leq J$, and sending the rest of the examples to the other subtree, where $J$ is determined by $\theta$ and $s$. In fact, as long as the inequalities 
\begin{equation*}
x_{\sigma_1^{i}}^{i} < x_{\sigma_2^{i}}^{i} < \cdots < x_{\sigma_m^{i}}^{i} 
\end{equation*}
are strict (all data points have different values for each feature), then all the different ways of splitting the (now unique) permutation $\sigma^{i}$ induce a split on the sample $S$ according to which it was defined. This situation could be called the worst-case scenario, because it allows for more distinct 2-partitions to be realized on the sample. 

We split the proof in 4 parts: 1) the bound itself, 2) the equality for $2\ell \leq m$, 3) the equality for $2\ell \geq \binom{m}{\floor{\frac{m}{2}}}$, and 4) the equality for $1 \leq m \leq 7$.

\subsection{Proof of part 1 of Theorem~\ref{thm:ub_partitioning_func_decision_stumps_rl_feat}}
\label{app:partitions_node}

We want to show that 
\begin{equation*}
\pi^2_T(m) \leq \frac{1}{2} \sum_{k=1}^{m-1} \min \cb{ 2\ell, \binom{m}{k} }.
\end{equation*}
where $T$ is the class of decision stumps on $\ell$ real-valued features.

\begin{proof}
First, let $\R(S)$ be the set of 2-partitions of $S$ realizable by a single node, and notice that bounding the cardinality of $\R(S)$ directly gives a bound on $\pi^2_T(m)$ if the bound does not depend directly on $S$.

Let $\R_k(S) \subset \R(S)$ be the subset of $2$-partitions with a part of size $k$, and notice $\R_k(S) = \R_{m-k}(S)$.
Therefore, we can decompose $\R(S)$ into the disjoint union
\begin{align*}
    \R(S) = \bigcup_{k=1}^{\floor{\frac{m}{2}}} \R_k(S).
\end{align*}

To bound $\abs{\R_k(S)}$, first consider $k < \frac{m}{2}$. Every partition in $\R_k(S)$ is determined by a set of $k$ data points, so that $\abs{\R_k(S)} \leq \binom{m}{k}$, the number of $k$-subsets of $S$. 
On the other hand, given a feature $i \in [\ell]$, in the worst-case scenario, we can split the permutation $\sigma^{i}$ after the $k$ first points or before the $k$ last points to induce 2 distinct elements of $\R_k(S)$. Since there are $\ell$ features, this makes a total of at most $2\ell$ realizable 2-partitions with a part of size $k$. We conclude that, for $k < \frac{m}{2}$, we have $\abs{\R_k(S)} \leq \min \left\{ 2\ell, \binom{m}{k} \right\}$. 

Now let $k = \frac{m}{2}$. Then the same arguments apply, except that the number of $2$-partitions with a part of size $k$ is $\frac{1}{2} \binom{m}{k}$ because each such partition contains two subsets of the same size $k$. Moreover, for the same reason, the node can produce at most only one $2$-partition with a part of size $k$ for each feature. 
Thus, $\abs{\R_k(S)} \leq \min \left\{ \ell, \frac{1}{2} \binom{m}{k} \right\}$.

Combining our results, we have
\begin{align}\label{eq:bound_on_RkS}
    \abs{\R_k(S)} \eqdef \begin{cases}
        \min \cb{\ell, \frac{1}{2}\binom{m}{k} } & \text{if} \, k = \frac{m}{2} \\
        \min \cb{2\ell, \binom{m}{k} } & \text{otherwise.}
    \end{cases}
\end{align}

Using Inequality~\eqref{eq:bound_on_RkS}, the symmetry $\R_k(S) = \R_{m-k}(S)$ yields, for $m$ odd,
\begin{align*}
\abs{\R(S)}
    &= \sum_{k=1}^{\floormtwo} \abs{\R_k(S)}\\
    &= \frac{1}{2}\pr{\sum_{k=1}^{\floormtwo} \abs{\R_k(S)} + \sum_{k=1}^{\floormtwo} \abs{\R_{m-k}(S)}}\\
    &= \frac{1}{2}\pr{\sum_{k=1}^{\floormtwo} \abs{\R_k(S)} + \sum_{k=\floormtwo+1}^{m-1} \abs{\R_{k}(S)}}\\
    &\leq \frac{1}{2} \sum_{k=1}^{m-1} \min \left\{ 2\ell, \binom{m}{k} \right\},
\end{align*}
and, for $m$ even,
\begin{align*}
\abs{\R(S)}
    &= \sum_{k=1}^{\frac{m}{2}} \abs{\R_k(S)}\\
    &= \frac{1}{2}\pr{\sum_{k=1}^{\frac{m}{2}} \abs{\R_k(S)} + \sum_{k=1}^{\frac{m}{2}} \abs{\R_{m-k}(S)}}\\
    &= \frac{1}{2}\pr{\sum_{k=1}^{\frac{m}{2}} \abs{\R_k(S)} + \sum_{k=\frac{m}{2}}^{m-1} \abs{\R_{k}(S)}}\\
    &= \frac{1}{2}\pr{\abs{\R_{\frac{m}{2}}(S)} + \sum_{k=1}^{m-1} \abs{\R_{k}(S)}}\\
    &\leq \frac{1}{2} \sum_{k=1}^{m-1} \min \left\{ 2\ell, \binom{m}{k} \right\},
\end{align*}
which is the same for both parity, hence concluding the proof.
\end{proof}

\subsection{Proof of part 2 of Theorem~\ref{thm:ub_partitioning_func_decision_stumps_rl_feat}}
\label{app:proof_part_2_vcdim_stump}

\begin{proof}
We want to show that the bound of Theorem~\ref{thm:ub_partitioning_func_decision_stumps_rl_feat} is an equality for $2\ell \leq m$. To this end, we want to show the existence of a sample $S$ such that 
\begin{equation*}
\lvert \mathcal{R}_k(S) \rvert = \begin{cases} \ell &\text{if} \, k = \frac{m}{2} \\
2\ell &\text{otherwise.}
\end{cases}
\end{equation*}
Since $2\ell \leq m$ implies $2\ell \leq \binom{m}{k}$ for all $k$, we will have
\begin{equation*}
\lvert \mathcal{R}(S) \rvert = \sum_{k = 1}^{\floor{\frac{m}{2}}} \lvert \mathcal{R}_k(S) \rvert = \ell (m - 1) = \frac{1}{2} \sum_{k = 1}^{m - 1} 2\ell = \frac{1}{2} \sum_{k = 1}^{m - 1} \min\left\{2\ell, \binom{m}{k}\right\}
\end{equation*}
which establishes that the bound of Theorem~\ref{thm:ub_partitioning_func_decision_stumps_rl_feat} is an equality. 

Let us construct a suitable sample $S$. Consider the permutations $\sigma^1, \hdots, \sigma^\ell$ given by the rows of the following permutation representation of $S$:
\begin{equation*}
\Sigma =
\left[
\begin{array}{c c c c | c c c c | c c c c}
1 & 2 & \hdots & l & 2l + 1 & 2l + 2 & \hdots & m & 2l & 2l - 1 & \hdots & l + 1\\
2 & 3 & \hdots & l + 1 & 2l + 1 & 2l + 2 & \hdots & m & 1 & 2l & \hdots & l + 2\\
3 & 4 & \hdots & l + 2 & 2l + 1 & 2l + 2 & \hdots & m & 2 & 1 & \hdots & l + 3\\
\vdots & \vdots & \ddots & \vdots & \vdots & \vdots & \ddots & \vdots & \vdots & \vdots & \ddots & \vdots\\
l & l + 1 & \hdots & 2l - 1 & 2l + 1 & 2l + 2 & \hdots & m & l - 1 & l - 2 & \hdots & 2l
\end{array} 
\right] .
\end{equation*}
$\Sigma$ is built up from an $\ell \times \ell$ matrix on the left, an $\ell \times (m - 2\ell)$ matrix in the middle, and an $\ell \times \ell$ matrix on the right. In the remainder of this paragraph, a shift is a shift in the sequence $1, 2, \hdots, 2\ell$. The first row of the left matrix is $1, 2, \hdots, \ell$; subsequent rows are obtained by shifting one position to the right. The middle matrix has identical rows running from $2\ell + 1$ to $m$. The first row of the right matrix is $2\ell, 2\ell - 1, \hdots, \ell + 1$; subsequent rows are obtained by shifting one position to the left. For example, if $\ell = 3$ and $m = 9$, we have
\begin{equation*} 
\Sigma =
\left[
\begin{array}{c c c | c c c | c c c}
1 & 2 & 3 & 7 & 8 & 9 & 6 & 5 & 4\\
2 & 3 & 4 & 7 & 8 & 9 & 1 & 6 & 5\\
3 & 4 & 5 & 7 & 8 & 9 & 2 & 1 & 6
\end{array} 
\right] .
\end{equation*}
It is clear that, for $k = 1, \hdots, \floor{\frac{m}{2}}$, splitting any of these permutations after the first $k$ points or before the last $k$ points always induces different 2-partitions with a part of size $k$ on the sample, as long as the sample is chosen so that the strict inequalities 
\begin{equation*}
x_{\sigma_1^{i}}^{i} < x_{\sigma_2^{i}}^{i} < \cdots < x_{\sigma_m^{i}}^{i} 
\end{equation*}
hold; it suffices to choose $x_{\sigma_j^{i}}^{i} = j$ for $i = 1, \hdots, \ell$ and $j = 1, \hdots, m$. This gives us a total of $\ell$ distinct 2-partitions if $k = \frac{m}{2}$ (with even $m$), and a total of $2\ell$ distinct permutations if $k < \frac{m}{2}$, as required. 
\end{proof}

\subsection{Proof of part 3 of Theorem~\ref{thm:ub_partitioning_func_decision_stumps_rl_feat}}
\label{app:proof_part_3_vcdim_stump}

We prove part 3 of Theorem~\ref{thm:ub_partitioning_func_decision_stumps_rl_feat} by showing that for $2\ell \geq \tbinom{m}{\floormtwo}$ \big(so that $2\ell \geq \binom{m}{k}$ for all $k$\big), there exists a sample $S$ such that 
\begin{equation*}
\abs{\R(S)} = \sum_{k = 1}^{m - 1} \binom{m}{k} = \sum_{k = 1}^{m - 1} \min \left\{ 2\ell, \binom{m}{k} \right\}.
\end{equation*}
Part 3 of the Theorem is crucial in order to obtain the exact VC dimension of decision stumps on real-valued features.

We proceed in two steps. First, we show that there exists a sample $S$ of $m$ examples on which every 2-partition with a part of size $\floormtwo$ is realized by a stump, when $2\ell \geq \binom{m}{\floormtwo}$. 
Second, we use induction from this base case to establish the proof for all part sizes.
More precisely, we show that if there exists a sample $S_k$ such that a stump can realize every $2$-partition with a part of size $2 \leq k \leq \frac{m}{2}$, then there also exists a sample $S_{k - 1}$ of the same size such that a stump can realize every $2$-partition with a part of size $k$ \emph{and} every $2$-partition with a part of size $k-1$.

Let $\Sigma$ be the permutation representation of $S$, as explained at the beginning of Appendix~\ref{app:proof_stump_rl}. Furthermore, assume we are in the worst-case scenario where
\begin{equation*}
x_{\sigma_1^{i}}^{i} < x_{\sigma_2^{i}}^{i} < \cdots < x_{\sigma_m^{i}}^{i} 
\end{equation*}
for all $i \in [\ell]$. In this case, showing that every 2-partition of $S$ is realizable by a decision stump is equivalent to showing that every $k$-subset of $[m]$ is attainable by splitting a permutation of $\Sigma$ in two, either by splitting after the first $k$ elements or before the last $k$ elements for every possible $k$. Moreover, we only need to consider $k$-subsets for $1 \leq k \leq \frac{m}{2}$ since $\R_k(S) = \R_{m-k}(S)$.

\textbf{Step 1.}
We want to show that there exists a sample $S_\floormtwo$ of $m$ examples on which every $2$-partition with a part of size $\floormtwo$ is realized by a stump when $2\ell \geq \tbinom{m}{\floormtwo}$, \ie when $\ell \geq \ceil{\frac{1}{2}\tbinom{m}{\floormtwo}}$. Let $\Sigma_{\floor{\frac{m}{2}}}$ be its permutation representation. Our problem is then equivalent to finding a matrix $\Sigma_{\floor{\frac{m}{2}}}$ whose rows are permutations of $[m]$ such that each $\floor{\frac{m}{2}}$-subset of $[m]$ may be found as the first $\floor{\frac{m}{2}}$ elements or the last $\floor{\frac{m}{2}}$ elements of a row of $\Sigma_{\floor{\frac{m}{2}}}$. 

This is easy for even $m$. Given that $\ell \geq \frac{1}{2}\tbinom{m}{\frac{m}{2}}$ and that there are exactly $\frac{1}{2}\binom{m}{\frac{m}{2}}$ different 2-partitions of $[m]$ with a part of size $\frac{m}{2}$, we can fit them all the first $\ell$ rows of the matrix $\Sigma_{\frac{m}{2}}$ with the first $\frac{m}{2}$ elements of each row being the elements of the first part of each 2-partition. Then, $\Sigma_{\frac{m}{2}}$ induces a sample $S_{\frac{m}{2}}$ on which every 2-partition is realizable by a stump. If $S_{\frac{m}{2}} = \{ \x_1, \hdots, \x_m \}$, choosing $x_{\rho_j^{i}}^{i} = j$, where the $\rho_j^{i}$ are 
the elements of the matrix $\Sigma_{\frac{m}{2}}$, suffices. 

Now, let's see what happens when $m$ is odd. Consider the minimal case $\ell = \ceil{\frac{1}{2}\tbinom{m}{\floormtwo}}$. We rephrase our problem as a graph problem. 
Let the vertices of the graph $G = (V, E)$ be the $\floormtwo$-subsets of $[m]$ and only place edges between disjoint $\floor{\frac{m}{2}}$-subsets. Now, pairs of $\floor{\frac{m}{2}}$-subsets with an edge connecting them are exactly the pairs of $\floor{\frac{m}{2}}$-subsets of $[m]$ whose elements can occur in the same row of $\Sigma_{\floor{\frac{m}{2}}}$ (since each row is a permutation and therefore contains each element of $[m]$ exactly once). 
The problem of constructing a suitable matrix $\Sigma_{\floor{\frac{m}{2}}}$ becomes equivalent to showing that there exists a subset of edges $M\subseteq E$ such that no two edges $e_1, e_2 \in M$ are incident to the same vertex, with cardinality $\abs{M} = \ell$ if $\binom{m}{\floormtwo}$ is even and $\abs{M} = \ell - 1$ if $\binom{m}{\floormtwo}$ is odd (since in this case, one $\floor{\frac{m}{2}}$-subset of $[m]$ will have its own row in the matrix $\Sigma_{\floor{\frac{m}{2}}}$). Such problems are called \emph{matching} problems in the field of graph theory.

As it turns out, the graph $G$ is known as the \textit{Odd Graph} $O_n$ with $n=\floor{\frac{m}{2}}$ (since $m = 2\floor{\frac{m}{2}} + 1$ when $m$ is odd).
According to \cite{mutze2018sparse}, $O_n$ has at least one Hamiltonian cycle for $n=1$ and for every $n\geq 3$, a Hamiltonian cycle being a cycle which goes through every vertex exactly once. In particular, it has a Hamiltonian path as long as $n \neq 2$. 
This implies that for $n \neq 2$, there exists a matching of size $\floor{ \frac{1}{2} \binom{m}{ \floor{\frac{m}{2}} } }$.
Indeed, it suffices to take one such Hamiltonian path, add the first edge to $M$, skip the next one, and continue adding every other edge to $M$ as we follow along the path. This ensures that every vertex is incident to exactly one of the selected edges, except when the number of vertices is odd, in which case one vertex is left out (thus accounting for the floor function).
The case $n=2$ (which only occurs when $m = 5$) is exceptional and $O_2$ corresponds to the Petersen Graph, which has no Hamiltonian cycle.
However, from Figure~\ref{fig:petersen_graph}, we can see that there still exists a matching of size $\ell=\frac{1}{2} \binom{5}{ \floor{\frac{5}{2}} }=5$.

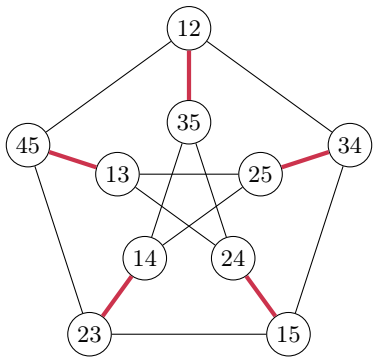
\begin{figure}
\centering
\begin{tikzpicture}
\def\outerradius{2.25}
\def\innerradius{1}
\foreach \angle/\tag in {0/12, 72/45, 144/23, 216/15, 288/34}{
    \node[circle, draw, inner sep=2pt](\tag) at (90+\angle:\outerradius) {\footnotesize\tag};
}
\foreach \angle/\tag in {0/35, 72/13, 144/14, 216/24, 288/25}{
    \node[circle, draw, inner sep=2pt](\tag) at (90+\angle:\innerradius) {\footnotesize\tag};
}
\draw (12) -- (34);
\draw (12) -- (45);
\draw[ultra thick, color3] (12) -- (35);

\draw[ultra thick, color3] (34) -- (25);
\draw (34) -- (15);

\draw (35) -- (24);
\draw (35) -- (14);

\draw[ultra thick, color3] (45) -- (13);
\draw (45) -- (23);

\draw (25) -- (13);
\draw (25) -- (14);

\draw[ultra thick, color3] (15) -- (24);
\draw (15) -- (23);

\draw (24) -- (13);

\draw[ultra thick, color3] (14) -- (23);

\end{tikzpicture}
\caption{The odd graph $O_2$, also commonly known as the Petersen Graph. One matching of size $5$ is shown in bold red.}
\label{fig:petersen_graph}
\end{figure}

We can easily construct a set of $\ell$ permutations which separates all $\floor{\frac{m}{2}}$-subsets from such a matching.
Pair off the $\floor{\frac{m}{2}}$-subsets which are joined by an edge in the chosen matching $M$. 
Since $m$ is odd, choosing a pair of disjoint $\floor{\frac{m}{2}}$-subsets of $[m]$ fixes $m-1$ elements of a permutation, leaving exactly one possible element to complete it.
Hence, sandwich the missing elements between each pair of $\floor{\frac{m}{2}}$-subsets to construct the rows of $\Sigma_{\floor{\frac{m}{2}}}$. 
If $\binom{m}{ \floor{\frac{m}{2}} }$ is even, we are done. 
Otherwise, put the last $\floor{\frac{m}{2}}$-subset at the beginning of the $\ell$-th row of $\Sigma_{\floor{\frac{m}{2}}}$. 
Lastly, if $\ell > \ceil{\frac{1}{2}\tbinom{m}{\frac{m}{2}}}$, then build the first $\ell$ rows of $\Sigma_{\floor{\frac{m}{2}}}$ as described above and fill in the rest with arbitrary permutations.
With this configuration, just like in the even case, $\Sigma_{\floor{\frac{m}{2}}}$ induces a sample $S_{\frac{m}{2}}$ on which every 2-partition is realizable by a stump. 

\textbf{Step 2.}
We want to prove that given a sample $S_\floormtwo$ on which every $2$-partition with a part of size $\floormtwo$ is realizable by a stump, we can construct a sample $S_1$ on which every $2$-partition is realizable by a stump.
We proceed inductively, showing that given a sample $S_k$ with $1 < k \leq \floor{\frac{m}{2}}$ on which every $2$-partition with a part of size $k, k+1, \dots, \floormtwo$ is realizable by a stump, there exists a sample $S_{k-1}$ of the same size as $S_k$ on which every $2$-partition with a part of size $k-1, k, k+1, \dots, \floormtwo$ is realizable by a stump.

Let $S_k$ be a sample such that no two of its instances have the same value for any feature, and let $\Sigma_k$ be its permutation representation. 

Then, Lemma~\ref{lem:injective_mapping_from_Ak_to_Ak+1}, proved below, assures us that there exists an injective map $\phi$ from the set $\binom{[m]}{k-1}$ of all $(k-1)$-subsets of $[m]$ to the set $\binom{[m]}{k}$ of all $k$-subsets of $[m]$ such that for every $(k-1)$-subset $a$, we have $a \subset \phi(a)$.

By assumption, $\Sigma_k$ separates all $k$-subsets of $[m]$, that is all $k$-subsets of $[m]$ appear either as the first $k$ elements or the last $k$ elements of a row of $\Sigma_k$. For some $a \in \binom{[m]}{k-1}$, reorder the elements of $\phi(a)$ appearing at the beginning or the end of a row of $\Sigma_k$ so that the elements of $a$ are either at the beginning or at the end of this row (according to whether the elements of $\phi(a)$ are at the beginning or at the end of the row). Notice that after this procedure, the new matrix $\Sigma'_k$ that is obtained still separates $\phi(a)$; moreover, it also separates $a$. Since the map $\phi$ is injective, we can continue this process without ever needing to reorder the same half-row twice, applying the same steps for each $a \in \binom{[m]}{k-1}$. This yields a final matrix $\Sigma_{k - 1}$ which induces the desired sample $S_{k - 1}$. 

Now, since Lemma~\ref{lem:injective_mapping_from_Ak_to_Ak+1} is valid for $2 \leq k \leq \floor{\frac{m}{2}}$, and because $S_\floormtwo$ is a set on which every $2$-partition with a part of size $\floormtwo$ can be realized by a stump, one can repeat the process above until $k=2$ so that $\R(S_1)$ contains every 2-partition.
Thus, $S_1$ is the set needed to conclude the proof.

\begin{lemma}
\label{lem:injective_mapping_from_Ak_to_Ak+1}
Let $\binom{[m]}{k} \eqdef \left\{ a \subseteq [m] : |a| = k \right\}$ be the set of all $k$-subsets of $[m]$.
Then, for $1 \leq k < \frac{m}{2}$ there exists an injective mapping $\phi : \binom{[m]}{k} \to \binom{[m]}{k+1}$ such that $a \subset \phi(a)$ for all $a \in \binom{[m]}{k}$.
\end{lemma}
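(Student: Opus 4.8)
The plan is to recast the existence of $\phi$ as a bipartite matching problem and to invoke Hall's marriage theorem. I would form the bipartite graph $G$ whose two vertex classes are $\binom{[m]}{k}$ and $\binom{[m]}{k+1}$, placing an edge between $a$ and $b$ exactly when $a \subset b$. An injective map $\phi : \binom{[m]}{k} \to \binom{[m]}{k+1}$ with $a \subset \phi(a)$ for every $a$ is precisely a matching of $G$ saturating the class $\binom{[m]}{k}$, so it suffices to verify Hall's condition on that side: for every family $\A \subseteq \binom{[m]}{k}$, the neighborhood $N(\A)$ (the $(k+1)$-subsets containing some member of $\A$) satisfies $\abs{N(\A)} \ge \abs{\A}$.

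The key structural fact I would use is that $G$ is biregular: each $k$-subset $a$ is contained in exactly $m-k$ subsets of size $k+1$ (one per element of $[m]\setminus a$), while each $(k+1)$-subset $b$ contains exactly $k+1$ subsets of size $k$ (one per element removed). Fixing an arbitrary $\A$ and double counting the edges of $G$ incident to $\A$, I would obtain $\abs{\A}(m-k) \le \abs{N(\A)}(k+1)$, since each vertex of $N(\A)$ absorbs at most $k+1$ of those edges. The hypothesis $k < \frac{m}{2}$ is equivalent to $k+1 \le m-k$, which gives $\abs{N(\A)} \ge \abs{\A}\,\frac{m-k}{k+1} \ge \abs{\A}$, i.e. exactly Hall's condition.

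With Hall's condition established, Hall's theorem immediately yields a matching saturating $\binom{[m]}{k}$, and reading this matching as a function gives the desired $\phi$. I do not anticipate a genuine obstacle here: the only real step is recognizing the matching reformulation, after which the verification is a one-line counting estimate; indeed, this is just the standard normalized-matching (LYM) property of the Boolean lattice restricted to two consecutive levels below the middle, and one could alternatively simply cite that.
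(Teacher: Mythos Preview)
Your proposal is correct and essentially identical to the paper's own proof: both recast the problem as finding a matching saturating $\binom{[m]}{k}$ in the bipartite inclusion graph between levels $k$ and $k+1$, verify Hall's condition via the same degree count $(m-k)\abs{\A} \le (k+1)\abs{N(\A)}$ together with $m-k \ge k+1$, and then invoke Hall's marriage theorem.
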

\begin{proof}
Let $k$ be such that $1 \leq k < \frac{m}{2}$. Consider the bipartite graph $G = (V, E)$ whose set of vertices is $V = \binom{[m]}{k} \cup \binom{[m]}{k + 1}$, with an edge connecting $a \in \binom{[m]}{k}$ and $b \in \binom{[m]}{k + 1}$ if and only if $a \subset b$, and no other edges. The lemma is equivalent to finding a matching of $G$ which covers $\binom{[m]}{k}$ in the sense that each vertex in $\binom{[m]}{k}$ is incident to an edge of the matching. We show the existence of such a matching using Hall's marriage theorem (see \cite{HallMarriage}). 

Let $W \subseteq \binom{[m]}{k}$ and consider the set $N(W)$ containing all the vertices in $\binom{[m]}{k + 1}$ which are adjacent to a vertex in $W$, that is all $(k + 1)$-subsets of $[m]$ which contain a $k$-subset of $[m]$ from $W$. 

Given $a \in W$, we can make $m - k$ different $(k + 1)$-subsets containing $a$ by adding one of the $m - k$ elements of $[m]$ not present in $a$ to it. Since we can do this for each $a \in W$, we obtain $(m - k)\abs{W}$ (not necessarily all distinct) $(k + 1)$-subsets. In fact, in the worst case, when all $\binom{k + 1}{k} = k + 1$ different $k$-subsets of some $b \in \binom{[m]}{k + 1}$ are present in $W$, $b$ will be counted $k + 1$ times. Therefore $(k + 1)\abs{N(W)} \geq (m - k)\abs{W}$. Moreover, since $1 \leq k < \frac{m}{2}$, we have $m - k \geq k + 1$. This means
\begin{equation*}
\abs{N(W)} \geq \frac{m - k}{k + 1}\abs{W} \geq \abs{W}.
\end{equation*}
Since this inequality holds for all $W \subseteq \binom{[m]}{k}$, a straightforward application of Hall's marriage theorem yields a matching of $G$ which covers $\binom{[m]}{k}$ and proves the lemma. 
\end{proof}

\subsection{Proof of part 4 of Theorem~\ref{thm:ub_partitioning_func_decision_stumps_rl_feat}}
\label{app:proof_m=1to7}

We now prove that
\begin{align}\label{eq:decision_stump_partitioning_function_rl_feat_app}
    \pi^2_T(m) = \frac{1}{2} \sum_{k=1}^{m-1} \min \cb{ 2\ell, \binom{m}{k} }
\end{align}
when $1 \leq m \leq 7$.
To do so, consider the permutation representation $\Sigma$ of a sample $S$ as described at the beginning of the Appendix.
We explicitly define $\Sigma$ which induces a sample $S$ that shows Equation~\eqref{eq:decision_stump_partitioning_function_rl_feat_app} is satisfied. 

One must understand the following matrices as follows.
If $\ell$ is less than or equal to the total number of rows of the matrix, build $\Sigma$ from the first $\ell$ rows.
If $\ell$ is greater than the number of rows of the matrix, add arbitrary permutations to fill out the rest of the rows of $\Sigma$; these do not matter because $\Sigma$ already separates all subsets of $[m]$ with its first $\ell$ rows.

\begin{itemize}
  \item $m=1$:
\begin{align*}
    \begin{bmatrix}
        1
    \end{bmatrix}
\end{align*}

\item $m=2$:
\begin{align*}
    \begin{bmatrix}
        1 & 2
    \end{bmatrix}
\end{align*}

  \item $m=3$:

\begin{align*}
    \begin{bmatrix}
        1 & 2 & 3\\
        1 & 3 & 2
    \end{bmatrix}
\end{align*}

  \item $m=4$:

\begin{align*}
    \begin{bmatrix}
        1 & 2 & 4 & 3\\
        2 & 3 & 1 & 4\\
        1 & 3 & 2 & 4
    \end{bmatrix}
\end{align*}

  \item $m=5$:

\begin{align*}
\begin{bmatrix}
1 & 2 & 3 & 5 & 4\\
2 & 3 & 4 & 1 & 5\\
3 & 4 & 1 & 2 & 5\\
1 & 3 & 5 & 2 & 4\\
1 & 4 & 2 & 3 & 5
\end{bmatrix}
\end{align*}

  \item $m=6$:

\begin{align*}
\begin{bmatrix}
1 & 2 & 3 & 6 & 5 & 4\\
2 & 3 & 4 & 1 & 6 & 5\\
3 & 4 & 5 & 2 & 1 & 6\\
1 & 3 & 6 & 5 & 4 & 2\\
3 & 5 & 2 & 1 & 6 & 4\\
5 & 1 & 4 & 3 & 2 & 6\\
1 & 4 & 3 & 6 & 2 & 5\\
3 & 6 & 5 & 1 & 2 & 4\\
1 & 2 & 5 & 3 & 4 & 6\\
1 & 3 & 5 & 2 & 4 & 6
\end{bmatrix}
\end{align*}

  \item $m=7$:
\begin{align*}
\begin{bmatrix}
1 & 2 & 3 & 4 & 5 & 6 & 7\\
2 & 3 & 4 & 7 & 1 & 5 & 6\\
3 & 4 & 7 & 6 & 2 & 1 & 5\\
4 & 7 & 6 & 2 & 5 & 1 & 3\\
1 & 4 & 3 & 7 & 6 & 2 & 5\\
5 & 7 & 4 & 3 & 2 & 1 & 6\\
3 & 7 & 5 & 6 & 1 & 2 & 4\\
2 & 7 & 4 & 1 & 6 & 3 & 5\\
2 & 6 & 3 & 7 & 1 & 4 & 5\\
1 & 7 & 3 & 5 & 2 & 4 & 6\\
3 & 6 & 7 & 1 & 2 & 4 & 5\\
1 & 4 & 7 & 6 & 2 & 3 & 5\\
1 & 2 & 7 & 3 & 4 & 5 & 6\\
1 & 5 & 7 & 2 & 3 & 4 & 6\\
1 & 6 & 7 & 2 & 3 & 4 & 5\\
2 & 3 & 7 & 5 & 1 & 4 & 6\\
2 & 5 & 7 & 4 & 3 & 6 & 1\\
2 & 6 & 7 & 1 & 3 & 4 & 5
\end{bmatrix}
\end{align*}

\end{itemize}

\section{Proof of the bound for stumps on ordinal features}
\label{app:proof_stump_ordinal}

The proof of Theorem~\ref{thm:ub_partitioning_func_decision_stumps_ordinal_feat} relies on a greedy procedure which considers every threshold split possible and attributes to each one a 2-partitions in an optimal fashion.
We divide the proof in three parts.
In the first one, we motivate the algorithm and proceed to give a textual description of what it does, then the second part proves that it is optimal.
Finally the third one develops the mathematical framework needed to derive a useful implementation of the algorithm used to state the theorem.

\subsection{Motivation and description of the algorithm}

Given a sample $S$ of $m$ examples, define $\R(S)$ as the set of 2-partitions realizable by a single node from threshold split rule set.
Our goal is to upper bound the cardinality of $\R(S)$ independently of $S$ as it will directly imply a bound on the partitioning function, since $\pi^2_T(m,\Ob) = \max_{S:\abs{S}=m} \abs{\R(S)}$.

First of all, notice that the number of times a single feature $i$ can be used to realize a split is limited by the number of categories $O_i$.
More precisely, each feature $i$ yields at most $O_i-1$ distinct 2-partitions of $S$.
As such, an obvious upper bound is 
\begin{equation*}
  \abs{\R(S)} \le \sum_{i=1}^\omega O_i-1.
\end{equation*}

However, this naive analysis is oblivious to the fact that, as for the case of real-valued feature, one still has that each feature can yield at most two distinct 2-partitions with a part of size $k < \frac{m}{2}$ and only one 2-partition with a part of size $k=\frac{m}{2}$.
Hence, this upper bound will most of the time be loose.
We therefore have the additional challenge (compared to the case of real-valued features) to keep track of the used rules in order to better estimate $\abs{\R(S)}$.
We explain the procedure in generality, but we provide an example along to facilitate the comprehension.

Consider a sample with feature landscape $\Ob = (6,6,6,6,5,5,5,2,2)$ (in this section, we assume without loss of generality that the features have been prospectively rearranged so that $O_i \ge O_{i+1}$ for all $i$), which means there are 2 binary features, 3 features with five categories and 4 with six categories.
We can represent the available rules that can be used to make splits by a Ferrers diagram as shown in Figure~\ref{fig:Ferrers_diag} by listing the features from the largest number of categories to the smallest.
Here, each row corresponds to a feature $i$, and each bullet of this row corresponds to a possible split that can be realized using this feature (hence, each row is one less than $O_i$).
Note that Ferrers diagrams are usually used to represent integer partitions.

The numbers on top of the columns correspond to the number of bullets in each column.
We refer to them as the feature landscape conjugate $\Obbar$ of $\Ob$ and its components are defined by 
\begin{equation*}
    \Obar_C = \sum_{i=1}^\omega \Id{O_i-1 \ge C},
\end{equation*}
where $C$ ranges from $1$ to $\Omega-1$ with $\Omega \eqdef \max_i O_i$.
This quantity coincides with the conjugate of the integer partition defined by $\Ob - (1,\dots,1)$, whence the terminology was borrowed.

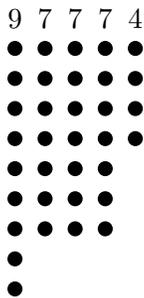
\begin{figure}[h!]
\centering
\begin{tikzpicture}
\def\circleDistance{.4}
\def\featDistConj{9,7,7,7,4}
\foreach \C [count=\i] in \featDistConj {
  \node at (\circleDistance*\i,0) {$\C$};
  \foreach \j in {1,...,\C}{
    \node[Ferrers bullet](\i-\j) at (\circleDistance*\i,-\circleDistance*\j) {};
  }
}




\end{tikzpicture}

\caption{Ferrers diagram of the feature landscape $\Ob = (6,6,6,6,5,5,5,2,2)$. The numbers in the top row count the number of bullets in each column, they form the feature landscape conjugate $\protect\Obbar=(9,7,7,7,4)$.}
\label{fig:Ferrers_diag}
\end{figure}

We now want to attribute a distinct 2-partition to each bullet, respecting the constraint that each feature can only be used twice for each $k< \frac{m}{2}$ (where $k$ is the size of a part) and only once if $k=\frac{m}{2}$.
Since the number of features is limited, they must be attributed with a particular attention in order to maximize the number of 2-partitions that can be realized.
Indeed, we should attribute to each bullet a 2-partition such that the number of available features stays the largest possible.
In the Ferrers diagram, this translates into keeping the number of rows as large as possible at each attribution.

This means that we can develop a greedy procedure that achieves what we want.
Starting from $k = 1$, attribute a 2-partition with a part of size $k$ to each bullet in the last column starting from the bottom going up until all the column is used or all 2-partitions are attributed.
If there are 2-partitions left, go to the second to last column from the bottom going up, respecting the limit of one or two 2-partitions per row (depending on $k$).
Repeat until there is no 2-partition or bullets left, then repeat for every $k$ up to $\floormtwo$.
Figure~\ref{fig:Ferrers_colored} presents the procedure for feature landscape $\Ob$ assuming a sample of $m=6$ examples. 

Remark that the optimality of the solution does not depend on the order of the 2-partitions are attributed; the optimality depends only on the fact that no feature is ``wasted'' (i.e. the rightmost columns are always used first).
This implies that we could also, for example, start by attributing 2-partitions with a part of size $k=\floormtwo$ first, with a part of size $k=\floormtwo-1$ second, down to $k=1$.
This would change the actual attribution, but the total number of 2-partitions attributed would be the same, as depicted in Figure~\ref{fig:Ferrers_colored}.

Note that at each step of the procedure, the Ferrers diagram is valid (\ie still corresponds to some integer partition), and optimally attributes the 2-partitions in order to maximize the number of available rows at all times.

\begin{figure}[h!]
\centering
\begin{tikzpicture}
\node(1) {
    \begin{tikzpicture}
    \def\circleDistance{.4}
    \def\featDistConj{9,7,7,7,4}
    \foreach \C [count=\i] in \featDistConj {
      \node at (\circleDistance*\i,0) {$\C$};
      \foreach \j in {1,...,\C}{
        \node[Ferrers bullet](\i-\j) at (\circleDistance*\i,-\circleDistance*\j) {};
      }
    }
    
    
    
    
    \foreach \ij in {5-1,5-2,5-3,5-4,4-6,4-7}{
      \node[Ferrers bullet, fill=green!80!black] at (\ij) {};
    }
    
    \foreach \ij in {3-1,3-2,3-3,3-4,4-1,4-2,4-3,4-4,4-5,2-6,2-7,3-5,3-6,3-7,1-9}{
      \node[Ferrers bullet, fill=blue!60] at (\ij) {};
    }
    
    \foreach \ij in {1-6,1-7,1-8,2-1,2-2,2-3,2-4,2-5}{
      \node[Ferrers bullet, fill=red!80] at (\ij) {};
    }

    \draw[->, rounded corners=3pt] (2-5.center) -- (2-1.center) -- (1-8.center) -- (1-6.center);
    
    \draw[->, rounded corners=3pt] (4-5.center) -- (4-1.center) -- (3-7.center) -- (3-1.center) -- (2-7.center) -- (2-6.center) -- (1-9.center);
    
    \draw[->, rounded corners=3pt] (5-4.center) -- (5-1.center) -- (4-7.center) -- (4-6.center);
    \end{tikzpicture}
};
\node[below=5mm of 1] {\small Procedure starting at $k=1$.};
\end{tikzpicture}
\hspace{7mm}
\begin{tikzpicture}
\node(1) {
    \begin{tikzpicture}
    \def\circleDistance{.4}
    \def\featDistConj{9,7,7,7,4}
    \foreach \C [count=\i] in \featDistConj {
      \node at (\circleDistance*\i,0) {$\C$};
      \foreach \j in {1,...,\C}{
        \node[Ferrers bullet](\i-\j) at (\circleDistance*\i,-\circleDistance*\j) {};
      }
    }
    
    
    
    
    \foreach \ij in {5-1,5-2,5-3,5-4,4-5,4-6,4-7,1-8,1-9}{
      \node[Ferrers bullet, fill=red!80] at (\ij) {};
    }
    
    \foreach \ij in {3-1,3-2,3-3,3-4,4-1,4-2,4-3,4-4,2-5,2-6,2-7,3-5,3-6,3-7}{
      \node[Ferrers bullet, fill=blue!60] at (\ij) {};
    }
    
    \foreach \ij in {1-6,1-7,2-1,2-2,2-3,2-4}{
      \node[Ferrers bullet, fill=green!80!black] at (\ij) {};
    }
    
    \draw[->, rounded corners=3pt] (5-4.center) -- (5-1.center) -- (4-7.center) -- (4-5.center) -- (1-9.center) -- (1-8.center);

    \draw[->, rounded corners=3pt] (4-4.center) -- (4-1.center) -- (3-7.center) -- (3-1.center) -- (2-7.center) -- (2-5.center);
    
    \draw[->, rounded corners=3pt] (2-4.center) -- (2-1.center) -- (1-7.center) -- (1-6.center);
    \end{tikzpicture}
};
\node[below=5mm of 1] {\small Procedure starting at $k=3$.};
\end{tikzpicture}
\hspace{7mm}
\begin{tikzpicture}
\node(1) {
    \begin{tikzpicture}
    \def\circleDistance{.4}
    \def\featDistConj{9,7,7,7,4}
    \foreach \C [count=\i] in \featDistConj {
      \node at (\circleDistance*\i,0) {$\C$};
      \foreach \j in {1,...,\C}{
        \node[Ferrers bullet](\i-\j) at (\circleDistance*\i,-\circleDistance*\j) {};
      }
    }
    
    
    
    
    \foreach \ij in {3-1,3-2,3-3,4-4,4-5,4-6,4-7}{
      \node[Ferrers bullet, fill=red!80] at (\ij) {};
    }
    
    \foreach \ij in {1-1,1-2,1-3,2-1,2-2,2-3,2-4,2-5,2-6,2-7,3-4,3-5,3-6,3-7}{
      \node[Ferrers bullet, fill=blue!60] at (\ij) {};
    }
    
    \foreach \ij in {1-4,1-5,1-6,1-7,1-8,1-9}{
      \node[Ferrers bullet, fill=green!80!black] at (\ij) {};
    }
    \end{tikzpicture}
};
\node[below=5mm of 1] {\small Non-optimal attribution};
\end{tikzpicture}

\caption{Greedy attribution procedure of the splits for the 2-partitions.
The left diagram presents the procedure as described, starting from $k=1$: All of the 6 partitions with a part of size 1 can be realized (green), all of the 15 partitions with a part of size 2 can be realized (blue), and only 8 out of the 10 partitions with a part of size 3 can be realized (red).
Note that even though the total number of possible splits is larger than the number of possible 2-partitions of $S$, they cannot be all realized because of the constraints.
On the middle, the same procedure is used but starting with $k=3$.
The total number of 2-partitions is the same in both cases, but in this situation, 9 out of 10 partitions with a part of size 3 can be realized and 14 out of the 15 partitions with a part of size 2 can be realized.
Remark that all attribution procedures are not optimal.
The rightmost diagram illustrates that using both binary features (\ie the last two bullets of the first column) for the partitions with a part of size 1 allows the realization of two partitions fewer than the proposed procedure. }
\label{fig:Ferrers_colored}
\end{figure}

It is worth noticing that the attribution procedure corresponds to a method that constructs a hypothetical worst-case sample $S$ which would maximize the number of possible splits made by a stump.

\subsection{Proof of optimality}

To show that the greedy attribution procedure actually achieves an optimal solution to the problem, we need to introduce some notation.
Since there is a \emph{very} large amount of possible attribution, we regroup them in equivalence classes labeled by a vector $\a \in \naturals^\floormtwo$, where each component $a_k$ is the number of 2-partitions with a part of size $k$ present in this attribution.
Furthermore, let $A$ be the set of all vectors $\a$ which corresponds to attributions that satisfies the constraints of the problem.
The number of 2-partitions realizable by a stump is thus bounded by $\max_{\a\in A} \norm{\a}_1$, where $\norm{\a}_1$ denotes the $L_1$ norm.
Let $A^* \eqdef \cb{\a \in A: \norm{\a}_1 = \max_{\a'\in A} \norm{\a'}_1}$ be the set of optimal attributions.
We want to compare the solution provided by our greedy procedure to some optimal solution in $A^*$.
However, since there are possibly multiple solutions in $A^*$, we must choose the correct one for our purpose.

Hence, assume a lexical order on $A^*$ such that for any $\a,\a' \in A^*$, we have $\a > \a'$ if, for some value of $1 \le K \le \floormtwo$, we have $a_k = a_k'$ for all $k < K$ and $a_K > a_K'$.
Then, define $\a^*\in A^*$ such that $\a^* > \a$ for all $\a\in A^*\setminus\{ \a^*\}$ to be the optimal solution we will compare to\footnote{Note that such a (unique) $\a^*$ always exists since for any pair $\a'\ne \a$, we have, exclusively, $\a > \a'$ or $\a' > \a$.}, and let $\a^g$ be the solution found by our greedy procedure starting at $k=1$. 

We proceed by way of contradiction.
Assume that $\a^g$ is not optimal, \ie, $\norm{\a^g}_1 < \norm{\a^*}_1$.
Let $K$ be the smallest index for which $\a^g$ differs from $\a^*$.
We have two cases: $a^g_K < a^*_K$ or $a^g_K > a^*_K$.
We will show that the former is inconsistent with the greedy procedure, while the latter contradicts the optimality of $\a^*$.

Let us inspect the first case.
At every step $k$, our procedure attributes the \emph{maximum} possible number of partitions with a part of size $k$ among the remaining bullets not used in the previous $k-1$ steps.
Therefore, having $a^g_K < a^*_K$ implies that at some previous step $K'< K$, the procedure has made a bad choice that has blocked future attribution.
However, this is impossible since we know that we have attributed the correct number of partitions for every $k$ less than $K$ by our assumption that $\a^*$ is optimal, and because the algorithm selects only bullets which could block other attributions as a last resort (\ie the rightmost bullets are always used first).

For the second case, we need to show that if there exists some $K$ such that $a^g_K > a^*_K$, then it implies that there exists another optimal solution $\a'$ which is better than $\a^*$.
Given that our algorithm has already attributed the optimal number of partitions with a part of size $1$ up to $K-1$ by assumption, and that solution $\a^*$ has not used $a^g_K$ bullets for partitions with a part of size $K$, it means that solution $\a^*$ has either some unused bullets, or has used bullets for partitions with a part of size $K' > K$, or a combination of both.
If solution $\a^*$ has unused bullets, a strictly better solution is to attribute partitions with a part of size $K$ to them, which contradicts the optimality of $\a^*$.
On the other hand, if $\a^*$ has used bullets for partitions with a part of size $K'> K$ which could have been used for partitions with a part of size $K$, then there exists another optimal solution $\a'\in A^*$ which comes in lexical order before $\a^*$, which contradicts the definition of $\a^*$.

\subsection{Implementation of the algorithm}

We here derive and solve recurrence relations to make the greedy procedure into an actionable algorithm.
We start with some notation.

Let $\Obbar^0 \eqdef \Obbar$ and let $\Obbar^k$ be the new feature landscape conjugate after step $k$ of the attribution procedure.
Furthermore, let $\Omega \eqdef \max_i O_i$ denote the largest number of categories encountered so that columns can be labeled by integers $C \in [\Omega-1]$.
In prevision of what is to come, let $\Obar^k_C \eqdef 0$ for any $C > \Omega - 1$.
We want an expression for $R_k$, the number of 2-partitions with a part of size $k$ realizable at step $k$, which will imply $\abs{\R(S)} \le \sum_{k=1}^{\floormtwo}R_k$.
Thus, let $b^k_C$ be the number of 2-partitions with a part of size $k$ attributed in the $C$-th column of the Ferrers diagram at step $k$ so that
\begin{equation}
  R_k = \sum_{C=1}^{\Omega-1} b^k_C.\label{eq:R_k_as_bkC_sum}
\end{equation}
We consider two cases separately: 1) $k<\frac{m}{2}$ and 2) $k=\frac{m}{2}$, because the maximum number of 2-partitions with a part of size $k$ per feature is different in each case.

Consider the case $k<\frac{m}{2}$ first.
The procedure described previously instruct us to attribute the maximum amount of splits possible to the bullets to each column, starting from the last one (which has index $\Omega-1$).
This maximum number of attributions is limited by two constraints: the number of splits remaining and the number of admissible bullets in the column.
The value of $b^k_C$ is thus the minimum between these two values.
The procedure \emph{stops} when all splits have been attributed, or when the number of columns is exhausted (\ie $C=1$).

Clearly, the number of splits remaining is simply the maximum number of 2-partitions with a part of size $k$ minus the number of attributions already done, which implies $b^k_C \le \binom{m}{k} - \sum_{C'=C+1}^{\Omega-1} b^k_{C'}$.
In particular, if $b^k_C$ is equal to this quantity, then the procedure must stop, and $b^k_{C'} = 0$ for all $C'< C$.

The number of admissible bullets in the column is simply the number of rows which have only 0 or 1 attribution, since a single feature cannot yield more than two 2-partitions with a part of size $k$.
Hence, define $a^0_C$, $a^1_C$ and $a^2_C$ as the number of rows (with respect to column $C$) with 0, 1 and 2 colored bullets respectively \emph{before} the attribution step $C$ is done.
This will imply that $b^k_C \le a^0_C + a^1_C$.
We seek a recursive set of equations to solve for these quantities in terms of other known quantities.
As a base case, let $a^0_\Omega = a^1_\Omega = a^2_\Omega = 0$.
First, note that the number of rows with 0 attribution is simply the number of bullets in column $C$ minus the rows with 1 or 2 attributions, hence 
\begin{equation}
  a^0_C = \Obar^{k-1}_C - a^1_C - a^2_C.\label{eq:id_0}
\end{equation}
Second, the number of rows with a single attribution is simply the number of rows with 0 attribution at the previous step $C-1$.
Indeed, the procedure being greedy, we know that the previous step has used all admissible bullets, and therefore \emph{all} rows that had either 0 or 1 attribution have been used and now have 1 or 2 attributions.
(We know this is true for all rows because otherwise the procedure would have stopped due to the exhaustion of available splits.)
Therefore, we have the simple relation
\begin{equation}
  a^1_C = a^0_{C+1}.\label{eq:id_1}
\end{equation}
Third, the number of rows with 2 attributions corresponds to the number of rows that had previously only 1 attribution as we just argued, plus the number of rows that already had 2 attributions:
\begin{equation}
  a^2_C = a^1_{C+1} + a^2_{C+1}.\label{eq:id_2}
\end{equation}

Now, we solve for $a^0_C$ and $a^1_C$.
Start from Equation~\eqref{eq:id_0} and use Equations~\eqref{eq:id_1} and \eqref{eq:id_2} on the right-hand side to obtain
\begin{equation*}
  a^0_C = \Obar^{k-1}_C - a^0_{C+1} - (a^1_{C+1} + a^2_{C+1}).
\end{equation*}
Then, using Equation~\eqref{eq:id_0} with $C$ replaced by $C+1$, the equation simplifies to
\begin{equation*}
  a^0_C = \Obar^{k-1}_C - \Obar^{k-1}_{C+1}.
\end{equation*}
Substituting this value in Equation~\eqref{eq:id_1}, we also have $a^1_C = \Obar^{k-1}_{C+1} - \Obar^{k-1}_{C+2}$, so we conclude that the number of available bullets in column $C$ before the attribution is
\begin{equation*}
  a^0_C + a^1_C = \Obar^{k-1}_C - \Obar^{k-1}_{C+2}.
\end{equation*}

Gathering our results, we have, according to our previous discussion, that $b^k_C$ takes the form
\begin{equation*}
  b^k_C = \min\cb{ \binom{m}{k} - \sum_{C'=C+1}^{\Omega-1} b^k_{C'}, \Obar^{k-1}_C - \Obar^{k-1}_{C+2} }.
\end{equation*}
Remark that at any given step $C$, if $b^k_C$ is equal to the first element of the minimum, then the procedure stops.
Therefore, we have that for all previous steps $C'> C$, $b^k_{C'} = \Obar^{k-1}_{C'} - \Obar^{k-1}_{C'+2}$, and so we have that $\sum_{C'=C+1}^{\Omega-1} b^k_{C'}$ is a telescopic series that sums to $\Obar^{k-1}_C + \Obar^{k-1}_{C+1}$, yielding
\begin{equation}
  b^k_C = \min\cb{ \binom{m}{k} - \Obar^{k-1}_{C+1} - \Obar^{k-1}_{C+2}, \Obar^{k-1}_C - \Obar^{k-1}_{C+2} }.\label{eq:def_bkC}
\end{equation}

From there, we can compute $R_k$ using Equation~\eqref{eq:R_k_as_bkC_sum}.
We have two cases to consider: either the procedure stops from exhausting all available bullets, or exhausting all possible splits.
In the former case, the procedure stops when $C=1$ with $b^k_C$ never equal to the first element of the minimum, and we have
\begin{equation*}
  R_k = \sum_{C=1}^{\Omega-1} \Obar^{k-1}_C - \Obar^{k-1}_{C+2} = \Obar^{k-1}_1 + \Obar^{k-1}_{2}.
\end{equation*}
In the latter case, we trivially have $R_k = \binom{m}{k}$, the number of 2-partitions with a part of size $k$.
It is straightforward to verify that the formula~\eqref{eq:def_bkC} is consistent with this result.
Because the procedure is greedy, $R_k$ is sure to achieve one of these values, but cannot exceed either one, so we conclude that
\begin{equation} \label{eq:def_Rk_stump_ordinal}
  R_k = \min\cb{ \binom{m}{k}, \Obar^{k-1}_1 + \Obar^{k-1}_{2} }.
\end{equation}

This previous equation provides us with an expression for $R_k$ which is independent of the number of attributions $b^k_C$ made in each column.
However, these numbers are need to determine the new updated feature landscape $\Obbar^k$ at the next step $k+1$.
We have the natural trivial relation
\begin{equation*}
  \Obar^k_C = \Obar^{k-1}_C - b^k_C,
\end{equation*}
which by construction of the greedy procedure is always associated with a valid Ferrers diagram.
Let us inspect how we can reexpress $b^k_C$ in terms of other know quantities.

Whenever the procedure stops when we have exhausted all admissible bullets, we have $R_k = \Obar^{k-1}_1 + \Obar^{k-1}_2$ and $b^k_C = \Obar^{k-1}_C - \Obar^{k-1}_{C+2}$ for all $C$.
Thus, we directly have 
\begin{equation}\label{eq:obarkC_1}
  \Obar^k_C = \Obar^{k-1}_C - \Obar^{k-1}_C + \Obar^{k-1}_{C+2} = \Obar^{k-1}_{C+2}.
\end{equation}

Let $\Gamma_k$ be the index of the first column with colored bullets (\ie the last column used in the procedure).
If the procedure has stopped because all possible splits have been attributed, then $R_k = \binom{m}{k}$, and $\Gamma_k$ is the largest $C$ such that $b^k_C = \binom{m}{k} - \Obar^{k-1}_{C+1} - \Obar^{k-1}_{C+2}$.
Using this fact, Equation~\eqref{eq:def_bkC} implies $\binom{m}{k} \le \Obar^{k-1}_{\Gamma_k} + \Obar^{k-1}_{\Gamma_k+1}$.
Hence, a formal definition of $\Gamma_k$ is given by
\begin{equation}\label{eq:def_gammak_k<m/2}
  \Gamma_k \eqdef \max \cb{ 1 \le C \le \Omega-1 : R_k \le \Obar^{k-1}_C + \Obar^{k-1}_{C+1} }.
\end{equation}
Applying Equation~\eqref{eq:def_bkC} then yields
\begin{equation}\label{eq:obark_k<m/2}
    \Obar^k_C =
    \begin{cases}
        \Obar^{k-1}_{C} & \text{if } C < {\Gamma_k},\\
        \Obar^{k-1}_{\Gamma_k} + \Obar^{k-1}_{{\Gamma_k}+1} + \Obar^{k-1}_{{\Gamma_k}+2} - R_k & \text{if } C = {\Gamma_k}\\
        \Obar^{k-1}_{C+2} & \text{otherwise.}
    \end{cases}
\end{equation}

Note that Equations~\eqref{eq:def_gammak_k<m/2} and \eqref{eq:obark_k<m/2} also apply when the procedure has stopped because all the admissible bullets have been used.
Indeed, in this case, we have $R_k = \Obar^{k-1}_1 + \Obar^{k-1}_2 \ge \Obar^{k-1}_C + \Obar^{k-1}_{C+1}$ for any $C$ by defintion of $\Obbar^{k-1}$.
This implies that the definition~\eqref{eq:def_gammak_k<m/2} of $\Gamma_k$ indeed correctly returns the index of the first used column.
Furthermore, this allows us to write $R_k = \Obar^{k-1}_{\Gamma_k} + \Obar^{k-1}_{\Gamma_k+1}$ (this relation must hold even if $\Gamma_k \ne 1$, which happens only when $\Obar^{k-1}_1 = \Obar^{k-1}_2 = \Obar^{k-1}_3$).
Plugging this results in Equation~\eqref{eq:obark_k<m/2}, it is easy to see that it collapses to the expected relation~\eqref{eq:obarkC_1}.
Therefore, the set of Equations~\eqref{eq:def_Rk_stump_ordinal}, \eqref{eq:def_gammak_k<m/2} and \eqref{eq:obark_k<m/2} allows us to compute easily the quantities we are interested in for $k < \frac{m}{2}$.

All these operations have a nice intuitive interpretation in terms of Ferrers diagram.
Figure~\ref{fig:Ferrers_remove_colored} shows how the original procedure can be modified in order to retrieve $R_k$, $\Gamma_k$ and $\Obbar^k$ from a visual standpoint.

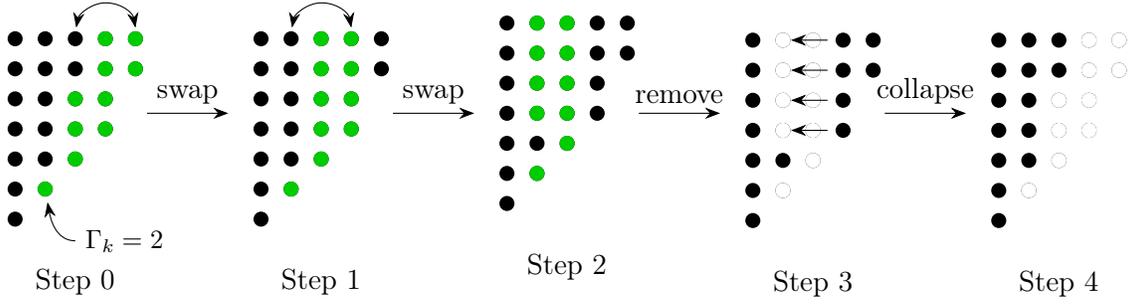
\begin{figure}[h!]
\centering
\begin{tikzpicture}
\def\circleDistance{.4}
\def\featDistConj{7,6,5,4,2}

\def\bulletsIndZero{2-6,3-5,3-4,3-3,4-4,4-3,4-2,4-1,5-2,5-1}
\def\bulletsIndOne{2-6,3-5,3-4,3-3,4-4,4-3,4-2,4-1,3-2,3-1}
\def\bulletsIndTwo{2-6,3-5,3-4,3-3,2-4,2-3,2-2,2-1,3-2,3-1}
\colorlet{bulletColor}{green!80!black}

\node(1) {
    \begin{tikzpicture}[anchor=center]

    \foreach \C [count=\i] in \featDistConj {
      \foreach \j in {1,...,\C}{
        \node[Ferrers bullet](\i-\j) at (\circleDistance*\i,-\circleDistance*\j) {};
      }
    }

    \foreach \ij in \bulletsIndZero{
      \node[Ferrers bullet, fill=bulletColor] at (\ij) {};
    }
    
    \draw[<->] ([yshift=.4mm]5-1.north) arc (10:170:\circleDistance);

    \node[below=3mm of 3-6, anchor=north west, overlay](gammak) {\small $\Gamma_k = 2$};
    \draw[shorten >=2pt, ->, overlay] (gammak) edge[out=180, in=-80] (2-6);
    \end{tikzpicture}
};

\node[right=12mm of 1](2) {
    \begin{tikzpicture}[anchor=center]
    \foreach \C [count=\i] in \featDistConj {
      \foreach \j in {1,...,\C}{
        \node[Ferrers bullet](\i-\j) at (\circleDistance*\i,-\circleDistance*\j) {};
      }
    }

    \foreach \ij in \bulletsIndOne{
      \node[Ferrers bullet, fill=bulletColor] at (\ij) {};
    }
    
    \draw[<->] ([yshift=.4mm]4-1.north) arc (10:170:\circleDistance);
    \end{tikzpicture}
};

\node[right=12mm of 2](3) {
    \begin{tikzpicture}[anchor=center]
    \foreach \C [count=\i] in \featDistConj {
      \foreach \j in {1,...,\C}{
        \node[Ferrers bullet](\i-\j) at (\circleDistance*\i,-\circleDistance*\j) {};
      }
    }

    \foreach \ij in \bulletsIndTwo{
      \node[Ferrers bullet, fill=bulletColor] at (\ij) {};
    }
    \end{tikzpicture}
};

\node[right=12mm of 3](4) {
    \begin{tikzpicture}[anchor=center]
    \foreach \C [count=\i] in \featDistConj {
      \foreach \j in {1,...,\C}{
        \node[Ferrers bullet](\i-\j) at (\circleDistance*\i,-\circleDistance*\j) {};
      }
    }

    \foreach \ij in \bulletsIndTwo{
      \node[Ferrers bullet, fill=white] at (\ij) {};
    }
    
    \draw[<->, opacity=0] ([yshift=.4mm]4-1.north) arc (0:180:\circleDistance);
    
    \foreach \i in {1,2,3,4} {
      \draw[->, shorten <=1mm] (4-\i.west) -- (2-\i.east);
    }
    
    \end{tikzpicture}
};

\node[right=12mm of 4](5) {
    \begin{tikzpicture}[anchor=center]
    \foreach \C [count=\i] in \featDistConj {
      \foreach \j in {1,...,\C}{
        \node[Ferrers bullet](\i-\j) at (\circleDistance*\i,-\circleDistance*\j) {};
      }
    }

    \foreach \ij in \bulletsIndZero{
      \node[Ferrers bullet, fill=white] at (\ij) {};
    }
    
    \draw[<->, opacity=0] ([yshift=.4mm]5-1.north) arc (0:180:\circleDistance);
    \end{tikzpicture}
};

\draw[->, shorten <=2mm, shorten >=2mm] ([xshift=-8pt]1.east) -- (2) node[above, midway]{swap};
\draw[->, shorten <=2mm, shorten >=2mm] ([xshift=-8pt]2.east) -- (3) node[above, midway]{swap};
\draw[->, shorten <=2mm, shorten >=2mm] ([xshift=-8pt]3.east) -- (4) node[above, midway]{remove};
\draw[->, shorten <=2mm, shorten >=2mm] ([xshift=-8pt]4.east) -- (5) node[above, midway]{collapse};

\foreach \i [count=\j] in {0,...,4} {
  \node[below=3mm of \j] {Step \i};
}

\end{tikzpicture}
\caption{The effect of the operations on the original procedure, assuming $\binom{m}{k}=10$ and $\protect\Obbar{}^{k-1} = (7,6,5,4,2)$ for the purpose of the example.
At step 0, the standard attribution procedure is used. $\Gamma_k=2$ is the first used column.
At step 1 and 2, all colored bullets are swapped successively into columns $\Gamma_k$ and $\Gamma_k+1$. These steps correspond to solve the recurrence relations between $a^0_C$, $a^1_C$ and $a^2_C$.
At step 3, we remove bullets that cannot be used again for the next value of $k$.
Finally at step 4, we reconstruct a valid Ferrers diagram by collapsing all remaining bullets to the left in order to obtained the feature landscape conjugate for the next step.
One can see that the first column was left untouched, column $\Gamma_k$ has a new number of bullets, and all subsequent columns are ``shifted'' twice to the left, in accordance with Equation~\eqref{eq:obark_k<m/2}.
Notice that the black bullets of step 0 are the same as at step 3, showing the operations ultimately do ``nothing'' except to provide us with an easy way to compute the new feature landscape.
}
\label{fig:Ferrers_remove_colored}
\end{figure}

We can now proceed with the case where $k = \frac{m}{2}$.
We can use the exact same reasoning as before, but keeping in mind that there are only $\frac{1}{2}\binom{m}{k}$ 2-partitions possible with a part of this size, and that the features can only produce one such 2-partition instead of two.
This implies that rows can only have 0 or 1 attributes, which simplifies things.
Instead of repeating every steps, we only briefly describes the main differences.

The first notable distinction is that we do not need to keep track of rows with 2 attributions when we count the number of admissible bullets in a column.
Thus we only need to consider the variables $a^0_C$ and $a^1_C$ (and not $a^2_C$).
The recurrence relations~\eqref{eq:id_0}, \eqref{eq:id_1} and \eqref{eq:id_2} become instead
\begin{equation*}
  a^0_C = \Obar^{k-1}_C - a^1_C,
\end{equation*}
and
\begin{equation*}
  a^1_C = a^0_{C+1} + a^1_{C+1}.
\end{equation*}
Combining both equations, we end up with $a^0_C = \Obar^{k-1}_C - \Obar^{k-1}_{C+1}$.
Then, Equation~\eqref{eq:def_bkC} becomes
\begin{equation*}
  b^k_C = \min \cb{ \frac{1}{2}\binom{m}{k} - \Obar^{k-1}_{C+1} , \Obar^{k-1}_C - \Obar^{k-1}_{C+1} }.
\end{equation*}
Then, one finds
\begin{equation*}
  R_k = \min \cb{  \frac{1}{2}\binom{m}{k}, \Obar^{k-1}_1 }.
\end{equation*}
Note that since the step $k = \frac{m}{2}$ is the last step of the procedure, we do not need to update the feature landscape $\Obbar^k$.
However, one could if desired, find an update rule using the same reasoning as before which would take the form
\begin{equation*}
    \Obar^k_C =
    \begin{cases}
        \Obar^{k-1}_{C} & \text{if } C < {\Gamma_k},\\
        \Obar^{k-1}_{\Gamma_k} + \Obar^{k-1}_{{\Gamma_k}+1} - R_k & \text{if } C = {\Gamma_k}\\
        \Obar^{k-1}_{C+1} & \text{otherwise,}
    \end{cases}
\end{equation*}
where
\begin{equation*}
    {\Gamma_k} \eqdef \max \cb{ 1 \le C \le \Omega - 1 : \Obar^{k-1}_C \ge R_k }.
\end{equation*}

Gathering our results, we have shown that $\pi^2_T(m, \Ob) = \max_{S:\abs{S}=m}\abs{\R(S)} \le \sum_{k=1}^\floormtwo R_k$, which proves Theorem~\ref{thm:ub_partitioning_func_decision_stumps_ordinal_feat}.

\section{Proof of the bound for stumps on nominal features}
\label{app:proof_stump_nominal}

We here inspect the case of a decision stump using the unitary comparison rule set.
Let $S$ be a sample of $m$ examples drawn from a domain with feature landscape $\Nb$, and define $\R(S)$ to be the set of 2-partitions realizable by the node (in the case of a stump, we have $\R(S) = \P^2_T(S)$, but this is not true for a general tree).
To prove Theorem~\ref{thm:ub_partitioning_func_decision_stumps_nominal_feat}, our goal is to upper bound $\abs{\R(S)}$ as a function of $m$ and $\Nb$.
To this end, we decompose $\R(S)$ into a union of subsets.
This decomposition can be done for the size $k$ of the parts of the 2-partitions realizable, for the feature $i$ on which occurs the split, or both simultaneously, yielding different bounds in each case.
The proof goes as follows.

\begin{proof}
We begin by the decomposition on the part sizes $k$ of the partitions.
Thus, define $\R_k(S)\subset \R(S)$ as the subset of 2-partitions with a part of size $k$ realizable on $S$.
We have
\begin{align*}
    \R(S) = \bigcup_{k=1}^{\floormtwo} \R_k(S).
\end{align*}
We are interested in bounding the cardinality of $\R_k(S)$.
By the same arguments exposed in Appendix~\ref{app:proof_stump_rl}, we have $\abs{\R_k(S)} \le \smallstirling{m}{2}_k$ (where $\smallstirling{m}{2}_k = \binom{m}{k}$ if $k < \frac{m}{2}$ and $\frac{1}{2}\binom{m}{k}$ if $k=\frac{m}{2}$ is the total number of distinct 2-partitions of $S$ with at least a part of size $k$).

To decompose on the feature $i$ used in the splitting process, we define $\R_i(S)\subseteq \R(S)$ as the subset of 2-partitions realizable on feature $i$, which leads to
\begin{align*}
    \R(S) = \bigcup_{i=1}^{\nu} \R_i(S).
\end{align*}
If the number of categories $N_i$ is equal to 1, the node can realize no partitions of the sample.
If $N_i=2$, the node can only realize a single 2-partition.
On the other hand, if $3 \le N_i \le m$, then one can assign each category to at least one example.
The node rule can discriminate every category from the rest, yielding $N_i$ distinct 2-partitions of $S$.
It is impossible to have more than $m$ different 2-partitions on a single feature, therefore for $N_i > m$, $\abs{\R_i(S)}\le m$.
This let us write $\abs{\R_i(S)} \le R_{N_i}$ for
\begin{equation*}
    R_{N} \eqdef \begin{cases}
        N-1 & \text{if } N = 1 \text{ or } 2,\\
        \min\cb{N, m} & \text{otherwise.}
    \end{cases}
\end{equation*}

Finally, one can consider decomposing $\R(S)$ on both $k$ and $i$.
In that case, let $\R_{i,k}(S)\subseteq \R(S)$ be the subset of 2-partitions with a part of size $k$ realizable on feature $i$, so that we have
\begin{align}\label{eq:nominal_parti_decomp_stump}
    \R(S) = \bigcup_{i=1}^\nu \bigcup_{k=1}^{\floormtwo} \R_{i,k}(S).
\end{align}
Here, the $\R_{i,k}(S)$ are most likely not disjoint since the same 2-partitions can occur on two different features.
Let us consider the constraints that exist on $\abs{\R_{i,k}(S)}$.
There are multiple cases possible depending on the value of $\frac{m}{k}$ and $N_i$.

We start with the special case when $k=\frac{m}{2}$ and $N_i > 1$.
In that case, there is a single 2-partition realizable, regardless of the number of categories available.
For the other cases, consider the following procedure where we construct a hypothetical worst-case sample using up to $N_i>0$ categories.
Begin by setting the $i$-th feature of all examples equal to 1, and denote this set by $S'$.
Then, pick and remove any $k$ examples from $S'$, and change their category to 2.
Repeat by picking and removing another set of $k$ examples from $S'$ and set their category to 3.
After $j$ repetitions of this process, we end up with $j$ subsets of $k$ examples, and a single subset $S'$ of $m-jk$ examples, all of them having the $i$-th feature taking different values.
At the end of the procedure, the number of subsets of size $k$ corresponds to the (hypothetical) maximum number of 2-partitions with a part of size $k$ realizable on feature $i$.
There are two possibilities for the procedure to stop: 1) there are no categories left to create a new subset of $k$ examples (\ie $j = N_i-1$), or 2) there are not enough examples left in $S'$ to form a new subset of $k$ examples (while having $\abs{S'}>0$) (\ie $0 < m-jk \le k$).

In the first case, we have $N_i-1 = j$, which implies that the cardinality of $S'$ is equal to $m - k(N_i-1)$.
There are two subcases: if $\abs{S'} = k$, we end up in reality with exactly $j+1 = N_i = \frac{m}{k}$ subsets of $k$ examples, but if $\abs{S'} \ne k$, we have $j = N_i-1$ subsets instead.

In the second case, we have the condition $0 < m - jk \le k$.
Since $j$ is an integer, solving the inequality yields a single positive solution which is given by $j=\floor{\frac{m}{k}}$.

These results can be summarized into the following definition
\begin{equation*}
    R_{N,k} \eqdef \begin{cases}
    1 & \text{if } \frac{m}{k}=2 \text{ and } N \neq 1 ,\\
    N-1 & \text{if } \frac{m}{k} > N,\\
    \floor{\frac{m}{k}} & \text{if } \frac{m}{k} \le N.
    \end{cases}
\end{equation*}
so that we may write $\abs{\R_{i,k}(S)} \le R_{N_i,k}$.

We can apply the union bound to Equation~\eqref{eq:nominal_parti_decomp_stump} in two different ways.
Observe that, since the union operation is commutative, we can use the facts that $\R_k(s) = \bigcup_{i=1}^\nu \R_{i,k}(S)$ and $\R_i(s) = \bigcup_{k=1}^\floormtwo \R_{i,k}(S)$.
Therefore, the first case yields
\begin{align*}
    \abs{\R(S)} &\le \sum_{k=1}^\floormtwo \abs{ \bigcup_{i=1}^\nu \R_{i,k}(S) }\\
    &\le \sum_{k=1}^\floormtwo \min \bigg\{ \stirling{m}{2}_k, \sum_{i=1}^\nu R_{N_i,k} \bigg\}
\end{align*}
while the second case becomes
\begin{align*}
    \abs{\R(S)} &\le \sum_{i=1}^\nu \abs{ \bigcup_{k=1}^\floormtwo \R_{N_i,k}(S) }\\
    &\le \sum_{i=1}^\nu \min \bigg\{ R_{N_i}, \sum_{k=1}^{\floormtwo} R_{N_i,k} \bigg\}
\end{align*}
Combining both results and noting that the bound depends on $S$ only via its size gives the claimed theorem.
\end{proof}

\clearpage

\section{Analysis of general decision tree classes and proofs of theorems}
\label{app:proof_of_ub_partitioning_functions}

In this section, we apply the partitioning framework to the class of general decision trees in order to bound the partitioning functions.
There are multiple cases to consider according to the types of features the examples are made of.
Hence, we subdivide this section into four parts.
In the first one, we examine at the highest level, without making assumptions on the nature of the features, what can be said about the partitions realizable by a tree.
In the subsequent subsections, we then specialize these latter observations to a given feature type.
Thus, the second subsection handles the case when the examples are composed exclusively of real-valued features.
The third subsection treats trees on categorical features.
Finally, the last subsection merges the results of the previous ones in order to obtain a general bound for the partitioning function of decision tree classes on mixture of feature types.

\subsection{Decision trees as partitioning machines}
\label{app:decision_trees_as_partitioning_machines}

In Section~\ref{sec:paritions_as_a_framework}, we introduced the notion of trees as partitioning machines.
We here formalize this idea by providing a recursive construction of partitions realizable by a tree class $T$.
The results presented in this section are agnostic to the nature of the features and will serve as common ground for all the subsequent analyses.

The goal of the present section is to derive an actionable expression for the set $\P^c_T(S)$ of $c$-partitions realizable by a decision tree class $T$ on some sample $S$ in terms of its left and right subtrees.
Hence, let $\R(S)$ be the set of 2-partitions realizable by the root node of $T$.
Remark that $\R(S)$ depends heavily on the decision rules allowed by the tree class; hence we do not make any assumptions on its content for the moment.
Then, let $\pgamma \eqdef \cb{\gamma_1,\dots, \gamma_c} \in \P_T^c(S)$ be some $c$-partition realizable by $T$ and let $\parti{\lambda} \eqdef \cb{\lambda_1, \lambda_2} \in \R(S)$ be a 2-partition realized by the root node which led to that particular $\pgamma$.

We assume that part $\lambda_1$ is forwarded to the left subtree class $T_l$, which produces an $a$-partition $\palpha(\lambda_1)$ while part $\lambda_2$ is sent to the right subtree class $T_r$, which produces a $b$-partition $\pbeta(\lambda_2)$.
The situation is illustrated in Figure~\ref{fig:partition_tree}.
As explained in Section~\ref{sec:paritions_as_a_framework}, $\pgamma$ arises from the union of some of the leaves.
Formally, we say that $\pgamma$ arises from $\palpha$ and $\pbeta$ if the $(a+b)$-partition $\palpha \cup \pbeta$ is a refinement of $\pgamma$, \ie that every part $\gamma\in\pgamma$ is made from the union of some of the parts of $\palpha\cup\pbeta$.
In particular, this directly informs us that $a+b$ must be greater than or equal to $c$.

\begin{figure}[h!]
\centering
\begin{tikzpicture}

\node[draw, circle, minimum width=.5cm](root) {};
\node[above] at (root.north) {root};
\path (root) +(-130:2.35) node[draw, regular polygon, regular polygon sides=3, minimum height=1cm](TL) {\phantom{$T$}};
\node at (TL) {$T_l$};
\path (root) +(-50:2.35) node[draw, regular polygon, regular polygon sides=3, minimum height=1cm](TR) {\phantom{$T$}};
\node at (TR) {$T_r$};
\draw (root) -- node[pos=.5, above, sloped](lambda){\phantom{$\lambda$}} (TL.north);
\node at (lambda) {$\lambda_1$};
\draw (root) -- node[pos=.5, above, sloped](S-lambda){\phantom{$\lambda$}} (TR.north);
\node at ([xshift=2mm]S-lambda) {$\lambda_2$};

\node[below] at (TL.south) {$\palpha$};
\node[below] at (TR.south) {$\pbeta$};

\end{tikzpicture}
\caption{The root node splits the set $S$ into two parts, $\lambda_1$ and $\lambda_2$, which are forwarded to the left subtree class $T_l$ and the right subtree class $T_r$ respectively. The subtrees produces partitions $\palpha$ and $\pbeta$, which can be combined to yield a $c$-partition $\pgamma$.}
\label{fig:partition_tree}
\end{figure}
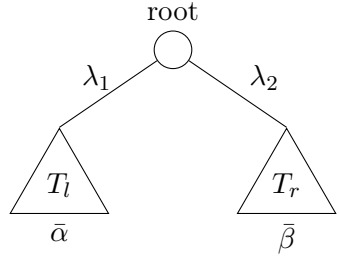

Note that, generally, for any partition $\pgamma$, there exists multiple partitions $\palpha$ and $\pbeta$ from which $\pgamma$ arises.
In particular, we have the following informative claim.
\begin{proposition}\label{prop:existence_palpha_pbeta}
Let $\pgamma \in \P^c_T(S)$ be any realizable $c$-partition, and let $\plambda\in\R(S)$ be any 2-partition made by the root node that led to $\pgamma$.
Then, there exists an $a$-partition $\palpha\in\P^a_{T_l}(\lambda_1)$ and a $b$-partition $\pbeta\in\P^b_{T_r}(\lambda_2)$ from which $\pgamma$ arises such that $a \le c$ and $b \le c$.
\end{proposition}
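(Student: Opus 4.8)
The plan is to start from \emph{any} pair of partitions $\palpha' \in \P^{a'}_{T_l}(\lambda_1)$ and $\pbeta' \in \P^{b'}_{T_r}(\lambda_2)$ from which $\pgamma$ arises --- such a pair exists by the very fact that $\pgamma \in \P^c_T(S)$ and that $\plambda$ is the 2-partition produced by the root that led to $\pgamma$; indeed, the tree $t$ realizing $\pgamma$ restricts to trees $t_l \in T_l$ and $t_r \in T_r$ whose non-empty leaves, read as partitions of $\lambda_1$ and $\lambda_2$, give such $\palpha'$ and $\pbeta'$ with $\palpha' \cup \pbeta'$ a refinement of $\pgamma$. The only thing that could fail is the size bound: a priori $a'$ or $b'$ could exceed $c$. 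So the task reduces to showing we can \emph{coarsen} $\palpha'$ and $\pbeta'$ into partitions $\palpha$, $\pbeta$ with at most $c$ parts each, while still having $\palpha \cup \pbeta$ refine $\pgamma$ and while keeping $\palpha$ realizable by $T_l$ and $\pbeta$ by $T_r$.

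The key observation is that merging leaves of a decision tree that carry the same label produces again a decision tree in the same class --- this is exactly the mechanism described in Section~\ref{sec:paritions_as_a_framework} that turns the leaf-partition into a realizable partition. So coarsening a realizable partition by unioning some of its parts yields another realizable partition (of the same subtree class), as long as we do not split any existing part. Concretely, for the left subtree: each part $\alpha'_j$ of $\palpha'$ is contained in exactly one part $\gamma$ of $\pgamma$ (since $\palpha' \cup \pbeta'$ refines $\pgamma$); so I would group together all parts of $\palpha'$ that lie inside the same $\gamma$, forming a new partition $\palpha$ of $\lambda_1$ with at most $c$ parts (one per part of $\pgamma$ that meets $\lambda_1$). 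This $\palpha$ is realizable by $T_l$ because it is a coarsening of the realizable $\palpha'$ obtained purely by merging parts (relabelling leaves with a common label). Do the same on the right to get $\pbeta \in \P^b_{T_r}(\lambda_2)$ with $b \le c$.

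It remains to check that $\pgamma$ still arises from this new $\palpha$ and $\pbeta$, i.e. that $\palpha \cup \pbeta$ is a refinement of $\pgamma$. This is immediate: every part of $\palpha$ is a union of parts of $\palpha'$ all contained in a single $\gamma \in \pgamma$, hence is itself contained in that $\gamma$; likewise for $\pbeta$; so each part of $\palpha \cup \pbeta$ sits inside one part of $\pgamma$, which is exactly the refinement condition. One should also note that $\palpha$ and $\pbeta$ are genuine partitions (non-empty disjoint parts covering $\lambda_1$, resp. $\lambda_2$), which is clear since merging preserves this. I expect the mildly fiddly point --- the ``main obstacle'' if any --- to be arguing cleanly that the coarsening $\palpha$ is still realizable by $T_l$: one must invoke precisely the leaf-relabelling argument of Section~\ref{sec:paritions_as_a_framework} (take the tree $t_l$ realizing $\palpha'$ and relabel its leaves so that leaves landing in parts destined for the same $\gamma$ share a label), rather than something about the decision rules, since it is only the labelling that changes. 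Everything else is bookkeeping.
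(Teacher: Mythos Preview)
Your proposal is correct and follows essentially the same argument as the paper: start from any realizable pair $(\palpha',\pbeta')$ refining $\pgamma$, then coarsen each by merging all parts that fall into the same $\gamma_j\in\pgamma$, noting that such a merge is still realizable (leaf relabelling) and yields at most $c$ parts on each side. The paper's proof is identical up to notation (it swaps the roles of primed and unprimed symbols).
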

\begin{proof}
Consider any $c$-partition $\pgamma\in\P^c_T(S)$, any $\plambda\in\R(S)$ that led to $\pgamma$, any $a$-partition $\palpha\in\P^a_{T_l}(\lambda_1)$ and $b$-partition $\pbeta\in\P^b_{T_r}(\lambda_2)$ from which $\pgamma$ arises.
By construction, any part $\gamma_j \in \pgamma$ is the result of the union of some subset of parts $\palpha^j \subseteq \palpha$ and some other subset of parts $\pbeta^j \subseteq \pbeta$.
Note that $\palpha^j$ and $\pbeta^j$ can be empty, but not both at the same time.
Using this notation, we have that $\gamma_j = \bigcup_{\alpha \in \palpha^j} \alpha \cup \bigcup_{\beta \in \pbeta^j} \beta$ for every $\gamma_j$.
Then consider the following partition $\palpha'\eqdef \big\{ \alpha_j' : \alpha_j' \eqdef \bigcup_{\alpha \in \palpha^j} \alpha, \alpha_j' \neq \varnothing \big\}$ and define $\pbeta'$ similarly.
Clearly, $\palpha'\in\P^{a'}_{T_l}(\lambda_1)$ by construction since the operation $\bigcup_{\alpha \in \palpha^j} \alpha$ corresponds to taking the union of some leaves of the tree, and $\pbeta'\in\P^{b'}_{T_r}(\lambda_2)$ for the same reason.
In this formulation, $\gamma_j$ is equal to $\alpha_j'$, or $\beta_j'$, or $\alpha_j' \cup \beta_j'$; since there are $c$ parts in $\pgamma$, this implies that $a'$ and $b'$ are less than or equal to $c$.
Therefore, $\palpha'$ and $\pbeta'$ are the desired witnesses that confirms the claim.
\end{proof}

In the previous paragraphs, we have examined some properties of the partitions $\palpha$ and $\pbeta$ from which any realizable partitions $\pgamma$ arises.
To derive an expression for $\P^c_T(S)$, we now investigate the converse relation, that is, given a partition $\plambda\in\R(S)$, a partition $\palpha\in\P^a_{T_l}(\lambda_1)$ and a partition $\pbeta\in\P^b_{T_r}(\lambda_2)$, what are the possible partitions $\pgamma$ that arise from $\palpha$ and $\pbeta$?
To anwser this question, we take inspiration from the construction of $\palpha'$ and $\pbeta'$ in the proof of Proposition~\ref{prop:existence_palpha_pbeta}, and we define the following quantity.

\begin{definition}\label{def:partitions_set_pairwise_unions}
Let $\palpha$ be an $a$-partition of some set $A$ and $\pbeta$ be a $b$-partition of some other set $B$, disjoint from $A$.
Define the set $\Q^c(\palpha, \pbeta)$ of $c$-partitions that can be constructed from pairwise unions of $\palpha$ and $\pbeta$ as follows:
\begin{align*}
    \Q^c(\palpha, \pbeta) \eqdef \left\{ \right. \pgamma \text{ is a $c$-partition of $A\cup B$} :
    & \text{ for all } \gamma \in \pgamma, \text{ there exists } \alpha \in \palpha, \beta \in \pbeta \nonumber\\
    &\text{ s.t.} \,\gamma = \alpha \text{ or } \gamma = \beta \text{ or } \gamma = \alpha \cup \beta \left. \right\}.
\end{align*}
Furthermore, if $\mathcal{A}^a(A)$ is some set of $a$-partitions of $A$ and $\mathcal{B}^b(B)$ is some set of $b$-partitions of $B$, we denote by
\begin{equation*}
    \Q^c(\mathcal{A}^a(A), \mathcal{B}^b(B)) \eqdef \bigcup_{\substack{\palpha\in\A^a(A),\\\pbeta\in\mathcal{B}^b(B)}} \Q^c(\palpha, \pbeta)
\end{equation*}
the union set of the $\Q^c$.
\end{definition}

We are now equipped to write a recursive relation of the set of partitions a tree $T$ can realize knowing the set of partitions its subtrees can realize.

\begin{proposition}[$c$-partition-set decomposition of decision trees]
\label{prop:c-partitions-set_decomposition_decision_trees}
Let $\P_T^c(S)$ be the set of $c$-partitions that a binary decision tree class $T$ can realize on a sample $S$, and let $T_l$ and $T_r$ be the hypothesis classes of its left and right subtrees.
Moreover, let $\R(S)$ denote the set of 2-partitions the root node can realize on $S$.
Then, the following decomposition holds.
\begin{equation*}
    \P^c_T(S) = 
    \hspace{0pt}
    \bigcup_{\plambda \in \R(S)} \S^c_T(\plambda)
\end{equation*}
where
\begin{equation}\label{eq:def_Sc}
    \S^c_T(\cb{\lambda_1, \lambda_2}) \eqdef \bigcup_{1 \leq a, b \leq c}
     \Q^c\pr{ \P^a_{T_l}(\lambda_1), \P^b_{T_r}(\lambda_2) } \cup \Q^c\pr{ \P^a_{T_l}(\lambda_2), \P^b_{T_r}(\lambda_1) }
\end{equation}
represents the set of realizable $c$-partitions knowning the root node has realized partition $\plambda$.
\end{proposition}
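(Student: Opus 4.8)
The plan is to prove the set equality $\P^c_T(S) = \bigcup_{\plambda\in\R(S)} \S^c_T(\plambda)$ by establishing the two inclusions separately, where the substantive content of each direction has essentially already been assembled in the preceding paragraphs. The strategy is to unwind the definition of a realizable $c$-partition (Definition~\ref{def:realizable_partition}) through the recursive output of a binary decision tree (Definition~\ref{def:binary_decision_tree}): any tree $t\in T$ first applies the root decision rule $\phi$, which splits $S$ into two parts according to the sign of $\phi$, then feeds each part to a subtree and combines the resulting leaf-labelings.

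First I would prove the inclusion $\P^c_T(S) \subseteq \bigcup_{\plambda\in\R(S)} \S^c_T(\plambda)$. Take any $\pgamma\in\P^c_T(S)$, realized by some $t\in T$. The root node of $t$ carries a decision rule $\phi\in\Phi$; letting $\lambda_1 = \{\x\in S : \phi(\x)=1\}$ and $\lambda_2 = \{\x\in S : \phi(\x)=-1\}$ (discarding whichever is empty), we get $\plambda=\{\lambda_1,\lambda_2\}\in\R(S)$. Now $t$ restricted to $\lambda_1$ is some tree in $T_l$ and induces a partition $\palpha\in\P^a_{T_l}(\lambda_1)$ (the partition of $\lambda_1$ into non-empty leaf preimages, merged by label), and similarly $\pbeta\in\P^b_{T_r}(\lambda_2)$. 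By Proposition~\ref{prop:existence_palpha_pbeta} (applied after possibly coarsening $\palpha,\pbeta$ as in its proof) we may take $a\le c$ and $b\le c$, and $\pgamma$ arises from $\palpha$ and $\pbeta$, which is precisely the statement $\pgamma\in\Q^c(\palpha,\pbeta)\subseteq\Q^c(\P^a_{T_l}(\lambda_1),\P^b_{T_r}(\lambda_2))$. Hence $\pgamma\in\S^c_T(\plambda)$. The only subtlety is bookkeeping the left/right assignment: depending on which child of the root is called $T_l$, the part sent left may be $\lambda_1$ or $\lambda_2$, which is exactly why $\S^c_T$ is defined as a union of $\Q^c(\P^a_{T_l}(\lambda_1),\P^b_{T_r}(\lambda_2))$ and $\Q^c(\P^a_{T_l}(\lambda_2),\P^b_{T_r}(\lambda_1))$.

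Next I would prove the reverse inclusion $\bigcup_{\plambda\in\R(S)}\S^c_T(\plambda)\subseteq\P^c_T(S)$. Given $\plambda=\{\lambda_1,\lambda_2\}\in\R(S)$ and $\pgamma\in\Q^c(\palpha,\pbeta)$ with $\palpha\in\P^a_{T_l}(\lambda_1)$, $\pbeta\in\P^b_{T_r}(\lambda_2)$, I construct an explicit witnessing tree $t\in T$: put at the root the rule $\phi\in\Phi$ realizing the split $\plambda$ (such a $\phi$ exists by definition of $\R(S)$, with the sign parameter $s$ chosen so that $\lambda_1$ goes left), let $t_l\in T_l$ be a tree realizing $\palpha$ on $\lambda_1$, let $t_r\in T_r$ realize $\pbeta$ on $\lambda_2$, and then relabel the leaves of this composed tree so that leaves corresponding to parts $\alpha\in\palpha$ and $\beta\in\pbeta$ that are merged into the same $\gamma\in\pgamma$ receive a common label in $[c]$ — this is possible exactly because, by Definition~\ref{def:partitions_set_pairwise_unions}, each $\gamma$ is $\alpha$, $\beta$, or $\alpha\cup\beta$. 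One checks against Definition~\ref{def:realizable_partition} that $t$ realizes $\pgamma$ on $S$: two examples in the same $\gamma$ reach leaves with the same label, and two examples in distinct $\gamma$'s reach leaves with distinct labels. I would also handle the degenerate edge cases ($m$ too small, or $\R(S)$ forcing one side empty) by noting that such $\plambda$ simply do not arise or contribute trivially.

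I expect the main obstacle to be purely presentational rather than mathematical: keeping the correspondence between leaf-preimage partitions of the subtrees and the abstract partition sets $\P^a_{T_l},\P^b_{T_r}$ precise, and making sure the left/right symmetrization and the label-merging step are stated cleanly enough that the ``arises from'' relation and the set $\Q^c$ line up exactly. A secondary point requiring care is that when $T_l=T_r$ or when some part is empty one must be careful not to double-count or to invoke a subtree on an empty set; but since the statement is an equality of sets (not a cardinality bound), the $\delta_{lr}$ correction that appears later in Theorem~\ref{thm:ub_partitioning_functions_decision_trees_rl_feat} is not needed here, which simplifies matters. No deep idea is required — the proposition is essentially a careful translation of the recursive definition of a decision tree into the language of partitions, and both inclusions are witnessed by the constructions already foreshadowed in the proof of Proposition~\ref{prop:existence_palpha_pbeta} and in Definition~\ref{def:partitions_set_pairwise_unions}.
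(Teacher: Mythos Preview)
Your proposal is correct and follows essentially the same approach as the paper: prove the two inclusions separately, invoking Proposition~\ref{prop:existence_palpha_pbeta} for $\P^c_T(S)\subseteq\bigcup_{\plambda}\S^c_T(\plambda)$ and realizability of the assembled tree for the reverse inclusion. Your reverse direction is in fact more carefully spelled out than the paper's, which simply asserts that $\pgamma$ ``is clearly realizable'' from $\Q^c$ without writing down the witnessing tree.
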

\begin{proof}
Assume $\pgamma \in \P^c_T(S)$ and let $\plambda\in\R(S)$ be the partition that led to $\pgamma$.
Then, Proposition~\ref{prop:existence_palpha_pbeta} ensures us that there exists $\palpha\in\P^a_{T_l}(\lambda_1)$ with $a\le c$ and $\pbeta\in\P^b_{T_r}(\lambda_2)$ and $b\le c$ from which $\pgamma$ arises exactly as defined in Definition~\eqref{def:partitions_set_pairwise_unions} of $\Q^c$.
The same reasoning applies also if the decision rule sends $\lambda_1$ to the right subtree and $\lambda_2$ to the left one instead.
Therefore, since we take the union over all 2-partitions $\plambda$, over all integers $1 \le a, b \le c$, and over both ways of routing to the subtrees the partitioned sample $\plambda$, we necessarily have $\pgamma \in \bigcup_{\plambda\in\R(S)}\S^c_T(\plambda)$.

Now, assume $\pgamma \in \bigcup_{\plambda\in\R(S)}\S^c_T(\plambda)$.
$\pgamma$ is clearly realizable, since by definition of $\Q^c$, it arises in a valid way from some realizable $a$-partition $\palpha$ and some $b$-partition $\pbeta$ of the subtrees.

This implies that $\P^c_T(S)$ is simultaneously a subset and a superset of $\bigcup_{\plambda \in \R(S)} \S^c_T(\plambda)$, which concludes the proof.
\end{proof}

Recall that we are interested in bounding $\abs{\P^c_T(S)}$ for the worse sample $S$ possible to obtain a bound on the partitoning function $\pi^c_T(m)$.
Hence, before tackling each feature types, let us inspect the cardinality of $\S^c$ as it depends only on combinatorial arguments (and not on the nature of the decision rules used).
By the union bound, we have
\begin{equation*}
  \abs{\S^c_T(\plambda)}
  \le
  \hspace{-5pt}
  \sum_{\substack{1 \le a, b \le c\\ a+b \ge c}}
  \hspace{-5pt}
    \abs{\Q^c\pr{ \P^a_{T_l}(\lambda_1), \P^b_{T_r}(\lambda_2)}}
    + \abs{\Q^c\pr{ \P^a_{T_l}(\lambda_2), \P^b_{T_r}(\lambda_1)}},
\end{equation*}
where the condition on $a$ and $b$ follows from the fact that by construction of $\Q^c$, we have $\Q^c(\palpha, \pbeta) = \varnothing$ if $a + b < c$, $a > c$, or $b > c$.
Note that this is only an upper bound as the union over $a$ and $b$ is most likely not disjoint.
Now, analyzing the cardinality of $\Q^c$, we have again by the union bound that
\begin{equation*}
    \abs{\Q^c\pr{ \P^a_{T_l}(\lambda_1), \P^b_{T_r}(\lambda_2) }}
    = \Bigg|
      \bigcup_{
        \substack{\palpha\in \P^a_{T_l}(\lambda_1),\\\pbeta\in \P^b_{T_r}(\lambda_2)}
        }
        \hspace{-9pt}
        \Q^c(\palpha, \pbeta)
    \Bigg|
    = \sum_{\palpha \in \P^a_{T_l}(\lambda_1)} \sum_{\pbeta \in \P^b_{T_r}(\lambda_2)} \big|\Q^c\pr{ \palpha, \pbeta }\big|,\label{eq:cardinality_Qc_union_bound}
\end{equation*}
where here we have equality because the union over $\palpha$ and $\pbeta$ is actually disjoint by definition of $\Q^c$.

To evaluate the size of $\Q^c(\palpha, \pbeta)$, consider the following combinatorial argument.
Choose any $\palpha \in \P^a_{T_l}(\lambda_1)$ and $\pbeta \in \P^b_{T_r}(\lambda_2)$.
According to Definition~\ref{def:partitions_set_pairwise_unions}, we must take the unions of some parts of $\palpha$ and $\pbeta$ to end up with a $c$-partition, with the constraint that the joined parts belongs to different partitions.
We start with a total of $a+b$ parts and we must take the union of some pairs to end up with only $c$ parts.
Taking the union of such a pair effectively reduces the total number of parts by one, therefore we must make $a+b-c$ unions.
To make these unions, choose $a+b-c$ parts from $\palpha$ (there are $\binom{a}{a - (a+b-c)} = \binom{a}{c-b}$ ways to do so) and choose $a+b-c$ parts from $\pbeta$ (there are $\binom{b}{c-a}$ ways to do so) and join them (there are $(a+b-c)!$ ways to do so).
Therefore, we conclude that $\abs{\Q^c(\palpha, \pbeta)} = \binom{a}{c-b} \binom{b}{c-a} (a + b - c)!$, where we have used the fact that $\binom{a}{c-b}$.
Since this argument holds for any partitions, the previous equation becomes
\begin{equation}\label{eq:Qc_cardinality}
    \sum_{\palpha \in \P^a_{T_l}(\lambda_1)} \sum_{\pbeta \in \P^b_{T_r}(\lambda_2)} \big|\Q^c\pr{ \palpha, \pbeta }\big|
    = \tbinom{a}{c-b} \tbinom{b}{c-a} (a+b-c)! \big|\P^a_{T_l}(\lambda_1)\big| \big|\P^b_{T_r}(\lambda_2)\big|.
\end{equation}

Putting everything back together, we end up with
\begin{equation*}
  \abs{\S^c_T(\plambda)}
  \le
  \hspace{-5pt}
  \sum_{\substack{1 \le a, b \le c\\ a+b \ge c}}
  \hspace{-5pt}
    \tbinom{a}{c-b} \tbinom{b}{c-a} (a+b-c)! \pr{
      \big|\P^a_{T_l}(\lambda_1)\big| \big|\P^b_{T_r}(\lambda_2)\big|
      + \big|\P^a_{T_l}(\lambda_2)\big| \big|\P^b_{T_r}(\lambda_1)\big|
    }.
\end{equation*}

Finally, notice that when the left and right subtrees are identical, one has that expression \eqref{eq:def_Sc} collapses to $\S^c(T_l, T_l, \plambda) = \bigcup_{1 \leq a, b \leq c} \Q^c\pr{ \P^a_{T_l}(\lambda_1), \P^b_{T_l}(\lambda_2) }$.
Repeating the same steps to bounds its cardinality, one can readily see that the previous over-estimates the result by a factor of 2.
Hence, we have as a final result
\begin{align}
  \abs{\S^c_T(\plambda)}
  \le
  2^{-\delta_{lr}}
  \hspace{-5pt}
  \sum_{\substack{1 \le a, b \le c\\ a+b \ge c}}
  \hspace{-5pt}
    \tbinom{a}{c-b} \tbinom{b}{c-a} (a+b-c)! \Big(&
      \big|\P^a_{T_l}(\lambda_1)\big| \big|\P^b_{T_r}(\lambda_2)\big| \nonumber \\[-18pt]
      &+ \big|\P^a_{T_l}(\lambda_2)\big| \big|\P^b_{T_r}(\lambda_1)\big|
    \Big),\label{eq:Sc_cardinality}
\end{align}
where $\delta_{lr} \eqdef \Id{T_l = T_r}$ is the Kronecker delta.
We now have all the tools we need to upper bound the partitioning functions of general tree classes for different types of features.

\subsection{Decision trees on real-valued features}
\label{app:proof_decision_trees_rl}

In this section, apply Proposition~\ref{prop:c-partitions-set_decomposition_decision_trees} to trees on real-valued features in order to prove Theorem~\ref{thm:ub_partitioning_functions_decision_trees_rl_feat}.
Therefore, for the rest of this section, $T$ denotes a decision tree class endowed with the threshold split rule set and the examples are made of $\ell$ real-valued features.

\begin{proof}
The first part of Theorem~\ref{thm:ub_partitioning_functions_decision_trees_rl_feat} states that $\pi^c_T(m) = \smallstirling{m}{c}$ for any tree class with $L_T \ge m$.
Thus, assume the number of examples $m$ is less than or equal to the number of leaves $L_T$ of the tree $T$.
We want to show that there exists a sample $S$ such that $T$ can realize every $c$-partitions of $S$.

Let $S$ be a sample such that one feature takes distinct values for each of the $m$ examples.
Then, one can choose for the root of $T$ the appropriate threshold on that feature such that $m_l \le L_{T_l}$ examples will be redirected to the left and $m_r \le L_{T_r}$ examples will be redirected to the right (where we have $L_{T_l} + L_{T_r} = L_T$ and $m_l + m_r = m$).
Then each of the subtrees can do the required split on the same feature, with the required constraints on the number of examples that need to be redirected on each children, until that we have eventually at most one example per leaf.
In that case, by choosing any labeling in $[c]$ for the leaves, the tree class $T$ can perform any $c$-partition of the $m$ examples out of the $\smallstirling{m}{c}$ possible ones.
Consequently, we have $\pi^c_T(m) = \smallstirling{m}{c}$ for any tree class with $L_T \ge m$.

The rest of the Theorem states that when $m > L_T$, we have:
\begin{equation}\label{eq:app_thm_restatement}
    \pi^c_T(m) \leq \pr{\frac{1}{2}}^{\delta_{lr}} \sum_{k=L_{T_l}}^{m-L_{T_r}} \!\!\min \cb{ 2\ell, \tbinom{m}{k} } \hspace{-7pt} {\sum_{\substack{1 \leq a, b \leq c \\ a + b \geq c}}} \!\!\tbinom{a}{c-b} \tbinom{b}{c-a} (a + b - c)!\; \pi^a_{T_l}(k) \pi^b_{T_r}(m-k).
\end{equation}

In the following, we assume every examples of $S$ have distinct feature values, \ie it is always possible to distinguish two examples using any feature.
Indeed, assuming otherwise can only reduce the number of partitions that can be made on a sample, and therefore we have, for such a sample $S$ of $m$ examples, that $\abs{\P_T^c(S)} \le \pi^c_T(m)$.

In the previous section, we have derived Proposition~\ref{prop:c-partitions-set_decomposition_decision_trees}, a recursive union decomposition of $\P^c_T$ in terms of the partitioning sets of its subtrees.
Inspecting the bound~\eqref{eq:Sc_cardinality} on the cardinality of $\S^c_T(\plambda)$, we observe that we could derive a bound for $\pi^c_T(m)$ if we can get rid of the dependence on $\plambda$.
This suggests that we should decompose $\P^c_T(S)$ further over the size of the parts of $\plambda$, hence we let $\R_k(S) \subset \R(S)$ be the subset of 2-partitions realizable by the root node such that at least one part has size $k$, and we have
\begin{equation*}
    \P^c_T(S) = \bigcup_{k=1}^{\floormtwo} \bigcup_{\plambda \in \R_k(S) } \;\S^c_T(\plambda).
\end{equation*}
Because we are dealing with identical features (in contrast to categorical features, where each feature takes values in some possibly different number of categories), there are some optimization that can be made on the starting value of $k$.
For this purpose, we consider the first part of $\S^c$ and we define
\begin{equation*}
  \A_{lr}
  \eqdef
  \bigcup_{k=1}^{\floormtwo}
  \bigcup_{\cb{\lambda_1, \lambda_2} \in \R_k(S)}\;
  \bigcup_{1 \leq a, b \leq c}
  \Q^c\pr{ \P^a_{T_l}(\lambda_1), \P^b_{T_r}(\lambda_2) },
\end{equation*}
where we mute the other dependencies of $\A_{lr}$ to alleviate the notation.
Notice that exchanging the indices $l$ and $r$ in the expression of $\A_{lr}$ gives the missing part of our original expression.
This is because $\Q^c$ is symmetric in its arguments and because the union over $a$ and $b$ is invariant under the exchange of $a$ and $b$, we have that
\begin{equation*}
    \bigcup_{a,b}\, \Q^c \pr{\P^a_{T_l}(\lambda_2), \P^b_{T_r}(\lambda_1)}
    = \,\bigcup_{a,b}\, \Q^c \pr{ \P^a_{T_r}(\lambda_1), \P^b_{T_l}(\lambda_2) },
\end{equation*}
which is equivalent to say that one can exchange the subtrees instead of sending $\lambda_1$ both to the left and to the right.
With this notation, the recursive expression for $\P^c_T(S)$ simplifies to
\begin{equation}\label{eq:PcT_as_union_of_Alr_Arl}
    \P^c_T(S) = \A_{lr} \cup \A_{rl}.
\end{equation}
By the union bound, we have $\abs{\P^c_T(S)} \leq \abs{\A_{lr}} + \abs{\A_{rl}}$.
Therefore, upper bounding $\abs{\A_{lr}}$ will give us the result we are aiming for.

We start by showing that the union over $k$ can be changed to go from $L_{T_l}$ (where $L_{T}$ is the number of leaves of the tree class $T$) to $\min\cb{\floor{\frac{m}{2}}\!, m - L_{T_r}}$ without changing $\A_{lr}$.
To do so, we need to show that for any partition $\pgamma \in \A_{lr}$, there exists at least one 2-partition $\plambda = \{\lambda_1,\lambda_2\}$ realized by the root node with $L_{T_l} \le \abs{\lambda_1} \le \min\cb{\floor{\frac{m}{2}}\!, m - L_{T_r}}$ that leads to $\pgamma$.
Indeed, assume $\abs{\lambda_1} < L_{T_l}$.
Because of our assumption below Equation~\eqref{eq:app_thm_restatement}, one can always modify the threshold of the root node to send $L_{T_l}$ examples in the subtree $T_l$ and modify the subtree so that every example ends up alone in a leaf (as we have shown in the first part of the present proof).
These examples can then be united into the part they belonged in $\pgamma$ to give the same partition as before.
An analogous argument also holds for $\abs{\lambda_2} \geq L_{T_r}$, which implies $\abs{\lambda_1} \le m - L_{T_r}$ (since $m > L_T$ by assumption).

Letting $M_r \eqdef \min\cb{\floor{\frac{m}{2}}, m - L_{T_r}}$ and taking the union bound over $k$ and over $\R_k(S)$, one ends up with
\begin{equation*}
    \abs{\A_{lr}} \leq \sum_{k=L_{T_l}}^{M_r} \abs{\R_k(S)} \max_{\cb{\lambda_1, \lambda_2} \in \R_k(S)} \abs{\bigcup_{1 \leq a, b \leq c}  \Q^c\pr{ \P^a_{T_l}(\lambda_1), \P^b_{T_r}(\lambda_2) }}.
\end{equation*}

We have already discussed the cardinality of $\Q^c$ in the previous section, and it is given by inequality~\eqref{eq:Qc_cardinality}.
Inserting this result, one has
\begin{align*}
    \abs{\A_{lr}} &\leq \sum_{k=L_{T_l}}^{M_r} \abs{\R_k(S)} \hspace{-4pt}\sum_{\substack{1 \leq a, b, \leq c \\ a+b \geq c}}\hspace{-4pt} \tbinom{a}{c-b} \tbinom{b}{c-a} (a + b - c)! \max_{\cb{\lambda_1, \lambda_2} \in \R_k(S)} \big|\P^a_{T_l}(\lambda_1)\big| \big|\P^b_{T_r}(\lambda_2)\big|.
\end{align*}
Then, using Definition~\ref{def:partitioning_function} for $\pi^c_T(m)$ yields
\begin{equation}\label{eq:Alr_bound}
    \abs{\A_{lr}}\leq \sum_{k=L_{T_l}}^{M_r} \abs{\R_k(S)} \hspace{-4pt}\sum_{\substack{1 \leq a, b, \leq c \\ a+b \geq c}}\hspace{-4pt} \tbinom{a}{c-b} \tbinom{b}{c-a} (a + b - c)!\; \pi^a_{T_l}(k) \pi^b_{T_r}(m-k).
\end{equation}
Then, by exchanging indices $l$ and $r$, letting $k \to m-k$, and renaming $a$ to $b$ and $b$ to $a$, we have
\begin{equation}\label{eq:Arl_bound}
    \abs{\A_{rl}}
    \leq
    \sum_{k=M_l}^{m-L_{T_r}}
    \abs{\R_k(S)}
    \hspace{-4pt}
    \sum_{\substack{1 \leq a, b, \leq c \\ a+b \geq c}}
    \hspace{-4pt}
    \tbinom{a}{c-b} \tbinom{b}{c-a} (a + b - c)!\;
    \pi^a_{T_l}(k) \pi^b_{T_r}(m-k),
\end{equation}
where $M_l \eqdef \max\cb{\ceil{\frac{m}{2}}\!, L_{T_l} }$.
Notice that the coefficients inside the sum over $k$ are the same in Equations~\eqref{eq:Alr_bound} and \eqref{eq:Arl_bound}.
For convenience, let
\begin{equation*}
    C_k \eqdef \sum_{\substack{1 \leq a, b, \leq c \\ a+b \geq c}}\hspace{-4pt} \tbinom{a}{c-b} \tbinom{b}{c-a} (a + b - c)!\; \pi^a_{T_l}(k) \pi^b_{T_r}(m-k),
\end{equation*}
so that $\abs{\A_{lr}}$ and $\abs{\A_{rl}}$ can written in the form $\sum_k \abs{\R_k(S)} C_k$, with the only difference being the values that $k$ takes.
We can now show that the sum over $k$ in Equations~\eqref{eq:Alr_bound} and \eqref{eq:Arl_bound} can be put together to yield the theorem.

There are 4 cases to consider according to the values of $M_r = \min\cb{\floor{\frac{m}{2}}, m - \Lr}$ and $M_l = \max\cb{\ceil{\frac{m}{2}}, L_{T_l}}$.
First, let $M_r = \floor{\frac{m}{2}}$ and $M_l = \ceil{\frac{m}{2}}$.
The sum over $k$ then goes from $L_{T_l}$ to $\floor{\frac{m}{2}}$ for $\abs{\A_{lr}}$ and from $\ceil{\frac{m}{2}}$ to $m-L_{T_r}$ for $\abs{\A_{rl}}$.
Then, if $m$ is odd, both sums can be joined directly to go from $L_{T_l}$ to $m-L_{T_{r}}$.
If $m$ is even, one has an extra term for $k = \frac{m}{2}$. Thus
\begin{equation*}
    \abs{\A_{lr}} + \abs{\A_{rl}} \leq \left\{ \begin{array}{ll}\displaystyle
         \sum_{k=L_{T_l}}^{m-L_{T_r}} \abs{\R_k(S)} C_k & \text{if $m$ is odd}\\
         \displaystyle
        \abs{\R_{\frac{m}{2}}(S)} C_{\frac{m}{2}} + \sum_{k=L_{T_l}}^{m-L_{T_r}} \abs{\R_k(S)} C_k & \text{if $m$ is even.}
    \end{array}\right.
\end{equation*}
Using the upper bound on $\abs{\R_k(S)}$ in Equation~\eqref{eq:bound_on_RkS}, the above expression simplifies to
\begin{equation*}
    \abs{\A_{lr}} + \abs{\A_{rl}} \leq \sum_{k=L_{T_l}}^{m-L_{T_r}} \min\cb{2\ell, \tbinom{m}{k}} C_k,
\end{equation*}
valid for both cases.

Second, let $M_r = \min\cb{\floor{\frac{m}{2}}, m - \Lr} = \floor{\frac{m}{2}}$ and $M_l = \max\cb{\ceil{\frac{m}{2}}, L_{T_l}} = L_{T_l}$.
This implies that $L_{T_l} \geq \ceil{\frac{m}{2}}$.
The sum over $k$ then goes from $L_{T_l}$ to $\floor{\frac{m}{2}}$ for $\abs{\A_{lr}}$, which consists in exactly one term if $L_{T_l}=\frac{m}{2}$ and none otherwise.
For $\abs{\A_{rl}}$, the sum over $k$ goes from $\Ll$ to $m-\Lr$.
Therefore, we have
\begin{equation*}
    \abs{\A_{lr}} + \abs{\A_{rl}} \leq \left\{ \begin{array}{ll}
        \displaystyle
        \abs{\R_{\frac{m}{2}}(S)} C_{\frac{m}{2}} + \sum_{k=L_{T_l}}^{m-L_{T_r}} \abs{\R_k(S)} C_k & \text{if $\Ll = \frac{m}{2}$.}\\
        \displaystyle
        \sum_{k=L_{T_l}}^{m-L_{T_r}} \abs{\R_k(S)} C_k & \text{otherwise}
    \end{array}\right.
\end{equation*}
Again, using the upper bound on $\abs{\R_k(S)}$ in Equation~\eqref{eq:bound_on_RkS}, the above expression simplifies to
\begin{equation*}
    \abs{\A_{lr}} + \abs{\A_{rl}} \leq \sum_{k=L_{T_l}}^{m-L_{T_r}} \min\cb{2\ell, \tbinom{m}{k}} C_k,
\end{equation*}
valid for both cases.

Third, let $M_r = \min\cb{\floor{\frac{m}{2}}, m - \Lr} = m-\Lr$ and $M_l = \max\cb{\ceil{\frac{m}{2}}, L_{T_l}} = \ceil{\frac{m}{2}}$.
This case is very similar to the second case, where $\abs{\A_{rl}}$ consists in one or zero term instead of $\abs{\A_{lr}}$.
Thus, the same conclusion applies.

Fourth, let $M_r = \min\cb{\floor{\frac{m}{2}}, m - \Lr} = m-\Lr$ and $M_l = \max\cb{\ceil{\frac{m}{2}}, L_{T_l}} = \Ll$.
This case violates our starting assumption that $m$ is greater than $L_T$.
Hence, we can simply ignore this case.

Collecting our results, one concludes that for all $m > \Ll + \Lr$, we have
\begin{equation}\label{eq:almost_bound_on_picT}
    \abs{\P^c_T(S)} \leq \abs{\A_{lr}} + \abs{\A_{rl}} \leq \sum_{k=L_{T_l}}^{m-L_{T_r}} \min\cb{2\ell, \tbinom{m}{k}} C_k.
\end{equation}
Observe that the right-hand-side of this inequality is independent of $S$.
Therefore, by taking the maximum value over all sample $S$ of size $m$, we have a bound for $\pi^c_T(m)$.

One can improve this result when the left and the right subtrees are the same.
Indeed, in this case $\A_{lr} = \A_{rl}$ so that $\P^c_T(S)$ is simply equal to $\A_{lr}$ according to Equation~\eqref{eq:PcT_as_union_of_Alr_Arl}.
Moreover, the condition that $m  > \Ll + \Lr$ implies $\Lr < \frac{m}{2}$, so that $M_r$ is always equal to $\floor{\frac{m}{2}}$.
Equation~\eqref{eq:Alr_bound} then becomes
\begin{align*}
    \abs{\A_{lr}} \leq \sum_{k=\Ll}^{\floor{\frac{m}{2}}} \abs{\R_k(S)} \hspace{-4pt}\sum_{\substack{1 \leq a, b, \leq c \\ a+b \geq c}}\hspace{-4pt} \tbinom{a}{c-b} \tbinom{b}{c-a} (a + b - c)!\; \pi^a_{T_l}(k) \pi^b_{T_r}(m-k).
\end{align*}
Using the fact that $\R_k(S) = \R_{m-k}(S)$, that $T_l = T_r$, and that the summation over $a$ and $b$ is symmetric, along with the bound of Equation~\eqref{eq:bound_on_RkS} on $\abs{\R_k(S)}$, one can show that
\begin{align*}
    \abs{\P^c_T(S)} \leq \abs{\A_{lr}} \leq \frac{1}{2}\sum_{k=\Ll}^{m-\Lr} \min\cb{2\ell, \tbinom{m}{k}} \hspace{-4pt}\sum_{\substack{1 \leq a, b, \leq c \\ a+b \geq c}}\hspace{-4pt} \tbinom{a}{c-b} \tbinom{b}{c-a} (a + b - c)!\; \pi^a_{T_l}(k) \pi^b_{T_r}(m-k),
\end{align*}
which is different from Equation~\eqref{eq:almost_bound_on_picT} by a factor of $1/2$ only.

We finally obtain the statement of the theorem if we use the indicator function $2^{-\delta_{lr}}$ to handle into a single expression the cases when $T_l$ and $T_r$ are the same or not.
\end{proof}

\subsection{Decision trees on categorical features}
\label{app:proof_decision_trees_cat}

In this section, we address the case where a tree class acts on categorical features.
We first introduce some general notation and prove a lemma that applies to both ordinal and nominal features, before proving the main theorems separately.

Let $\kappa$ be the number of categorical features (where $\kappa = \nu$ in the case of nominal feature or $\omega$ in the case of ordinal features) and let $\Kb \eqdef (K_1, \dots, K_\kappa) \in \naturals^\kappa$ be the feature landscape of the examples such that $K_i$ corresponds to the number of categories available for the $i$-th feature (where $\Kb = \Nb$ in the case of nominal feature or $\Ob$ in the case of ordinal features).
For convenience, we label the categories by an integer so that we can write the domain space as $\X \eqdef \bigtimes_{i=1}^{\kappa} [K_i]$, where the cross denotes the generalized Cartesian product.
While in all practicality it does not makes sense to consider the possibility $K_i=1$ (such features cannot be used to differentiate examples), we do not forbid it as it will prove useful later on.

Let $S \in \X^m$ be a sample of $m\ge 1$ examples.
Notice that, for some sample $S$ and some feature $i$, it is possible that no examples satisfies $x_i = C$ for some $C \in[K_i]$.
In fact, this will always happen when there are less examples than the number of available features.
Therefore, it makes sense to define the \emph{empirical feature landscape} $\Kbhat(S)$ of the sample $S$, where the component $\Khat_i(S)$ corresponds to the number of distinct values feature $i$ actually takes.
Formally, we have $\Khat_i(S) \eqdef \abs{ \cb{ x_i : \x \in S } }$.
One can see that while $K_i$ should be greater than 1, it is very possible that $\Khat_i(S)$ is (and this will be the case when $m=1$).
Furthermore, we have by definition the property that $\Khat_i(S) \le K_i$ for all features $i\in[\kappa]$ and all samples $S$.

This incites us to define some kind of natural order between feature landscapes that are related.
Given two feature landscapes $\Kb$ and $\Kb'$, we say that $\Kb \le \Kb'$ iff for every $i$, we have $K_i \le K_i'$.
This order, often called \emph{component-wise order} or \emph{product order}, is only partial, and therefore it is possible to have two feature landscapes $\Kb$ and $\Kb'$ such that neither $\Kb \le \Kb'$ nor $\Kb \ge \Kb'$ is true.
As such, the symbol $\nleq$ is not equivalent to the symbol $>$ and vice versa.
Using this notation, for any sample $S$, we always have that $\Kbhat(S)\leq \Kb$.

We can incorporate the information conveyed by the feature landscape into the partitioning functions by considering a non-disjoint decomposition under the different achievable feature landscapes.
Hence, we can write
\begin{equation}\label{def:partitioning_func_nominal}
  \pi^c_T(m, \Kb) = \max_{\Kb' \le \Kb} \max_{\substack{S:\abs{S}=m:\\ \Kbhat(S)=\Kb'}} \abs{\P^c_T(S)}.
\end{equation}
This expanded definition allows us to make the dependence on $\Kb$ explicit.
Furthermore, the following lemma that stems from it provides us with a useful identity, and confirms the intuition that more available categories implies more expressiveness.
\begin{lemma}\label{lem:order_part_func}
Let $\Kb$ and $\Kb'$ be feature landscapes such that $\Kb'\le\Kb$.
Then, we have that
\begin{align*}
  \pi^c_T(m,\Kb') \le \pi^c_T(m,\Kb).
\end{align*}
\end{lemma}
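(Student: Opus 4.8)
I would prove Lemma~\ref{lem:order_part_func} by a direct monotonicity argument: every sample over the smaller feature landscape $\Kb'$ is also a legitimate sample over the larger landscape $\Kb$, and the decision rules available to a tree class only depend on the landscape through the \emph{allowed values of the split parameters} (the threshold $\theta \in [O_i-1]$ for ordinal features, the category $C \in [N_i]$ for nominal features). Since $\Kb' \le \Kb$ componentwise, every rule usable on a sample drawn from the $\Kb'$-world is still usable when we regard that same sample as living in the $\Kb$-world. Hence any $c$-partition realizable on such a sample by $T$ under landscape $\Kb'$ is also realizable under landscape $\Kb$, so $\abs{\P^c_T(S)}$ is unchanged, and taking maxima gives the inequality.

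\textbf{Key steps, in order.} First I would invoke the expanded definition~\eqref{def:partitioning_func_nominal},
\begin{equation*}
  \pi^c_T(m, \Kb) = \max_{\Kb'' \le \Kb}\ \max_{\substack{S:\abs{S}=m,\\ \Kbhat(S)=\Kb''}} \abs{\P^c_T(S)},
\end{equation*}
and the analogous one for $\Kb'$. Second, I would observe that if $\Kb'' \le \Kb'$ then, since $\le$ is transitive, $\Kb'' \le \Kb$; therefore every pair $(\Kb'', S)$ that is admissible in the maximization defining $\pi^c_T(m,\Kb')$ is also admissible in the maximization defining $\pi^c_T(m,\Kb)$. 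Third — the one substantive point — I would check that $\P^c_T(S)$, the set of realizable $c$-partitions, genuinely does not depend on which ambient landscape ($\Kb'$ or $\Kb$) we declare $S$ to belong to: a tree $t \in T$ realizing a partition uses only decision rules whose parameters lie in ranges bounded by the \emph{true} feature landscape, and since $\Khat_i(S) \le K_i' \le K_i$, any rule valid in the $\Kb'$-world stays valid in the $\Kb$-world (the larger domain only \emph{adds} possible rules, never removes any). So $\P^c_T(S)$ is the same set in both settings. Finally, since we are maximizing the same quantity $\abs{\P^c_T(S)}$ over a superset of index pairs on the right, the inequality $\pi^c_T(m,\Kb') \le \pi^c_T(m,\Kb)$ follows immediately.

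\textbf{Main obstacle.} The only place requiring care is the third step: making precise that the set of realizable partitions attached to a fixed sample $S$ is intrinsic to $S$ and does not change when we embed $S$ into a larger categorical domain. This is essentially a bookkeeping observation about Definition~\ref{def:realizable_partition} and the rule sets $\Phi$ — the rule set grows monotonically with the feature landscape, and a realizing tree for the smaller landscape is literally still a valid tree for the larger one — but it is worth stating cleanly so that the maximization-over-a-superset conclusion is airtight. Everything else is the formal manipulation of nested maxima, which is routine.
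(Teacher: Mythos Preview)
Your proposal is correct and follows essentially the same argument as the paper: both use the expanded definition~\eqref{def:partitioning_func_nominal} and the observation that $\{\Kb'' : \Kb'' \le \Kb'\} \subseteq \{\Kb'' : \Kb'' \le \Kb\}$, so that the maximum defining $\pi^c_T(m,\Kb')$ ranges over a subset of the index set for $\pi^c_T(m,\Kb)$. The paper phrases this as splitting the outer maximum into the two pieces $\Kb'' \le \Kb'$ and $\Kb'' \le \Kb,\ \Kb'' \nleq \Kb'$, then recognizing the first piece as $\pi^c_T(m,\Kb')$; you phrase it as ``maximizing over a superset,'' which is equivalent. Your third step---that $\P^c_T(S)$ is intrinsic to $S$ and does not depend on the ambient landscape---is implicit in the paper's formulation (it writes $\abs{\P^c_T(S)}$ with no landscape argument), so you are simply being more explicit about a point the paper takes for granted.
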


\begin{proof}
Starting from Equation~\eqref{def:partitioning_func_nominal}, we have
\begin{align*}
  \pi^c_T(m,\Kb)
    &= \max_{\Kb'' \le \Kb} \max_{\substack{S:\abs{S}=m:\\ \Kbhat(S)=\Kb''}} \abs{\P^c_T(S)}\\
    &= \max\Big\{
        \max_{\Kb'' \le \Kb'} \max_{\substack{S:\abs{S}=m:\\ \Kbhat(S)=\Kb''}} \abs{\P^c_T(S)},
        \max_{\substack{\Kb'' \le \Kb:\\ \Kb'' \nleq \Kb'}} \max_{\substack{S:\abs{S}=m:\\ \Kbhat(S)=\Kb''}} \abs{\P^c_T(S)}
        \Big\}
        \\
    &= \max\Big\{
        \pi^c_T(m,\Kb'),
        \max_{\substack{\Kb'' \le \Kb:\\ \Kb'' \nleq \Kb'}} \max_{\substack{S:\abs{S}=m:\\ \Kbhat(S)=\Kb''}} \abs{\P^c_T(S)} 
        \Big\}
        \\
    &\ge \pi^c_T(m,\Kb'),
\end{align*}
as desired.
\end{proof}

Remark that the lemma does not say anything for the case where neither $\Kb' \le \Kb$ nor $\Kb \le \Kb'$ hold true.
In fact, for such a pair $\Kb$ and $\Kb'$ of feature landscapes, it is possible to find some trees $T$ and $T'$, some numbers of examples $m$ and $m'$ and some number of parts $c$ and $c'$ such that we have on one hand $\pi^c_T(m, \Kb') < \pi^c_T(m, \Kb)$ while on the other hand $\pi^{c'}_{T'}(m', \Kb') > \pi^{c'}_{T'}(m', \Kb)$.
Therefore, we conclude that it is futile to try to enforce any other kind of order between landscapes to eliminate the indeterminate cases, as it would not to provide us with useful information on the behavior of the partitioning functions.

\subsubsection{Decision trees on ordinal features}
\label{app:proof_decision_trees_ordinal}

Before getting into the core of the proof of Theorem~\ref{thm:ub_partitioning_functions_decision_trees_ordinal_feat}, we take some time to investigate the empirical feature landscape of the parts $\lambda_1$ and $\lambda_2$ of the 2-partition $\plambda$ produced by the root node on some sample $S\in\X^m$.
We assume $\X$ has feature landscape $\Ob$, and we let $\Obhat(S)$ be the empirical feature landscape.
We also relabel the categories (always preserving the order between categories) of every features so that the $x_i$ take values in $[\Ohat_i(S)]$ without loss of generality, and that $\cb{x_i:\x \in S} = [\Ohat_i(S)]$, \ie all values between 1 and $\Ohat_i(S)$ are used in the sample.

Suppose a node realizes a partition $\plambda\eqdef\{\lambda_1,\lambda_2\}$ of $S$ using a threshold split with parameters $i'\in[\omega]$, $\theta\in [\Ohat_{i'}(S)-1]$ and $s\in\cb{\pm 1}$.
Let $\lambda_1$ be the set made of examples satisfying $x_{i'}\le\theta$, and $\lambda_2$ be the rest of the sample.
(Here, the sign $s$ can be reinterpreted as a direction such that $\lambda_1$ is sent to the left subtree and $\lambda_2$ to the right one if $s=1$ the other way around if $s=-1$.)
We want find the constraints that apply to $\Obhat(\lambda_1)$ and $\Obhat(\lambda_2)$ and how they relate to $\Obhat(S)$ and to $\Ob$.

Start by observing that by our previous assumption that $\cb{x_i:\x\in S} = [\Ohat_i(S)]$, the number of categories used in $\lambda_1$ for feature $i'$ is exactly $\theta$ (\ie $\Ohat_{i'}(\lambda_1) = \theta$), and is equal to $\Ohat_{i'}(S) - \theta$ for categories used in $\lambda_2$.
Hence, we have $\Ohat_{i'}(\lambda_1) = \theta$ and $\Ohat_{i'}(\lambda_2) = \Ohat_{i'}(S) -\theta$.
However, it is simply impossible to keep track of every possible decision rules that lead to partition $\plambda$, and we would like to restrict our final expressions to be dependent on the feature $i'$ and the size of the parts $\lambda_1$ and $\lambda_2$ only (and thus independent of $s$ and $\theta$).
Fortunately, we can bound $\Ohat_{i'}(\lambda_1)$ and $\Ohat_{i'}(\lambda_2)$ in two ways.

First, notice that since $1 \le \theta \le \Ohat_{i'}(S)-1$, this readily implies that $\Ohat_{i'}(\lambda_1) \le \Ohat_{i'}(S)-1$ and similarly $\Ohat_{i'}(\lambda_2) \le \Ohat_{i'}(S)-1$.
For other features $i\ne i'$, we only have the trivial bound $\Ohat_i(\lambda_j) \le \Ohat_i(S)$.
These inequalities can be combined with the trivial fact that $\Ohat_i(S) \le O_i$ to yield bounds independent on the details of the sample $S$.
Second, remark that for every feature, the number of available categories is also bounded by the number of examples in each part.
Therefore, we have, for both $j=1$ and $j=2$, that
\begin{equation*}
    \Obhat(\lambda_j) \le \Ob^{\abs{\lambda_j},i'},
\end{equation*}
where the components of $\Ob^{k,i'}$ are defined as
\begin{equation*}
    O^{k,i'}_i \eqdef \min\cb{O_i - \Id{i=i'}, k}.
\end{equation*}
Furthermore, one can get rid of the $i'$ dependence by noting that $O_i - \Id{i=i'} \le O_i$.
Thus we define another surrogate $\Ob^k$ with components
\begin{equation*}
    O^k_i \eqdef \min\cb{O_i, k}.
\end{equation*}
Finally, note that the component-wise order that holds between these feature landscapes implies, by Lemma~\ref{lem:order_part_func}, the following chain of identities
\begin{equation}\label{eq:inequality_pi_emp_landscape}
    \pi^c_T\!\pr{\abs{\lambda_j}, \Obhat\pr{\lambda_j}} \le \pi^c_T\!\pr{\abs{\lambda_j}, \Ob^{\abs{\lambda_j},i'}} \le \pi^c_T\!\pr{\abs{\lambda_j}, \Ob^{\abs{\lambda_j}}},
\end{equation}
which will prove useful in the proof of Theorem~\ref{thm:ub_partitioning_functions_decision_trees_ordinal_feat} that follows.

\begin{proof}
Theorem~\ref{thm:ub_partitioning_functions_decision_trees_ordinal_feat} is stated in two parts, both of which start from Proposition~\ref{prop:c-partitions-set_decomposition_decision_trees} that states
\begin{equation}
    \P^c_T(S) = \bigcup_{\plambda\in\R(S)} \S^c_T(\plambda). \label{eq:PcT_bound_1}
\end{equation}
The first part of the theorem is proven by decomposing the union over the size of the parts of $\plambda$ as well as over the feature $i$, while the second part corresponds to decomposing only on the part size.
We consider the former case first.

Let $\R_{i,k}(S)$ be the set of 2-partitions with at least one part of $\plambda$ with size $k$ that are realizable from feature $i$.
Decomposition~\eqref{eq:PcT_bound_1} can thus be written as
\begin{equation}
    \P^c_T(S) = \bigcup_{i=1}^\omega \bigcup_{k=1}^{\floormtwo} \bigcup_{\plambda\in\R_{i,k}(S)} \S^c_T(\plambda).\label{eq:PcT_bound_2}
\end{equation}
Taking the union bound, we have
\begin{equation}
    \abs{\P^c_T(S)} \le \sum_{i=1}^\omega \sum_{k=1}^{\floormtwo} \sum_{\plambda\in\R_{i,k}(S)} \abs{\S^c_T(\plambda)}.\label{eq:PcT_bound_3}
\end{equation}
A bound on the cardinality of $\S^c_T(\plambda)$ has been obtained in Equation~\eqref{eq:Sc_cardinality}.
Combining this last inequality with the definition of the partitioning function, which implies $\abs{\P^c_T(\lambda)} \le \pi^c_T(\abs{\lambda}, \Obhat(\lambda))$, we have
\begin{align*}
    \abs{\S^c_T(\plambda)}
    \hspace{-2.5pt}
    &\le 
    \hspace{-2pt}
    2^{-\delta_{lr}}
    \hspace{-11pt}
    \sum_{\substack{1 \leq a, b \leq c\\ a+b\ge c}}
    \hspace{-8pt}
    K_{ab}^c
    \big(
        \pi^a_{T_l}(\abs{\lambda_1}\!,\Obhat(\lambda_1)) \pi^b_{T_r}(\abs{\lambda_2}\!,\Obhat(\lambda_2))
        +
        \pi^a_{T_l}(\abs{\lambda_2}\!,\Obhat(\lambda_2)) \pi^b_{T_r}(\abs{\lambda_1}\!,\Obhat(\lambda_1))
    \big),
\end{align*}
where $K_{ab}^c \eqdef \tbinom{a}{c-b}\tbinom{b}{c-a}(a+b-c)!$.
Inequality~\eqref{eq:inequality_pi_emp_landscape} tells us that $\Obhat(\lambda) \le \Ob^{\abs{\lambda},i}$, which yields
\begin{align*}
    \abs{\S^c_T(\plambda)}
    \hspace{-1pt}
    &\le 
    \hspace{-1pt}
    2^{-\delta_{lr}}
    \hspace{-9pt}
    \sum_{\substack{1 \leq a, b \leq c\\ a+b\ge c}}
    \hspace{-8pt}
    K_{ab}^c
    \big(
        \pi^a_{T_l}(\abs{\lambda_1}\!,\Ob^{\abs{\lambda_1},i}) \pi^b_{T_r}(\abs{\lambda_2}\!,\Ob^{\abs{\lambda_2},i})
        +
        \pi^a_{T_l}(\abs{\lambda_2}\!,\Ob^{\abs{\lambda_2},i}) \pi^b_{T_r}(\abs{\lambda_1}\!,\Ob^{\abs{\lambda_1},i})
    \big).
\end{align*}
This bound depends on $\plambda$ only via parameters $k$ and $i$, and on $S$ only via parameter $m$.
Hence, let us define
\begin{equation*}
    A_k^i \eqdef
    2^{-\delta_{lr}}
    \hspace{-8pt}
    \sum_{\substack{1 \leq a, b \leq c\\ a+b\ge c}}
    \hspace{-7pt}
    K_{ab}^c
    \pi^a_{T_l}(k,\Ob^{k,i}) \pi^b_{T_r}(m-k,\Ob^{m-k,i})
\end{equation*}
so that by identifying arbitrarily $\lambda_1$ as the part with size $k$ we can write $\abs{\S^c_T(\plambda)} \le A_k^i + A_{m-k}^i$, and therefore inequality~\eqref{eq:PcT_bound_3} simplifies to
\begin{equation*}
    \abs{\P^c_T(S)} \le \sum_{i=1}^\omega \sum_{k=1}^{\floormtwo} \abs{\R_{i,k}(S)} (A_k^i + A_{m-k}^i).
\end{equation*}

It can be argued rather easily that 
\begin{equation*}
    \abs{\R_{i,k}(S)} \le \begin{cases}
        \min \cb{O_i-1, 2} & \text{if } k \ne \frac{m}{2}\\
        \min \cb{O_i-1, 1} & \text{if } k = \frac{m}{2}.
    \end{cases}
\end{equation*}
Indeed, a single feature can realize at most 2 distinct 2-partition with a part of size $k \ne \frac{m}{2}$, and at most 1 if $k=\frac{m}{2}$.
Furthermore, the number of partitions that can be realized is limited by the number of categories available.
Hence, let us define for all $k\in[m-1]$
\begin{equation*}
    R_{i,k} = \min\cb{2^{\delta_{k,\frac{m}{2}}}(O_i-1), 2}.
\end{equation*}
This allows us to write
\begin{align*}
    \abs{\P^c_T(S)} 
    &\le \sum_{i=1}^\omega \sum_{k=1}^{\floormtwo} \abs{\R_{i,k}(S)} (A_{k}^i + A_{m-k}^i)\\
    &= \sum_{i=1}^\omega \pr{\sum_{k=1}^{\floormtwo} \abs{\R_{i,k}(S)} A_{k}^i +  \sum_{k=1}^{\floormtwo} \abs{\R_{i,k}(S)} A_{m-k}^i}\\
    &= \sum_{i=1}^\omega \pr{\sum_{k=1}^{\floormtwo} \abs{\R_{i,k}(S)} A_{k}^i +  \sum_{k=m-1}^{\ceil{\frac{m}{2}}} \abs{\R_{i,m-k}(S)} A_{k}^i}\\
    &\le \sum_{i=1}^\omega \sum_{k=1}^{m-1} R_{i,k} A_{k}^i,
\end{align*}
where at the last step we have used the fact that $\R_{i,m-k}(S)=\R_{i,k}(S)$ and that $\abs{R_{i,k}} \le R_k$ if $k \ne \frac{m}{2}$ and $\abs{R_{i,k}} \le \frac{1}{2}R_k$ otherwise to handle both cases when $m$ is odd and even simultaneously.

Taking the maximum over the samples $S$, this directly gives us a bound on the partitioning functions:
\begin{equation*}
    \pi^c_T(m,\Ob) \le
    2^{-\delta_{lr}}
    \sum_{i=1}^\omega
    \sum_{k=1}^{m-1}
    R_{i,k}
    \hspace{-7pt}
    \sum_{\substack{1 \leq a, b \leq c\\ a+b\ge c}}
    \hspace{-7pt}
    K_{ab}^c
    \pi^a_{T_l}(k,\Ob^{k,i}) \pi^b_{T_r}(m-k,\Ob^{m-k,i}),
\end{equation*}
which is exactly the first bound of Theorem~\ref{thm:ub_partitioning_functions_decision_trees_ordinal_feat}.

We now consider the case where we decompose the union of the original Equation~\eqref{eq:PcT_bound_1} over the part sizes only.
Let $\R_k(S)$ denotes the set of 2-partitions of $\R(S)$ such that at least one part is of size $k$.
Then, instead of Equation~\eqref{eq:PcT_bound_2}, we have
\begin{equation*}
    \P^c_T(S) = \bigcup_{k=1}^{\floormtwo} \bigcup_{\plambda\in\R_{k}(S)} \S^c_T(\plambda).
\end{equation*}
Again by union bound, this implies
\begin{equation*}
    \abs{\P^c_T(S)} \le \sum_{k=1}^{\floormtwo} \sum_{\plambda\in\R_{k}(S)} \abs{\S^c_T(\plambda)}.
\end{equation*}
The same arguments as before shows that for
\begin{equation*}
    A_k \eqdef
    2^{-\delta_{lr}}
    \hspace{-8pt}
    \sum_{\substack{1 \leq a, b \leq c\\ a+b\ge c}}
    \hspace{-7pt}
    K_{ab}^c
    \pi^a_{T_l}(k,\Ob^{k}) \pi^b_{T_r}(m-k,\Ob^{m-k}),
\end{equation*}
we have $\abs{\S^c_T(\plambda)} \le A_k + A_{m-k}$.
Hence, we can write
\begin{equation}\label{eq:PcT_bound_5}
    \abs{\P^c_T(S)} \le \sum_{k=1}^{\floormtwo} \abs{\R_{k}(S)} (A_k + A_{m-k}).
\end{equation}

Observe that $\abs{\R_k(S)}$ cannot be bounded directly by the expression of $R_k$ of Theorem~\ref{thm:ub_partitioning_func_decision_stumps_ordinal_feat} (which bounds the number of 2-partitions with a part of size $k$ realizable by a decision stump on ordinal features).
Indeed, the attribution procedure derived for the stumps implies that $R_k$ depends on $R_{k'}$ for all $k'< k$ (Figure~\ref{fig:Ferrers_colored} gives an example that the $R_k$'s are different when one starts with different values of $k$).
This was of no importance in the case of a stump, since the partition achieved by the root was the final partition.
On the other hand, when the partition is refined by the subtrees, the number of realizable partitions varies according to the size of the parts $k$.
Therefore, an optimal attribution procedure would quickly become intractable.
However, a bound on $R_k$ independent of the attribution order would be an appropriate bound for $\abs{\R_k(S)}$.

This can be solved readily by noting that $\Obar^k_C \le \Obar_C$ for any $C$ and $k$, where $\Obbar$ is the feature landscape conjugate and $\Obbar^k$ is the feature landscape conjugate at step $k$ of the attribution procedure.
Hence, we define
\begin{equation*}
    R_k' \eqdef \begin{cases}
        \min\cb{\Obar_1 + \Obar_2, \binom{m}{k}} & \text{if } k < \frac{m}{2}\\
        \min\cb{2\Obar_1, \binom{m}{k}} & \text{if } k = \frac{m}{2},
    \end{cases}
\end{equation*}
which satisfies $R_k \le R_k'$ for all $k < \frac{m}{2}$ and $R_k \le \frac{1}{2} R_k'$ for $k = \frac{m}{2}$.
Thus, by our previous arguments, the properties of $R_k'$ also implies $\abs{\R_k(S)} \le R_k'$ for all $k < \frac{m}{2}$ and $\abs{\R_k(S)} \le \frac{1}{2} R_k'$ for all $k < \frac{m}{2}$ and $\abs{\R_k(S)} \le \frac{1}{2} R_k'$.
Substituting this expression in Equation~\eqref{eq:PcT_bound_5}, and using the fact that $\R_k(S) = \R_{m-k}(S)$,
\begin{align*}
    \abs{\P^c_T(S)}
    &\le \sum_{k=1}^{\floormtwo} \abs{\R_{k}(S)} (A_k + A_{m-k})\\
    &= \sum_{k=1}^{\floormtwo} \abs{\R_k(S)} A_{k} +  \sum_{k=1}^{\floormtwo} \abs{\R_k(S)} A_{m-k}\\
    &= \sum_{k=1}^{\floormtwo} \abs{\R_k(S)} A_{k} +  \sum_{k=m-1}^{\ceil{\frac{m}{2}}} \abs{\R_{m-k}(S)} A_{k}\\
    &\le \sum_{k=1}^{m-1} R_k' A_{k},
\end{align*}
where the definition of $R_k'$ handles both cases of $m$ odd and even at the last inequality.
Taking the maximum over the samples $S$ on both sides, we have shown the following result:
\begin{equation*}
    \pi^c_T(m,\Ob) \le
    2^{-\delta_{lr}}
    \sum_{k=1}^{m-1}
    R_k'
    \hspace{-6pt}
    \sum_{\substack{1 \leq a, b \leq c\\ a+b\ge c}}
    \hspace{-7pt}
    K_{ab}^c
    \pi^a_{T_l}(k,\Ob^k) \pi^b_{T_r}(m-k,\Ob^{m-k}),
\end{equation*}
which concludes the proof.
\end{proof}

\subsubsection{Decision trees on nominal features}
\label{app:proof_decision_trees_nominal}

The proof for the bound on the partitioning functions of decision trees on nominal features is similar to that on ordinal features with multiple small modifications.
We again discuss first the empirical landscape of a 2-partition originating from the root node before presenting the proof of Theorem~\ref{thm:ub_partitioning_functions_decision_trees_nominal_feat}.

In our proof, we will need to know how the feature landscape changes when splitting the sample.
Therefore, we examine the feature landscape of the parts of the partitions produced by a single node on some sample $S\in\X^m$.
We assume $\X$ has feature landscape $\Nb$, and we let $\Nbhat(S)$ be the empirical feature landscape.
We also assume that the features $x_i$ take value in $[\Nhat_i(S)]$ without loss of generality; if they do not, simply relabel the categories.
This will ensure us that $\cb{x_i : \x \in S} = [\Nhat_i(S)]$.

Suppose a node realizes a partition $\plambda\eqdef\{\lambda_1,\lambda_2\}$ of $S$ using some decision rule $\phi \in \Phi$, where $\Phi = \cb{ \phi_{i,C,s}(\x) = s \cdot \IdSign{x_i = C} : i \in [\nu], C \in [\Nhat_{i}(S)], s \in \cb{\pm 1} }$ is the unitary comparison rule set used for nominal features.
For the purpose of the present argumentation, we let $\lambda_1$ be the subset of the examples that satisfy $x_{i'} = C$ for some ${i'}\in[\nu]$ and some $C \in [\Nhat_{i'}(S)]$, and let $\lambda_2$ denote the rest of the sample.
Here, the role of $s$ can be interpreted as a ``routing'' parameter, determining to which subtrees $\lambda_1$ and $\lambda_2$ are sent.
We want to relate $\Nbhat(\lambda_1)$ and $\Nbhat(\lambda_2)$ to $\Nbhat(S)$ and to $\Nb$ and to find the constraints that apply to them.

Observe that the decision rule implies that the feature $i'$ of the examples of $\lambda_1$ take a single value, and thus $\Nhat_{i'}(\lambda_1) = 1$.
Similarly, feature $i'$ of the examples of $\lambda_2$ take all values in $[\Nhat_i(S)]$ except for $C$, hence $\Nhat_{i'}(\lambda_2) = \Nhat_{i'}(S) - 1$.
For other features $i\ne i'$, either the examples of $\lambda_j$ take all values of categories or not, and so we only have the trivial bound $\Nhat_i(\lambda_j) \le \Nhat_i(S)$.
These inequalities can be combined with the trivial fact that $\Nhat_i(S) \le N_i$ to yield bounds independent on the details of the sample $S$.
Finally, remark that for every feature, the number of available categories is naturally bounded by the number of examples in each part.
Therefore, we have that
\begin{equation}\label{eq:Nbhat_lambda_def}
    \Nbhat(\lambda_1) \le \Nb^{\abs{\lambda_1},i',1}
    \qquad \text{and} \qquad
    \Nbhat(\lambda_2) \le \Nb^{\abs{\lambda_2},i',N_{i'}-1}
\end{equation}
where the components of $\Nb^{k,i',N}$ are defined as
\begin{equation*}
    N^{k,i',N}_i \eqdef \min\cb{N_i - (N_i - N)\Id{i = i'}, k},
\end{equation*}
where $N$ represents the new number categories used for feature $i'$.
Furthermore, since $1 \le N_{i'} - 1$ (indeed, no partitioning of the sample can occur if $N_{i'} \le 1$), we also have that $\Nb^{\abs{\lambda_1},i',1} \le \Nb^{\abs{\lambda_1},i',N_{i'}-1}$.
Finally, to alleviate the notation, define $\Nbhat^{k,i'} \eqdef \Nbhat^{k,i',N_{i'}-1}$.
Then, by Lemma~\ref{lem:order_part_func}, the following chain of useful identities can be obtained for $\lambda_1$:
\begin{equation}\label{eq:inequality_pi_emp_landscape_lambda_1}
    \pi^c_T\!\pr{\abs{\lambda_1}, \Nbhat\pr{\lambda_1}} \le \pi^c_T\!\pr{\abs{\lambda_1}, \Nb^{\abs{\lambda_1},i',1}} \le \pi^c_T\!\pr{\abs{\lambda_1}, \Nb^{\abs{\lambda_1},i'}}.
\end{equation}
Similarly, for $\lambda_2$, we have that
\begin{equation}\label{eq:inequality_pi_emp_landscape_lambda_2}
    \pi^c_T\!\pr{\abs{\lambda_2}, \Nbhat\pr{\lambda_2}} \le \pi^c_T\!\pr{\abs{\lambda_2}, \Nb^{\abs{\lambda_2},i'}}.
\end{equation}

With these identities at hand, we can proceed with the proof of Theorem~\ref{thm:ub_partitioning_functions_decision_trees_nominal_feat}.

\begin{proof}
Similarly to Theorem~\ref{thm:ub_partitioning_functions_decision_trees_ordinal_feat} (upper bound for ordinal features), Theorem~\ref{thm:ub_partitioning_functions_decision_trees_nominal_feat} is also stated in two parts derived in two ways depending on the union decomposition of Proposition~\ref{prop:c-partitions-set_decomposition_decision_trees}.
We again proceed in an analogous fashion.

We start by decomposing Proposition~\ref{prop:c-partitions-set_decomposition_decision_trees} over features $i$ part sizes $k$.
Thus, define $\R_{i,k}(S)$ as the set of 2-partitions realizable by the root node on feature $i$ and such that at least one part has size $k$ so that we can write
\begin{align*}
    \P^c_T(S) = 
        \bigcup_{k=1}^{\floormtwo}
        \bigcup_{i=1}^\nu
        \bigcup_{\plambda \in \R_{i,k}(S)}
        \hspace{-8pt}
        \S^c_T(\plambda).
\end{align*}
Applying the union bound, this simplifies it to
\begin{align}
    \abs{\P^c_T(S)}
      &\le
        \sum_{k=1}^{\floormtwo}
        \sum_{i=1}^\nu
        \sum_{\plambda \in \R_{i,k}(S)}
        \abs{\S^c_T(\plambda)} \nonumber \\
      &\le
        \sum_{k=1}^{\floormtwo}
        \sum_{i=1}^\nu
        \abs{\R_{i,k}(S)\vphantom{\plambda}}
        \max_{\plambda \in \R_{i,k}(S)}
        \abs{\S^c_T(\plambda)}.\label{eq:tree_2}
\end{align}
According to our previous analysis of the cardinality of $\S^c_T$ given in Equation~\eqref{eq:Sc_cardinality}, and using the fact that $\abs{\P^c_T(S)} \le \pi^c_T(m, \Nbhat(S))$ for all $S$ with $m$ examples following feature landscape $\Nbhat(S)$, we have
\begin{align*}
    \abs{\S^c_T(\plambda)}
    \hspace{-2.5pt}
    &\le 
    \hspace{-2pt}
    2^{-\delta_{lr}}
    \hspace{-11pt}
    \sum_{\substack{1 \leq a, b \leq c\\ a+b\ge c}}
    \hspace{-8pt}
    K_{ab}^c
    \big(
        \pi^a_{T_l}(\abs{\lambda_1}\!,\Nbhat(\lambda_1)) \pi^b_{T_r}(\abs{\lambda_2}\!,\Nbhat(\lambda_2))
        +
        \pi^a_{T_l}(\abs{\lambda_2}\!,\Nbhat(\lambda_2)) \pi^b_{T_r}(\abs{\lambda_1}\!,\Nbhat(\lambda_1))
    \big),
\end{align*}
where $K_{ab}^c \eqdef \tbinom{a}{c-b}\tbinom{b}{c-a}(a+b-c)!$.
As argued previously, Equation~\eqref{eq:Nbhat_lambda_def} states that
\begin{equation*}
    \Nbhat(\lambda_1) \le \Nb^{\abs{\lambda_1},i,1}
    \qquad \text{and} \qquad
    \Nbhat(\lambda_2) \le \Nb^{\abs{\lambda_2},i,N_{i}-1}
\end{equation*}
hold when $\lambda_1$ is made from all the examples satisfying $x_{i} = C$ for some $C$ (let us call this possibility case 1), or the converse when $\lambda_2$ is made this way instead (case 2).
Therefore, using the fact that $\abs{\lambda_1} = k$ and $\abs{\lambda_2} = m-k$, and identities \eqref{eq:inequality_pi_emp_landscape_lambda_1} and \eqref{eq:inequality_pi_emp_landscape_lambda_2}, we have
\begin{equation*}
    \pi^a_{T_l}(k,\Nbhat(\lambda_1)) \pi^b_{T_r}(m-k,\Nbhat(\lambda_2))
    \le \Pi^{ab}_{lr}(k, i, \Nb)
\end{equation*}
with
\begin{align*}
    \Pi^{ab}_{lr}(k, i, \Nb) \eqdef
    \max\Big\{&
        \pi^a_{T_l}(k, \Nb^{k,i,1}) \pi^b_{T_r}(m-k,\Nb^{m-k,i,N_{i}-1})
        ,\\
        &
        \pi^a_{T_l}(k,\Nb^{k,i,N_{i}-1}) \pi^b_{T_r}(m-k, \Nb^{m-k,i,1})
    \Big\},
\end{align*}
where $\Pi^{ab}_{lr}$ is constructed by taking the worse possibility between case 1 and case 2.
A similar results holds for $\pi^a_{T_l}(m-k,\Nbhat(\lambda_2)) \pi^b_{T_r}(k,\Nbhat(\lambda_1))$ (simply by exchanging index $l$ with $r$ and index $a$ with $b$ in the definition of $\Pi$), which implies
\begin{align*}
    \abs{\S^c_T(\plambda)}
    &\le
    2^{-\delta_{lr}}
    \hspace{-7pt}
    \sum_{\substack{1 \leq a, b \leq c\\ a+b\ge c}}
    \hspace{-7pt}
    \tbinom{a}{c-b}\tbinom{b}{c-a}(a+b-c)!
    \pr{\Pi^{ab}_{lr}(k, i, \Nb) + \Pi^{ba}_{rl}(k, i, \Nb)}.
\end{align*}
Note that this expression depends on $\plambda$ only via parameters $i$ and $k$.
Hence, Equation~\eqref{eq:tree_2} simplifies to
\begin{align}
    \abs{\P^c_T(S)}
        &\le
        2^{-\delta_{lr}}
        \sum_{k=1}^{\floormtwo}
        \sum_{i=1}^\nu
        R_{N_i,k}
        \hspace{-7pt}\sum_{\substack{1 \leq a, b \leq c\\
                                 a+b\ge c}} \hspace{-7pt}
        K_{ab}^c
        \pr{
        \Pi^{ab}_{lr}(k, i, \Nb) + \Pi^{ba}_{rl}(k, i, \Nb)},\label{eq:tree_3}
\end{align}
where we used the fact that $\abs{\R_{i,k}(S)} \le R_{N_i,k}$ according to Section~\ref{app:proof_stump_nominal} (the proof for decision stumps on nominal features).
This expression is the tightest bound independent of $S$ we could work out, but it is quite unwieldy.
One can simplify a little bit further by bounding $\Pi^{ab}_{lr}(k, i, \Nb)$.
Note that by definition of the order between feature distributions, we have, as explained at the beginning of the subsection, $\Nb^{k,i,1} \le \Nb^{k,i,N_i-1} \eqdef \Nb^{k,i}$.
This implies
\begin{equation*}
    \Pi^{ab}_{lr}(k, i, \Nb) \le \pi^a_{T_l}(k,\Nb^{k,i}) \pi^b_{T_r}(m-k,\Nb^{m-k,i}).
\end{equation*}
From this observation, we define
\begin{equation*}
    A^i_k \eqdef
    \hspace{0pt}
    \sum_{\substack{1 \leq a, b \leq c\\ a+b\ge c}}
    \hspace{-7pt}
    K_{ab}^c \pi^a_{T_l}(k,\Nb^{k,i}) \pi^b_{T_r}(m-k,\Nb^{m-k,i})
\end{equation*}
so that $\sum_{a,b} K_{ab}^c \Pi_{lr}^{ab}(k,i,\Nb) \le A^i_k$.
Remark that we also have $\sum_{a,b} K_{ab}^c \Pi_{rl}^{ba}(k,i,\Nb) \le A^i_{m-k}$ simply by the symmetries of $A^i_k$, and thus Equation~\eqref{eq:tree_3} takes the form
\begin{align}
    \abs{\P^c_T(S)}
        &\le
        2^{-\delta_{lr}}
        \sum_{i=1}^\nu
        \sum_{k=1}^{\floormtwo}
        R_{N_i,k} \pr{A^i_k + A^i_{m-k}}
        .\label{eq:tree_4}
\end{align}

In anticipation of the following steps, we define the quantity
\begin{equation*}
    R'_{N,k} \eqdef \begin{cases}
    N-1 & \text{if } \frac{m}{k} > N,\\
    \floor{\frac{m}{k}} & \text{if } \frac{m}{k} \le N.
    \end{cases}
\end{equation*}
This is almost the same as $R_{N,k}$ (as defined in Theorem~\ref{thm:ub_partitioning_func_decision_stumps_nominal_feat}) except for the case where $k = \frac{m}{2}$, where $R_{N_i,k}'$ is twice $R_{N,k}$.
Equation~\eqref{eq:tree_4} becomes
\begin{align*}
    \abs{\P^c_T(S)}
        &\le 2^{-\delta_{lr}} \sum_{i=1}^\nu \sum_{k=1}^{\floormtwo} R_{N_i,k} \pr{A^i_k + A^i_{m-k}}\\
        &= 2^{-\delta_{lr}} \sum_{i=1}^\nu \pr{ \sum_{k=1}^{\floormtwo} R_{N_i,k} A^i_k + \sum_{k=1}^{\floormtwo} R_{N_i,k} A^i_{m-k}}\\
        &= 2^{-\delta_{lr}} \sum_{i=1}^\nu \pr{\sum_{k=1}^{\floormtwo} R_{N_i,k} A^i_k + \sum_{k=\ceil{\frac{m}{2}}}^{m-1} R_{N_i,m-k} A^i_{k}}\\
        &= 2^{-\delta_{lr}} \sum_{i=1}^\nu \sum_{k=1}^{m-1} R'_{N_i,\min\cb{k,m-k}} A^i_k.
\end{align*}
where we have handled both cases when $m$ is odd and even simultaneously from the definition of $R'_{N,k}$.
Putting everything together, we have
\begin{equation*}
    \abs{\P^c_T(S)}
        \le
        2^{-\delta_{lr}}
        \sum_{i=1}^\nu \sum_{k=1}^{m-1} 
        R'_{N_i,\min\cb{k,m-k}}  \hspace{-7pt}
        \sum_{\substack{1 \leq a, b \leq c\\ a+b\ge c}} \hspace{-7pt}
        K_{ab}^c\,
        \pi^a_{T_l}(k,\Nb^{k,i}) \pi^b_{T_r}(m-k,\Nb^{k,i}).
\end{equation*}
Taking the maximum over the sample $S$ on both sides yields the first bound of Theorem~\ref{thm:ub_partitioning_functions_decision_trees_nominal_feat}.

The second bound of the theorem is obtained by decomposing Proposition~\ref{prop:c-partitions-set_decomposition_decision_trees} on the part sizes only.
The steps follow the same line of reasoning with some technicalities.
Hence, we only give here the series of inequalities that lead to the theorem without further explanations.
This yields
\begin{align*}
\abs{\P^c_T(S)}
    &\le
        \sum_{k=1}^{\floormtwo}
        \sum_{\plambda \in \R_k(S)}
        \hspace{-7pt}
        \abs{\S^c_T(\plambda)}\\
    &\le
        2^{-\delta_{lr}}
        \hspace{-3pt}
        \sum_{k=1}^{\floormtwo}
        \sum_{\plambda \in \R_k(S)}
        \sum_{\substack{1 \leq a, b \leq c\\ a+b\ge c}}
        \hspace{-7pt}
            K_{ab}^c
            \Big(
            \pi^a_{T_l}(\abs{\lambda_1}\!,\Nbhat(\lambda_1)) \pi^b_{T_r}(\abs{\lambda_2}\!,\Nbhat(\lambda_2))\nonumber\\[-14pt]
            &\hspace{130pt}
            +
            \pi^a_{T_l}(\abs{\lambda_2}\!,\Nbhat(\lambda_2)) \pi^b_{T_r}(\abs{\lambda_1}\!,\Nbhat(\lambda_1))
            \Big)\\
    &\le
        2^{-\delta_{lr}}
        \hspace{-4pt}
        \sum_{k=1}^{\floormtwo}
        \hspace{-1pt}
        \abs{\R_k(S)}
        \hspace{-8pt}
        \sum_{\substack{1 \leq a, b \leq c\\ a+b\ge c}}
        \hspace{-8pt}
            K_{ab}^c
            \Big(
            \pi^a_{T_l}(k,\Nb^k) \pi^b_{T_r}(m-k,\Nb^k)
            \hspace{-1pt}+\hspace{-1pt}
            \pi^a_{T_l}(m-k,\Nb^k) \pi^b_{T_r}(k,\Nb^k)
            \Big)\\
    &\le
        2^{-\delta_{lr}}
        \hspace{-3pt}
        \sum_{k=1}^{m-1}
        \min\cb{\stirling{m}{2}_k\!,\, \nu\! \textstyle\floor{\frac{m}{\min\cb{k,m-k}}}}
        \hspace{-2pt}
        \sum_{\substack{1 \leq a, b \leq c\\ a+b\ge c}}
        \hspace{-7pt}
        K_{ab}^c
        \Big( \pi^a_{T_l}(k,\Nb^k) \pi^b_{T_r}(m-k,\Nb^k) \Big),
\end{align*}
where at the third line we used the fact that $\Nbhat(\lambda_j) \le \Nb^{k,i} \le \Nb^k$ with $N^k_i = \min\cb{N_i, k}$, and where the bound on $\abs{\R_k(S)}$ stems from the bound for the decision stump found in Section~\ref{app:proof_stump_nominal}.
The ultimate line holds as a bound for $\pi^c_T(m, \Nb)$ since it does not depend on the sample $S$ other than on its size $m$ and the feature landscape $\Nb$.
This concludes the proof.
\end{proof}

\subsection{Decision trees on a mixture of feature types}
\label{app:proof_decision_trees_mixture}

Our last result, Theorem~\ref{thm:ub_partitioning_functions_decision_trees_mixture_feat}, is a bound on the partitioning functions of a tree on a mixture of features types.
The proof of the bound is straightforward to obtain as a corollary of the three previous theorem.
Before proceeding with the proof, we introduce some notation so that everything in the proof is well defined.

We assume that the examples are made from $\ell \in \naturals$ real-valued features, $\omega \in \naturals$ ordinal features and $\nu \in \naturals$ nominal features.
The domain space takes the form $\X = \mathds{R}^\ell \times \bigtimes_{j=1}^\nu [N_j]\times \bigtimes_{j=1}^\omega [O_j]$, where $\Ob \in \naturals^\omega$ and $\Nb \in \naturals^\nu$ are the feature landscapes of the ordinal and nominal features.
In this setting, the features $i$ takes value in $[\ell+\omega+\nu]$.

The rule set that applies to such a domain space is simply the union of the rule sets for real-valued features, ordinal features and nominal features.
Formally, we can write $\Phi = \Phi_{\text{real}} \cup \Phi_{\text{ord}} \cup \Phi_{\text{nom}}$, where 
\begin{equation*}
  \Phi_{\text{real}} = \cb{\phi(\x) = s \cdot \IdSign{x_i \le \theta } : s \in \cb{\pm 1}, i \in [\ell], \theta \in \reals},
\end{equation*}
\begin{equation*}
  \Phi_{\text{ord}} = \cb{\phi(\x) = s \cdot \IdSign{x_{j+\ell} \le \theta } : s \in \cb{\pm 1}, j \in [\omega], \theta \in [O_i-1]}
\end{equation*}
and
\begin{equation*}
  \Phi_{\text{nom}} = \cb{  \phi(\x) = s \cdot \IdSign{x_{j+\ell+\omega} = C} : s \in \cb{\pm 1}, i \in [\nu], C \in [N_i] }
\end{equation*}
are the standard threshold split and unitary comparison rule sets where the feature indices have been adjusted to account for the mixture.

Naturally, the partitioning functions will depend on the features landscapes, so let us make this dependence explicit by writing 
\begin{equation*}
    \pi^c_T(m, \ell, \Ob, \Nb) \eqdef
    \max_{\substack{S: |S|=m,\\
    \Obhat(S)\le \Ob,\\
    \Nbhat(S)\le \Nb}}
    \abs{\P^c_T(S)}.
\end{equation*}
We are now ready to prove Theorem~\ref{thm:ub_partitioning_functions_decision_trees_mixture_feat}.

\begin{proof} 
As per the previous proofs, we start from the union decomposition of $\P^c_T(S)$ of Proposition~\ref{prop:c-partitions-set_decomposition_decision_trees}.
To bound the partitioning functions, we decompose the realizable partitions according to the type of feature used by the decision rule (this decomposition may or may not be disjoint).
Hence, define $\R^{\text{real}}(S)$, $\R^{\text{ord}}(S)$, and $\R^{\text{nom}}(S)$ as the set of 2-partitions realizable on S by the root node by using a real-valued feature, an ordinal feature or a nominal feature.
We have
\begin{align*}
    \P^c_T(S)
        = \hspace{-5pt}
        \bigcup_{\plambda \in \R^{\text{real}}(S)} \hspace{-8pt} \S^c_T(\plambda)
        \hspace{3pt} \cup \hspace{-8pt}
        \bigcup_{\plambda \in \R^{\text{nom}}(S)} \hspace{-8pt} \S^c_T(\plambda)
        \hspace{3pt} \cup \hspace{-8pt}
        \bigcup_{\plambda \in \R^{\text{ord}}(S)} \hspace{-8pt} \S^c_T(\plambda).
\end{align*}
Using the union bound, we can bound independently each part of the expression using the bounds already found for the real-valued features, the nominal features and the ordinal features respectively.
Therefore, one can obtain a new bound by combining any of the results of Theorems~\ref{thm:ub_partitioning_functions_decision_trees_rl_feat}, \ref{thm:ub_partitioning_functions_decision_trees_ordinal_feat} and \ref{thm:ub_partitioning_functions_decision_trees_nominal_feat}.
In particular, choosing the bound that sums only on the part sizes $k$, we have (following the exact same steps as in the previous proofs)
\begin{align*}
    \abs{\P^c_T(S)} &
        \hspace{-1.5pt} \le \hspace{-1.5pt}
        2^{-\delta_{lr}}
        \hspace{-1.5pt}
        \sum_{k=1}^{m-1}
        \hspace{-0pt}
        R_k
        \hspace{-7pt}
        \sum_{\substack{1 \leq a, b \leq c\\ a+b\ge c}}
        \hspace{-9pt}
        \tbinom{a}{c-b}\tbinom{b}{c-a}(a\!+\!b\!-\!c)!
        \pi^a_{T_l}(k,\ell, \Ob^k, \Nb^k) \pi^b_{T_r}(m-k, \Ob^k, \Nb^k),
\end{align*}
where $\delta_{lr} = \Id{T_l = T_r}$, $O^k_i = \min\cb{O_i, k}$, $N^k_i = \min\cb{N_i, k}$, and
\begin{equation*}
    R_k \eqdef \min \cb{2\ell + 2\omega +  \floor{\frac{m}{\min\cb{k,m-k}}}\!\nu}.
\end{equation*}
Finally, taking the maximum on the samples $S$ on both side, one has the desired result.
\end{proof}

\clearpage
\section{Algorithms for the growth function, the partitioning functions and the VC dimension}
\label{app:algorithms}

In this Appendix, we present algorithms in the form of pseudo-code used to compute bounds on the partitioning functions, the growth function and the VC dimension of decision trees.
Algorithm~\ref{algo:partition_func_upper_bound} uses Theorem~\ref{thm:ub_partitioning_functions_decision_trees_mixture_feat} to upper bound the $c$-partitioning function of a tree class.
Algorithm~\ref{algo:vcdim_upper_bound} uses Algorithm~\ref{algo:partition_func_upper_bound} and Equation~\eqref{eq:vcdim_def_partition_func} to compute an upper bound on the VC dimension of a tree class.

\begin{algorithm2e}
\caption{PartiFuncUB$(T, c, m, \ell, \Ob, \Nb)$}\label{algo:partition_func_upper_bound}
\DontPrintSemicolon
\SetAlgoVlined

\KwIn{A tree class $T$, the number $c$ of parts in the partitions, the number $m$ of elements, the number $\ell$ of features, the feature landscapes $\Ob$ and $\Nb$ of ordinal and nominal features.}

\uIf{$c > m$ \textbf{or} $c > L_T$}{
  Let $N \leftarrow 0$.\;
  }
\uElseIf{$c = m$ \textbf{or} $c = 1$ \textbf{or} $m = 1$}{
  Let $N \leftarrow 1$.\;
  }
\Else{
  Let $T_l$ and $T_r$ be the left and right subtree classes of $T$.\;
  
  Let $N \leftarrow 0$.\;
  
  \For{$k = 1$ to $m-1$}{
    Let $R_k \leftarrow \min\cb{ 2(\ell + \omega) + \floor{\frac{m}{\min\cb{k,m-k}}} \!\nu }$\;

    \raggedright Let $\displaystyle N \leftarrow N + R_k \sum_{a=1}^c \sum_{b=\max\cb{1,\,c-a}}^c \tbinom{a}{c-b}\tbinom{b}{c-a}(a+b-c)!$\;
      \hspace*{50mm}$\times\;\text{PartiFuncUB}(T_l, a, k, \ell, \Ob^k, \Nb^k)$\;
      \hspace*{50mm}$\times\; \text{PartiFuncUB}(T_r, b, m-k, \ell, \Ob^k, \Nb^k)$.\;    
    \If{$T_l = T_r$}{
      Let $N \leftarrow \frac{N}{2}$.\;
    }
  }
}

\KwOut{$\min\pr{N, \smallstirling{m}{c}}$, an upper bound on $\pi^c_T(m, \ell, \Ob, \Nb)$.}
\end{algorithm2e}

\begin{algorithm2e}[ht]
\caption{VCdimUpperBound$(T, \ell, \Ob, \Nb)$}\label{algo:vcdim_upper_bound}
\DontPrintSemicolon
\SetAlgoVlined

\KwIn{A tree class $T$, the number $\ell$ of features.}

\If{$T$ is a leaf}{\KwOut{1}}

Let $m \leftarrow 1$.\;

\While{$\textrm{PartiFuncUB}(T, 2, m, \ell, \Ob, \Nb) \geq 2^{m-1}-1$}{
    Let $m \leftarrow m+1.$\;
  }
\KwOut{$m-1$, an upper bound on $\text{VCdim} T$.}
\end{algorithm2e}

Algorithm~\ref{algo:vcdim_upper_bound} can become quite inefficient because one has to compute the values of PartitionFuncUpperBound for increasing values of $m$, which may already have been computed for smaller values of $m$.
It is thus suggested to store the values of PartitionFuncUpperBound computed for each $T$ and each $m$ to be more efficient.

Algorithm~\ref{algo:log_partition_func_upper_bound} provides an algorithm to compute the logarithm of the $c$-partitioning functions, faster than Algorithm~\ref{algo:partition_func_upper_bound} but much quicker and avoiding overflows.
Algorith~\ref{algo:growth_func_upper_bound} provides an upper bound on the logarithm of the growth function using the same technique and relying on Algorithm~\ref{algo:log_partition_func_upper_bound}.

\begin{algorithm2e}
\caption{LogPartiFuncUB$(T, c, m, \ell, \Ob, \Nb)$}\label{algo:log_partition_func_upper_bound}
\DontPrintSemicolon
\SetAlgoVlined

\KwIn{A tree class $T$, the number $c$ of parts in the partitions, the number $m$ of elements, the number $\ell$ of features, the feature landscapes $\Ob$ and $\Nb$ of ordinal and nominal features.}
Let $L_T$ be the number of leaves of $T$.\;

\uIf{$c > m$ \textbf{or} $c > L_T$}{
  Let $\log\pi \leftarrow -\infty$.\;
  }
\uElseIf{$c = m$ \textbf{or} $c = 1$ \textbf{or} $m = 1$}{
  Let $\log\pi \leftarrow 0$.\;
  }
\uElseIf{$m \leq L_T$}{
    Let $\log\pi \leftarrow \log\pr{\smallstirling{m}{c}}$.}
\Else{
  Let $T_l$ and $T_r$ be the left and right subtree classes of $T$.\;
  
  Let $a' \leftarrow \min\cb{c, L_{T_l}}$ and $b' \leftarrow \min\cb{c, L_{T_r}}$.\;

  \raggedright Let $\text{main} \leftarrow \log\pr{\tbinom{a'}{c-b'}\tbinom{b'}{c-a'}(a'+b'-c)!}$\;
    \hspace*{30mm}$+ \text{LogPartiFuncUB}(T_l, a', m-1, \ell, \Ob, \Nb)$\;
    \hspace*{30mm}$+ \text{LogPartiFuncUB}(T_r, b', m-1, \ell, \Ob, \Nb)$.\;

  Let $\text{cumul}\leftarrow 0$.\;

  \For{$a = 1$ to $c$}{
    \For{$b = \max\cb{1,c-a}$ to $c$}{
      \raggedright $\text{cumul} \leftarrow \text{cumul} +
        \exp\Big(
          -\text{main} + \log\pr{ \tbinom{a}{c-b}\tbinom{b}{c-a}(a+b-c)! } $\;
          \hspace*{40mm}$+ \text{LogPartiFuncUB}(T_l, a, m-1, \ell, \Ob, \Nb)$\;
          \hspace*{40mm}$+ \text{LogPartiFuncUB}(T_r, b, m-1, \ell, \Ob, \Nb)
      \Big)$\;
    }
  }

  Let $\log\pi \leftarrow \log(2^{-\delta_{lr}}) + \log(m-1) + \log(R_{m-1}) + \text{main} + \log(\text{cumul})$.\;

}

\KwOut{$\min\pr{\log\pi, \log\pr{\smallstirling{m}{c}}}$, an upper bound on $\log(\pi^c_T(m, \ell, \Ob, \Nb))$.}
\end{algorithm2e}

\begin{algorithm2e}
\caption{LogGrowthFuncUB$(T, n, m, \ell, \Ob, \Nb)$}\label{algo:growth_func_upper_bound}
\DontPrintSemicolon
\SetAlgoVlined

\KwIn{A tree class $T$, the number $n$ of classes, the number $m$ of elements, the number $\ell$ of features, the feature landscapes $\Ob$ and $\Nb$ of ordinal and nominal features.}
Let $L_T$ be the number of leaves of $T$.\;

Let $M \leftarrow \min\cb{ n, L_T, m }$.\;

Let $\texttt{main}\leftarrow \log((n)_M) + \text{LogPartiFuncUB}(T, M, m, \ell, \Ob, \Nb)$ be the main term of the sum, with $(n)_M$ the falling factorial.\;

Let $\texttt{cumul} \leftarrow 1$.\;

\For{$a = 1$ to $M-1$}{
    \raggedright $\texttt{cumul} \leftarrow \texttt{cumul} +
      \exp\Big(
        -\texttt{main} + \log\!\pr{ (n)_a } + \text{LogPartiFuncUB}(T, a, m, \ell, \Ob, \Nb)\Big)$\;
}

\KwOut{$\texttt{main} + \log(\texttt{cumul})$, an upper bound on $\log(\tau_T(m))$.}
\end{algorithm2e}

\clearpage
\section{Supplementary materials about the experiments}
\label{app:experiments}

In this Appendix, we provide more details about the experiments that were done.

\subsection{Greedy algorithm to grow a decision tree}
\label{app:algorithms_to_grow_tree}

The algorithm we used to grow the decision tree is the traditional greedy one, where at each step, one examines every leaf of the tree, evaluates the gain made by splitting each leaf into a stump, and applies the split that generates the best gain.
Algorithm~\ref{algo:greedy_tree} presents the formal steps in the form of pseudo-code.

\def\makesplit{\textup{MakeSplit}}
\begin{algorithm2e}
\caption{GreedyTree$(S, K, \makesplit)$}\label{algo:greedy_tree}
\DontPrintSemicolon
\SetAlgoVlined


\KwIn{The sample $S$ of the $m$ examples, a set $K$ of conditions the tree must satisfy (\eg the maximum number of leaves, the maximum height, etc.), a function $\makesplit$ that takes as input a leaf and a sample, and outputs a stump and the impurity scores of its left and right leaves.}

Let $t$ be the initial tree made from a unique leaf with the majority label.

Let $(s, g_{s_l}, g_{s_r}) \leftarrow \makesplit(t, S)$ be the tuple formed from the stump $s$ and the impurity scores $g_{s_l}$ and $g_{s_r}$ of its left leaf $s_l$ and its right leaf $s_r$.

Let $\Sigma \leftarrow \cb{(s, g_{s_l}, g_{s_r})}$ be the set of all candidate for the next splits.

\While{$\Sigma$ is not empty \textbf{and} the empirical risk of $t$ is not null}{
  Let $(s, g_{s_l}, g_{s_r}) \leftarrow \argmin_{(\sigma, g_{\sigma_l}, g_{\sigma_r}) \in \Sigma} (g_{\sigma_l} + g_{\sigma_r})$ be the best candidate split.

  Let $t'$ be the tree $t$ updated by replacing the appropriate leaf with the new split $s$.

  Update $\Sigma \leftarrow \Sigma \backslash \cb{(s, g_{s_l}, g_{s_r})}$ to remove the split from candidates.

  \If{$t'$ satisfies all conditions in $K$}{
    
    Update $t \leftarrow t'$.
  

    \For{both leaves $\lambda \in \cb{s_l, s_r}$}{

      Let $S_\lambda$ be the subset of $S$ that reaches leaf $\lambda$.

      Let $(\sigma, g_{\sigma_l}, g_{\sigma_r}) \leftarrow \makesplit(\lambda, S_\lambda)$ be the split that could replace leaf $\lambda$.

      \If{$g_{\sigma_l} + g_{\sigma_r} < g_\lambda$}{

        Update $\Sigma \leftarrow \Sigma \cup \cb{ (\sigma, g_\sigma) }$.

      }
    }
  }
}

\KwOut{The greedily overgrown tree $t$.}
\end{algorithm2e}

To determine if a stump makes gain, we require a function of the form $f:S \to \reals$ which maps a sample to a real number, called the impurity score.
We say that the impurity score of a leaf $\lambda$ corresponds to the output of $f(S_\lambda)$, where $S_\lambda$ is the subset of the examples that reach leaf $\lambda$.
In our algorithm, we assume that the impurity score of a stump is the sum of the impurity score of its leaves (a fact used a line 5 and 13 of Algorithm~\ref{algo:greedy_tree}).
Furthermore, we assume that a stump makes gain over a leaf if its impurity score is less than that of the leaf.
These assumptions are standard, but they could easily be modified in the algorithm if one desired so.
In our experiments, we use the Gini index as impurity score function, which is defined as 
\begin{equation}\label{eq:def_gini}
  \text{Gini}(S) \eqdef 1 - \sum_{a=1}^n \pr{ \frac{ \abs{\cb{ (\x,y) \in S : y = a }}}{\abs{S}} }^2.
\end{equation}

To facilitate the comprehension of the algorithm, we abstract the step of making a leaf into a stump into a function ``MakeSplit'', which takes as input a leaf and the subset of the sample that reaches it in order to produce the best stump that could replace the leaf.
It outputs a decision stump (made from a decision rules and two leaves with class labels), as well as the impurity scores of its left and right subtrees.
In our implementation, MakeSplit calls a method which finds the decision rule that minimizes the impurity score of the resulting split.
The steps to achieve this are detailed in Algorithm~\ref{algo:find_best_split}.

\begin{algorithm2e}
\caption{FindBestSplit$(S, n, f)$}\label{algo:find_best_split}
\DontPrintSemicolon
\SetAlgoVlined

\KwIn{The sample $S$, the number $n$ of classes, an impurity score function $f$ that takes as input a vector of $n$ real numbers and outputs a real number.}

Let $\rho$ be the current decision rule in the form of a tuple consisting in a feature index and category or threshold.

Let $\z^l \leftarrow (0,\dots,0)$ be the number of examples of each label that are sent to the left leaf.

Let $\z^r$ be the number of examples of each label that are sent to the right leaf such that $z^r_a \eqdef \abs{ \cb{ (\x, y) \in S : y = a } }$ for every $a \in [n]$.

Let $g_l \leftarrow f(\z^l)$ and $g_r \leftarrow f(\z^r)$ be the impurity scores of the left and right leaves.

\For{every feature $i=1$ to $\ell+\omega+\nu$}{
  
  Let $S'$ be the sample sorted according to feature $i$.

  Let $\z^l \leftarrow (0,\dots,0)$ and $\z^r$ such that $z^r_a \eqdef \abs{ \cb{ (\x_j, y_j) \in S' : y_j = a } }$.

  \For{every example index $j=1$ to $m-1$}{

    Let $(\x_j, y_j)$ be the $j$-th example of $S'$.

    \If{$x_{j,i} \ne x_{j+1,i}$}{

      \If{$f(\z^l) + f(\z^r) \le g_l + g_r$}{

        \uIf{feature $i$ is nominal \textbf{or} feature $i$ is ordinal}{

          Let $\rho \leftarrow (i, x_{j,i})$ be the new decision rule.
        }
        \Else{
          Let $\rho \leftarrow (i, \frac{1}{2}(x_{j,i}+x_{j+1,i})$ be the new decision rule.
        }

        Let $g_l \leftarrow f(\z^l)$ and $g_r \leftarrow f(\z^r)$ be the new best impurity scores.

      }

      \If{feature $i$ is nominal}{

        Reinitialize $\z^r \leftarrow \z^r + \z^l$, and then $\z^l \leftarrow (0,\dots,0)$.
      }
    }

    Update $z^r_{y_j} \leftarrow z^r_{y_j} - 1$, and $z^l_{y_j} \leftarrow z^l_{y_j} + 1$.

  }
}

\KwOut{The decision rule $\rho$ that minimizes the impurity score, the impurity scores $g_l$ and $g_r$ of the left and right leaves.}
\end{algorithm2e}

Algorithm~\ref{algo:find_best_split} examines sequentially every possible split of the data for every features, and retains the one that minimizes the impurity score.
This algorithm is the conventional one used by \citet{breiman1984classification}'s CART and C4.5 algorithms, but keeps track of the variables $\z^l$ and $\z^r$, which counts the number of examples of each class are sent in each leaf, in order to easily compute the impurity scores and the classes of the leaves.
We here abuse notation by writing $f(\z)$, but as one can see from Equation~\eqref{eq:def_gini}, $\z$ provides all the information required to compute the Gini index.
Furthermore, this simple modification allows us to handle the case of nominal feature with the unitary comparison decision rule by simply resetting $\z^l$ and $\z^r$ (lines 17 and 18 of Algorithm~\ref{algo:find_best_split}).
The running time of this algorithm is thus in $\O((\ell+\omega+\nu)m (n+ \log m)$ (sorting $S$ is in $\O(m \log m)$ while the loop on $j$ is in $\O(mn)$ assuming that $f$ is in $\O(n)$.)
Note that the loop on the features can be easily parallelized, as we do in our code implemention.

As one can see, it is completely unnecessary to convert each nominal feature $i$ into $N_i$ binary features as is done in many popular libraries, such as CatBoost.
In fact, this conversion slows down the algorithm, as it replaces $\nu$ in the running time by $\sum_{i=1}^\nu N_i$, which can be \emph{much} larger than $\nu$.


\subsection{More statistics about model performances}
\label{app:more_stats}

We here give more statics on the performances of the model tested, such as the training accuracy, the number of leaves and the height of the final tree, the time it took to prune the original tree, and the computed bound in the case of our pruning algorithm.
For each table, the caption gives the data set name, the total number of examples it contains, the number of features each example has (as well as their type), and the number of classes to predict.
The reported values are the average over 25 runs.
The standard deviation is indicated in parentheses.
For more details about the table columns, see the methodology section~\ref{ssec:methodology}.

\begin{table}[h!]
\centering
\caption{Statistics for the Acute Inflammation data set \citep{czerniak2003application}. There are 120 examples, 7 features (1 real-valued, 0 ordinal and 6 nominal), and 2 classes.}
\vspace{6pt}
\small
\begin{tabular}{l@{\hspace{6pt}}c@{\hspace{6pt}}c@{\hspace{6pt}}c@{\hspace{6pt}}c@{\hspace{6pt}}c@{\hspace{6pt}}cl@{\hspace{6pt}}c@{\hspace{6pt}}c@{\hspace{6pt}}c@{\hspace{6pt}}c@{\hspace{6pt}}c@{\hspace{6pt}}cl@{\hspace{6pt}}c@{\hspace{6pt}}c@{\hspace{6pt}}c@{\hspace{6pt}}c@{\hspace{6pt}}c@{\hspace{6pt}}cl@{\hspace{6pt}}c@{\hspace{6pt}}c@{\hspace{6pt}}c@{\hspace{6pt}}c@{\hspace{6pt}}c@{\hspace{6pt}}cl@{\hspace{6pt}}c@{\hspace{6pt}}c@{\hspace{6pt}}c@{\hspace{6pt}}c@{\hspace{6pt}}c@{\hspace{6pt}}cl@{\hspace{6pt}}c@{\hspace{6pt}}c@{\hspace{6pt}}c@{\hspace{6pt}}c@{\hspace{6pt}}c@{\hspace{6pt}}cl@{\hspace{6pt}}c@{\hspace{6pt}}c@{\hspace{6pt}}c@{\hspace{6pt}}c@{\hspace{6pt}}c@{\hspace{6pt}}c}
\toprule
 & OG & CC & RE & KM & Ours & Oracle\\
\cmidrule{2-7}
Train acc. & $1.000 (0.000)$ & $0.917 (0.011)$ & $0.977 (0.043)$ & $1.000 (0.000)$ & $1.000 (0.000)$ & $0.976 (0.051)$\\
Val. acc. & NA & NA & $1.000 (0.000)$ & NA & NA & NA\\
Test acc. & $\mathbf{1.000 (0.000)}$ & $0.916 (0.062)$ & $0.971 (0.062)$ & $\mathbf{1.000 (0.000)}$ & $\mathbf{1.000 (0.000)}$ & $1.000 (0.000)$\\
Leaves & $3.0 (0.0)$ & $2.0 (0.0)$ & $2.8 (0.4)$ & $3.0 (0.0)$ & $3.0 (0.0)$ & $2.8 (0.4)$\\
Height & $2.0 (0.0)$ & $1.0 (0.0)$ & $1.8 (0.4)$ & $2.0 (0.0)$ & $2.0 (0.0)$ & $1.8 (0.4)$\\
Time $[s]$ & $0.000 (0.000)$ & $0.167 (0.002)$ & $0.001 (0.000)$ & $0.133 (0.001)$ & $0.002 (0.000)$ & $0.001 (0.000)$\\
Bound & NA & NA & NA & NA & $0.884 (0.000)$ & NA\\
\bottomrule
\end{tabular}
\end{table}
\begin{table}[h!]
\centering
\caption{Statistics for the Amphibians data set \citep{blachnik2019predicting}. There are 189 examples, 14 features (2 real-valued, 3 ordinal and 9 nominal), and 2 classes.}
\vspace{6pt}
\small
\begin{tabular}{l@{\hspace{6pt}}c@{\hspace{6pt}}c@{\hspace{6pt}}c@{\hspace{6pt}}c@{\hspace{6pt}}c@{\hspace{6pt}}cl@{\hspace{6pt}}c@{\hspace{6pt}}c@{\hspace{6pt}}c@{\hspace{6pt}}c@{\hspace{6pt}}c@{\hspace{6pt}}cl@{\hspace{6pt}}c@{\hspace{6pt}}c@{\hspace{6pt}}c@{\hspace{6pt}}c@{\hspace{6pt}}c@{\hspace{6pt}}cl@{\hspace{6pt}}c@{\hspace{6pt}}c@{\hspace{6pt}}c@{\hspace{6pt}}c@{\hspace{6pt}}c@{\hspace{6pt}}cl@{\hspace{6pt}}c@{\hspace{6pt}}c@{\hspace{6pt}}c@{\hspace{6pt}}c@{\hspace{6pt}}c@{\hspace{6pt}}cl@{\hspace{6pt}}c@{\hspace{6pt}}c@{\hspace{6pt}}c@{\hspace{6pt}}c@{\hspace{6pt}}c@{\hspace{6pt}}cl@{\hspace{6pt}}c@{\hspace{6pt}}c@{\hspace{6pt}}c@{\hspace{6pt}}c@{\hspace{6pt}}c@{\hspace{6pt}}c}
\toprule
 & OG & CC & RE & KM & Ours & Oracle\\
\cmidrule{2-7}
Train acc. & $1.000 (0.000)$ & $0.714 (0.133)$ & $0.688 (0.069)$ & $0.769 (0.172)$ & $1.000 (0.000)$ & $0.671 (0.053)$\\
Val. acc. & NA & NA & $0.814 (0.079)$ & NA & NA & NA\\
Test acc. & $0.591 (0.098)$ & $0.597 (0.110)$ & $\mathbf{0.617 (0.090)}$ & $0.560 (0.087)$ & $0.589 (0.097)$ & $0.814 (0.075)$\\
Leaves & $44.1 (3.0)$ & $7.3 (10.4)$ & $6.2 (3.1)$ & $14.4 (16.7)$ & $43.8 (2.9)$ & $6.0 (3.2)$\\
Height & $11.9 (2.1)$ & $3.4 (4.0)$ & $3.6 (1.8)$ & $5.1 (4.9)$ & $11.7 (2.0)$ & $3.4 (1.8)$\\
Time $[s]$ & $0.000 (0.000)$ & $1.690 (0.112)$ & $0.237 (0.164)$ & $1.985 (0.163)$ & $0.265 (0.127)$ & $0.266 (0.195)$\\
Bound & NA & NA & NA & NA & $6.674 (0.067)$ & NA\\
\bottomrule
\end{tabular}
\end{table}
\begin{table}[h!]
\centering
\caption{Statistics for the Breast Cancer Wisconsin Diagnostic data set \citep{street1993nuclear}. There are 569 examples, 30 features (30 real-valued, 0 ordinal and 0 nominal), and 2 classes.}
\vspace{6pt}
\small
\begin{tabular}{l@{\hspace{6pt}}c@{\hspace{6pt}}c@{\hspace{6pt}}c@{\hspace{6pt}}c@{\hspace{6pt}}c@{\hspace{6pt}}cl@{\hspace{6pt}}c@{\hspace{6pt}}c@{\hspace{6pt}}c@{\hspace{6pt}}c@{\hspace{6pt}}c@{\hspace{6pt}}cl@{\hspace{6pt}}c@{\hspace{6pt}}c@{\hspace{6pt}}c@{\hspace{6pt}}c@{\hspace{6pt}}c@{\hspace{6pt}}cl@{\hspace{6pt}}c@{\hspace{6pt}}c@{\hspace{6pt}}c@{\hspace{6pt}}c@{\hspace{6pt}}c@{\hspace{6pt}}cl@{\hspace{6pt}}c@{\hspace{6pt}}c@{\hspace{6pt}}c@{\hspace{6pt}}c@{\hspace{6pt}}c@{\hspace{6pt}}cl@{\hspace{6pt}}c@{\hspace{6pt}}c@{\hspace{6pt}}c@{\hspace{6pt}}c@{\hspace{6pt}}c@{\hspace{6pt}}cl@{\hspace{6pt}}c@{\hspace{6pt}}c@{\hspace{6pt}}c@{\hspace{6pt}}c@{\hspace{6pt}}c@{\hspace{6pt}}c}
\toprule
 & OG & CC & RE & KM & Ours & Oracle\\
\cmidrule{2-7}
Train acc. & $1.000 (0.000)$ & $0.952 (0.072)$ & $0.956 (0.015)$ & $0.986 (0.008)$ & $0.983 (0.005)$ & $0.952 (0.018)$\\
Val. acc. & NA & NA & $0.951 (0.023)$ & NA & NA & NA\\
Test acc. & $0.921 (0.033)$ & $0.908 (0.064)$ & $0.924 (0.033)$ & $\mathbf{0.932 (0.028)}$ & $\mathbf{0.933 (0.028)}$ & $0.952 (0.022)$\\
Leaves & $19.2 (1.9)$ & $6.3 (3.4)$ & $4.3 (1.2)$ & $10.6 (2.9)$ & $8.2 (1.4)$ & $5.3 (1.9)$\\
Height & $7.1 (1.0)$ & $3.6 (1.7)$ & $3.0 (0.9)$ & $5.3 (1.0)$ & $4.4 (0.6)$ & $3.2 (1.0)$\\
Time $[s]$ & $0.000 (0.000)$ & $3.319 (0.273)$ & $0.084 (0.043)$ & $1.815 (0.148)$ & $0.189 (0.050)$ & $0.131 (0.056)$\\
Bound & NA & NA & NA & NA & $1.306 (0.102)$ & NA\\
\bottomrule
\end{tabular}
\end{table}
\begin{table}[h!]
\centering
\caption{Statistics for the Cardiotocography10 data set \citep{ayres2000sisporto}. There are 2126 examples, 21 features (21 real-valued, 0 ordinal and 0 nominal), and 10 classes.}
\vspace{6pt}
\small
\begin{tabular}{l@{\hspace{6pt}}c@{\hspace{6pt}}c@{\hspace{6pt}}c@{\hspace{6pt}}c@{\hspace{6pt}}c@{\hspace{6pt}}cl@{\hspace{6pt}}c@{\hspace{6pt}}c@{\hspace{6pt}}c@{\hspace{6pt}}c@{\hspace{6pt}}c@{\hspace{6pt}}cl@{\hspace{6pt}}c@{\hspace{6pt}}c@{\hspace{6pt}}c@{\hspace{6pt}}c@{\hspace{6pt}}c@{\hspace{6pt}}cl@{\hspace{6pt}}c@{\hspace{6pt}}c@{\hspace{6pt}}c@{\hspace{6pt}}c@{\hspace{6pt}}c@{\hspace{6pt}}cl@{\hspace{6pt}}c@{\hspace{6pt}}c@{\hspace{6pt}}c@{\hspace{6pt}}c@{\hspace{6pt}}c@{\hspace{6pt}}cl@{\hspace{6pt}}c@{\hspace{6pt}}c@{\hspace{6pt}}c@{\hspace{6pt}}c@{\hspace{6pt}}c@{\hspace{6pt}}cl@{\hspace{6pt}}c@{\hspace{6pt}}c@{\hspace{6pt}}c@{\hspace{6pt}}c@{\hspace{6pt}}c@{\hspace{6pt}}c}
\toprule
 & OG & CC & RE & KM & Ours & Oracle\\
\cmidrule{2-7}
Train acc. & $0.622 (0.008)$ & $0.578 (0.017)$ & $0.628 (0.056)$ & $0.598 (0.013)$ & $0.593 (0.009)$ & $0.586 (0.010)$\\
Val. acc. & NA & NA & $0.626 (0.073)$ & NA & NA & NA\\
Test acc. & $0.571 (0.027)$ & $0.570 (0.024)$ & $\mathbf{0.608 (0.057)}$ & $0.578 (0.026)$ & $0.582 (0.027)$ & $0.603 (0.023)$\\
Leaves & $75.0 (0.0)$ & $10.4 (10.2)$ & $15.7 (4.5)$ & $22.0 (11.7)$ & $12.1 (4.2)$ & $13.8 (4.6)$\\
Height & $20.5 (1.1)$ & $5.8 (4.2)$ & $7.3 (1.6)$ & $9.4 (3.6)$ & $6.5 (1.6)$ & $7.3 (2.1)$\\
Time $[s]$ & $0.000 (0.000)$ & $19.071 (0.517)$ & $3.248 (2.182)$ & $24.631 (1.524)$ & $19.793 (2.944)$ & $3.597 (2.298)$\\
Bound & NA & NA & NA & NA & $13.033 (0.189)$ & NA\\
\bottomrule
\end{tabular}
\end{table}
\begin{table}[h!]
\centering
\caption{Statistics for the Climate Model Simulation Crashes data set \citep{lucas2013failure}. There are 540 examples, 18 features (18 real-valued, 0 ordinal and 0 nominal), and 2 classes.}
\vspace{6pt}
\small
\begin{tabular}{l@{\hspace{6pt}}c@{\hspace{6pt}}c@{\hspace{6pt}}c@{\hspace{6pt}}c@{\hspace{6pt}}c@{\hspace{6pt}}cl@{\hspace{6pt}}c@{\hspace{6pt}}c@{\hspace{6pt}}c@{\hspace{6pt}}c@{\hspace{6pt}}c@{\hspace{6pt}}cl@{\hspace{6pt}}c@{\hspace{6pt}}c@{\hspace{6pt}}c@{\hspace{6pt}}c@{\hspace{6pt}}c@{\hspace{6pt}}cl@{\hspace{6pt}}c@{\hspace{6pt}}c@{\hspace{6pt}}c@{\hspace{6pt}}c@{\hspace{6pt}}c@{\hspace{6pt}}cl@{\hspace{6pt}}c@{\hspace{6pt}}c@{\hspace{6pt}}c@{\hspace{6pt}}c@{\hspace{6pt}}c@{\hspace{6pt}}cl@{\hspace{6pt}}c@{\hspace{6pt}}c@{\hspace{6pt}}c@{\hspace{6pt}}c@{\hspace{6pt}}c@{\hspace{6pt}}cl@{\hspace{6pt}}c@{\hspace{6pt}}c@{\hspace{6pt}}c@{\hspace{6pt}}c@{\hspace{6pt}}c@{\hspace{6pt}}c}
\toprule
 & OG & CC & RE & KM & Ours & Oracle\\
\cmidrule{2-7}
Train acc. & $1.000 (0.000)$ & $0.934 (0.024)$ & $0.935 (0.021)$ & $0.971 (0.021)$ & $0.972 (0.007)$ & $0.933 (0.021)$\\
Val. acc. & NA & NA & $0.940 (0.020)$ & NA & NA & NA\\
Test acc. & $0.897 (0.031)$ & $0.909 (0.022)$ & $0.907 (0.024)$ & $\mathbf{0.913 (0.025)}$ & $\mathbf{0.914 (0.025)}$ & $0.940 (0.021)$\\
Leaves & $23.5 (1.6)$ & $3.1 (3.0)$ & $4.1 (2.4)$ & $10.1 (4.4)$ & $8.2 (1.2)$ & $4.3 (2.3)$\\
Height & $7.8 (1.0)$ & $1.7 (2.3)$ & $2.5 (1.6)$ & $5.8 (2.2)$ & $5.2 (0.8)$ & $2.9 (1.8)$\\
Time $[s]$ & $0.000 (0.000)$ & $2.650 (0.304)$ & $0.072 (0.046)$ & $1.692 (0.176)$ & $0.352 (0.349)$ & $0.129 (0.078)$\\
Bound & NA & NA & NA & NA & $1.664 (0.125)$ & NA\\
\bottomrule
\end{tabular}
\end{table}
\begin{table}[h!]
\centering
\caption{Statistics for the Connectionist Bench Sonar data set \citep{gorman1988analysis}. There are 208 examples, 60 features (60 real-valued, 0 ordinal and 0 nominal), and 2 classes.}
\vspace{6pt}
\small
\begin{tabular}{l@{\hspace{6pt}}c@{\hspace{6pt}}c@{\hspace{6pt}}c@{\hspace{6pt}}c@{\hspace{6pt}}c@{\hspace{6pt}}cl@{\hspace{6pt}}c@{\hspace{6pt}}c@{\hspace{6pt}}c@{\hspace{6pt}}c@{\hspace{6pt}}c@{\hspace{6pt}}cl@{\hspace{6pt}}c@{\hspace{6pt}}c@{\hspace{6pt}}c@{\hspace{6pt}}c@{\hspace{6pt}}c@{\hspace{6pt}}cl@{\hspace{6pt}}c@{\hspace{6pt}}c@{\hspace{6pt}}c@{\hspace{6pt}}c@{\hspace{6pt}}c@{\hspace{6pt}}cl@{\hspace{6pt}}c@{\hspace{6pt}}c@{\hspace{6pt}}c@{\hspace{6pt}}c@{\hspace{6pt}}c@{\hspace{6pt}}cl@{\hspace{6pt}}c@{\hspace{6pt}}c@{\hspace{6pt}}c@{\hspace{6pt}}c@{\hspace{6pt}}c@{\hspace{6pt}}cl@{\hspace{6pt}}c@{\hspace{6pt}}c@{\hspace{6pt}}c@{\hspace{6pt}}c@{\hspace{6pt}}c@{\hspace{6pt}}c}
\toprule
 & OG & CC & RE & KM & Ours & Oracle\\
\cmidrule{2-7}
Train acc. & $1.000 (0.000)$ & $0.869 (0.128)$ & $0.785 (0.089)$ & $0.978 (0.035)$ & $0.962 (0.012)$ & $0.796 (0.063)$\\
Val. acc. & NA & NA & $0.813 (0.072)$ & NA & NA & NA\\
Test acc. & $0.714 (0.073)$ & $0.697 (0.087)$ & $0.683 (0.082)$ & $0.725 (0.067)$ & $\mathbf{0.728 (0.061)}$ & $0.835 (0.050)$\\
Leaves & $18.4 (1.4)$ & $7.9 (5.2)$ & $4.6 (1.7)$ & $15.1 (3.7)$ & $11.5 (1.4)$ & $5.4 (2.0)$\\
Height & $6.7 (0.7)$ & $3.8 (2.1)$ & $2.7 (1.1)$ & $6.2 (1.1)$ & $5.3 (1.0)$ & $3.3 (1.0)$\\
Time $[s]$ & $0.000 (0.000)$ & $1.524 (0.071)$ & $0.052 (0.029)$ & $1.002 (0.059)$ & $0.136 (0.045)$ & $0.061 (0.027)$\\
Bound & NA & NA & NA & NA & $4.096 (0.224)$ & NA\\
\bottomrule
\end{tabular}
\end{table}
\begin{table}[h!]
\centering
\caption{Statistics for the Diabetic Retinopathy Debrecen data set \citep{antal2014ensemble}. There are 1151 examples, 19 features (19 real-valued, 0 ordinal and 0 nominal), and 2 classes.}
\vspace{6pt}
\small
\begin{tabular}{l@{\hspace{6pt}}c@{\hspace{6pt}}c@{\hspace{6pt}}c@{\hspace{6pt}}c@{\hspace{6pt}}c@{\hspace{6pt}}cl@{\hspace{6pt}}c@{\hspace{6pt}}c@{\hspace{6pt}}c@{\hspace{6pt}}c@{\hspace{6pt}}c@{\hspace{6pt}}cl@{\hspace{6pt}}c@{\hspace{6pt}}c@{\hspace{6pt}}c@{\hspace{6pt}}c@{\hspace{6pt}}c@{\hspace{6pt}}cl@{\hspace{6pt}}c@{\hspace{6pt}}c@{\hspace{6pt}}c@{\hspace{6pt}}c@{\hspace{6pt}}c@{\hspace{6pt}}cl@{\hspace{6pt}}c@{\hspace{6pt}}c@{\hspace{6pt}}c@{\hspace{6pt}}c@{\hspace{6pt}}c@{\hspace{6pt}}cl@{\hspace{6pt}}c@{\hspace{6pt}}c@{\hspace{6pt}}c@{\hspace{6pt}}c@{\hspace{6pt}}c@{\hspace{6pt}}cl@{\hspace{6pt}}c@{\hspace{6pt}}c@{\hspace{6pt}}c@{\hspace{6pt}}c@{\hspace{6pt}}c@{\hspace{6pt}}c}
\toprule
 & OG & CC & RE & KM & Ours & Oracle\\
\cmidrule{2-7}
Train acc. & $0.783 (0.018)$ & $0.621 (0.064)$ & $0.668 (0.042)$ & $0.741 (0.079)$ & $0.727 (0.024)$ & $0.647 (0.036)$\\
Val. acc. & NA & NA & $0.715 (0.037)$ & NA & NA & NA\\
Test acc. & $0.622 (0.037)$ & $0.592 (0.065)$ & $0.607 (0.035)$ & $0.617 (0.052)$ & $\mathbf{0.634 (0.038)}$ & $0.704 (0.030)$\\
Leaves & $75.0 (0.0)$ & $3.7 (3.2)$ & $13.9 (4.5)$ & $48.8 (20.3)$ & $20.5 (5.7)$ & $12.6 (5.6)$\\
Height & $12.8 (2.4)$ & $2.3 (2.3)$ & $6.4 (1.4)$ & $10.4 (4.2)$ & $7.4 (1.2)$ & $5.8 (1.6)$\\
Time $[s]$ & $0.000 (0.000)$ & $9.978 (0.332)$ & $1.689 (0.986)$ & $9.084 (0.876)$ & $4.353 (1.435)$ & $1.769 (1.310)$\\
Bound & NA & NA & NA & NA & $9.510 (0.518)$ & NA\\
\bottomrule
\end{tabular}
\end{table}
\begin{table}[h!]
\centering
\caption{Statistics for the Fertility data set \citep{gil2012predicting}. There are 100 examples, 9 features (9 real-valued, 0 ordinal and 0 nominal), and 2 classes.}
\vspace{6pt}
\small
\begin{tabular}{l@{\hspace{6pt}}c@{\hspace{6pt}}c@{\hspace{6pt}}c@{\hspace{6pt}}c@{\hspace{6pt}}c@{\hspace{6pt}}cl@{\hspace{6pt}}c@{\hspace{6pt}}c@{\hspace{6pt}}c@{\hspace{6pt}}c@{\hspace{6pt}}c@{\hspace{6pt}}cl@{\hspace{6pt}}c@{\hspace{6pt}}c@{\hspace{6pt}}c@{\hspace{6pt}}c@{\hspace{6pt}}c@{\hspace{6pt}}cl@{\hspace{6pt}}c@{\hspace{6pt}}c@{\hspace{6pt}}c@{\hspace{6pt}}c@{\hspace{6pt}}c@{\hspace{6pt}}cl@{\hspace{6pt}}c@{\hspace{6pt}}c@{\hspace{6pt}}c@{\hspace{6pt}}c@{\hspace{6pt}}c@{\hspace{6pt}}cl@{\hspace{6pt}}c@{\hspace{6pt}}c@{\hspace{6pt}}c@{\hspace{6pt}}c@{\hspace{6pt}}c@{\hspace{6pt}}cl@{\hspace{6pt}}c@{\hspace{6pt}}c@{\hspace{6pt}}c@{\hspace{6pt}}c@{\hspace{6pt}}c@{\hspace{6pt}}c}
\toprule
 & OG & CC & RE & KM & Ours & Oracle\\
\cmidrule{2-7}
Train acc. & $0.991 (0.005)$ & $0.880 (0.010)$ & $0.846 (0.050)$ & $0.905 (0.050)$ & $0.880 (0.010)$ & $0.868 (0.030)$\\
Val. acc. & NA & NA & $0.909 (0.077)$ & NA & NA & NA\\
Test acc. & $0.763 (0.096)$ & $\mathbf{0.880 (0.054)}$ & $0.845 (0.083)$ & $0.843 (0.100)$ & $\mathbf{0.880 (0.054)}$ & $0.899 (0.055)$\\
Leaves & $15.9 (2.7)$ & $1.0 (0.0)$ & $2.0 (1.3)$ & $3.6 (5.0)$ & $1.0 (0.0)$ & $1.7 (1.2)$\\
Height & $7.2 (0.9)$ & $0.0 (0.0)$ & $1.0 (1.3)$ & $1.8 (3.3)$ & $0.0 (0.0)$ & $0.7 (1.2)$\\
Time $[s]$ & $0.000 (0.000)$ & $0.387 (0.036)$ & $0.025 (0.012)$ & $0.456 (0.055)$ & $0.022 (0.007)$ & $0.036 (0.029)$\\
Bound & NA & NA & NA & NA & $3.996 (0.299)$ & NA\\
\bottomrule
\end{tabular}
\end{table}
\begin{table}[h!]
\centering
\caption{Statistics for the Habermans Survival data set \citep{haberman1976generalized}. There are 306 examples, 3 features (3 real-valued, 0 ordinal and 0 nominal), and 2 classes.}
\vspace{6pt}
\small
\begin{tabular}{l@{\hspace{6pt}}c@{\hspace{6pt}}c@{\hspace{6pt}}c@{\hspace{6pt}}c@{\hspace{6pt}}c@{\hspace{6pt}}cl@{\hspace{6pt}}c@{\hspace{6pt}}c@{\hspace{6pt}}c@{\hspace{6pt}}c@{\hspace{6pt}}c@{\hspace{6pt}}cl@{\hspace{6pt}}c@{\hspace{6pt}}c@{\hspace{6pt}}c@{\hspace{6pt}}c@{\hspace{6pt}}c@{\hspace{6pt}}cl@{\hspace{6pt}}c@{\hspace{6pt}}c@{\hspace{6pt}}c@{\hspace{6pt}}c@{\hspace{6pt}}c@{\hspace{6pt}}cl@{\hspace{6pt}}c@{\hspace{6pt}}c@{\hspace{6pt}}c@{\hspace{6pt}}c@{\hspace{6pt}}c@{\hspace{6pt}}cl@{\hspace{6pt}}c@{\hspace{6pt}}c@{\hspace{6pt}}c@{\hspace{6pt}}c@{\hspace{6pt}}c@{\hspace{6pt}}cl@{\hspace{6pt}}c@{\hspace{6pt}}c@{\hspace{6pt}}c@{\hspace{6pt}}c@{\hspace{6pt}}c@{\hspace{6pt}}c}
\toprule
 & OG & CC & RE & KM & Ours & Oracle\\
\cmidrule{2-7}
Train acc. & $0.954 (0.017)$ & $0.737 (0.012)$ & $0.737 (0.028)$ & $0.737 (0.012)$ & $0.832 (0.064)$ & $0.727 (0.053)$\\
Val. acc. & NA & NA & $0.819 (0.065)$ & NA & NA & NA\\
Test acc. & $0.654 (0.061)$ & $\mathbf{0.728 (0.066)}$ & $0.700 (0.072)$ & $\mathbf{0.728 (0.066)}$ & $0.716 (0.076)$ & $0.810 (0.068)$\\
Leaves & $75.0 (0.0)$ & $1.0 (0.0)$ & $5.2 (4.5)$ & $1.0 (0.0)$ & $18.0 (23.5)$ & $5.5 (5.5)$\\
Height & $14.5 (1.9)$ & $0.0 (0.0)$ & $2.7 (2.6)$ & $0.0 (0.0)$ & $6.8 (4.1)$ & $2.7 (2.6)$\\
Time $[s]$ & $0.000 (0.000)$ & $4.262 (0.209)$ & $1.308 (1.588)$ & $5.462 (0.806)$ & $1.733 (1.071)$ & $0.991 (1.402)$\\
Bound & NA & NA & NA & NA & $7.189 (0.403)$ & NA\\
\bottomrule
\end{tabular}
\end{table}
\begin{table}[h!]
\centering
\caption{Statistics for the Heart Disease Cleveland Processed data set \citep{detrano1989international}. There are 303 examples, 13 features (6 real-valued, 0 ordinal and 7 nominal), and 5 classes.}
\vspace{6pt}
\small
\begin{tabular}{l@{\hspace{6pt}}c@{\hspace{6pt}}c@{\hspace{6pt}}c@{\hspace{6pt}}c@{\hspace{6pt}}c@{\hspace{6pt}}cl@{\hspace{6pt}}c@{\hspace{6pt}}c@{\hspace{6pt}}c@{\hspace{6pt}}c@{\hspace{6pt}}c@{\hspace{6pt}}cl@{\hspace{6pt}}c@{\hspace{6pt}}c@{\hspace{6pt}}c@{\hspace{6pt}}c@{\hspace{6pt}}c@{\hspace{6pt}}cl@{\hspace{6pt}}c@{\hspace{6pt}}c@{\hspace{6pt}}c@{\hspace{6pt}}c@{\hspace{6pt}}c@{\hspace{6pt}}cl@{\hspace{6pt}}c@{\hspace{6pt}}c@{\hspace{6pt}}c@{\hspace{6pt}}c@{\hspace{6pt}}c@{\hspace{6pt}}cl@{\hspace{6pt}}c@{\hspace{6pt}}c@{\hspace{6pt}}c@{\hspace{6pt}}c@{\hspace{6pt}}c@{\hspace{6pt}}cl@{\hspace{6pt}}c@{\hspace{6pt}}c@{\hspace{6pt}}c@{\hspace{6pt}}c@{\hspace{6pt}}c@{\hspace{6pt}}c}
\toprule
 & OG & CC & RE & KM & Ours & Oracle\\
\cmidrule{2-7}
Train acc. & $0.964 (0.018)$ & $0.568 (0.079)$ & $0.601 (0.053)$ & $0.547 (0.017)$ & $0.723 (0.041)$ & $0.577 (0.043)$\\
Val. acc. & NA & NA & $0.660 (0.077)$ & NA & NA & NA\\
Test acc. & $0.468 (0.064)$ & $0.507 (0.099)$ & $0.500 (0.083)$ & $0.511 (0.099)$ & $\mathbf{0.515 (0.081)}$ & $0.638 (0.089)$\\
Leaves & $75.0 (0.0)$ & $3.7 (10.6)$ & $7.5 (4.0)$ & $1.0 (0.0)$ & $14.2 (4.6)$ & $6.6 (4.4)$\\
Height & $11.0 (0.9)$ & $0.7 (2.5)$ & $4.0 (2.1)$ & $0.0 (0.0)$ & $6.9 (1.2)$ & $3.2 (2.3)$\\
Time $[s]$ & $0.000 (0.000)$ & $4.081 (0.146)$ & $0.639 (0.492)$ & $5.860 (0.604)$ & $6.192 (1.946)$ & $0.696 (0.531)$\\
Bound & NA & NA & NA & NA & $11.133 (0.606)$ & NA\\
\bottomrule
\end{tabular}
\end{table}
\begin{table}[h!]
\centering
\caption{Statistics for the Image Segmentation data set . There are 210 examples, 19 features (19 real-valued, 0 ordinal and 0 nominal), and 7 classes.}
\vspace{6pt}
\small
\begin{tabular}{l@{\hspace{6pt}}c@{\hspace{6pt}}c@{\hspace{6pt}}c@{\hspace{6pt}}c@{\hspace{6pt}}c@{\hspace{6pt}}cl@{\hspace{6pt}}c@{\hspace{6pt}}c@{\hspace{6pt}}c@{\hspace{6pt}}c@{\hspace{6pt}}c@{\hspace{6pt}}cl@{\hspace{6pt}}c@{\hspace{6pt}}c@{\hspace{6pt}}c@{\hspace{6pt}}c@{\hspace{6pt}}c@{\hspace{6pt}}cl@{\hspace{6pt}}c@{\hspace{6pt}}c@{\hspace{6pt}}c@{\hspace{6pt}}c@{\hspace{6pt}}c@{\hspace{6pt}}cl@{\hspace{6pt}}c@{\hspace{6pt}}c@{\hspace{6pt}}c@{\hspace{6pt}}c@{\hspace{6pt}}c@{\hspace{6pt}}cl@{\hspace{6pt}}c@{\hspace{6pt}}c@{\hspace{6pt}}c@{\hspace{6pt}}c@{\hspace{6pt}}c@{\hspace{6pt}}cl@{\hspace{6pt}}c@{\hspace{6pt}}c@{\hspace{6pt}}c@{\hspace{6pt}}c@{\hspace{6pt}}c@{\hspace{6pt}}c}
\toprule
 & OG & CC & RE & KM & Ours & Oracle\\
\cmidrule{2-7}
Train acc. & $1.000 (0.000)$ & $0.947 (0.056)$ & $0.899 (0.048)$ & $0.996 (0.006)$ & $0.961 (0.012)$ & $0.889 (0.057)$\\
Val. acc. & NA & NA & $0.892 (0.068)$ & NA & NA & NA\\
Test acc. & $0.861 (0.059)$ & $0.840 (0.082)$ & $0.815 (0.074)$ & $\mathbf{0.864 (0.058)}$ & $0.856 (0.061)$ & $0.882 (0.051)$\\
Leaves & $18.4 (1.6)$ & $11.1 (3.2)$ & $8.5 (1.3)$ & $17.2 (1.8)$ & $10.6 (1.0)$ & $8.3 (1.3)$\\
Height & $9.9 (1.2)$ & $7.2 (1.5)$ & $6.4 (0.9)$ & $9.8 (1.2)$ & $7.3 (0.7)$ & $6.2 (0.8)$\\
Time $[s]$ & $0.000 (0.000)$ & $1.181 (0.047)$ & $0.059 (0.025)$ & $1.250 (0.092)$ & $0.294 (0.089)$ & $0.092 (0.030)$\\
Bound & NA & NA & NA & NA & $4.001 (0.227)$ & NA\\
\bottomrule
\end{tabular}
\end{table}
\begin{table}[h!]
\centering
\caption{Statistics for the Ionosphere data set \citep{sigillito1989classification}. There are 351 examples, 34 features (34 real-valued, 0 ordinal and 0 nominal), and 2 classes.}
\vspace{6pt}
\small
\begin{tabular}{l@{\hspace{6pt}}c@{\hspace{6pt}}c@{\hspace{6pt}}c@{\hspace{6pt}}c@{\hspace{6pt}}c@{\hspace{6pt}}cl@{\hspace{6pt}}c@{\hspace{6pt}}c@{\hspace{6pt}}c@{\hspace{6pt}}c@{\hspace{6pt}}c@{\hspace{6pt}}cl@{\hspace{6pt}}c@{\hspace{6pt}}c@{\hspace{6pt}}c@{\hspace{6pt}}c@{\hspace{6pt}}c@{\hspace{6pt}}cl@{\hspace{6pt}}c@{\hspace{6pt}}c@{\hspace{6pt}}c@{\hspace{6pt}}c@{\hspace{6pt}}c@{\hspace{6pt}}cl@{\hspace{6pt}}c@{\hspace{6pt}}c@{\hspace{6pt}}c@{\hspace{6pt}}c@{\hspace{6pt}}c@{\hspace{6pt}}cl@{\hspace{6pt}}c@{\hspace{6pt}}c@{\hspace{6pt}}c@{\hspace{6pt}}c@{\hspace{6pt}}c@{\hspace{6pt}}cl@{\hspace{6pt}}c@{\hspace{6pt}}c@{\hspace{6pt}}c@{\hspace{6pt}}c@{\hspace{6pt}}c@{\hspace{6pt}}c}
\toprule
 & OG & CC & RE & KM & Ours & Oracle\\
\cmidrule{2-7}
Train acc. & $1.000 (0.000)$ & $0.821 (0.149)$ & $0.920 (0.019)$ & $0.974 (0.015)$ & $0.961 (0.008)$ & $0.913 (0.022)$\\
Val. acc. & NA & NA & $0.949 (0.032)$ & NA & NA & NA\\
Test acc. & $0.883 (0.034)$ & $0.796 (0.119)$ & $0.882 (0.041)$ & $0.891 (0.035)$ & $\mathbf{0.896 (0.038)}$ & $0.934 (0.028)$\\
Leaves & $21.6 (1.9)$ & $5.3 (5.5)$ & $5.2 (1.9)$ & $12.7 (4.4)$ & $8.6 (1.5)$ & $5.4 (1.7)$\\
Height & $9.9 (1.7)$ & $2.8 (3.1)$ & $3.6 (0.9)$ & $7.2 (1.7)$ & $5.4 (0.8)$ & $3.8 (1.1)$\\
Time $[s]$ & $0.000 (0.000)$ & $2.734 (0.179)$ & $0.079 (0.033)$ & $1.647 (0.114)$ & $0.219 (0.270)$ & $0.126 (0.061)$\\
Bound & NA & NA & NA & NA & $2.505 (0.143)$ & NA\\
\bottomrule
\end{tabular}
\end{table}
\begin{table}[h!]
\centering
\caption{Statistics for the Iris data set \citep{fisher1936use}. There are 150 examples, 4 features (4 real-valued, 0 ordinal and 0 nominal), and 3 classes.}
\vspace{6pt}
\small
\begin{tabular}{l@{\hspace{6pt}}c@{\hspace{6pt}}c@{\hspace{6pt}}c@{\hspace{6pt}}c@{\hspace{6pt}}c@{\hspace{6pt}}cl@{\hspace{6pt}}c@{\hspace{6pt}}c@{\hspace{6pt}}c@{\hspace{6pt}}c@{\hspace{6pt}}c@{\hspace{6pt}}cl@{\hspace{6pt}}c@{\hspace{6pt}}c@{\hspace{6pt}}c@{\hspace{6pt}}c@{\hspace{6pt}}c@{\hspace{6pt}}cl@{\hspace{6pt}}c@{\hspace{6pt}}c@{\hspace{6pt}}c@{\hspace{6pt}}c@{\hspace{6pt}}c@{\hspace{6pt}}cl@{\hspace{6pt}}c@{\hspace{6pt}}c@{\hspace{6pt}}c@{\hspace{6pt}}c@{\hspace{6pt}}c@{\hspace{6pt}}cl@{\hspace{6pt}}c@{\hspace{6pt}}c@{\hspace{6pt}}c@{\hspace{6pt}}c@{\hspace{6pt}}c@{\hspace{6pt}}cl@{\hspace{6pt}}c@{\hspace{6pt}}c@{\hspace{6pt}}c@{\hspace{6pt}}c@{\hspace{6pt}}c@{\hspace{6pt}}c}
\toprule
 & OG & CC & RE & KM & Ours & Oracle\\
\cmidrule{2-7}
Train acc. & $1.000 (0.000)$ & $0.930 (0.116)$ & $0.957 (0.025)$ & $0.998 (0.006)$ & $0.980 (0.008)$ & $0.961 (0.011)$\\
Val. acc. & NA & NA & $0.975 (0.030)$ & NA & NA & NA\\
Test acc. & $0.929 (0.040)$ & $0.880 (0.121)$ & $0.927 (0.045)$ & $0.929 (0.040)$ & $\mathbf{0.936 (0.037)}$ & $0.960 (0.036)$\\
Leaves & $8.0 (0.9)$ & $4.2 (1.8)$ & $3.2 (0.4)$ & $7.6 (1.2)$ & $4.0 (0.6)$ & $3.2 (0.4)$\\
Height & $5.0 (0.9)$ & $3.0 (1.6)$ & $2.2 (0.4)$ & $4.9 (0.9)$ & $3.0 (0.6)$ & $2.2 (0.4)$\\
Time $[s]$ & $0.000 (0.000)$ & $0.350 (0.023)$ & $0.009 (0.004)$ & $0.306 (0.026)$ & $0.017 (0.005)$ & $0.013 (0.004)$\\
Bound & NA & NA & NA & NA & $1.688 (0.179)$ & NA\\
\bottomrule
\end{tabular}
\end{table}
\begin{table}[h!]
\centering
\caption{Statistics for the Mushroom data set \citep{lincoff1997field}. There are 8124 examples, 22 features (0 real-valued, 0 ordinal and 22 nominal), and 2 classes.}
\vspace{6pt}
\small
\begin{tabular}{l@{\hspace{6pt}}c@{\hspace{6pt}}c@{\hspace{6pt}}c@{\hspace{6pt}}c@{\hspace{6pt}}c@{\hspace{6pt}}cl@{\hspace{6pt}}c@{\hspace{6pt}}c@{\hspace{6pt}}c@{\hspace{6pt}}c@{\hspace{6pt}}c@{\hspace{6pt}}cl@{\hspace{6pt}}c@{\hspace{6pt}}c@{\hspace{6pt}}c@{\hspace{6pt}}c@{\hspace{6pt}}c@{\hspace{6pt}}cl@{\hspace{6pt}}c@{\hspace{6pt}}c@{\hspace{6pt}}c@{\hspace{6pt}}c@{\hspace{6pt}}c@{\hspace{6pt}}cl@{\hspace{6pt}}c@{\hspace{6pt}}c@{\hspace{6pt}}c@{\hspace{6pt}}c@{\hspace{6pt}}c@{\hspace{6pt}}cl@{\hspace{6pt}}c@{\hspace{6pt}}c@{\hspace{6pt}}c@{\hspace{6pt}}c@{\hspace{6pt}}c@{\hspace{6pt}}cl@{\hspace{6pt}}c@{\hspace{6pt}}c@{\hspace{6pt}}c@{\hspace{6pt}}c@{\hspace{6pt}}c@{\hspace{6pt}}c}
\toprule
 & OG & CC & RE & KM & Ours & Oracle\\
\cmidrule{2-7}
Train acc. & $1.000 (0.000)$ & $0.996 (0.002)$ & $1.000 (0.000)$ & $1.000 (0.000)$ & $1.000 (0.000)$ & $1.000 (0.001)$\\
Val. acc. & NA & NA & $1.000 (0.000)$ & NA & NA & NA\\
Test acc. & $\mathbf{1.000 (0.000)}$ & $0.996 (0.002)$ & $\mathbf{1.000 (0.001)}$ & $\mathbf{1.000 (0.000)}$ & $\mathbf{1.000 (0.000)}$ & $1.000 (0.000)$\\
Leaves & $10.0 (0.2)$ & $7.9 (1.1)$ & $8.0 (0.3)$ & $10.0 (0.2)$ & $8.0 (0.0)$ & $7.9 (0.3)$\\
Height & $5.0 (0.2)$ & $4.0 (0.2)$ & $5.0 (0.2)$ & $5.0 (0.2)$ & $5.0 (0.0)$ & $4.9 (0.3)$\\
Time $[s]$ & $0.000 (0.000)$ & $51.102 (1.097)$ & $0.098 (0.035)$ & $14.799 (0.654)$ & $0.701 (0.081)$ & $0.084 (0.012)$\\
Bound & NA & NA & NA & NA & $0.066 (0.000)$ & NA\\
\bottomrule
\end{tabular}
\end{table}
\begin{table}[h!]
\centering
\caption{Statistics for the Parkinson data set \citep{little2007exploiting}. There are 195 examples, 22 features (22 real-valued, 0 ordinal and 0 nominal), and 2 classes.}
\vspace{6pt}
\small
\begin{tabular}{l@{\hspace{6pt}}c@{\hspace{6pt}}c@{\hspace{6pt}}c@{\hspace{6pt}}c@{\hspace{6pt}}c@{\hspace{6pt}}cl@{\hspace{6pt}}c@{\hspace{6pt}}c@{\hspace{6pt}}c@{\hspace{6pt}}c@{\hspace{6pt}}c@{\hspace{6pt}}cl@{\hspace{6pt}}c@{\hspace{6pt}}c@{\hspace{6pt}}c@{\hspace{6pt}}c@{\hspace{6pt}}c@{\hspace{6pt}}cl@{\hspace{6pt}}c@{\hspace{6pt}}c@{\hspace{6pt}}c@{\hspace{6pt}}c@{\hspace{6pt}}c@{\hspace{6pt}}cl@{\hspace{6pt}}c@{\hspace{6pt}}c@{\hspace{6pt}}c@{\hspace{6pt}}c@{\hspace{6pt}}c@{\hspace{6pt}}cl@{\hspace{6pt}}c@{\hspace{6pt}}c@{\hspace{6pt}}c@{\hspace{6pt}}c@{\hspace{6pt}}c@{\hspace{6pt}}cl@{\hspace{6pt}}c@{\hspace{6pt}}c@{\hspace{6pt}}c@{\hspace{6pt}}c@{\hspace{6pt}}c@{\hspace{6pt}}c}
\toprule
 & OG & CC & RE & KM & Ours & Oracle\\
\cmidrule{2-7}
Train acc. & $1.000 (0.000)$ & $0.919 (0.068)$ & $0.878 (0.050)$ & $0.957 (0.086)$ & $0.968 (0.015)$ & $0.885 (0.058)$\\
Val. acc. & NA & NA & $0.901 (0.050)$ & NA & NA & NA\\
Test acc. & $0.850 (0.061)$ & $0.822 (0.069)$ & $0.837 (0.086)$ & $0.825 (0.080)$ & $\mathbf{0.852 (0.066)}$ & $0.908 (0.055)$\\
Leaves & $13.2 (1.9)$ & $4.8 (2.9)$ & $3.7 (1.4)$ & $9.6 (4.4)$ & $6.9 (1.3)$ & $4.1 (1.5)$\\
Height & $5.5 (0.7)$ & $2.6 (1.5)$ & $2.3 (1.0)$ & $4.3 (2.0)$ & $3.6 (0.6)$ & $2.5 (1.0)$\\
Time $[s]$ & $0.000 (0.000)$ & $0.825 (0.071)$ & $0.024 (0.013)$ & $0.602 (0.058)$ & $0.057 (0.027)$ & $0.037 (0.017)$\\
Bound & NA & NA & NA & NA & $2.665 (0.331)$ & NA\\
\bottomrule
\end{tabular}
\end{table}
\begin{table}[h!]
\centering
\caption{Statistics for the Planning Relax data set \citep{bhatt2012planning}. There are 182 examples, 12 features (12 real-valued, 0 ordinal and 0 nominal), and 2 classes.}
\vspace{6pt}
\small
\begin{tabular}{l@{\hspace{6pt}}c@{\hspace{6pt}}c@{\hspace{6pt}}c@{\hspace{6pt}}c@{\hspace{6pt}}c@{\hspace{6pt}}cl@{\hspace{6pt}}c@{\hspace{6pt}}c@{\hspace{6pt}}c@{\hspace{6pt}}c@{\hspace{6pt}}c@{\hspace{6pt}}cl@{\hspace{6pt}}c@{\hspace{6pt}}c@{\hspace{6pt}}c@{\hspace{6pt}}c@{\hspace{6pt}}c@{\hspace{6pt}}cl@{\hspace{6pt}}c@{\hspace{6pt}}c@{\hspace{6pt}}c@{\hspace{6pt}}c@{\hspace{6pt}}c@{\hspace{6pt}}cl@{\hspace{6pt}}c@{\hspace{6pt}}c@{\hspace{6pt}}c@{\hspace{6pt}}c@{\hspace{6pt}}c@{\hspace{6pt}}cl@{\hspace{6pt}}c@{\hspace{6pt}}c@{\hspace{6pt}}c@{\hspace{6pt}}c@{\hspace{6pt}}c@{\hspace{6pt}}cl@{\hspace{6pt}}c@{\hspace{6pt}}c@{\hspace{6pt}}c@{\hspace{6pt}}c@{\hspace{6pt}}c@{\hspace{6pt}}c}
\toprule
 & OG & CC & RE & KM & Ours & Oracle\\
\cmidrule{2-7}
Train acc. & $1.000 (0.000)$ & $0.769 (0.093)$ & $0.691 (0.064)$ & $0.712 (0.016)$ & $1.000 (0.000)$ & $0.679 (0.048)$\\
Val. acc. & NA & NA & $0.739 (0.059)$ & NA & NA & NA\\
Test acc. & $0.567 (0.085)$ & $0.684 (0.084)$ & $0.696 (0.105)$ & $\mathbf{0.726 (0.092)}$ & $0.570 (0.086)$ & $0.790 (0.075)$\\
Leaves & $32.2 (2.4)$ & $5.4 (8.4)$ & $2.4 (2.1)$ & $1.0 (0.0)$ & $32.0 (2.5)$ & $3.2 (2.8)$\\
Height & $12.7 (2.7)$ & $2.6 (4.6)$ & $1.2 (1.8)$ & $0.0 (0.0)$ & $12.5 (2.8)$ & $1.8 (2.0)$\\
Time $[s]$ & $0.000 (0.000)$ & $1.549 (0.101)$ & $0.057 (0.055)$ & $1.502 (0.133)$ & $0.152 (0.047)$ & $0.112 (0.128)$\\
Bound & NA & NA & NA & NA & $6.437 (0.059)$ & NA\\
\bottomrule
\end{tabular}
\end{table}
\begin{table}[h!]
\centering
\caption{Statistics for the Qsar Biodegradation data set \citep{mansouri2013quantitative}. There are 1055 examples, 41 features (41 real-valued, 0 ordinal and 0 nominal), and 2 classes.}
\vspace{6pt}
\small
\begin{tabular}{l@{\hspace{6pt}}c@{\hspace{6pt}}c@{\hspace{6pt}}c@{\hspace{6pt}}c@{\hspace{6pt}}c@{\hspace{6pt}}cl@{\hspace{6pt}}c@{\hspace{6pt}}c@{\hspace{6pt}}c@{\hspace{6pt}}c@{\hspace{6pt}}c@{\hspace{6pt}}cl@{\hspace{6pt}}c@{\hspace{6pt}}c@{\hspace{6pt}}c@{\hspace{6pt}}c@{\hspace{6pt}}c@{\hspace{6pt}}cl@{\hspace{6pt}}c@{\hspace{6pt}}c@{\hspace{6pt}}c@{\hspace{6pt}}c@{\hspace{6pt}}c@{\hspace{6pt}}cl@{\hspace{6pt}}c@{\hspace{6pt}}c@{\hspace{6pt}}c@{\hspace{6pt}}c@{\hspace{6pt}}c@{\hspace{6pt}}cl@{\hspace{6pt}}c@{\hspace{6pt}}c@{\hspace{6pt}}c@{\hspace{6pt}}c@{\hspace{6pt}}c@{\hspace{6pt}}cl@{\hspace{6pt}}c@{\hspace{6pt}}c@{\hspace{6pt}}c@{\hspace{6pt}}c@{\hspace{6pt}}c@{\hspace{6pt}}c}
\toprule
 & OG & CC & RE & KM & Ours & Oracle\\
\cmidrule{2-7}
Train acc. & $0.945 (0.012)$ & $0.832 (0.053)$ & $0.856 (0.021)$ & $0.903 (0.019)$ & $0.884 (0.019)$ & $0.844 (0.023)$\\
Val. acc. & NA & NA & $0.888 (0.025)$ & NA & NA & NA\\
Test acc. & $0.800 (0.030)$ & $0.790 (0.058)$ & $0.805 (0.030)$ & $0.813 (0.028)$ & $\mathbf{0.822 (0.025)}$ & $0.879 (0.021)$\\
Leaves & $75.0 (0.0)$ & $9.2 (10.5)$ & $15.9 (3.6)$ & $32.0 (10.5)$ & $17.0 (4.5)$ & $15.1 (3.7)$\\
Height & $13.8 (1.9)$ & $4.5 (2.9)$ & $6.9 (1.2)$ & $9.6 (1.4)$ & $6.6 (1.2)$ & $6.2 (1.0)$\\
Time $[s]$ & $0.000 (0.000)$ & $13.301 (0.496)$ & $2.854 (1.213)$ & $10.114 (0.797)$ & $3.160 (0.931)$ & $2.730 (1.269)$\\
Bound & NA & NA & NA & NA & $4.570 (0.367)$ & NA\\
\bottomrule
\end{tabular}
\end{table}
\begin{table}[h!]
\centering
\caption{Statistics for the Seeds data set \citep{charytanowicz2010complete}. There are 210 examples, 7 features (7 real-valued, 0 ordinal and 0 nominal), and 3 classes.}
\vspace{6pt}
\small
\begin{tabular}{l@{\hspace{6pt}}c@{\hspace{6pt}}c@{\hspace{6pt}}c@{\hspace{6pt}}c@{\hspace{6pt}}c@{\hspace{6pt}}cl@{\hspace{6pt}}c@{\hspace{6pt}}c@{\hspace{6pt}}c@{\hspace{6pt}}c@{\hspace{6pt}}c@{\hspace{6pt}}cl@{\hspace{6pt}}c@{\hspace{6pt}}c@{\hspace{6pt}}c@{\hspace{6pt}}c@{\hspace{6pt}}c@{\hspace{6pt}}cl@{\hspace{6pt}}c@{\hspace{6pt}}c@{\hspace{6pt}}c@{\hspace{6pt}}c@{\hspace{6pt}}c@{\hspace{6pt}}cl@{\hspace{6pt}}c@{\hspace{6pt}}c@{\hspace{6pt}}c@{\hspace{6pt}}c@{\hspace{6pt}}c@{\hspace{6pt}}cl@{\hspace{6pt}}c@{\hspace{6pt}}c@{\hspace{6pt}}c@{\hspace{6pt}}c@{\hspace{6pt}}c@{\hspace{6pt}}cl@{\hspace{6pt}}c@{\hspace{6pt}}c@{\hspace{6pt}}c@{\hspace{6pt}}c@{\hspace{6pt}}c@{\hspace{6pt}}c}
\toprule
 & OG & CC & RE & KM & Ours & Oracle\\
\cmidrule{2-7}
Train acc. & $1.000 (0.000)$ & $0.936 (0.086)$ & $0.918 (0.037)$ & $0.987 (0.013)$ & $0.968 (0.008)$ & $0.934 (0.018)$\\
Val. acc. & NA & NA & $0.968 (0.039)$ & NA & NA & NA\\
Test acc. & $\mathbf{0.911 (0.048)}$ & $0.877 (0.091)$ & $0.905 (0.041)$ & $0.909 (0.046)$ & $\mathbf{0.911 (0.046)}$ & $0.949 (0.031)$\\
Leaves & $13.7 (2.1)$ & $5.2 (2.2)$ & $4.1 (1.1)$ & $10.1 (3.0)$ & $5.6 (0.7)$ & $4.2 (0.8)$\\
Height & $6.4 (0.6)$ & $4.0 (1.8)$ & $3.0 (0.9)$ & $5.9 (1.0)$ & $4.4 (0.7)$ & $3.0 (0.7)$\\
Time $[s]$ & $0.000 (0.000)$ & $0.709 (0.051)$ & $0.033 (0.014)$ & $0.578 (0.044)$ & $0.063 (0.019)$ & $0.040 (0.018)$\\
Bound & NA & NA & NA & NA & $2.206 (0.156)$ & NA\\
\bottomrule
\end{tabular}
\end{table}
\begin{table}[h!]
\centering
\caption{Statistics for the Spambase data set . There are 4601 examples, 57 features (57 real-valued, 0 ordinal and 0 nominal), and 2 classes.}
\vspace{6pt}
\small
\begin{tabular}{l@{\hspace{6pt}}c@{\hspace{6pt}}c@{\hspace{6pt}}c@{\hspace{6pt}}c@{\hspace{6pt}}c@{\hspace{6pt}}cl@{\hspace{6pt}}c@{\hspace{6pt}}c@{\hspace{6pt}}c@{\hspace{6pt}}c@{\hspace{6pt}}c@{\hspace{6pt}}cl@{\hspace{6pt}}c@{\hspace{6pt}}c@{\hspace{6pt}}c@{\hspace{6pt}}c@{\hspace{6pt}}c@{\hspace{6pt}}cl@{\hspace{6pt}}c@{\hspace{6pt}}c@{\hspace{6pt}}c@{\hspace{6pt}}c@{\hspace{6pt}}c@{\hspace{6pt}}cl@{\hspace{6pt}}c@{\hspace{6pt}}c@{\hspace{6pt}}c@{\hspace{6pt}}c@{\hspace{6pt}}c@{\hspace{6pt}}cl@{\hspace{6pt}}c@{\hspace{6pt}}c@{\hspace{6pt}}c@{\hspace{6pt}}c@{\hspace{6pt}}c@{\hspace{6pt}}cl@{\hspace{6pt}}c@{\hspace{6pt}}c@{\hspace{6pt}}c@{\hspace{6pt}}c@{\hspace{6pt}}c@{\hspace{6pt}}c}
\toprule
 & OG & CC & RE & KM & Ours & Oracle\\
\cmidrule{2-7}
Train acc. & $0.884 (0.034)$ & $0.856 (0.042)$ & $0.885 (0.017)$ & $0.878 (0.034)$ & $0.872 (0.034)$ & $0.880 (0.024)$\\
Val. acc. & NA & NA & $0.889 (0.020)$ & NA & NA & NA\\
Test acc. & $0.858 (0.036)$ & $0.847 (0.045)$ & $\mathbf{0.882 (0.018)}$ & $0.859 (0.036)$ & $0.861 (0.036)$ & $0.885 (0.022)$\\
Leaves & $75.0 (0.0)$ & $7.8 (6.3)$ & $14.6 (4.1)$ & $42.4 (22.1)$ & $11.5 (4.2)$ & $13.2 (4.2)$\\
Height & $23.6 (4.1)$ & $4.0 (2.5)$ & $7.8 (2.8)$ & $16.7 (7.5)$ & $5.1 (1.3)$ & $7.4 (2.2)$\\
Time $[s]$ & $0.000 (0.000)$ & $76.087 (7.525)$ & $8.851 (4.972)$ & $39.681 (3.144)$ & $7.480 (2.428)$ & $8.196 (5.506)$\\
Bound & NA & NA & NA & NA & $4.151 (1.016)$ & NA\\
\bottomrule
\end{tabular}
\end{table}
\begin{table}[h!]
\centering
\caption{Statistics for the Statlog German data set \citep{hofmann94statloggerman}. There are 1000 examples, 20 features (6 real-valued, 2 ordinal and 12 nominal), and 2 classes.}
\vspace{6pt}
\small
\begin{tabular}{l@{\hspace{6pt}}c@{\hspace{6pt}}c@{\hspace{6pt}}c@{\hspace{6pt}}c@{\hspace{6pt}}c@{\hspace{6pt}}cl@{\hspace{6pt}}c@{\hspace{6pt}}c@{\hspace{6pt}}c@{\hspace{6pt}}c@{\hspace{6pt}}c@{\hspace{6pt}}cl@{\hspace{6pt}}c@{\hspace{6pt}}c@{\hspace{6pt}}c@{\hspace{6pt}}c@{\hspace{6pt}}c@{\hspace{6pt}}cl@{\hspace{6pt}}c@{\hspace{6pt}}c@{\hspace{6pt}}c@{\hspace{6pt}}c@{\hspace{6pt}}c@{\hspace{6pt}}cl@{\hspace{6pt}}c@{\hspace{6pt}}c@{\hspace{6pt}}c@{\hspace{6pt}}c@{\hspace{6pt}}c@{\hspace{6pt}}cl@{\hspace{6pt}}c@{\hspace{6pt}}c@{\hspace{6pt}}c@{\hspace{6pt}}c@{\hspace{6pt}}c@{\hspace{6pt}}cl@{\hspace{6pt}}c@{\hspace{6pt}}c@{\hspace{6pt}}c@{\hspace{6pt}}c@{\hspace{6pt}}c@{\hspace{6pt}}c}
\toprule
 & OG & CC & RE & KM & Ours & Oracle\\
\cmidrule{2-7}
Train acc. & $0.812 (0.014)$ & $0.700 (0.006)$ & $0.712 (0.023)$ & $0.705 (0.027)$ & $0.741 (0.021)$ & $0.707 (0.019)$\\
Val. acc. & NA & NA & $0.758 (0.044)$ & NA & NA & NA\\
Test acc. & $0.655 (0.044)$ & $\mathbf{0.702 (0.032)}$ & $0.680 (0.040)$ & $\mathbf{0.702 (0.031)}$ & $0.697 (0.029)$ & $0.740 (0.032)$\\
Leaves & $75.0 (0.0)$ & $1.0 (0.0)$ & $8.6 (5.4)$ & $2.8 (8.8)$ & $6.2 (3.5)$ & $5.1 (5.1)$\\
Height & $14.1 (1.4)$ & $0.0 (0.0)$ & $4.8 (3.2)$ & $0.4 (2.2)$ & $4.4 (2.6)$ & $2.8 (3.1)$\\
Time $[s]$ & $0.000 (0.000)$ & $8.820 (0.284)$ & $1.175 (1.292)$ & $8.484 (0.746)$ & $1.441 (0.524)$ & $0.625 (0.858)$\\
Bound & NA & NA & NA & NA & $8.395 (0.489)$ & NA\\
\bottomrule
\end{tabular}
\end{table}
\begin{table}[h!]
\centering
\caption{Statistics for the Vertebral Column3 C data set \citep{berthonnaud2005analysis}. There are 310 examples, 6 features (6 real-valued, 0 ordinal and 0 nominal), and 3 classes.}
\vspace{6pt}
\small
\begin{tabular}{l@{\hspace{6pt}}c@{\hspace{6pt}}c@{\hspace{6pt}}c@{\hspace{6pt}}c@{\hspace{6pt}}c@{\hspace{6pt}}cl@{\hspace{6pt}}c@{\hspace{6pt}}c@{\hspace{6pt}}c@{\hspace{6pt}}c@{\hspace{6pt}}c@{\hspace{6pt}}cl@{\hspace{6pt}}c@{\hspace{6pt}}c@{\hspace{6pt}}c@{\hspace{6pt}}c@{\hspace{6pt}}c@{\hspace{6pt}}cl@{\hspace{6pt}}c@{\hspace{6pt}}c@{\hspace{6pt}}c@{\hspace{6pt}}c@{\hspace{6pt}}c@{\hspace{6pt}}cl@{\hspace{6pt}}c@{\hspace{6pt}}c@{\hspace{6pt}}c@{\hspace{6pt}}c@{\hspace{6pt}}c@{\hspace{6pt}}cl@{\hspace{6pt}}c@{\hspace{6pt}}c@{\hspace{6pt}}c@{\hspace{6pt}}c@{\hspace{6pt}}c@{\hspace{6pt}}cl@{\hspace{6pt}}c@{\hspace{6pt}}c@{\hspace{6pt}}c@{\hspace{6pt}}c@{\hspace{6pt}}c@{\hspace{6pt}}c}
\toprule
 & OG & CC & RE & KM & Ours & Oracle\\
\cmidrule{2-7}
Train acc. & $1.000 (0.000)$ & $0.779 (0.167)$ & $0.843 (0.046)$ & $0.987 (0.009)$ & $0.923 (0.016)$ & $0.820 (0.038)$\\
Val. acc. & NA & NA & $0.893 (0.051)$ & NA & NA & NA\\
Test acc. & $0.775 (0.077)$ & $0.730 (0.137)$ & $0.793 (0.082)$ & $0.783 (0.076)$ & $\mathbf{0.806 (0.066)}$ & $0.878 (0.047)$\\
Leaves & $35.2 (3.5)$ & $5.8 (7.4)$ & $5.9 (2.3)$ & $29.0 (4.3)$ & $10.2 (2.3)$ & $5.2 (2.1)$\\
Height & $10.0 (1.5)$ & $2.8 (3.2)$ & $4.0 (1.6)$ & $9.9 (1.6)$ & $6.0 (1.2)$ & $3.5 (1.3)$\\
Time $[s]$ & $0.000 (0.000)$ & $1.624 (0.129)$ & $0.211 (0.118)$ & $1.650 (0.191)$ & $0.640 (0.265)$ & $0.280 (0.192)$\\
Bound & NA & NA & NA & NA & $3.949 (0.259)$ & NA\\
\bottomrule
\end{tabular}
\end{table}
\begin{table}[h!]
\centering
\caption{Statistics for the Wall Following Robot24 data set \citep{freire2009short}. There are 5456 examples, 24 features (24 real-valued, 0 ordinal and 0 nominal), and 4 classes.}
\vspace{6pt}
\small
\begin{tabular}{l@{\hspace{6pt}}c@{\hspace{6pt}}c@{\hspace{6pt}}c@{\hspace{6pt}}c@{\hspace{6pt}}c@{\hspace{6pt}}cl@{\hspace{6pt}}c@{\hspace{6pt}}c@{\hspace{6pt}}c@{\hspace{6pt}}c@{\hspace{6pt}}c@{\hspace{6pt}}cl@{\hspace{6pt}}c@{\hspace{6pt}}c@{\hspace{6pt}}c@{\hspace{6pt}}c@{\hspace{6pt}}c@{\hspace{6pt}}cl@{\hspace{6pt}}c@{\hspace{6pt}}c@{\hspace{6pt}}c@{\hspace{6pt}}c@{\hspace{6pt}}c@{\hspace{6pt}}cl@{\hspace{6pt}}c@{\hspace{6pt}}c@{\hspace{6pt}}c@{\hspace{6pt}}c@{\hspace{6pt}}c@{\hspace{6pt}}cl@{\hspace{6pt}}c@{\hspace{6pt}}c@{\hspace{6pt}}c@{\hspace{6pt}}c@{\hspace{6pt}}c@{\hspace{6pt}}cl@{\hspace{6pt}}c@{\hspace{6pt}}c@{\hspace{6pt}}c@{\hspace{6pt}}c@{\hspace{6pt}}c@{\hspace{6pt}}c}
\toprule
 & OG & CC & RE & KM & Ours & Oracle\\
\cmidrule{2-7}
Train acc. & $1.000 (0.000)$ & $0.999 (0.001)$ & $0.996 (0.002)$ & $1.000 (0.000)$ & $0.997 (0.001)$ & $0.996 (0.002)$\\
Val. acc. & NA & NA & $0.996 (0.002)$ & NA & NA & NA\\
Test acc. & $\mathbf{0.995 (0.003)}$ & $\mathbf{0.995 (0.003)}$ & $0.993 (0.005)$ & $\mathbf{0.995 (0.003)}$ & $0.994 (0.003)$ & $0.996 (0.002)$\\
Leaves & $29.8 (2.6)$ & $24.6 (4.0)$ & $17.2 (2.1)$ & $29.5 (2.5)$ & $18.0 (1.5)$ & $17.6 (2.3)$\\
Height & $9.5 (0.9)$ & $9.3 (1.0)$ & $8.0 (0.8)$ & $9.5 (0.9)$ & $8.9 (0.9)$ & $8.2 (1.0)$\\
Time $[s]$ & $0.000 (0.000)$ & $35.002 (1.172)$ & $1.352 (0.461)$ & $16.744 (0.516)$ & $6.697 (1.565)$ & $1.568 (0.420)$\\
Bound & NA & NA & NA & NA & $0.307 (0.020)$ & NA\\
\bottomrule
\end{tabular}
\end{table}
\begin{table}[h!]
\centering
\caption{Statistics for the Wine data set \citep{aeberhard1994comparative}. There are 178 examples, 13 features (13 real-valued, 0 ordinal and 0 nominal), and 3 classes.}
\vspace{6pt}
\small
\begin{tabular}{l@{\hspace{6pt}}c@{\hspace{6pt}}c@{\hspace{6pt}}c@{\hspace{6pt}}c@{\hspace{6pt}}c@{\hspace{6pt}}cl@{\hspace{6pt}}c@{\hspace{6pt}}c@{\hspace{6pt}}c@{\hspace{6pt}}c@{\hspace{6pt}}c@{\hspace{6pt}}cl@{\hspace{6pt}}c@{\hspace{6pt}}c@{\hspace{6pt}}c@{\hspace{6pt}}c@{\hspace{6pt}}c@{\hspace{6pt}}cl@{\hspace{6pt}}c@{\hspace{6pt}}c@{\hspace{6pt}}c@{\hspace{6pt}}c@{\hspace{6pt}}c@{\hspace{6pt}}cl@{\hspace{6pt}}c@{\hspace{6pt}}c@{\hspace{6pt}}c@{\hspace{6pt}}c@{\hspace{6pt}}c@{\hspace{6pt}}cl@{\hspace{6pt}}c@{\hspace{6pt}}c@{\hspace{6pt}}c@{\hspace{6pt}}c@{\hspace{6pt}}c@{\hspace{6pt}}cl@{\hspace{6pt}}c@{\hspace{6pt}}c@{\hspace{6pt}}c@{\hspace{6pt}}c@{\hspace{6pt}}c@{\hspace{6pt}}c}
\toprule
 & OG & CC & RE & KM & Ours & Oracle\\
\cmidrule{2-7}
Train acc. & $1.000 (0.000)$ & $0.977 (0.012)$ & $0.933 (0.035)$ & $1.000 (0.000)$ & $0.977 (0.009)$ & $0.922 (0.045)$\\
Val. acc. & NA & NA & $0.910 (0.067)$ & NA & NA & NA\\
Test acc. & $\mathbf{0.924 (0.039)}$ & $0.904 (0.040)$ & $0.859 (0.084)$ & $\mathbf{0.924 (0.039)}$ & $0.911 (0.035)$ & $0.945 (0.030)$\\
Leaves & $9.9 (1.9)$ & $6.3 (1.5)$ & $4.2 (1.0)$ & $9.9 (1.9)$ & $6.0 (1.1)$ & $4.8 (1.0)$\\
Height & $4.8 (0.7)$ & $3.2 (0.5)$ & $2.7 (0.8)$ & $4.8 (0.7)$ & $3.1 (0.3)$ & $2.8 (0.7)$\\
Time $[s]$ & $0.000 (0.000)$ & $0.567 (0.048)$ & $0.013 (0.008)$ & $0.405 (0.036)$ & $0.030 (0.011)$ & $0.018 (0.007)$\\
Bound & NA & NA & NA & NA & $2.264 (0.382)$ & NA\\
\bottomrule
\end{tabular}
\end{table}
\begin{table}[h!]
\centering
\caption{Statistics for the Yeast data set \citep{horton1996probabilistic}. There are 1484 examples, 8 features (8 real-valued, 0 ordinal and 0 nominal), and 10 classes.}
\vspace{6pt}
\small
\begin{tabular}{l@{\hspace{6pt}}c@{\hspace{6pt}}c@{\hspace{6pt}}c@{\hspace{6pt}}c@{\hspace{6pt}}c@{\hspace{6pt}}cl@{\hspace{6pt}}c@{\hspace{6pt}}c@{\hspace{6pt}}c@{\hspace{6pt}}c@{\hspace{6pt}}c@{\hspace{6pt}}cl@{\hspace{6pt}}c@{\hspace{6pt}}c@{\hspace{6pt}}c@{\hspace{6pt}}c@{\hspace{6pt}}c@{\hspace{6pt}}cl@{\hspace{6pt}}c@{\hspace{6pt}}c@{\hspace{6pt}}c@{\hspace{6pt}}c@{\hspace{6pt}}c@{\hspace{6pt}}cl@{\hspace{6pt}}c@{\hspace{6pt}}c@{\hspace{6pt}}c@{\hspace{6pt}}c@{\hspace{6pt}}c@{\hspace{6pt}}cl@{\hspace{6pt}}c@{\hspace{6pt}}c@{\hspace{6pt}}c@{\hspace{6pt}}c@{\hspace{6pt}}c@{\hspace{6pt}}cl@{\hspace{6pt}}c@{\hspace{6pt}}c@{\hspace{6pt}}c@{\hspace{6pt}}c@{\hspace{6pt}}c@{\hspace{6pt}}c}
\toprule
 & OG & CC & RE & KM & Ours & Oracle\\
\cmidrule{2-7}
Train acc. & $0.546 (0.031)$ & $0.360 (0.075)$ & $0.512 (0.012)$ & $0.525 (0.029)$ & $0.500 (0.034)$ & $0.491 (0.033)$\\
Val. acc. & NA & NA & $0.506 (0.040)$ & NA & NA & NA\\
Test acc. & $0.457 (0.043)$ & $0.352 (0.069)$ & $\mathbf{0.490 (0.033)}$ & $0.469 (0.043)$ & $0.478 (0.040)$ & $0.497 (0.042)$\\
Leaves & $75.0 (0.0)$ & $1.9 (1.7)$ & $9.2 (3.2)$ & $36.2 (9.6)$ & $8.0 (2.7)$ & $10.4 (3.7)$\\
Height & $13.8 (1.3)$ & $0.8 (1.3)$ & $4.8 (1.2)$ & $11.2 (1.8)$ & $4.4 (1.4)$ & $5.3 (1.2)$\\
Time $[s]$ & $0.000 (0.000)$ & $8.115 (0.376)$ & $1.601 (1.199)$ & $14.049 (0.875)$ & $17.647 (2.780)$ & $2.036 (1.432)$\\
Bound & NA & NA & NA & NA & $15.869 (0.967)$ & NA\\
\bottomrule
\end{tabular}
\end{table}
\begin{table}[h!]
\centering
\caption{Statistics for the Zoo data set \citep{forsyth90zoo}. There are 101 examples, 16 features (0 real-valued, 1 ordinal and 15 nominal), and 7 classes.}
\vspace{6pt}
\small
\begin{tabular}{l@{\hspace{6pt}}c@{\hspace{6pt}}c@{\hspace{6pt}}c@{\hspace{6pt}}c@{\hspace{6pt}}c@{\hspace{6pt}}cl@{\hspace{6pt}}c@{\hspace{6pt}}c@{\hspace{6pt}}c@{\hspace{6pt}}c@{\hspace{6pt}}c@{\hspace{6pt}}cl@{\hspace{6pt}}c@{\hspace{6pt}}c@{\hspace{6pt}}c@{\hspace{6pt}}c@{\hspace{6pt}}c@{\hspace{6pt}}cl@{\hspace{6pt}}c@{\hspace{6pt}}c@{\hspace{6pt}}c@{\hspace{6pt}}c@{\hspace{6pt}}c@{\hspace{6pt}}cl@{\hspace{6pt}}c@{\hspace{6pt}}c@{\hspace{6pt}}c@{\hspace{6pt}}c@{\hspace{6pt}}c@{\hspace{6pt}}cl@{\hspace{6pt}}c@{\hspace{6pt}}c@{\hspace{6pt}}c@{\hspace{6pt}}c@{\hspace{6pt}}c@{\hspace{6pt}}cl@{\hspace{6pt}}c@{\hspace{6pt}}c@{\hspace{6pt}}c@{\hspace{6pt}}c@{\hspace{6pt}}c@{\hspace{6pt}}c}
\toprule
 & OG & CC & RE & KM & Ours & Oracle\\
\cmidrule{2-7}
Train acc. & $1.000 (0.000)$ & $0.981 (0.009)$ & $0.897 (0.054)$ & $1.000 (0.000)$ & $0.977 (0.008)$ & $0.856 (0.077)$\\
Val. acc. & NA & NA & $0.955 (0.057)$ & NA & NA & NA\\
Test acc. & $\mathbf{0.949 (0.048)}$ & $0.928 (0.051)$ & $0.864 (0.073)$ & $\mathbf{0.949 (0.048)}$ & $0.928 (0.051)$ & $0.965 (0.044)$\\
Leaves & $9.5 (0.7)$ & $7.5 (0.6)$ & $5.6 (1.0)$ & $9.5 (0.7)$ & $7.2 (0.4)$ & $5.5 (0.9)$\\
Height & $6.8 (0.4)$ & $5.7 (0.7)$ & $4.4 (0.8)$ & $6.8 (0.4)$ & $5.4 (0.5)$ & $4.3 (0.7)$\\
Time $[s]$ & $0.000 (0.000)$ & $0.374 (0.015)$ & $0.011 (0.003)$ & $0.412 (0.064)$ & $0.022 (0.007)$ & $0.015 (0.003)$\\
Bound & NA & NA & NA & NA & $4.086 (0.169)$ & NA\\
\bottomrule
\end{tabular}
\end{table}

\clearpage

\bibliography{bibliography}

\begin{thebibliography}{46}
\providecommand{\natexlab}[1]{#1}
\providecommand{\url}[1]{\texttt{#1}}
\expandafter\ifx\csname urlstyle\endcsname\relax
  \providecommand{\doi}[1]{doi: #1}\else
  \providecommand{\doi}{doi: \begingroup \urlstyle{rm}\Url}\fi

\bibitem[Aeberhard et~al.(1994)Aeberhard, Coomans, and
  De~Vel]{aeberhard1994comparative}
Stefan Aeberhard, Danny Coomans, and Olivier De~Vel.
\newblock Comparative analysis of statistical pattern recognition methods in
  high dimensional settings.
\newblock \emph{Pattern Recognition}, 27\penalty0 (8):\penalty0 1065--1077,
  1994.

\bibitem[Antal and Hajdu(2014)]{antal2014ensemble}
B{\'a}lint Antal and Andr{\'a}s Hajdu.
\newblock An ensemble-based system for automatic screening of diabetic
  retinopathy.
\newblock \emph{Knowledge-based systems}, 60:\penalty0 20--27, 2014.

\bibitem[Aslan et~al.(2009)Aslan, Yildiz, and Alpaydin]{aslan2009calculating}
Ozlem Aslan, Olcay~Taner Yildiz, and Ethem Alpaydin.
\newblock Calculating the {VC}-dimension of decision trees.
\newblock In \emph{2009 24th International Symposium on Computer and
  Information Sciences}, pages 193--198. IEEE, 2009.

\bibitem[Ayres-de Campos et~al.(2000)Ayres-de Campos, Bernardes, Garrido,
  Marques-de Sa, and Pereira-Leite]{ayres2000sisporto}
Diogo Ayres-de Campos, Joao Bernardes, Antonio Garrido, Joaquim Marques-de Sa,
  and Luis Pereira-Leite.
\newblock Sisporto 2.0: a program for automated analysis of cardiotocograms.
\newblock \emph{Journal of Maternal-Fetal Medicine}, 9\penalty0 (5):\penalty0
  311--318, 2000.

\bibitem[Berthonnaud et~al.(2005)Berthonnaud, Dimnet, Roussouly, and
  Labelle]{berthonnaud2005analysis}
Eric Berthonnaud, Joann{\`e}s Dimnet, Pierre Roussouly, and Hubert Labelle.
\newblock Analysis of the sagittal balance of the spine and pelvis using shape
  and orientation parameters.
\newblock \emph{Clinical Spine Surgery}, 18\penalty0 (1):\penalty0 40--47,
  2005.

\bibitem[Bhatt(2012)]{bhatt2012planning}
Rajen Bhatt.
\newblock Planning-relax dataset for automatic classification of eeg signals.
\newblock \emph{UCI Machine Learning Repository}, 2012.

\bibitem[Blachnik et~al.(2019)Blachnik, So{\l}tysiak, and
  D{\k{a}}browska]{blachnik2019predicting}
Marcin Blachnik, Marek So{\l}tysiak, and Dominika D{\k{a}}browska.
\newblock Predicting presence of amphibian species using features obtained from
  gis and satellite images.
\newblock \emph{ISPRS International Journal of Geo-Information}, 8\penalty0
  (3):\penalty0 123, 2019.

\bibitem[B{\'o}na(2015)]{bona2015handbook}
Mikl{\'o}s B{\'o}na.
\newblock \emph{Handbook of enumerative combinatorics}, volume~87.
\newblock CRC Press, 2015.

\bibitem[Breiman et~al.(1984)Breiman, Friedman, Stone, and
  Olshen]{breiman1984classification}
Leo Breiman, Jerome Friedman, Charles~J Stone, and Richard~A Olshen.
\newblock \emph{Classification and regression trees}.
\newblock CRC press, 1984.

\bibitem[Charytanowicz et~al.(2010)Charytanowicz, Niewczas, Kulczycki,
  Kowalski, {\L}ukasik, and {\.Z}ak]{charytanowicz2010complete}
Ma{\l}gorzata Charytanowicz, Jerzy Niewczas, Piotr Kulczycki, Piotr~A Kowalski,
  Szymon {\L}ukasik, and S{\l}awomir {\.Z}ak.
\newblock Complete gradient clustering algorithm for features analysis of x-ray
  images.
\newblock In \emph{Information technologies in biomedicine}, pages 15--24.
  Springer, 2010.

\bibitem[Chen and Guestrin(2016)]{chen2016xgboost}
Tianqi Chen and Carlos Guestrin.
\newblock Xgboost: A scalable tree boosting system.
\newblock In \emph{Proceedings of the 22nd acm sigkdd international conference
  on knowledge discovery and data mining}, pages 785--794, 2016.

\bibitem[Corless et~al.(1996)Corless, Gonnet, Hare, Jeffrey, and
  Knuth]{corless96lambertw}
R.~M. Corless, G.~H. Gonnet, D.~E.~G. Hare, D.~J. Jeffrey, and D.~E. Knuth.
\newblock On the {Lambert} {$W$} function.
\newblock \emph{Advances in Computational Mathematics}, 5\penalty0
  (1):\penalty0 329--359, Dec 1996.
\newblock ISSN 1572-9044.
\newblock \doi{10.1007/BF02124750}.

\bibitem[Czerniak and Zarzycki(2003)]{czerniak2003application}
Jacek Czerniak and Hubert Zarzycki.
\newblock Application of rough sets in the presumptive diagnosis of urinary
  system diseases.
\newblock In \emph{Artificial intelligence and security in computing systems},
  pages 41--51. Springer, 2003.

\bibitem[Detrano et~al.(1989)Detrano, Janosi, Steinbrunn, Pfisterer, Schmid,
  Sandhu, Guppy, Lee, and Froelicher]{detrano1989international}
Robert Detrano, Andras Janosi, Walter Steinbrunn, Matthias Pfisterer,
  Johann-Jakob Schmid, Sarbjit Sandhu, Kern~H Guppy, Stella Lee, and Victor
  Froelicher.
\newblock International application of a new probability algorithm for the
  diagnosis of coronary artery disease.
\newblock \emph{The American journal of cardiology}, 64\penalty0 (5):\penalty0
  304--310, 1989.

\bibitem[Dorogush et~al.(2018)Dorogush, Ershov, and
  Gulin]{dorogush2018catboost}
Anna~Veronika Dorogush, Vasily Ershov, and Andrey Gulin.
\newblock Catboost: gradient boosting with categorical features support.
\newblock \emph{arXiv preprint arXiv:1810.11363}, 2018.

\bibitem[Drouin et~al.(2019)Drouin, Letarte, Raymond, Marchand, Corbeil, and
  Laviolette]{drouin2019interpretable}
Alexandre Drouin, Ga{\"e}l Letarte, Fr{\'e}d{\'e}ric Raymond, Mario Marchand,
  Jacques Corbeil, and Fran{\c{c}}ois Laviolette.
\newblock Interpretable genotype-to-phenotype classifiers with performance
  guarantees.
\newblock \emph{Scientific reports}, 9\penalty0 (1):\penalty0 1--13, 2019.

\bibitem[Dua and Graff(2017)]{Dua:2019}
Dheeru Dua and Casey Graff.
\newblock {UCI} machine learning repository, 2017.
\newblock URL \url{http://archive.ics.uci.edu/ml}.

\bibitem[Fisher(1936)]{fisher1936use}
Ronald~A Fisher.
\newblock The use of multiple measurements in taxonomic problems.
\newblock \emph{Annals of eugenics}, 7\penalty0 (2):\penalty0 179--188, 1936.

\bibitem[Forsyth(1990)]{forsyth90zoo}
Richard Forsyth.
\newblock {Zoo}.
\newblock UCI Machine Learning Repository, 1990.

\bibitem[Freire et~al.(2009)Freire, Barreto, Veloso, and
  Varela]{freire2009short}
Ananda~L Freire, Guilherme~A Barreto, Marcus Veloso, and Antonio~T Varela.
\newblock Short-term memory mechanisms in neural network learning of robot
  navigation tasks: A case study.
\newblock In \emph{2009 6th Latin American Robotics Symposium (LARS 2009)},
  pages 1--6. IEEE, 2009.

\bibitem[Gey(2018)]{gey2018vapnik}
Servane Gey.
\newblock Vapnik--{Chervonenkis} dimension of axis-parallel cuts.
\newblock \emph{Communications in Statistics-Theory and Methods}, 47\penalty0
  (9):\penalty0 2291--2296, 2018.

\bibitem[Gil et~al.(2012)Gil, Girela, De~Juan, Gomez-Torres, and
  Johnsson]{gil2012predicting}
David Gil, Jose~Luis Girela, Joaquin De~Juan, M~Jose Gomez-Torres, and Magnus
  Johnsson.
\newblock Predicting seminal quality with artificial intelligence methods.
\newblock \emph{Expert Systems with Applications}, 39\penalty0 (16):\penalty0
  12564--12573, 2012.

\bibitem[Gorman and Sejnowski(1988)]{gorman1988analysis}
R~Paul Gorman and Terrence~J Sejnowski.
\newblock Analysis of hidden units in a layered network trained to classify
  sonar targets.
\newblock \emph{Neural networks}, 1\penalty0 (1):\penalty0 75--89, 1988.

\bibitem[Graham et~al.(1989)Graham, Knuth, Patashnik, and
  Liu]{graham1989concrete}
Ronald~L. Graham, Donald~E. Knuth, Oren Patashnik, and Stanley Liu.
\newblock Concrete mathematics: a foundation for computer science.
\newblock \emph{Computers in Physics}, 3\penalty0 (5):\penalty0 106--107, 1989.

\bibitem[Haberman(1976)]{haberman1976generalized}
Shelby~J Haberman.
\newblock Generalized residuals for log-linear models.
\newblock In \emph{Proceedings of the 9th international biometrics conference},
  pages 104--122, 1976.

\bibitem[Hall(1935)]{HallMarriage}
P.~Hall.
\newblock On representatives of subsets.
\newblock \emph{Journal of the London Mathematical Society}, s1-10\penalty0
  (1):\penalty0 26--30, 1935.
\newblock \doi{10.1112/jlms/s1-10.37.26}.
\newblock URL
  \url{https://londmathsoc.onlinelibrary.wiley.com/doi/abs/10.1112/jlms/s1-10.37.26}.

\bibitem[Hastie et~al.(2009)Hastie, Tibshirani, Friedman, and
  Friedman]{hastie2009elements}
Trevor Hastie, Robert Tibshirani, Jerome~H Friedman, and Jerome~H Friedman.
\newblock \emph{The elements of statistical learning: data mining, inference,
  and prediction}, volume~2.
\newblock Springer, 2009.

\bibitem[Hofmann(1994)]{hofmann94statloggerman}
Hans Hofmann.
\newblock {Statlog (German Credit Data)}.
\newblock UCI Machine Learning Repository, 1994.

\bibitem[Horton and Nakai(1996)]{horton1996probabilistic}
Paul Horton and Kenta Nakai.
\newblock A probabilistic classification system for predicting the cellular
  localization sites of proteins.
\newblock In \emph{Ismb}, volume~4, pages 109--115, 1996.

\bibitem[Ke et~al.(2017)Ke, Meng, Finley, Wang, Chen, Ma, Ye, and
  Liu]{ke2017lightgbm}
Guolin Ke, Qi~Meng, Thomas Finley, Taifeng Wang, Wei Chen, Weidong Ma, Qiwei
  Ye, and Tie-Yan Liu.
\newblock Lightgbm: A highly efficient gradient boosting decision tree.
\newblock In \emph{Advances in neural information processing systems}, pages
  3146--3154, 2017.

\bibitem[Kearns and Mansour(1998)]{kearns1998fast}
Michael~J Kearns and Yishay Mansour.
\newblock A fast, bottom-up decision tree pruning algorithm with near-optimal
  generalization.
\newblock In \emph{ICML}, volume~98, pages 269--277. Citeseer, 1998.

\bibitem[Leboeuf et~al.(2020)Leboeuf, LeBlanc, and
  Marchand]{leboeuf2020decision}
Jean-Samuel Leboeuf, Fr{\'e}d{\'e}ric LeBlanc, and Mario Marchand.
\newblock Decision trees as partitioning machines to characterize their
  generalization properties.
\newblock \emph{Advances in Neural Information Processing Systems}, 33, 2020.

\bibitem[Lincoff(1997)]{lincoff1997field}
Gary~H Lincoff.
\newblock \emph{Field guide to North American mushrooms}.
\newblock Knopf National Audubon Society, 1997.

\bibitem[Little et~al.(2007)Little, McSharry, Roberts, Costello, and
  Moroz]{little2007exploiting}
Max~A Little, Patrick~E McSharry, Stephen~J Roberts, Declan~AE Costello, and
  Irene~M Moroz.
\newblock Exploiting nonlinear recurrence and fractal scaling properties for
  voice disorder detection.
\newblock \emph{Biomedical engineering online}, 6\penalty0 (1):\penalty0 23,
  2007.

\bibitem[Lucas et~al.(2013)Lucas, Klein, Tannahill, Ivanova, Brandon,
  Domyancic, and Zhang]{lucas2013failure}
DD~Lucas, R~Klein, J~Tannahill, D~Ivanova, S~Brandon, D~Domyancic, and Y~Zhang.
\newblock Failure analysis of parameter-induced simulation crashes in climate
  models.
\newblock \emph{Geoscientific Model Development}, 6\penalty0 (4):\penalty0
  1157--1171, 2013.

\bibitem[Maimon and Rokach(2002)]{maimon2002improving}
Oded Maimon and Lior Rokach.
\newblock Improving supervised learning by feature decomposition.
\newblock In \emph{International Symposium on Foundations of Information and
  Knowledge Systems}, pages 178--196. Springer, 2002.

\bibitem[Mansour(1997)]{mansour1997pessimistic}
Yishay Mansour.
\newblock Pessimistic decision tree pruning based on tree size.
\newblock In \emph{Proceedings of the Fourteenth International Conference on
  Machine Learning}, pages 195--201. Morgan Kaufmann, 1997.

\bibitem[Mansouri et~al.(2013)Mansouri, Ringsted, Ballabio, Todeschini, and
  Consonni]{mansouri2013quantitative}
Kamel Mansouri, Tine Ringsted, Davide Ballabio, Roberto Todeschini, and Viviana
  Consonni.
\newblock Quantitative structure--activity relationship models for ready
  biodegradability of chemicals.
\newblock \emph{Journal of chemical information and modeling}, 53\penalty0
  (4):\penalty0 867--878, 2013.

\bibitem[M{\"u}tze et~al.(2018)M{\"u}tze, Nummenpalo, and
  Walczak]{mutze2018sparse}
Torsten M{\"u}tze, Jerri Nummenpalo, and Bartosz Walczak.
\newblock Sparse kneser graphs are hamiltonian.
\newblock In \emph{Proceedings of the 50th Annual ACM SIGACT Symposium on
  Theory of Computing}, pages 912--919. ACM, 2018.

\bibitem[Quinlan(1986)]{quinlan1986induction}
J.~Ross Quinlan.
\newblock Induction of decision trees.
\newblock \emph{Machine learning}, 1\penalty0 (1):\penalty0 81--106, 1986.

\bibitem[Shawe-Taylor et~al.(1998)Shawe-Taylor, Bartlett, Williamson, and
  Anthony]{shawe1998structural}
John Shawe-Taylor, Peter~L. Bartlett, Robert~C. Williamson, and Martin Anthony.
\newblock Structural risk minimization over data-dependent hierarchies.
\newblock \emph{IEEE transactions on Information Theory}, 44\penalty0
  (5):\penalty0 1926--1940, 1998.

\bibitem[Sigillito et~al.(1989)Sigillito, Wing, Hutton, and
  Baker]{sigillito1989classification}
Vincent~G Sigillito, Simon~P Wing, Larrie~V Hutton, and Kile~B Baker.
\newblock Classification of radar returns from the ionosphere using neural
  networks.
\newblock \emph{Johns Hopkins APL Technical Digest}, 10\penalty0 (3):\penalty0
  262--266, 1989.

\bibitem[Simon(1991)]{simon1991vapnik}
Hans~Ulrich Simon.
\newblock The {V}apnik-{C}hervonenkis dimension of decision trees with bounded
  rank.
\newblock \emph{Information Processing Letters}, 39\penalty0 (3):\penalty0
  137--141, 1991.

\bibitem[Street et~al.(1993)Street, Wolberg, and
  Mangasarian]{street1993nuclear}
W~Nick Street, William~H Wolberg, and Olvi~L Mangasarian.
\newblock Nuclear feature extraction for breast tumor diagnosis.
\newblock In \emph{Biomedical image processing and biomedical visualization},
  volume 1905, pages 861--870. International Society for Optics and Photonics,
  1993.

\bibitem[Y{\i}ld{\i}z(2015)]{yildiz2015vc}
Olcay~Taner Y{\i}ld{\i}z.
\newblock {VC}-dimension of univariate decision trees.
\newblock \emph{IEEE transactions on neural networks and learning systems},
  26\penalty0 (2):\penalty0 378--387, 2015.

\bibitem[Zhang et~al.(2021)Zhang, Bengio, Hardt, Recht, and
  Vinyals]{zhang2021understanding}
Chiyuan Zhang, Samy Bengio, Moritz Hardt, Benjamin Recht, and Oriol Vinyals.
\newblock Understanding deep learning (still) requires rethinking
  generalization.
\newblock \emph{Communications of the ACM}, 64\penalty0 (3):\penalty0 107--115,
  2021.

\end{thebibliography}

\end{document}